\newcommand{\cu}[1]{\mathbf{#1}}
\newcommand{\tash}[2]{\frac{\partial #1}{\partial #2}}
\newcommand{\diff}{\mathop{}\!\mathrm{d}}
\DeclareMathOperator*{\argmin}{arg\,min}  
\newcommand{\expec}{\mathbb{E}}           
\newcommand{\cov}{\operatorname{Cov}}     
\newcommand{\varr}{\operatorname{Var}}     
\newcommand{\diag}{\operatorname{diag}}     
\newcommand*{\trans}{{\mkern-1.5mu\mathsf{T}}}
\newcommand{\matern}{Mat\'{e}rn }
\newcommand{\R}{\mathbb{R}} 
\newtheorem{assumption}{Assumption}
\newtheorem{algorithm}{Algorithm}
\begin{document}
	\sloppy
	
	\title{Deep State-Space Gaussian Processes\thanks{This project was supported by ELEC doctoral school and Academy of Finland.}
	}
	
	
	\author{Zheng Zhao \and Muhammad Emzir \and Simo S\"{a}rkk\"{a}
	}
	
	
	\institute{Zheng Zhao, Muhammad Emzir, and Simo S\"{a}rkk\"{a} \at
		Department of Electrical Engineering and Automation\\
		Aalto University\\
		\email{zheng.zhao@aalto.fi}           
	}
	
	\date{Received: date / Accepted: date}

	\maketitle
	
	\begin{abstract}
		This paper is concerned with a state-space approach to deep Gaussian process (DGP) regression. We construct the DGP by hierarchically putting transformed Gaussian process (GP) priors on the length scales and magnitudes of the next level of Gaussian processes in the hierarchy. The idea of the state-space approach is to represent the DGP as a non-linear hierarchical system of linear stochastic differential equations (SDEs), where each SDE corresponds to a conditional GP. The DGP regression problem then becomes a state estimation problem, and we can estimate the state efficiently with sequential methods by using the Markov property of the state-space DGP. The computational complexity scales linearly with respect to the number of measurements. Based on this, we formulate state-space MAP as well as Bayesian filtering and smoothing solutions to the DGP regression problem. We demonstrate the performance of the proposed models and methods on synthetic non-stationary signals and apply the state-space DGP to detection of the gravitational waves from LIGO measurements. 
		\keywords{deep Gaussian process \and maximum a posteriori estimate \and Gaussian process regression \and state space \and Gaussian filtering and smoothing \and particle filter \and stochastic differential equation \and gravitational wave detection}
	\end{abstract}
	
	\section{Introduction}
	\label{sec:intro}
	
	Gaussian processes (GP) are popular models for probabilistic non-parametric regression, especially in the machine learning field~\citep{gp-carl-edward}. As opposed to parametric models, such as deep neural networks~\citep{dl-goodfellow}, GPs put prior distributions on the unknown functions. As the mean and covariance functions characterize a GP entirely, the design of those two functions determines how well the GP learns the structure of data. However, GPs by using, for example, radial basis functions (RBFs) and \matern class of covariance functions are stationary, and hence those conventional GPs have limitations on learning non-stationary structures in data. Heteroscedastic GPs~\citep{leQuo2005, ICML2011Lazaro-Gredilla} are designed to tackle with the non-stationarity in measurement noise. To model the non-stationary of the unknown process, we often need to manipulate the covariance function to give non-stationary GPs.
	
	One approach to construct non-stationary GPs is to transform the domain/input space by compositions. For example, \citet{Wilson2016DeepKernel, Wilson2016StochasticVBDKL, Shedivat2017RecurrentDKL} transform the inputs by deterministic deep architectures and then feed to GPs, where the deep transformations are responsible for capturing the non-stationarity from data. The resulting GP posterior distribution is in closed form. Similarly, \citet{Calandra2016ManifoldGP} transform the input space to manifold feature spaces. \citet{damianou2013} construct deep Gaussian processes (DGPs) by feeding the outputs of GPs to another layer of GPs as (transformed) inputs. However, the posterior inference requires complicated approximations and does not scale well with a large number of measurements~\citep{Salimbeni2017Doubly}. 
	
	Another commonly used non-stationary GP construction is to have input-dependent covariance function hyperparameters, so that the resulting covariance function is non-stationary~\citep{OLDPaul1992, higdon1999non, paciorek2004nonstationary}. For example, one can parametrize the lengthscale as a function of time. This method grants GPs the capability of changing behaviour depending on the input. However, one needs to be careful to ensure that the construction leads to valid (positive definite) covariance functions \citep{paciorek2004nonstationary}. It is also possible to put GP priors on the covariance parameters~\citep{ville2014MLSP, ICML2011Lazaro-Gredilla, heinonen2016, lassi2019Matern, Karla2020}. For example,~\citet{Salimbeni2017ns} model the lengthscale as a GP by using the non-stationary covariance function of~\citet{paciorek2006spatial}, and they approximate the posterior density via the variational Bayes approach. 
	
	The idea of putting GP priors on the hyperpameters of a GP~\citep{heinonen2016, lassi2019Matern, Salimbeni2017ns} can be continued hierarchically, which leads to one type of deep Gaussian process (DGP) construction~\citep{Matthew2018JMLR, emzir2019, Emzir2020}. Namely, the GP is conditioned on another GP, which again depends on another GP, and so forth. It is worth emphasizing that this hyperparameter-based (or covariance-operator) construction of DGP is different from the compositional DGPs as introduced by~\citet{damianou2013} and~\citet{duvenaud2014}. In these composition-based DGPs, the output of each GP is fed as an input to another GP. Despite the differences, these two types of DGP constructions are similar in many aspects and are often analyzed under the same framework~\citep{Matthew2018JMLR}. 
	
	This paper focuses on hyperparameter-based (or covariance-operator) temporal DGPs. In particular, we introduce the state-space representations of DGPs by using non-linear stochastic differential equations (SDEs). The SDEs form a hierarchical non-linear system of conditionally linear SDEs which results from the property that a temporal GP can be constructed as a solution to a linear SDE~\citep{jouni2010, simoMagazine2013, sarkka2019}. More generally, it is related to the connection of Gaussian fields and stochastic partial differential equations~\citep[SPDEs,][]{lindtrom2011}. (D)GP regression then becomes equivalent to the smoothing problem on the corresponding continuous-discrete state-space model~\citep{simoMagazine2013}. Additionally, by using the SDE representations of DGPs we can avoid to explicitly choose/design the covariance function.
	
	However, the posterior distribution of (state-space) DGPs does not admit a closed-form solution as a plain GP does. Hence we need to use approximations such as maximum a posteriori (MAP) estimates, Laplace approximations, Markov chain Monte Carlo \citep[MCMC,][]{heinonen2016, brooks2011handbook,luengo2020}, or variational Bayes methods~\citep{ICML2011Lazaro-Gredilla, Salimbeni2017Doubly, Paul2020VariationalSS}. However, those methods are often computationally heavy. The another benefit of using state-space DGPs is that we can use the Bayesian filtering and smoothing solvers which are particularly efficient for solving temporal regression/smoothing problems~\citep{sarkka2013}. 
	
	In short, we formulate the (temporal) DGP regression as a state-estimation problem on a non-linear continuous-discrete state-space model. For this purpose, various well-established filters and smoothers are available, for example, the Gaussian (assumed density) filters and smoothers~\citep{sarkka2013, sarkka2013gaussian, zhao2020taylor}. For temporal data, the  computational complexity of using filtering and smoothing approaches is $\mathcal{O}(N)$, where $N$ is the number of measurements. 
	
	The contributions of the paper are the follows. 1) We construct a general hyperparameter-based deep Gaussian process (DGP) model and formulate a batch MAP solution for it as a standard reference approach. 2) We convert the DGP into a state-space form consisting of a system of stochastic differential equations. 3) For the state-space DGP, we formulate the MAP and Bayesian filtering and smoothing solutions. The resulting computational complexity scales linearly with respect to the number of measurements. 4) We prove that for a class of DGP constructions and Gaussian approximations on the DGP posterior, certain nodes of the DGP (e.g., the magnitude $\sigma$ of \matern GP) will not be asymptotically updated from measurements. 5) We conduct experiments on synthetic data and also apply the methods to gravitational wave detection. 
	
	\section{Deep Gaussian Processes}
	\label{sec:dgp-sde}
	\subsection{Non-stationary Gaussian Processes}
	\label{sec:non-stat-GP}
	We start by reviewing the classical Gaussian process (GP) regression problem \citep{gp-carl-edward}. We consider the model
	\begin{equation}
		\begin{split}
			f(t) &\sim \mathcal{GP}(0, C(t, t')),\\
			y_k &= f(t_k) + r_k,
		\end{split}
		\label{equ:vanilla-gp-reg}
	\end{equation}
	where $f \colon \mathbb{T}\to\R$ is a zero-mean GP on $\mathbb{T}=\left\lbrace t\in\R\colon t\geq t_0 \right\rbrace $ with a covariance function $C$. The observation $y_k\coloneqq y(t_k)$ of $f(t_k)$ is contaminated by a Gaussian noise $r_k\sim\mathcal{N}(0, R_k)$. We let $\cu{R} = \diag(R_1,\ldots,R_N)$. Given a set of $N$ measurements $\cu{y}_{1:N} = \left\lbrace y_1,\ldots, y_N \right\rbrace $, GP regression aims at obtaining the posterior distribution
	\begin{equation}
		p(f\mid \cu{y}_{1:N}),\nonumber
	\end{equation}
	which is again Gaussian with closed-form mean and covariance functions~\citep{gp-carl-edward}. In this model, the choice of covariance function $C$ is crucial to the GP regression as it determines, for example, the smoothness and stationarity of the process. Typical choices, such as radial basis or \matern covariance functions, give stationary GPs. 
	
	However, it is difficult for a stationary GPs to tackle with non-stationary data. The main problem arises from the covariance function~\citep{gp-carl-edward} as the value of a  stationary covariance function only depends on the difference of inputs. That is to say, the correlations of any pairs of two inputs are the same when the differences are the same. This feature is not beneficial for non-stationary signals, as the correlation might vary depending on the input.
	
	A solution to this problem is using a non-stationary covariance function~\citep{higdon1999non, paciorek2004nonstationary, paciorek2006spatial, lindtrom2011}. That grants GP with the capability of adaption by learning hyperparameter functions from data. However, one needs to carefully design the non-stationary covariance function such that it is positive definite. Recent studies by, for example, \citet{heinonen2016, lassi2019Matern} and~\citet{Karla2020}, propose to put GP priors on the covariance function hyperparameters. In this article, we follow these approaches to construct hierarchy of GPs which becomes the construction of the deep GP model. 
	
	\subsection{Deep Gaussian Process Construction}
	\label{sec:dgp-construct-reg}
	We define a deep Gaussian process (DGP) in a general perspective as follows. Suppose that the DGP has $L$ layers, and each layer ($i=1,\ldots, L$) is composed of $L_i$ nodes.  Each node of the DGP is conditionally a GP, denoted by $u^i_{j,k}$, where $k=1,2,\ldots, L_i$. We give three indices for the node. The indices $i$ and $k$ specify the layer and the position of the GP, respectively. As an example, $u^i_{j,k}$ is located in the $i$-th layer of the DGP and is the $k$-th node in the $i$-th layer. The index $j$ is introduced to indicate the conditional connection to its unique child node on the previous layer. That is to say, $u^i_{j,k}$ is the child of nodes $u^{i+1}_{k,k'}$ for all suitable $k'$. The terminologies ``child'' and ``parent'' follow from the graphical model conventions~\citep{bishop-ml-pr}. To keep the notation consistent, we also use $u^1_{1,1} \coloneqq f$ for the top layer GP. The nodes $u^{L+1}_{j,k}$ outside of the DGP, we treat as degenerate random variables (i.e., constants or trainable hyperparameters). Remark that every node in the DGP is uniquely indexed by $i$ and $k$, whereas $j$ only serves the purpose of showing the dependency instead of indexing.
	
	We call the vector process
	\begin{equation}
		U \colon\mathbb{T}\to \R^{\sum^L_{i=1} L_i},\nonumber
	\end{equation}
	the DGP, where each element of $U$ corresponds (one to one and onto) to the element of the set of all nodes $\left\lbrace u^i_{j,k}\colon i=1,\ldots,L, k=1,2,\ldots,L_i \right\rbrace $. Similarly, each element of the vector $U^i\colon\mathbb{T}\to\R^{L_i}$ corresponds to the element of the set of all nodes from the $i$-th layer. We denote by $U^i_{k,\cdot}=\left\lbrace u^{i}_{k, k'}\colon \text{for all suitable } k' \right\rbrace $ the set of all parent nodes of $u^{i-1}_{j, k}$. 

	\begin{figure}[h!]
		\centering
		\resizebox{.95\linewidth}{!}{%
			\input{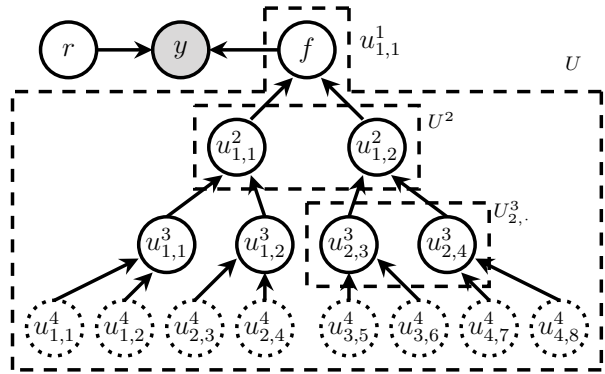}
		}
		\caption{Example of a 3-layer DGP regression model, where each (conditional) GP depends on two other GPs. Variable $y$ is the measurement, and the nodes in $U^4$ are degenerate random variables. }
		\label{fig:dgp-example-tree}
	\end{figure}
	In this tree-like general construction of DGP $U$, there are $\sum^L_{i=1} L_i$ nodes in total. Every $u^i_{j,k}$ is independent of other nodes in the same $i$-th layer, and depends on the nodes $U^{i+1}_{j,\cdot}$ on the next $\left(i+1\right) $-th layer. When there is only one layer, the DGP reduces to a conventional GP. Figure~\ref{fig:dgp-example-tree} illustrates the DGP construction.
	
	The realization of the DGP depends on how each of the conditionally GP nodes is constructed. In the following sections, we discuss two realizations of this DGP, by either constructing the conditional GPs by specifying the mean and covariance functions, or by stochastic differential equations. These two constructions lead to DGP regression in batch and sequential forms, respectively. 
	
	\section{A Batch Deep Gaussian Process Regression Model}
	\label{sec:dgp-batch-reg}
	In this section, we present a batch DGP construction which uses the construction of non-stationary GPs presented in \citet{paciorek2006spatial} to form the DGP. To emphasize the difference to the SDE construction which is the main topic of this article, we call this the batch-DGP. Let us assume that every conditional GP in the DGP has zero mean and we observe the top GP $f$ with additive Gaussian noise. We write down the DGP regression model as
	\begin{equation}
		\begin{split}
			f\mid U^2 &\sim \mathcal{GP}( 0, C(t, t';U^2))  \\
			u^2_{1,1} \mid U^3_{1,\cdot} &\sim \mathcal{GP}(0, C^2_1(t,t';U^3_{1,\cdot}))\\
			u^2_{1,2} \mid U^3_{2,\cdot} &\sim \mathcal{GP}(0, C^2_2(t,t';U^3_{2,\cdot}))\\
			&\vdots\\
			u^i_{\cdot,k} \mid U^{i+1}_{k,\cdot} &\sim \mathcal{GP}(0, C^i_k(t,t';U^{i+1}_{k,\cdot}))\\
			&\vdots\\
			y_k &= f(t_k) + r_k,
		\end{split}
		\label{equ:gp-model-fancy}
	\end{equation}
	where each covariance function $C^i_k\colon \mathbb{T}\times\mathbb{T}\to\R$ is parameterized by next layer's (conditional) GPs. That is to say, the covariance function $C^i_k$ takes the nodes in $U^{i+1}_{k,\cdot}$ as parameters. 
	
	This DGP construction requires positive covariance function at each node. One option is the non-stationary exponential covariance function which has the form \citep[cf.][]{paciorek2006spatial}
	\begin{equation}
		\begin{split}
			C_{NS}(t, t';\ell,\sigma) &= \frac{\sigma(t)\sigma(t')}{\Gamma(\nu)2^{\nu-1}}\ell^{\frac{1}{4}}(t)\ell^{\frac{1}{4}}(t') \\
			&\hspace{-1cm}\times\sqrt{2} \exp\left( \frac{-\sqrt{2}\,\abs{t-t'}}{\sqrt{\ell(t) + \ell(t')}} \right) \left(\ell(t) + \ell(t')\right)^{-\frac{1}{2}}.\nonumber
			\label{equ:ns-matern}
		\end{split}
	\end{equation}
	In the above covariance function $C_{NS}$, the length scale $\ell(t)$ and magnitude $\sigma(t)$ are functions of input $t$. \citet{paciorek2006spatial} also generalize $C_{NS}$ to the \matern class.
	
	For the DGP construction in~\eqref{equ:gp-model-fancy} we need to ensure the positivity of the hyperparameter functions. For that purpose we introduce a wrapping function $g\colon\R\to(0,\infty)$ which is positive and smooth, and we put $\ell(t) = g(u_\ell(t))$ and $\sigma(t) = g(u_\sigma(t))$ where $u_\ell$ and $u_\sigma$ are the conditionally Gaussian processes from the next layer. The exponential or squaring functions are typical options for $g$. In Example~\ref{example:matern-gp}, we show a two-layer DGP by using the covariance function $C_{NS}$.
	
	\begin{example}
		\label{example:matern-gp}
		Consider a two layer exponential (Mat\'ern) DGP 
		\begin{equation}
			\begin{split}
				f\mid u^2_{1,1},u^2_{1,2} &\sim \mathcal{GP}(0, C_{NS}(t,t';g(u^2_{1,1}),g(u^2_{1,2}))),\\
				u^2_{1,1} &\sim \mathcal{GP}(0, C_{NS}(t,t';g(u^3_{1,1}),g(u^3_{1,2}))), \\
				u^2_{1,2} &\sim \mathcal{GP}(0, C_{NS}(t,t';g(u^3_{2,3}),g(u^3_{2,4}))).
			\end{split}\nonumber
		\end{equation}
		In this case, we have the so-called length scale $\ell^2_{1,1} = g(u^2_{1,1})$ and magnitude $\sigma^2_{1,2} = g(u^2_{1,2})$. Also, $U = \begin{bmatrix}
			f & u^2_{1,1} & u^2_{1,2}\end{bmatrix}^\trans$ and $U^2= U^2_{1,\cdot} = \begin{bmatrix}
			u^2_{1,1} & u^2_{1,2}\end{bmatrix}^\trans$.
	\end{example}
	
	Given a set of measurements $\cu{y}_{1:N} = \left\lbrace y_1, y_2,\ldots, y_N \right\rbrace$, the aim of DGP regression is to obtain the posterior density
	\begin{equation}
		p(U\mid \cu{y}_{1:N}) = \frac{p(\cu{y}_{1:N}\mid U)\,p(U)}{p(\cu{y}_{1:N})},
		\label{equ:posterior}
	\end{equation}
	for any input $t\in\mathbb{T}$. Moreover, by the construction of DGP (conditional independence) we have
	\begin{equation}
		p(U) = p(f\mid U^2)\,\prod_{i=2}^L\prod_{k=1}^{L_i} p\left(u^i_{j, k}\mid U^{i+1}_{k,\cdot}\right),
		\label{equ:gp-field}
	\end{equation}
	where each $u^i_{j,k} \mid U^{i+1}_{k,\cdot}$ is a GP as defined in~\eqref{equ:gp-model-fancy}. We isolate $f\mid U^2$ out of the above factorization because we are particularly interested in the observed $f$. It is important to remark that the distribution of $U$ is (usually) not Gaussian because of the non-Gaussianility induced by the conditional hierarchy of Gaussian processes which depend on each other non-linearly. 
	
	\subsection{Batch MAP Solution}
	\label{sec:batch-MAP}
	The maximum a posteriori (MAP) estimate gives a point estimate of $U$ as the maximum of the posterior distribution~\eqref{equ:posterior}. Let us denote $f_{1:N} = \begin{bmatrix}
		f(t_1)&f(t_2)&\cdots& f(t_N) \end{bmatrix}^\trans \in\R^{N}$, $U_{1:N} = \left\lbrace u^i_{j,k\mid 1:N}\colon \text{for all } i, k\right\rbrace $, where $u^i_{j,k\mid 1:N}=\begin{bmatrix}
		u^i_{j,k}(t_1) & \cdots & u^i_{j,k}(t_N)
	\end{bmatrix}^\trans\in\R^{N}$. We are targeting at the posterior density $p( U_{1:N}\mid \cu{y}_{1:N})$ evaluated at $t_1, \ldots, t_N$. The MAP estimate is then obtained by
	\begin{equation}
		U_{1:N}^{\mathrm{BMAP}} = \argmin_{U_{1:N}} \mathcal{L}^{\mathrm{BMAP}}( U_{1:N};\cu{y}_{1:N}),
		\label{equ:MAP-old-opt}
	\end{equation}
	where $\mathcal{L}^{\mathrm{BMAP}}$ is the negative logarithm of the unnormalized posterior distribution given by
	\begin{equation}
		\begin{split}
			&\mathcal{L}^{\mathrm{BMAP}}(U_{1:N};\cu{y}_{1:N}) \\
			&= -\log \left[ p(\cu{y}_{1:N}\mid U_{1:N}) \,p(U_{1:N})\right] \\
			&= \frac{1}{2}\left[ \left(\cu{y}_{1:N}-f_{1:N}\right)^\trans\,\cu{R}^{-1}\,\left(\cu{y}_{1:N}-f_{1:N}\right) + \log\abs{2\pi\,\cu{R}}\right]  \\
			&\quad+ \frac{1}{2}\left[ f_{1:N}^\trans\, \cu{C}^{-1}\,f_{1:N} + \log \abs{2\pi\,\cu{C}}\right] \\
			&\quad+ \frac{1}{2}\sum^L_{i=2}\sum^{L_i}_{k=1}\left[ (u^i_{j,k\mid 1:N})^\trans(\cu{C}^i_k)^{-1}u^i_{j,k\mid 1:N} +\log\abs{2\pi\,\cu{C}^i_k} \right].
		\end{split}
		\label{equ:MAP-loss-old}
	\end{equation}
	In the above Equation~\eqref{equ:MAP-loss-old}, $\cu{C}$ and $\cu{C}^i_k$ are the covariance matrices formed by evaluating the corresponding GP covariance functions at $(t_1,\ldots,t_N)\times (t_1,\ldots,t_N)$. The computational complexity for computing \eqref{equ:MAP-loss-old} is $\mathcal{O}(N^3\,\sum^{L}_{i=1}L_i)$, which scales cubically with the number of measurements. 
	
	It is important to recall from~\eqref{equ:gp-model-fancy} that the covariance matrices also depend on the other GP nodes (i.e., $f_{1:N}$ and $U_{1:N}$ are in $\cu{C}^i_k$). Therefore the objective function $\mathcal{L}^{\mathrm{BMAP}}$ is non-quadratic. Additional non-linear terms are also introduced by the determinants of the covariance matrices. However, quasi-Newton methods \citep{numerical-opt} can be used to solve the optimization problem. The required gradients of~\eqref{equ:MAP-loss-old} are provided in Appendix~\ref{append:MAP-old-deriv}.
	
	There are two major challenges in this MAP solution. Firstly, the optimization of~\eqref{equ:MAP-old-opt} is not computationally cheap. It requires to evaluate and store $\sum^{L}_{i=1}L_i$ inversions of $N$-dimensional matrices for every optimization iteration. This prevents the use of the DGP on large-scale datasets and large models. Moreover, \citet{paciorek2006spatial} state that the optimization of \eqref{equ:MAP-old-opt} is difficult and prone to overfitting, which we also confirm in the experiment section. Another problem is the uncertainty quantification and prediction (interpolation) with the MAP estimate which is degenerate.
	
	\section{Deep Gaussian Processes in State-space}
	\label{sec:gp-sde}
	Stochastic differential equations (SDEs) are common models to construct stochastic processes~\citep{SDE-Friedman1975, Williams2000Vol1, Williams2000Vol2, sarkka2019}. Instead of constructing the process by specifying, for example, the mean and covariance functions, an SDEs characterizes a process by describing the dynamics with respect to a Wiener process. In this section, we show how a DGP as defined in Section~\ref{sec:dgp-construct-reg} can be realized using a hierarchy of SDEs. To highlight the difference to the previous batch-DGP realization, we call this the SS-DGP. The regression problem on this class of DGPs can be seen as a state estimation problem. 
	
	\subsection{Gaussian Processes as Solutions of Linear SDEs}
	Consider a linear time invariant (LTI) SDE
	\begin{equation}
		\begin{split}
			\diff \cu{f}(t) &= \cu{A}\,\cu{f}(t) \diff t + \cu{L}\diff \cu{W}_f(t),  \\
			\cu{f}(t_0) &\sim \mathcal{N}(\cu{0}, \cu{P}_\infty),\\
		\end{split}
		\label{equ:gp-sde}
	\end{equation}
	where coefficients $\cu{A}\in\R^{d\times d}$ and $\cu{L}^{d\times S}$ are constant matrices, $\cu{W}_f(t)\in\R^{S }$ is a Wiener process with unit spectral density, and $\cu{f}(t_0)$ is a Gaussian initial condition with zero mean and covariance $\cu{P}_\infty$, which is obtained as the solution to
	\begin{equation}
		\begin{split}
			\cu{A}\,\cu{P}_\infty + \cu{P}_\infty\,\cu{A}^\trans + \cu{L}\,\cu{L}^\trans = \cu{0}.
		\end{split}
		\label{equ:lyapunov}
	\end{equation}
	When the stationary covariance $\cu{P}_\infty$ exists, the vector process $\cu{f}$ is a stationary Gaussian process with the (cross-)covariance function
	\begin{equation}
		\cov\left[\cu{f}(t),\cu{f}(t')\right] =  \begin{cases}
			\cu{P}_\infty\,\exp\left[ (t'-t)\,\cu{A}\right]^\trans, & t<t',\\
			\exp\left[ -(t'-t)\,\cu{A}\right]\,\cu{P}_\infty, & t\geq t'.
		\end{cases}
		\label{equ:LTI-cross-cov}
	\end{equation}
	It now turns out that we can construct matrices $\cu{A}$, $\cu{L}$, and $\cu{H}$ such that $f = \cu{H}\, \cu{f}$ is a Gaussian process with a given covariance function \citep{jouni2010,simoMagazine2013,sarkka2019}. The marginal covariance of $f$ can be extracted by $\cov[f(t), f(t')] = \cu{H} \cov\left[\cu{f}(t),\cu{f}(t')\right] \cu{H}^\trans$. In order to construct non-stationary GPs, we can let the SDE coefficients (i.e., $\cu{A}$ and $\cu{L}$) be functions of time.
	
	In particular, if $f$ is a \matern GP, then we can select the state
	\begin{equation}
		\cu{f}(t) = \begin{bmatrix}
			f(t) & Df(t) & \cdots & D^\alpha f(t)
		\end{bmatrix}^\trans \in\R^{d },
		\label{equ:state-f}
	\end{equation}
	and the corresponding $\cu{H} = \begin{bmatrix}
		1 & 0 & 0 &\cdots
	\end{bmatrix}$, where $D$ is the time derivative, $\alpha$ is the smoothness factor, and dimension $d = \alpha + 1$. We can also generalize the results to spatial-temporal Gaussian processes, and hence the corresponding SDEs will become stochastic partial differential equations~\citep[SPDEs,][]{simo-jouni-2012,simoMagazine2013}.
	
	When constructing a GP using SDEs, we sometimes need to select the SDE coefficients suitably so that the resulting covariance function~\eqref{equ:LTI-cross-cov} admits a desired form (e.g., Mat\'ern). One way to proceed is to find the spectral density function of the GP covariance function (via Wiener--Khinchin theorem) and translate the resulting transfer function into the state-space form \citep{jouni2010}. The results are known for many classes of GPs, for example, the M\'atern and periodic GPs~\citep{sarkka2019}.
	
	As an alternative to the batch-GP construction in Section~\ref{sec:dgp-batch-reg}, the SDE approach offers more freedom to certain extent because the corresponding covariance functions are positive definite and non-stationary by construction. It is also computationally beneficial in regression, as we can leverage the Markov properties of the SDEs in the computations. 
	
	\subsection{Deep Gaussian Processes as Hierarchy of SDEs}
	\label{sec:dgp-system-sdes}
	So far, we have only considered the SDE construction of a single stationary/non-stationary GP. To realize a DGP as defined in Section~\ref{sec:dgp-construct-reg}, we need to formulate a hierarchical system composed of linear SDEs. Namely, we parametrize the SDE coefficients as functions of other GPs in a hierarchical structure. Followed from the SDE expression of GP $f$ in Equation~\eqref{equ:gp-sde}, let us similarly define the state
	\begin{equation}
		\cu{u}^i_{j,k} \colon \mathbb{T} \to \R^{d },\nonumber
	\end{equation}
	for any GP $u^i_{j,k}$ in the DGP $U$. We then construct the DGP by finding the SDE representation for each $u^i_{j,k}$ to yield
	\begin{equation}
		\begin{split}
			\diff \cu{f} &= \cu{A}(\cu{U}^{2}_{1,\cdot})\,\cu{f} \diff t + \cu{L}(\cu{U}^{2}_{1,\cdot})\diff \cu{W}_f, \\
			\diff \cu{u}^2_{1,1} &= \cu{A}^2_1(\cu{U}^{3}_{1,\cdot}) \, \cu{u}^2_{1,1} \diff t + \cu{L}^2_1(\cu{U}^{3}_{1,\cdot}) \diff \cu{W}^2_1,\\
			\diff \cu{u}^2_{1,2} &= \cu{A}^2_2(\cu{U}^{3}_{2,\cdot}) \, \cu{u}^2_{1,2} \diff t + \cu{L}^2_2(\cu{U}^{3}_{2,\cdot})\diff \cu{W}^2_2 , \\
			&\vdots \\
			\diff \cu{u}^i_{j,k} &= \cu{A}^i_k(\cu{U}^{i+1}_{k,\cdot})\, \cu{u}^i_{j,k} \diff t + \cu{L}^i_k(\cu{U}^{i+1}_{k,\cdot})\diff \cu{W}^i_{k},\\
			&\vdots
		\end{split}
		\label{equ:sdes-ensemble}
	\end{equation}
	where $\cu{W}_f\in\R^{S }$ and $\cu{W}^i_k\in\R^{S }$ for all $i$ and $k$ are mutually independent standard Wiener processes. Note that the above SDE system~\eqref{equ:sdes-ensemble} is \textit{non-linear}, and the coefficients are state-dependent. We denote by $\cu{U}^{i+1}_{k,\cdot}$ the collection states for all parent states of $\cu{u}^i_{j,k}$. For example, if $\cu{u}^2_{1,1}$ is conditioned on $\cu{u}^3_{1, 1}$ and $\cu{u}^3_{1, 2}$, then $\cu{U}^{3}_{1,\cdot} = \begin{bmatrix}
		\left( \cu{u}^3_{1, 1}\right)^\trans  & \left( \cu{u}^3_{1, 2}\right)^\trans 
	\end{bmatrix}^\trans$. To further condense the notation, we rearrange the above SDEs~\eqref{equ:sdes-ensemble} into 
	\begin{equation}
		\begin{split}
			\diff \cu{U}(t) &= \bm{\Lambda}(\cu{U}(t))\diff t + \bm{\beta}(\cu{U}(t))\diff\cu{W}(t), \\
			\cu{U}(t_0) &\sim \mathcal{N}(\cu{0}, \cu{P}_0),
		\end{split}
		\label{equ:sdes-ensemble-one-line}
	\end{equation}
	where 
	\begin{equation}
		\begin{split}
			&\cu{U}(t) = \begin{bmatrix}\cu{f}^\trans & \left( \cu{u}^2_{1,1}\right)^\trans  & \left( \cu{u}^2_{1,2}\right)^\trans & \cdots & \left( \cu{u}^i_{j,k}\right)^\trans  & \cdots \end{bmatrix}^\trans \in \R^{\varrho },\nonumber
		\end{split}
	\end{equation}
	is the SDE state of the entire DGP, $\cu{U}(t_0)$ is the Gaussian initial condition, $\varrho = d\sum^L_{i=1}L_i$ is the total dimension of the state, and 
	\begin{equation}
		\cu{W}(t)  = \begin{bmatrix}\cu{W}^\trans_f & (\cu{W}^2_1)^\trans & \cdots & (\cu{W}^i_k)^\trans & \cdots\end{bmatrix}^\trans \in \R^{\varrho }. \nonumber
	\end{equation}
	The drift $\bm{\Lambda}\circ\cu{U}\colon\mathbb{T}\to\R^{\varrho  }$ and dispersion $\bm{\beta}\circ\cu{U}\colon\mathbb{T}\to\R^{\varrho  }$ functions can be written as
	\begin{equation}
		\bm{\Lambda}(\cu{U}(t)) = \begin{bmatrix}
			\cu{A}(\cu{U}^{2}_{1,\cdot}) & & & &\\
			& \cu{A}^2_1(\cu{U}^{3}_{1,\cdot}) & & &\\
			& & \ddots & &\\
			& & &\cu{A}^i_k(\cu{U}^{i+1}_{k,\cdot}) &\\
			& & & & \ddots\\
		\end{bmatrix} \, \cu{U}(t),\nonumber
	\end{equation}
	and
	\begin{equation}
		\bm{\beta}(\cu{U}(t)) = \diag\left(\cu{L}(\cu{U}^{2}_{1,\cdot}), \cu{L}^2_1(\cu{U}^{3}_{1,\cdot}), \ldots, \cu{L}^i_k(\cu{U}^{i+1}_{k,\cdot}), \ldots \right),\nonumber
	\end{equation}
	respectively.
	
	The above SDE representation of DGP is general in the sense that the SDE coefficients of each GP and the number of layers are free. However, they cannot be completely arbitrary as we at least need to require that the SDE has a weakly unique solution. A classical sufficient condition is to have the coefficients globally Lipschitz continuous and have at most linear growth~\citep{SDE-Friedman1975, xu-sde-exist2008, sde-mao2008, oksendal}. These restrictive conditions can be further weakened, for example, to locally Lipschitz~\citep[Ch.~5]{SDE-Friedman1975} and weaker growth condition~\citep[Theorem 2.2]{shen-mao2006-SDE-weaker}. Alternatively, requiring the coefficients to be Borel measurable and locally bounded is enough for a unique solution~\citep[][Theorem 21.1 and Equation 21.9]{Williams2000Vol2}.
	
	It is also worth remarking that the SDE system~\eqref{equ:sdes-ensemble-one-line} and hence the DGP is a well-defined It\^{o} diffusion, provided that the coefficients are regular enough~\citep[Definition 7.1.1,][]{oksendal}. This feature is valuable, as being an It\^{o} diffusion offers many fruitful properties that we can use in practice, for example, continuity, Markov property, and the existence of infinitesimal generator~\citep{oksendal}. The Markov property is needed to ensure the existence of transition density and also enables the use of Bayesian filtering and smoothing for regression. The infinitesimal generator can be used to discretize the SDEs as we do in Section~\ref{sec:dgp-ss-reg}. 
	
	It is also possible to extend the SDE representations of temporal DGPs to stochastic partial differential equation (SPDE) representations of spatio-temporal DGPs. \citet{simoMagazine2013} given the following result. Suppose $v\colon \mathbb{X}\times \mathbb{T} \to \R$ is a spatio-temporal stationary GP on a suitable domain, such that $v(\cu{x}, t) \sim \mathcal{GP}(0, C(\cu{x}, \cu{x}', t, t'))$. Then $\cu{v}(\cu{x}, t)$ can be constructed as a solution to an evolution type of SPDE
	\begin{equation}
		\tash{\cu{v}(\cu{x}, t)}{t} = \mathcal{A}\cu{v}(\cu{x}, t) + \mathcal{B}w(\cu{x}, t), \nonumber
	\end{equation}
	where $\cu{v}(\cu{x}, t)$ is the state of $v$, $\mathcal{A}$ and $\mathcal{B}$ are spatial operators, and $w(\cu{x}, t)$ is the spatio-temporal white noise. \citet{Emzir2020} build a deep Gaussian field based on the \matern SPDE by~\citet{lindtrom2011}, which provides another path to the spatio-temporal case.
	
	\subsection{Deep \matern Process}
	\label{sec:deep-matern}
	In this section, we present a \matern construction of SS-DGP~\eqref{equ:sdes-ensemble-one-line}. The coefficients are chosen such that each SDE corresponds to a conditional GP with the \matern covariance function. The idea is to find an equivalent SDE representation for each \matern GP node, and then parametrize the covariance parameters (i.e., length-scale $\ell$ and magnitude $\sigma$) with another layer of \matern GPs. We are interested in a GP
	\begin{equation}
		u^i_{j,k}\mid \ell,\sigma \sim \mathcal{GP}(0, C(t,t';\ell,\sigma)),
		\label{equ:matern-u}
	\end{equation}
	with the \matern covariance function
	\begin{equation}
		\begin{split}
			C(t, t') &=\frac{\sigma^2\,2^{1-\nu}}{\Gamma(\nu)}\left(\kappa\,\abs{t - t'} \right)^\nu K_\nu \left(\kappa\,\abs{t - t'} \right),
			\label{equ:matern-cov}
		\end{split}
	\end{equation}
	where $K_\nu$ is the modified Bessel function of the second kind and $\Gamma$ is the Gamma function. We denote $\kappa = \sqrt{2\nu}/\ell$ and $\nu = \alpha + 1/2$.
	
	As shown by \citet{jouni2010} and~\citet{simoMagazine2013}, one possible SDE representation of \matern GP $u^i_{j,k}$ in Equation~\eqref{equ:matern-u} is
	\begin{equation}
		\diff \cu{u}^i_{j,k} = \cu{A}^i_k\, \cu{u}^i_{j,k} \diff t + \cu{L}^i_k \diff W^i_k,
		\label{equ:matern-sde}
	\end{equation}
	where the state
	\begin{equation}
		\cu{u}^i_{j,k}(t)= \begin{bmatrix}
			u^i_{j,k}(t) & Du^i_{j,k}(t) & \cdots & D^\alpha u^i_{j,k}(t)
		\end{bmatrix}^\trans \in\R^{d },\nonumber
	\end{equation}
	and the SDE coefficients $\cu{A}^i_k$ and $\cu{L}^i_k$ admit the form
	\begin{equation}
		\cu{A}^i_k = \begin{bmatrix}
			0 & 1 & & \\
			& 0 & 1 & \\
			\vdots & & \ddots&\\
			-\binom{\alpha}{0}\kappa^\alpha & -\binom{\alpha}{1}\kappa^{\alpha-1} & \cdots & -\binom{\alpha}{\alpha-1}\kappa
		\end{bmatrix},\nonumber
	\end{equation}
	and
	\begin{equation}
		\cu{L}^i_k = \begin{bmatrix}
			0 & 0 & \cdots & \frac{\sigma\,\Gamma(\alpha+1) }{\sqrt{\Gamma(2\alpha +1)}}\left(2\,\kappa \right)^{(\alpha+\frac{1}{2})}
		\end{bmatrix}^\trans,
		\label{equ:matern-L}
	\end{equation}
	respectively. Above, we denote by $\binom{\alpha}{1}$ a binomial coefficient and $W^i_k\in\R$ is a standard Wiener process. Next, to contruct the deep \matern process, we need to parametrize the length scale $\ell$ and magnitude $\sigma$ by the states of parent GPs and build the system as in Equation~\eqref{equ:sdes-ensemble}. For example, if we want to use $u^3_{1,1}$ and $u^3_{1,2}$ to model the length scale and magnitude of $u^2_{1,1}$, then $\ell^2_{1,1} = g(u^3_{1,1})$ and $\sigma^2_{1,1} = g(u^3_{1,2})$. The wrapping function $g\colon\R\to(0,\infty)$ is mandatory to ensure the positivity of \matern parameters. The minimal requirement for function $g$ is to be positive and Borel measurable. For instance, we can let $g(u) = \exp(u) + c$ or $g(u) = u^2 + c$ for some $c>0$. Another choice is to let $g(u) = 1/(u^2+c)$ that is bounded and Lipschitz on $\R$, which makes the deep \matern process an It\^{o} diffusion and we have the SS-DGP well-defined~\citep{oksendal}.
	
	Note that the resulting state-space model composed of~\eqref{equ:matern-sde} has a canonical form from control theory~\citep{Torkel_control_2000}, and the dimensionality is determined by the smoothness parameter $\alpha$. Moreover, the coefficient $\cu{A}^i_k$ is Hurwitz, because all the eigenvalues have strictly negative real part. The stability of such system is studied, for example, in~\citet{Khasminskii2012}.
	
	\begin{example}
		\label{example:matern-ss-gp}
		Corresponding to Example~\ref{example:matern-gp}, the SDE construction of the two layer (exponential) \matern process is formulated as follows:
		\begin{equation}
			\begin{split}
				\diff f &= -\frac{1}{g(u^2_{1,1})}\,f\diff t + \frac{\sqrt{2}\,g(u^2_{1,2})}{\sqrt{g(u^2_{1,1})}} \diff W_f, \\
				\diff u^2_{1,1} &= -\frac{1}{g(u^3_{1,1})}\,u^2_{1,1}\diff t + \frac{\sqrt{2}\,g(u^3_{1,2})}{\sqrt{g(u^3_{1,1})}} \diff W^2_{1,1}, \\
				\diff u^2_{1,2} &= -\frac{1}{g(u^3_{2,3})}\,u^2_{1,2}\diff t + \frac{\sqrt{2}\,g(u^3_{2,4})}{\sqrt{g(u^3_{2,3})}} \diff W^2_{1,2},
			\end{split}\nonumber
		\end{equation}
		where we have states $\cu{U} = \begin{bmatrix}
			f & u^2_{1,1} & u^2_{1,2}\end{bmatrix}^\trans$ and the SDE coefficient functions 
		\begin{equation}
			\bm{\Lambda}(\cu{U}) = \begin{bmatrix}
				-\frac{1}{g(u^2_{1,1})} & &\\
				& -\frac{1}{g(u^3_{1,1})} &\\
				& & -\frac{1}{g(u^3_{2,3})}
			\end{bmatrix}\,\cu{U},\nonumber
		\end{equation}
		and $\bm{\beta}(\cu{U}) = \diag\left(
		\frac{\sqrt{2}\,g(u^2_{1,2})}{\sqrt{g(u^2_{1,1})}}, \frac{\sqrt{2}\,g(u^3_{1,2})}{\sqrt{g(u^3_{1,1})}}, \frac{\sqrt{2}\,g(u^3_{2,4})}{\sqrt{g(u^3_{2,3})}}
		\right)$. The length scale $\ell^2_{1,1}$ and magnitude $\sigma^2_{1,2}$ of $f$ are given by $\ell^2_{1,1} = g(u^2_{1,1})$ and $\sigma^2_{1,2} = g(u^2_{1,2})$, respectively.
	\end{example}
	
	\section{State-space Deep Gaussian Process Regression}
	\label{sec:dgp-ss-reg}
	In this section, we formulate sequential state-space regression by DGPs. By using the result in Equation~\eqref{equ:sdes-ensemble-one-line}, the state-space regression model is
	\begin{equation}
		\begin{split}
			\diff \cu{U}(t) &= \bm{\Lambda}(\cu{U}(t))\diff t + \bm{\beta}(\cu{U}(t))\diff\cu{W}(t), \\
			y_k &= \cu{H}\,\cu{U}(t_k) + r_k,
		\end{split}
		\label{equ:sde-reg-model}
	\end{equation}
	where the initial condition $\cu{U}(t_0)\sim\mathcal{N}(\cu{0}, \cu{P}_0)$ is independent of $\cu{W}(t)$ for $t\geq 0$, and $\cu{H}\,\cu{U}(t_k) = f(t_k)$ extracts the top GP $f$ from the state. We also assume that the functions $\bm{\Lambda}$ and $\bm{\beta}$ are selected suitably such that the SDE~\eqref{equ:sde-reg-model} has a weakly unique solution and imply Markov property~\citep{SDE-Friedman1975}. The deep \matern process and Example~\ref{example:matern-ss-gp} satisfy the required two conditions, provided that function $g$ is chosen properly.
	
	Suppose we have a set of observations $\cu{y}_{1:N} = \left\lbrace y_1, y_2,\ldots, y_N \right\rbrace$, then the posterior density of interests is
	\begin{equation}
		p(\cu{U}(t)\mid \cu{y}_{1:N}),
		\label{equ:ss-dgp-post}
	\end{equation}
	for any $t_1\leq t\leq t_N$. Since we have discrete-time measurements, let us denote by \begin{equation}
		\cu{U}_k\coloneqq\cu{U}(t_k),\nonumber
	\end{equation}
	for $k=1,2,\ldots, N$ and use $\cu{U}_{1:N} = \left\lbrace \cu{U}_1,\ldots, \cu{U}_N \right\rbrace $. Also, it would be is possible to extend the regression to classification by using a categorical measurement model \citep{gp-carl-edward,Garcia_et_al:2019a}. 
	
	\subsection{SDE Discretization}
	\label{sec:sde-discretize}
	To obtain the posterior density with discrete-time observations, we need the transition density of the SDE, such that $\cu{U}_{k+1} \sim p(\cu{U}_{k+1} \mid \cu{U}_{k})$. It is known that the transition density is the solution to the Fokker--Planck--Kolmogorov (FPK) partial differential equation~\citep[PDE, ][]{sarkka2019}. However, solving a PDE is not computationally cheap, and does not scale well for large-dimensional state. It is often more convenient to discretize the SDEs and approximate the continuous-discrete state-space model~\eqref{equ:sde-reg-model} with a discretized version
	\begin{equation}
		\begin{split}
			\cu{U}_{k+1} &= \cu{a}(\cu{U}_k) + \cu{q}(\cu{U}_k),\\
			y_k &= \cu{H}\,\cu{U}_k + r_k,
		\end{split}
		\label{equ:ss-dgp-disc}
	\end{equation}
	where $\cu{a}\colon\R^\varrho\to\R^\varrho$ is a function of state, and $\cu{q}\colon\R^\varrho\to\R^\varrho$ is a zero-mean random variable depending on the state. One of the most commonly used methods to derive functions $\cu{a}$ and $\cu{q}$ is the Euler--Maruyama scheme~\citep{peter-sde-num}. Unfortunately, the Euler--Maruyama is not applicable for many DGP models, as the covariance $\cu{q}$ would be singular. As an example, a smooth \matern ($\alpha\geq1$) GP's SDE representation gives a singular $\bm{\beta}(\cu{U}_k)\,\bm{\beta}^\trans(\cu{U}_k)$ (see Equation~\eqref{equ:matern-L}), thus the transition density $p(\cu{U}_{k+1} \mid \cu{U}_{k})$ is ill-defined. 
	
	The Taylor moment expansion (TME) is one way to proceed instead of Euler--Maruyama~\citep{zhao2020taylor, kessler1997taylorexoansion, florenz1989}. This method requires that the SDE coefficients $\bm{\Lambda}$ and $\bm{\beta}$ are differentiable and there exists an infinitesimal generator for the SDE~\citep{zhao2020taylor}. The deep \matern process satisfies these conditions provided that the wrapping function $g$ is chosen suitably. 
	
	We remark that at this point that we have formed an approximation to the SS-DGP in order to use its Markov property. This is different from the batch-DGP model where we do not utilize the Markov property for regression. In summary, we approximate the transition density
	\begin{equation}
		\begin{split}
			p(\cu{U}_{k+1} \mid \cu{U}_{k}) &\approx
			\mathcal{N}(\cu{U}_{k+1} \mid \cu{a}(\cu{U}_k), \cu{Q}(\cu{U}_k) ), \\
			\cu{Q}(\cu{U}_k) &= \cov\left[\cu{q}(\cu{U}_k)\mid \cu{U}_k\right], \nonumber
		\end{split}
	\end{equation}
	as a non-linear Gaussian, where a discretization such as Euler--Maruyama or TME is used. With the transition density formulated, we can now approximate the posterior density~\eqref{equ:ss-dgp-post} of SS-DGP using sequential methods in state-space.
	
	\subsection{State-space MAP Solution}
	\label{sec:ss-MAP}
	The MAP solution to the SS-DGP model is fairly similar to the batch-DGP model, except that we factorize the posterior density with the Markov property. Suppose that we are interested in the posterior density $p(\cu{U}_{0:N}\mid \cu{y}_{1:N})$ at $N$ discrete observation points, then we factorize the posterior density by
	\begin{equation}
		\begin{split}
			&p(\cu{U}_{0:N}\mid \cu{y}_{1:N}) \\
			&\propto p(\cu{y}_{1:N}\mid \cu{U}_{0:N})\,p(\cu{U}_{0:N}) \\
			&= \prod^N_{k=1}\mathcal{N}(y_k\mid \cu{H}\,\cu{U}_k, R_k)\,p(\cu{U}_0)\,\prod^N_{k=1} p(\cu{U}_k\mid \cu{U}_{k-1}).
		\end{split}
		\label{equ:ss-MAP-post}
	\end{equation}
	
	By taking the negative logarithm on both sides of Equation~\eqref{equ:ss-MAP-post}, the MAP estimate of SS-DGP is given by 
	\begin{equation}
		\cu{U}_{0:N}^{\mathrm{SMAP}}= \argmin_{\cu{U}_{0:N}} \mathcal{L}^{\mathrm{SMAP}}(\cu{U}_{0:N};\cu{y}_{1:N}),
	\end{equation}
	where
	\begin{equation}
		\begin{split}
			&\mathcal{L}^{\mathrm{SMAP}}(\cu{U}_{0:N};\cu{y}_{1:N}) \\
			&= -\log \left[p(\cu{y}_{1:N}\mid \cu{U}_{1:N})\,p(\cu{U}_0)\prod^N_{k=1} p(\cu{U}_k\mid \cu{U}_{k-1})\right] \\
			&=  \frac{1}{2}\sum^N_{k=1}\left[ \frac{1}{R_k}(y_k - \cu{H}\,\cu{U}_k)^2 + \log\det(2\,\pi\,R_k)\right] \\
			&+\sum^N_{k=1}\Big[\left(\cu{U}_k-\cu{a}(\cu{U}_{k-1})\right)^\trans\cu{Q}^{-1}(\cu{U}_{k-1})\left(\cu{U}_k-\cu{a}(\cu{U}_{k-1})\right) \\
			&\quad+ \log\det(2\,\pi\,\cu{Q}(\cu{U}_{k-1})) \Big]\times \frac{1}{2} \\
			&+\frac{1}{2} \left[ \cu{U}_0^\trans\,\cu{P}_0^{-1}\,\cu{U}_0 + \log\det(2\,\pi\,\cu{P}_0)\right] .
		\end{split}
		\label{equ:ss-MAP-loss-func}
	\end{equation}
	
	The corresponding gradient of~\eqref{equ:ss-MAP-loss-func} is given in Appendix~\ref{append:ss-MAP}. The computational complexity of this SS-DGP MAP estimation is $\mathcal{O}(N \,(d\,\sum^{L}_{i=1}L_i)^3)$ which is in contrast with the complexity $\mathcal{O}(N^3\,\sum^{L}_{i=1}L_i)$ of the batch-DGP. We see that the state-space MAP solution has an advantage with large dataset, as the computational complexity is linear with respect to the number of data points $N$. 
	
	The state-space MAP method also has the problem that it is inherently a point estimate. One way to proceed is to use a Bayesian filter and smoother instead of the MAP estimates~\citep{sarkka2013}. 
	
	\subsection{Bayesian Filtering and Smoothing Solution}
	\label{sec:ss-BFS}
	Recall the original SS-DGP model~\eqref{equ:sde-reg-model}. The estimation of the state from an observed process is equivalent to computing the posterior distribution \eqref{equ:ss-dgp-post} which in turn is equivalent to a continuous-discrete filtering and smoothing problem~\citep{Jazwinski1970, sarkka2019}. Compared to the MAP solution, the Bayesian smoothing approaches offer the full posterior distribution instead of a point estimate.
	
	The core idea of Bayesian smoothing is to utilize the Markov property of the process and approximate the posterior density recursively at each time step. In particular, we are interested in the filtering posterior 
	\begin{equation}
		p(\cu{U}_k\mid\cu{y}_{1:k}),
		\label{equ:filtering-post}
	\end{equation}
	and the smoothing posterior
	\begin{equation}
		p(\cu{U}_{k}\mid \cu{y}_{1:N}),
		\label{equ:smoothing-post}
	\end{equation}
	for any $k=1,2,\ldots, N$. There are many well-known methods to obtain the above posterior densities, such as the Kalman filter and Rauch--Tung--Striebel smoother for linear Gaussian state-space models. Typical methods for non-linear SS-DGP models are the Gaussian filters and smoothers~\citep{sarkka2013gaussian, Kushner-early1967, Kazufumi2000}. Some popular examples are the extended Kalman filter and smoother (EKF/EKS), and the unscented or cubature Kalman filter and smoothers (UKF/UKS/CKF/CKS). The significant benefit of Gaussian filters and smoothers is the computational efficiency, as they scale linearly with the number of measurements. 
	
	\begin{remark}
		\label{remark:ss-GFS-problem}
		Although the Gaussian filters and smoothers are beneficial choices in terms of computation, there are certain limitations when applying them to DGP regression. We elucidate this peculiar characteristic in Section~\ref{sec:restriction-GFS}. 
	\end{remark}
	
	Instead of Gaussian filters and smoothers, we can use a particle filter and smoother on a more general setting of DGPs~\citep{Simon2004, Christophe2010}. Typical choices are the bootstrap particle filter~\citep[PF,][]{Gorden1993} with resampling procedures~\citep{Kitagawa1996} and the backward-simulation particle smoother~\citep[PF-BS,][]{Simon2004}. However, particle filters and smoothers do not usually scale well with the dimension of state-space, as we need more particles to represent the probability densities in higher dimensions. Other non-Gaussian assumed density filters and smoothers might also apply, for example, the projection filter and smoother~\citep{projection1998, ShinsukeProjSm2018}.
	
	\section{Analysis on Gaussian Approximated DGP Posterior}
	\label{sec:restriction-GFS}
	Gaussian filters are particularly efficient methods, which approximate the DGP posterior~\eqref{equ:filtering-post} and the predictive density $p(\cu{U}_k\mid\cu{y}_{1:k-1})$ as Gaussian~\citep{Kazufumi2000}. Under linear additive Gaussian measurement models, the posterior density is approximated analytically by applying Gaussian identities. However, we are going to show that this type of Gaussian approximation is not useful for all constructions of DGPs. In particular, we show that the estimated posterior covariance of the observed GP $f(t)$ and an inner GP $\sigma(t)$ approaches to zero as $t\to\infty$. As a consequence, the Gaussian filtering update for $\sigma(t)$ will not use information from measurements as $t\to\infty$. 
	
	Hereafter, we restrict our analysis to a certain construction of DGPs and a class of Gaussian approximations (filters) for which we can prove the covariance vanishing property. 
	Therefore, in Section~\ref{sec:prep-assump} we define a construction of DGPs, and in Algorithm~\ref{alg:GFS} we formulate a type of Gaussian filters. The main result is revealed in Theorem~\ref{thm:cov-post}. 
	
	We organize the proofs as follows. First we show that at every time step the predictions from DGPs give vanishing prior covariance (in Lemma~\ref{lemma:cov-form}). Then we show that the Gaussian filter update step also shrinks the covariance (in Theorem~\ref{thm:cov-post}). Finally we prove the vanishing posterior covariance by mathematical induction over all time steps as in Theorem~\ref{thm:cov-post}. 
	
	\subsection{Preliminaries and Assumptions}
	\label{sec:prep-assump}
	Let $f\colon\mathbb{T}\to\R$ and $u_\sigma\colon\mathbb{T}\to\R$ be the solution to the pair of SDEs
	\begin{equation}
		\begin{split}
			\diff f(t) &= \mu(u_\ell(t))\,f(t) \diff t + \theta(u_\ell(t), u_\sigma(t) )\diff W_f(t), \\
			\diff u_\sigma(t) &= a(u_v(t))\,u_\sigma(t)\diff t + b(u_v(t))\diff W_\sigma(t),
		\end{split}
		\label{equ:analysis-sde}
	\end{equation}
	for $t \geq t_0$ starting from random initial conditions $f(t_0), u_\sigma(t_0)$ which are independent of the Wiener processes $W_f(t) \in\R$ and $W_\sigma(t) \in\R$. In addition, $u_\ell\colon\mathbb{T}\to\R$ and $u_v\colon\mathbb{T}\to\R$ are two independent processes driving the SDEs~\eqref{equ:analysis-sde},  which are also independent of $W_f(t) \in\R$ and $W_\sigma(t) \in\R$ for $t\geq t_0$. 
	
	Let $y(t_k) = f(t_k) + r(t_k)$ be the noisy observation of $f(t)$ at time $t_k$, where $r(t_k)\sim\mathcal{N}(0,R_k)$ and $k=1,2,\ldots$. Also let $\cu{y}_{1:k} = \left\lbrace y_1,\ldots, y_k \right\rbrace $ and $\Delta t = t_k-t_{k-1}>0$ for all $k$. We make the following assumptions.
	\begin{assumption}\label{assump:solution}
		The functions $\mu\colon\R\to(-\infty, 0)$, $\theta\colon \R\times\R\to\R$, $a\colon\R\to(-\infty, 0)$, and $b\colon\R\times\R\to\R$ and the initial conditions $f(t_0)$, $u_\sigma(t_0)$, $u_\ell(t_0)$, and $u_v(t_0)$ are chosen regular enough so that the solution to SDEs~\eqref{equ:analysis-sde} exists.
	\end{assumption}
	\begin{assumption}\label{assump:init}
		$\expec\left[f^2(t_0)\right]<\infty$, $\expec\left[u_\sigma^2(t_0)\right]<\infty$, and $\expec\left[(f(t_0)\,u_\sigma(t_0))^2\right]<\infty$.
	\end{assumption}
	\begin{assumption}\label{assump:neg}
		There exists constants $C_\mu<0$ and $C_a<0$ such that $(\mu\circ u_\ell)(t)\leq C_\mu$ and $(a\circ u_v)(t)\leq C_a$ almost surely.
	\end{assumption}
	\begin{assumption}\label{assump:bound}
		$\expec\left[(\mu(u_\ell(t))f(t))^2\right]\leq C<\infty$ almost everywhere and $\expec\left[\theta^2(u_\ell(t), u_\sigma(t))\right]<\infty$. Also $\expec\left[\theta^2(u_\ell(t), u_\sigma(t))\right] \geq C_\theta>0$ almost everywhere.
	\end{assumption}
	\begin{assumption}\label{assump:var-R}
		There exists a constant $C_R>0$ such that $R_k\geq C_R$ for all $k=1,2,\ldots,$ or there exists a $k$ such that $R_k=0$.
	\end{assumption}
	
	The solution existence in Assumption~\ref{assump:solution} is the prerequisite for the analysis of SDEs~\eqref{equ:analysis-sde}~\citep{Kuo2006Book, oksendal}. Assumption~\ref{assump:init} ensures that the SDEs start from a reasonable condition which is used in Lemma~\ref{lemma:cov-form}. Assumption~\ref{assump:neg} postulates negativity on functions $\mu$ and $a$. It implies that the sub-processes $f$ and $u_\sigma$ stay near zero. Also, the negativity guarantees the positivity of lengthscale (e.g., the lengthscale of $f(t)$ is $-\mu(u_\ell(t))$). Assumption~\ref{assump:bound} yields a lower bound on the variance of $f$ as stated in Corollary~\ref{cor:var-f-bound}. Finally, Assumption~\ref{assump:var-R} means that the measurement noise admits a lower bound uniformly which is used in Theorem~\ref{thm:cov-post}. This assumption also allows for perfect measurements (i.e., no measurement noises).
	
	The above SDEs~\eqref{equ:analysis-sde} and Assumptions~\ref{assump:solution}-\ref{assump:var-R} correspond to a type of DGP constructions. In particular, $f$ is a conditional GP given $u_\sigma$ and $u_\ell$. Also, $u_\sigma$ is another conditional GP given $u_v$. The processes $u_\ell$ and $u_v$ are two independent processes that drive $f$ and $u_\sigma$. The \matern DGP in Example~\ref{example:matern-ss-gp} satisfies the above assumptions, if we choose Gaussian initial conditions and a regular wrapping function by, for example, $g(u)=u^2 + c$  and $c>0$. 
	
	\subsection{Theoretical Results}
	The following Lemma~\ref{lemma:cov-form} shows that the covariance of $f(t)$ and $u_\sigma(t)$ approaches to zero as $t\to\infty$. 
	\begin{lemma}
		\label{lemma:cov-form}
		Under Assumptions~\ref{assump:solution} to~\ref{assump:neg}, 
		\begin{equation}
			\lim_{t\to \infty} \cov[f(t),u_\sigma(t)] = 0.
			\label{equ:cov-limit}
		\end{equation} 
	\end{lemma}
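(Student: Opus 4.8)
The plan is to avoid a Grönwall argument on the covariance ODE directly (whose drift term $\expec[(\mu(u_\ell)+a(u_v))\,f\,u_\sigma]$ is not sign‑definite) and instead work with the variation‑of‑constants representations of the two linear SDEs in \eqref{equ:analysis-sde}, reducing the claim to the decay of a single surviving term. Writing $\Psi(t,s)=\exp\!\big(\int_s^t \mu(u_\ell(r))\diff r\big)$ and $\Phi(t,s)=\exp\!\big(\int_s^t a(u_v(r))\diff r\big)$ for the (random, but Wiener‑independent) transition factors, Assumption~\ref{assump:neg} gives the pathwise bounds $\Psi(t,t_0)\le e^{C_\mu(t-t_0)}$ and $\Phi(t,t_0)\le e^{C_a(t-t_0)}$, both vanishing as $t\to\infty$. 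By Assumption~\ref{assump:solution} the solutions admit the representations
\begin{equation}
\begin{split}
f(t) &= \Psi(t,t_0)\,f(t_0) + \int_{t_0}^t \Psi(t,s)\,\theta(u_\ell(s),u_\sigma(s))\diff W_f(s),\\
u_\sigma(t) &= \Phi(t,t_0)\,u_\sigma(t_0) + \int_{t_0}^t \Phi(t,s)\,b(u_v(s))\diff W_\sigma(s),
\end{split}\nonumber
\end{equation}
and I would write $\cov[f(t),u_\sigma(t)]=\expec[f(t)\,u_\sigma(t)]-\expec[f(t)]\,\expec[u_\sigma(t)]$ and treat the cross moment and the product of means separately.

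First I would dispose of the means. Conditioning on all the driving randomness other than $W_f$ (respectively other than $W_\sigma$) turns each stochastic integral into a mean‑zero Wiener integral, so $\expec[f(t)]=\expec[\Psi(t,t_0)f(t_0)]$ and $\expec[u_\sigma(t)]=\expec[\Phi(t,t_0)u_\sigma(t_0)]$. Together with the finiteness of $\expec|f(t_0)|$ and $\expec|u_\sigma(t_0)|$ (Assumption~\ref{assump:init}), the exponential bounds force $\expec[f(t)]\to 0$ and $\expec[u_\sigma(t)]\to 0$, hence their product vanishes.

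The core is the cross moment $p(t)=\expec[f(t)\,u_\sigma(t)]$. Expanding the product of the two representations produces four terms. The initial–initial term is bounded by $e^{(C_\mu+C_a)(t-t_0)}\,\expec|f(t_0)u_\sigma(t_0)|$, which tends to zero by Assumption~\ref{assump:init}. The two mixed initial–integral terms vanish because, after conditioning on everything except the single Brownian motion driving the relevant integral, that integral has conditional mean zero. The delicate term is the product of the two stochastic integrals $\expec[I_f(t)\,I_\sigma(t)]$: these are \emph{not} independent, since the integrand of $I_f$ contains $u_\sigma$ and hence $W_\sigma$. Here I would condition on $\mathcal{F}=\sigma(u_\ell,u_v,W_\sigma,u_\sigma(t_0))$, which makes $I_\sigma$ measurable and leaves $I_f$ a Wiener integral in the independent $W_f$; by the tower property $\expec[I_f I_\sigma]=\expec[I_\sigma\,\expec[I_f\mid\mathcal{F}]]=0$. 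Thus $p(t)$ collapses to the initial–initial term and $p(t)\to 0$.

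Combining, $\cov[f(t),u_\sigma(t)]=p(t)-\expec[f(t)]\expec[u_\sigma(t)]\to 0$, giving \eqref{equ:cov-limit}. I expect the main obstacle to be the rigorous handling of the cross term $\expec[I_f I_\sigma]$ and, more broadly, justifying the vanishing (conditional) expectation of stochastic integrals whose integrands are random and anticipate the transition factors $\Psi,\Phi$; the clean remedy is the conditioning/tower‑property argument above, which isolates the independent $W_f$ and needs only the square‑integrability implied by the stated assumptions.
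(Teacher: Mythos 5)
Your proof is correct, and it reaches the same key identity as the paper --- that the cross moment reduces to $\expec\big[\Psi(t,t_0)\,\Phi(t,t_0)\,f(t_0)\,u_\sigma(t_0)\big]$ and the means to $\expec[\Psi(t,t_0)f(t_0)]$ and $\expec[\Phi(t,t_0)u_\sigma(t_0)]$ --- but by a genuinely different route. The paper applies It\^{o}'s product rule to $f(t)\,u_\sigma(t)$, takes conditional expectations given the filtrations of $u_\ell$ and $u_v$ to obtain a scalar ODE for the conditional cross moment, solves it, and then invokes the law of total expectation together with Jensen's and H\"{o}lder's inequalities before using Assumption~\ref{assump:neg}. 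You instead write out the variation-of-constants (Duhamel) representations of the two conditionally linear SDEs, expand the product into four terms, and annihilate the three terms containing a stochastic integral by conditioning on everything except the relevant (independent) Wiener process and applying the tower property; this makes explicit the one genuinely delicate point, namely that $I_f$ and $I_\sigma$ are \emph{not} independent because $\theta$ depends on $u_\sigma$, which the paper's route sidesteps automatically since the quadratic covariation $\diff[f,u_\sigma]$ vanishes when $W_f\perp W_\sigma$. Your final estimate is in fact slightly cleaner: since $\Psi(t,t_0)\Phi(t,t_0)\leq e^{(C_\mu+C_a)(t-t_0)}$ holds pathwise, you can pull the bound out directly and only need $\expec\abs{f(t_0)u_\sigma(t_0)}<\infty$, whereas the paper routes through Cauchy--Schwarz on second moments. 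Both arguments rely on the same implicit regularity (joint independence of the driving sources and enough integrability for the stochastic integrals to be mean zero), so you are on equal footing there; just state explicitly that $W_f$ is independent of the \emph{joint} collection $(u_\ell,u_v,W_\sigma,f(t_0),u_\sigma(t_0))$, since that is what the conditioning step actually uses.
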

	\begin{proof}
		Let $m_f(t) \coloneqq \expec[f(t)]$, $m_\sigma(t) \coloneqq \expec[u_\sigma(t)]$. By It\^{o}'s lemma~\citep[see, e.g., Theorem 4.2 of][]{sarkka2019}, 
		\begin{equation}
			\begin{split}
				&\diff \left(f(t)\,u_\sigma(t)\right) \\
				&\quad= \left[ u_\sigma(t)\,\mu(u_\ell(t))\,f(t) +f(t)\, a(u_v(t))\,u_\sigma(t)\right] \diff t \\
				&\quad\quad+\frac{1}{2} [u_\sigma(t)\,\theta(u_\ell(t),u_\sigma(t))\diff W_f(t)\\
				&\quad\quad\quad+ f(t)\,b(u_v(t))\diff W_\sigma(t) ].
			\end{split}
			\label{equ:thm1-ito-lemma}
		\end{equation}
		To analyze the relation between $f$ and $u_\sigma$, we need to fix the information from $u_v$ and $u_\ell$. Hence, let $\mathcal{F}^v_t$ and $\mathcal{F}^\ell_t$ be the generated filtrations of $u_v(t)$ and $u_\ell(t)$, respectively. Taking conditional expectations on the above Equation~\eqref{equ:thm1-ito-lemma} gives
		\begin{equation}
			\begin{split}
				&\diff \expec\left[f(t)\,u_\sigma(t)\mid \mathcal{F}^v_t, \mathcal{F}^\ell_t\right] \\
				&= \expec\left[ \left( \mu(u_\ell(t)) + a(u_v(t))\right) \,f(t)\,u_\sigma(t)\mid \mathcal{F}^v_t, \mathcal{F}^\ell_t\right]\diff t \\
				&= (\mu(u_\ell(t)) + a(u_v(t)))\, \expec\left[f(t)\,u_\sigma(t)\mid \mathcal{F}^v_t, \mathcal{F}^\ell_t\right]\,\diff t. \nonumber
			\end{split}
		\end{equation}
		Thus
		\begin{equation}
			\begin{split}
				&\expec\left[f(t)\,u_\sigma(t)\mid \mathcal{F}^v_t, \mathcal{F}^\ell_t\right] \\
				&= \expec[f(t_0)\,u_\sigma(t_0)\mid u_v(t_0), u_\ell(t_0)] \,e^{\int^t_{t_0}\mu(u_\ell(s))+a(u_v(s))\diff s}.\nonumber
			\end{split}
		\end{equation}
		Using the same approach, we derive 
		\begin{equation}
			\begin{split}
				\expec\left[f(t)\mid \mathcal{F}^\ell_t\right] &= \expec[f(t_0)\mid u_\ell(t_0)]\,e^{\int^t_{t_0}\mu(u_\ell(s))\diff s}, \\
				\expec\left[u_\sigma(t)\mid \mathcal{F}^v_t\right] &= \expec[u_\sigma(t_0)\mid u_v(t_0)]\,e^{\int^t_{t_0}a(u_v(s))\diff s}.
			\end{split}
		\end{equation}
		Then by law of total expectation, we recover \begin{equation}
			\begin{split}
				&\cov[f(t), u_\sigma(t)] \\
				&= \expec\Big[\expec[f(t_0)\, u_\sigma(t_0)\mid u_\ell(t_0), u_v(t_0)] \\
				&\quad\quad\times e^{\int^t_{t_0}\mu(u_\ell(s))+a(u_v(s))\diff s}\Big] \\
				&\quad - \expec\left[\expec[f(t_0)\mid u_\ell(t_0)]\,e^{\int^t_{t_0}\mu(u_\ell(s))\diff s}\right] \\
				&\quad\quad\times\expec\left[\expec[u_\sigma(t_0)\mid u_v(t_0)]\,e^{\int^t_{t_0}a(u_v(s))\diff s}\right].
			\end{split}
			\label{equ:total-cov}
		\end{equation}
		Taking the limit of Equation~\eqref{equ:total-cov} gives 
		\begin{equation}
			\begin{split}
				&\lim_{t\to\infty}\cov[f(t), u_\sigma(t)] \\
				&= \lim_{t\to\infty}\expec\Big[\expec[f(t_0)\, u_\sigma(t_0)\mid u_\ell(t_0), u_v(t_0)]\\
				&\quad\quad \quad\quad\times e^{\int^t_{t_0}\mu(u_\ell(s))+a(u_v(s))\diff s}\Big] \\
				&\quad - \lim_{t\to\infty}\expec\left[\expec[f(t_0)\mid u_\ell(t_0)]\,e^{\int^t_{t_0}\mu(u_\ell(s))\diff s}\right]\\
				&\quad\quad\times\lim_{t\to\infty}\expec\left[\expec[u_\sigma(t_0)\mid u_v(t_0)]\,e^{\int^t_{t_0}a(u_v(s))\diff s}\right],
				\label{equ:cov-limit-all}
			\end{split}
		\end{equation}
		where all the three limits on the right side turn out to be zero. Let us first focus on $\expec\left[\expec[f(t_0)\mid u_\ell(t_0)]\,e^{\int^t_{t_0}\mu(u_\ell(s))\diff s}\right]$. By Jensen's inequality~\citep[see, e.g., Theorem 7.9 of][]{AchimKlenke2014}
		\begin{equation}
			\begin{split}
				&\abs{\expec\left[\expec[f(t_0)\mid u_\ell(t_0)]\,e^{\int^t_{t_0}\mu(u_\ell(s))\diff s}\right]} \\
				& \leq\expec\left[\abs{\expec[f(t_0)\mid u_\ell(t_0)]\,e^{\int^t_{t_0}\mu(u_\ell(s))\diff s}}\right], \nonumber
			\end{split}
		\end{equation}
		for $t \in \mathbb{T}$. Then by H\"{o}lder's inequality~\citep[see, e.g., Theorem 7.16 of][]{AchimKlenke2014}, the above inequality continues as
		\begin{equation}
			\begin{split}
				&\expec\left[\abs{\expec[f(t_0)\mid u_\ell(t_0)]\,e^{\int^t_{t_0}\mu(u_\ell(s))\diff s}}\right] \\
				&\leq \sqrt{\expec\left[\expec^2[f(t_0)\mid u_\ell(t_0)]\right]}\, \sqrt{ \expec\left[e^{2\int^t_{t_0}\mu(u_\ell(s))\diff s}\right]}.\nonumber
			\end{split}
		\end{equation}
		Now by using Assumption~\ref{assump:neg}, we know that there exists a constant $C_\mu<0$ such that $(\mu\circ u_\ell)(t)\leq C_\mu$ almost surely. Hence
		\begin{equation}
			\begin{split}
				\expec\left[e^{\int^t_{t_0}\mu(u_\ell(s))\diff s}\right] &\leq \expec\left[e^{C_\mu\,(t-t_0)} \right] = e^{C_\mu\,(t-t_0)},\nonumber
			\end{split}
		\end{equation}
		for all $t>t_0$. Therefore
		\begin{equation}
			\begin{split}
				&\lim_{t\to\infty}\expec\left[\expec[f(t_0)\mid u_\ell(t_0)]\,e^{\int^t_{t_0}\mu(u_\ell(s))\diff s}\right] \\
				&\quad= \sqrt{\expec\left[\expec^2[f(t_0)\mid u_\ell(t_0)]\right]}\,\lim_{t\to\infty}e^{2\,C_\mu\,(t-t_0)} =0.\nonumber
			\end{split}
		\end{equation}
		Assumption~\ref{assump:init} ensures that $\expec\left[\expec^2[f(t_0)\mid u_\ell(t_0)]\right]$ is finite. Similarly, we obtain the zero limits for the rest of the terms in Equation~\eqref{equ:cov-limit-all}. Thus limit~\eqref{equ:cov-limit} holds. 
	\end{proof}
	
	The almost sure negativity (i.e., Assumption~\ref{assump:neg}) on functions $\mu(\cdot)$ and $a(\cdot)$ is the key condition we need to have for the covariance to vanish to zero in infinite time. These conditions are often true in an SDE representation of a DGP because $\mu(\cdot)$ and $a(\cdot)$ ensure the positivity of lengthscales. 
	
	Before analyzing the posterior covariance, we need to construct a positive lower bound on the variance of $f(t)$, which is given in Lemma~\ref{lemma:var-f-bound} and Corollary~\ref{cor:var-f-bound}.
	
	\begin{lemma}
		\label{lemma:var-f-bound}
		Under Assumption~\ref{assump:solution}, for any $\epsilon > 0$, there is $\zeta>0$ such that
		\begin{equation}
			\begin{split}
				&\varr[f(t)] \geq \frac{1}{z(t)}\int^t_{t_0}z(s)\,\Big(\expec\left[\theta^2(u_\ell(s), u_\sigma(s))\right] \\
				&\quad\quad\quad\quad\quad\quad\quad- 2\,\epsilon\,\sqrt{\expec\left[(\mu(u_\ell(s))\,f(s))^2\right]}\Big) \diff s,
			\end{split}
			\label{equ:var-f-bound2}
		\end{equation}
		where $z(t) = \exp{\int^t_{t_0}2\,\zeta\,\sqrt{\expec[(\mu(u_\ell(s))\,f(s))^2]}\diff s}$.
	\end{lemma}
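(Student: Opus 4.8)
The plan is to obtain an ordinary differential inequality for the variance $V(t)\coloneqq\varr[f(t)]$ and then integrate it against an integrating factor that will turn out to be exactly $z(t)$. First I would pass to the centered process $\tilde f(t)\coloneqq f(t)-\expec[f(t)]$, which solves $\diff\tilde f = \big(\mu(u_\ell)\,f - \expec[\mu(u_\ell)\,f]\big)\diff t + \theta(u_\ell,u_\sigma)\diff W_f$. Applying It\^{o}'s lemma to $\tilde f^2$, taking expectations, discarding the martingale term, and using $\expec[\tilde f]=0$ to cancel the cross term, I obtain the exact evolution
\begin{equation}
	V'(t) = 2\,\expec\!\big[\tilde f(t)\,\mu(u_\ell(t))\,f(t)\big] + \expec\!\big[\theta^2(u_\ell(t),u_\sigma(t))\big].\nonumber
\end{equation}
Here the finiteness of the relevant second moments, which follows from the regularity in Assumption~\ref{assump:solution}, is what legitimizes interchanging differentiation and expectation.

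The second step is to bound the first term from below. By the Cauchy--Schwarz inequality, $\expec[\tilde f\,(\mu f)] \geq -\sqrt{\expec[\tilde f^2]}\,\sqrt{\expec[(\mu f)^2]} = -\sqrt{V(t)}\,\sqrt{\expec[(\mu f)^2]}$, so that, writing $A(t)\coloneqq\sqrt{\expec[(\mu(u_\ell(t))\,f(t))^2]}$,
\begin{equation}
	V'(t) \geq \expec[\theta^2] - 2\,A(t)\,\sqrt{V(t)}.\nonumber
\end{equation}
The obstacle is that this is not linear in $V$: the integrating-factor method needs a term proportional to $V$, not to $\sqrt V$. I would remove the square root with the elementary estimate $\sqrt V \leq \epsilon + \zeta\,V$, valid for every $V\geq 0$ once $\zeta \geq 1/(4\epsilon)$, which one checks by minimizing $\epsilon + \zeta V - \sqrt V$ over $V\geq 0$. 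This is precisely where the free parameters $\epsilon$ and $\zeta$ of the statement enter, and taking $\zeta = 1/(4\epsilon)$ fixes $\zeta$ from the prescribed $\epsilon$. Substituting gives the linear differential inequality
\begin{equation}
	V'(t) + 2\,\zeta\,A(t)\,V(t) \geq \expec[\theta^2] - 2\,\epsilon\,A(t).\nonumber
\end{equation}

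Finally I would integrate. With $z(t)=\exp\!\big(\int_{t_0}^t 2\,\zeta\,A(s)\diff s\big)$, which is exactly the factor in the statement, the left-hand side equals $\frac{1}{z(t)}\frac{\diff}{\diff t}\big(z(t)\,V(t)\big)$; multiplying by $z(t)>0$ and integrating over $[t_0,t]$ yields
\begin{equation}
	z(t)\,V(t) - V(t_0) \geq \int_{t_0}^t z(s)\,\big(\expec[\theta^2] - 2\,\epsilon\,A(s)\big)\diff s,\nonumber
\end{equation}
using $z(t_0)=1$. Since $V(t_0)=\varr[f(t_0)]\geq 0$ can only help, dropping it and dividing by $z(t)$ delivers exactly the bound~\eqref{equ:var-f-bound2}. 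The only genuinely delicate point is the moment-finiteness bookkeeping in the first step; the remainder is Cauchy--Schwarz, a one-line convexity estimate, and a Gr\"{o}nwall-type integration.
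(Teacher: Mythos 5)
Your proposal is correct and follows essentially the same route as the paper's proof: Itô's lemma applied to the squared centered process, a Cauchy--Schwarz (Hölder) bound on the cross term, the linearization $\sqrt{V}\leq\epsilon+\zeta V$, and an integrating-factor (Grönwall-type) integration with the same $z(t)$. Your explicit justification that $\zeta=1/(4\epsilon)$ suffices for the linearization is a small added detail the paper leaves implicit, but the argument is otherwise identical.
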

	\begin{proof}
		Denote by $P(t) \coloneqq \varr[f(t)] = \expec[(f(t) - \expec[f(t)])^2]$. By applying It\^{o}'s lemma on $(f(t) - \expec[f(t)])^2$ and taking expectation, we obtain
		\begin{equation}
			\begin{split}
				P(t) &= P(t_0) + 2 \int^t_{t_0} \expec[\mu(u_\ell(s))\,f(s)(f(s) - \expec[f(s)])]\,\diff s \\
				&\quad+ \int^t_{t_0} \expec\left[\theta^2(u_\ell(s), u_\sigma(s))\right]\,\diff s,
				\label{equ:lemma-var-f-Pt}
			\end{split}
		\end{equation}
		where the initial $P(t_0)>0$. By Jensen's and H\"{o}lder's inequalities~\citep{AchimKlenke2014}, 
		\begin{equation}
			\begin{split}
				\begin{split}
					&\abs{\expec[\mu(u_\ell(t))\,f(t)(f(t) - \expec[f(t)])]} \\
					&\quad\leq \sqrt{\expec[(\mu(u_\ell(t))\,f(t))^2]} \sqrt{P(t)}.\nonumber
				\end{split}
			\end{split}
		\end{equation}
		We now form a linear bound on $\sqrt{P(t)}$ such that for any $\epsilon>0$, there is $\zeta>0$ such that $\sqrt{P(t)}\leq \epsilon + \zeta P(t)$. Next, to prove the bound~\eqref{equ:var-f-bound2}, we use the differential form of~\eqref{equ:lemma-var-f-Pt} and get
		\begin{equation}
			\begin{split}
				&\frac{\diff P(t)}{\diff t} \\
				&\geq -2\,\sqrt{\expec[(\mu(u_\ell(t))\,f(t))^2]} \sqrt{P(t)} + \expec[\theta^2(u_\ell(t), u_\sigma(t))] \\
				&\geq -2\,\zeta\, \sqrt{\expec[(\mu(u_\ell(t))\,f(t))^2]}\, P(t) \\
				&\quad+\left(\expec\left[\theta^2(u_\ell(t), u_\sigma(t))\right] - 2\,\epsilon\,\sqrt{\expec[(\mu(u_\ell(t))\,f(t))^2]}\right),\nonumber
			\end{split}
		\end{equation}
		Now, we introduce $z(t) = \exp{\int^t_{t_0}2\,\zeta\,\sqrt{\expec[(\mu(u_\ell(s))\,f(s))^2]}\diff s}$, and then by integrating factor method on $\frac{\diff}{\diff t} (z(t)\,P(t))$, we recover the bound~\eqref{equ:var-f-bound2}. 
	\end{proof}
	
	\begin{corollary}
		\label{cor:var-f-bound}
		Under Assumptions~\ref{assump:solution} and~\ref{assump:bound}, there exists $\epsilon>0$ and $C_{F}(\Delta t)>0$ such that
		\begin{equation}
			\varr[f(t)] \geq C_F(\Delta t).
			\label{equ:var-f-bound-dt}
		\end{equation}
		\begin{proof}
			From Lemma~\ref{lemma:var-f-bound}, we know that for any $\epsilon>0$, there is $\zeta>0$ such that Equation~\eqref{equ:var-f-bound2} holds. By Assumption~\ref{assump:bound}, we have $1\leq z(t)\leq \exp(2\,\zeta\,\Delta t\,\sqrt{C})$. Also, we have $\expec\left[\theta^2(u_\ell(t), u_\sigma(t))\right] - 2\,\epsilon\,\sqrt{\expec\left[(\mu(u_\ell(t))\,f(t))^2\right]} \geq C_\theta - 2\,\epsilon\,\sqrt{C}$ almost everywhere. Thus let us choose any small enough $\epsilon < \frac{C_\theta}{2\,\sqrt{C}}$ so that $C_\theta - 2\,\epsilon\,\sqrt{C}>0$. Now let $C_F = \frac{\left( C_\theta - 2\,\epsilon\,\sqrt{C}\right) \Delta t}{\exp(2\,\zeta\,\Delta t\,\sqrt{C})}$ hence Equation~\eqref{equ:var-f-bound-dt} holds. Note that the inequality~\eqref{equ:var-f-bound-dt} only depends on $\Delta t$ and some fixed parameters of the SDEs. 
		\end{proof}
	\end{corollary}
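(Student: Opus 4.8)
The plan is to push the integral lower bound of Lemma~\ref{lemma:var-f-bound} through the two-sided control supplied by Assumption~\ref{assump:bound}, and then to extract a strictly positive constant by pinning down the free parameter $\epsilon$. First I would invoke Lemma~\ref{lemma:var-f-bound}, which says that for any $\epsilon>0$ there is a matching $\zeta>0$ making the bound~\eqref{equ:var-f-bound2} hold together with its integrating factor $z$; the actual value of $\epsilon$ will be chosen only at the end, once the positivity requirement becomes visible.

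The next step is to sandwich $z$. Since $\sqrt{\expec[(\mu(u_\ell(s))\,f(s))^2]}$ is nonnegative and, by Assumption~\ref{assump:bound}, bounded above by $\sqrt{C}$ almost everywhere, the exponent defining $z$ is a nonnegative and nondecreasing function of its upper limit, so $1\leq z(s)\leq \exp(2\,\zeta\,\sqrt{C}\,(t-t_0))$ for every $s\in[t_0,t]$. Reading this over a single prediction step of length $\Delta t=t-t_0$ yields $1\leq z\leq \exp(2\,\zeta\,\Delta t\,\sqrt{C})$. This localization to one step is essential: over an unbounded horizon the prefactor $1/z(t)$ would decay and no uniform-in-$t$ lower bound could survive, which is exactly why the constant must be allowed to depend on $\Delta t$.

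In parallel I would bound the integrand of~\eqref{equ:var-f-bound2} from below. Assumption~\ref{assump:bound} gives $\expec[\theta^2(u_\ell(s),u_\sigma(s))]\geq C_\theta$ and $\sqrt{\expec[(\mu(u_\ell(s))\,f(s))^2]}\leq \sqrt{C}$ almost everywhere, so the integrand is at least $C_\theta-2\,\epsilon\,\sqrt{C}$. The decisive move, and the only genuine obstacle, is to render this quantity strictly positive: I would choose $\epsilon<C_\theta/(2\,\sqrt{C})$, which is legitimate precisely because Assumption~\ref{assump:bound} forces $C_\theta>0$ (the diffusion keeps injecting variance) while the drift contribution is capped by $C$. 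With this $\epsilon$ fixed, the corresponding $\zeta$ from Lemma~\ref{lemma:var-f-bound} is determined as well, and both are constants of the SDE rather than of the absolute time.

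Finally I would assemble the estimates: using $z(s)\geq 1$ inside the integral and $z(t)\leq \exp(2\,\zeta\,\Delta t\,\sqrt{C})$ in the prefactor of~\eqref{equ:var-f-bound2},
\begin{equation}
	\varr[f(t)] \geq \frac{(C_\theta-2\,\epsilon\,\sqrt{C})\,\Delta t}{\exp(2\,\zeta\,\Delta t\,\sqrt{C})} =: C_F(\Delta t) > 0, \nonumber
\end{equation}
which is the claimed bound~\eqref{equ:var-f-bound-dt}. I would close by remarking that $C_F$ depends only on $\Delta t$ and the fixed parameters $C$, $C_\theta$, $\zeta$, $\epsilon$, so the same positive lower bound applies uniformly at every prediction step, which is exactly the form needed when the corollary is later fed into the filter-update analysis.
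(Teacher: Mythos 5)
Your proposal is correct and follows essentially the same route as the paper's own proof: invoke Lemma~\ref{lemma:var-f-bound}, sandwich $z$ between $1$ and $\exp(2\,\zeta\,\Delta t\,\sqrt{C})$ via Assumption~\ref{assump:bound}, lower-bound the integrand by $C_\theta - 2\,\epsilon\,\sqrt{C}$, choose $\epsilon < C_\theta/(2\,\sqrt{C})$, and arrive at the identical constant $C_F(\Delta t)$. Your added remarks on why the bound must be localized to a single prediction step of length $\Delta t$ are a useful clarification but do not change the argument.
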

	
	The following Algorithm~\ref{alg:GFS} formulates a partial procedure for estimating the posterior density using a Gaussian approximation. In particular, Algorithm~\ref{alg:GFS} gives an approximation
	\begin{equation}
		P^{f,\sigma}_k \approx \cov[f(t_k), u_\sigma(t_k)\mid \cu{y}_{1:k}],\nonumber
	\end{equation} 
	to the posterior covariance for $k=1,2,\ldots$. In order to do so, we need to make predictions through SDEs~\eqref{equ:analysis-sde} based on different starting conditions at each time step. Hence let us introduce two notations as following. We denote by 
	\begin{equation}
		\cov[f(t), u_\sigma(t)](c_{0}),\nonumber
	\end{equation} 
	and 
	\begin{equation}
		\varr[f(t)](s_0),\nonumber
	\end{equation} 
	the functions of $t$ in Equations~\eqref{equ:total-cov} and~\eqref{equ:lemma-var-f-Pt} starting from initial values $c_0\in\R$ and $s_0\in(0,+\infty)$ at $t_{0}$, respectively. 
	
	\begin{algorithm}[Gaussian posterior approximation for $P^{f,\sigma}_k$]\label{alg:GFS}
		Let us approximate the posterior densities $p(f(t_k), u_\sigma(t_k)\mid \cu{y}_{1:k})$ by Gaussian densities for $k=1,2,\ldots$. Suppose that the initial condition is known and particularly $P^{f,\sigma}_0 \coloneqq \cov[f(t_0), u_\sigma(t_0)]$ and $P^{f,f}_0 \coloneqq \varr[f(t_0)]$. Then starting from $k=1$ we calculate
		\begin{equation}
			\Bar{P}^{f,\sigma}_k = \cov[f(t_k), u_\sigma(t_k)](P^{f,\sigma}_{k-1}),
			\label{equ:GF-pred-fs}
		\end{equation}
		and 
		\begin{equation}
			\Bar{P}^{f,f}_k=\varr[f(t_k)](P^{f,f}_{k-1}),
			\label{equ:GF-pred-f}
		\end{equation}
		through the SDEs~\eqref{equ:analysis-sde} and update
		\begin{equation}
			P^{f,\sigma}_k = \Bar{P}^{f,\sigma}_k - \frac{\Bar{P}^{f,f}_k\, \Bar{P}^{f,\sigma}_k}{\Bar{P}^{f,f}_k + R_k},
			\label{equ:GF-update}
		\end{equation}
		for $k=1,2,\ldots$.
	\end{algorithm}
	
	\begin{remark}
		The above Algorithm~\ref{alg:GFS} is a abstraction of continuous-discrete Gaussian filters~\citep{Kazufumi2000, sarkka2019}, except that the predictions through SDEs~\eqref{equ:analysis-sde} are done exactly in Equations~\eqref{equ:GF-pred-fs} and~\eqref{equ:GF-pred-f}. The derivation of Equation~\eqref{equ:GF-update} is shown in Appendix~\ref{append:derivation-kf-update}. Note that in practice the predictions might also involve various types of Gaussian approximations and even numerical integrations (e.g., sigma-point methods). 
	\end{remark}
	
	\begin{theorem}
		\label{thm:cov-post}
		Suppose that Assumptions~\ref{assump:solution} to~\ref{assump:var-R} hold. Further assume that $\abs{\cov[f(t), u_\sigma(t)](c_0)}\leq \abs{c_0}$ for all $c_0\in\R$, then Algorithm~\ref{alg:GFS} gives
		\begin{equation}
			\lim_{k\to\infty} P^{f,\sigma}_k = 0.
			\label{equ:cov-fs-post}
		\end{equation}
	\end{theorem}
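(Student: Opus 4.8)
The plan is to analyze the scalar recursion that Algorithm~\ref{alg:GFS} defines for the cross-covariance, and to show it is forced to zero by the interplay of the update shrinkage with the prior decay from Lemma~\ref{lemma:cov-form}. First I would rewrite the update \eqref{equ:GF-update} in multiplicative form,
\[
	P^{f,\sigma}_k = \Bar{P}^{f,\sigma}_k\,\frac{R_k}{\Bar{P}^{f,f}_k + R_k},
\]
so that one complete predict--update step reads $P^{f,\sigma}_k = \cov[f(t_k),u_\sigma(t_k)](P^{f,\sigma}_{k-1})\cdot R_k/(\Bar{P}^{f,f}_k+R_k)$. Two ingredients then control the step: the standing hypothesis $\abs{\cov[f(t),u_\sigma(t)](c_0)}\le\abs{c_0}$ bounds the prediction, and Corollary~\ref{cor:var-f-bound} gives $\Bar{P}^{f,f}_k=\varr[f(t_k)](P^{f,f}_{k-1})\ge C_F(\Delta t)>0$, so the predicted variance is strictly positive at every step and the update factor lies in $[0,1)$.

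Next I would dispose of the degenerate branch of Assumption~\ref{assump:var-R}. If $R_{k_0}=0$ for some $k_0$, the update factor vanishes and $P^{f,\sigma}_{k_0}=0$; feeding $c_0=0$ into the prediction and using $\abs{\cov[f(t),u_\sigma(t)](0)}\le 0$ forces $\Bar{P}^{f,\sigma}_{k_0+1}=0$, and the update preserves this, so $P^{f,\sigma}_k\equiv 0$ for all $k\ge k_0$ and the limit \eqref{equ:cov-fs-post} is immediate.

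The substantive case is $R_k\ge C_R>0$ for all $k$. Combining the two bounds gives
\[
	\abs{P^{f,\sigma}_k}\le\abs{\Bar{P}^{f,\sigma}_k}\,\frac{R_k}{\Bar{P}^{f,f}_k+R_k}\le\abs{P^{f,\sigma}_{k-1}}\,\frac{R_k}{\Bar{P}^{f,f}_k+R_k}\le\abs{P^{f,\sigma}_{k-1}},
\]
where the contraction hypothesis supplies the middle inequality and $\Bar{P}^{f,f}_k>0$ the last. Hence $\{\abs{P^{f,\sigma}_k}\}$ is non-increasing and bounded below by $0$, so it converges to some $L\ge 0$, and the task reduces to ruling out $L>0$. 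Iterating the bound yields $\abs{P^{f,\sigma}_n}\le\abs{P^{f,\sigma}_0}\prod_{k=1}^n R_k/(\Bar{P}^{f,f}_k+R_k)$; since $\Bar{P}^{f,f}_k\ge C_F$, this product vanishes as soon as $\sum_k \Bar{P}^{f,f}_k/R_k$ diverges, which holds (geometrically) whenever the noise $R_k$ is bounded above, giving $L=0$ directly by induction.

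The hard part is the sub-case in which $R_k$ grows without bound, so that the per-step factor $R_k/(\Bar{P}^{f,f}_k+R_k)\to 1$ and the update alone no longer contracts. I would close this by contradiction: if $L>0$, the monotone convergence together with the squeeze $\abs{P^{f,\sigma}_k}\le\abs{\Bar{P}^{f,\sigma}_k}\le\abs{P^{f,\sigma}_{k-1}}$ forces $\abs{\Bar{P}^{f,\sigma}_k}\to L$ and hence the update factor $\to 1$, i.e.\ $\Bar{P}^{f,f}_k/R_k\to 0$; as $\Bar{P}^{f,f}_k\ge C_F$ this means $R_k\to\infty$, so the filter asymptotically performs essentially pure prediction. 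But pure prediction of the cross-covariance decays to zero because the exponential factors in \eqref{equ:total-cov} vanish under Assumption~\ref{assump:neg} ($\mu\le C_\mu<0$, $a\le C_a<0$), exactly as in Lemma~\ref{lemma:cov-form}, contradicting $\abs{\Bar{P}^{f,\sigma}_k}\to L>0$. The delicate point is making ``asymptotically pure prediction'' quantitative --- controlling how the accumulated (small) updates deviate from the fixed-initial-condition prediction of Lemma~\ref{lemma:cov-form}; the clean alternative is to assume $R_k$ bounded above (as for constant measurement noise), under which the geometric-decay estimate of the previous paragraph finishes the proof immediately.
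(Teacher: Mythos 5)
Your argument is essentially the paper's proof: the same multiplicative rewriting $P^{f,\sigma}_k = M_k\,\Bar{P}^{f,\sigma}_k$ with $M_k = R_k/(\Bar{P}^{f,f}_k + R_k)$, the same use of the hypothesis $\abs{\cov[f(t),u_\sigma(t)](c_0)}\le\abs{c_0}$ to get $\abs{\Bar{P}^{f,\sigma}_k}\le\abs{P^{f,\sigma}_{k-1}}$, the same induction to $\abs{P^{f,\sigma}_k}\le\abs{P^{f,\sigma}_0}\prod_{i=1}^k M_i$, the same appeal to Corollary~\ref{cor:var-f-bound} for the uniform lower bound $\Bar{P}^{f,f}_k\ge C_F(\Delta t)>0$, and the same trivial disposal of the $R_k=0$ branch. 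The one place you go beyond the paper is the sub-case in which $R_k$ grows without bound: you are right that a uniform lower bound on $\Bar{P}^{f,f}_k$ alone does not force $\prod_i M_i\to 0$ unless $\sum_i \Bar{P}^{f,f}_i/R_i$ diverges (e.g., when the $R_k$ are bounded above). The paper's proof does not treat this sub-case --- it passes directly from the lower bound on $\Bar{P}^{f,f}_k$ to $\lim_{k\to\infty}\prod_i M_i=0$, which tacitly assumes the noise variances do not blow up, while Assumption~\ref{assump:var-R} only supplies a lower bound on $R_k$. Your contradiction sketch for that branch is, as you concede, not closed: the non-strict bound $\abs{\cov[f(t),u_\sigma(t)](c_0)}\le\abs{c_0}$ does not let you transfer the $t\to\infty$ decay of Lemma~\ref{lemma:cov-form} to the restarted one-step predictions over a fixed $\Delta t$, so on that branch neither argument is complete. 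In the intended setting of fixed (or bounded) measurement noise, your geometric estimate finishes the proof exactly as the paper does.
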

	\begin{proof}
		We are going to use induction to prove that the claim 
		\begin{equation}
			\abs{P^{f,\sigma}_k}\leq \abs{P_0^{f,\sigma}}\, \prod^k_{i=1} M_i
			\label{equ:induction}
		\end{equation}
		holds for all $k=1,2,\ldots$, where $M_i = \frac{R_i}{\Bar{P}^{f,f}_i + R_i}$. To do so, we expand $\abs{P^{f,\sigma}_k}$ by 
		\begin{equation}
			\begin{split}
				\abs{P^{f,\sigma}_k} &= \abs{\Bar{P}^{f,\sigma}_k - \frac{\Bar{P}^{f,f}_k\, \Bar{P}^{f,\sigma}_k}{\Bar{P}^{f,f}_k + R_k}} \\
				&= \abs{\frac{\Bar{P}^{f,\sigma}_k\,R_k}{\Bar{P}^{f,f}_k + R_k}} = M_k\, \abs{\Bar{P}^{f,\sigma}_k}\leq M_k \abs{P^{f,\sigma}_{k-1}}.
				\label{equ:induction2}
			\end{split}
		\end{equation}
		Now we can verify that $\abs{P^{f,\sigma}_1}\leq\abs{P_0^{f,\sigma}}\,M_1$ when $k=1$, which satisfies the induction claim~\eqref{equ:induction}. Suppose that Equation~\eqref{equ:induction} holds for a given $k>1$, then we can calculate Equation~\eqref{equ:induction2} at $k+1$ giving
		\begin{equation}
			\begin{split}
				\abs{P^{f,\sigma}_{k+1}} &= M_{k+1}\, \abs{\Bar{P}^{f,\sigma}_{k+1}} \leq M_{k+1}\, \abs{P^{f,\sigma}_{k}} \\
				&\leq M_{k+1} \abs{P^{f,\sigma}_0}\, \prod_{i=1}^{k} M_i = \abs{P_0^{f,\sigma}}\, \prod^{k+1}_{i=1} M_i,
			\end{split}
		\end{equation}
		which satisfies the induction claim~\eqref{equ:induction}. Thus Equation~\eqref{equ:induction} holds. Above, we used the assumption $\abs{\cov[f(t), u_\sigma(t)](c_0)}\leq \abs{c_0}$ for all $c_0\in\R$ to get $\abs{\Bar{P}^{f,\sigma}_{k+1}} \leq \abs{P^{f,\sigma}_{k}}$ for any $k$. 
		
		By Corollary~\ref{cor:var-f-bound}, Assumption~\ref{assump:var-R}, and a fixed non-zero $\Delta t$, we know that $\Bar{P}^{f,f}_k$ are lower bounded uniformly over all $k$, thus $\lim_{k\to\infty}\prod^k_{i=1}M_i=0$. Hence, by taking the limit on Equation~\eqref{equ:induction}, the Equation~\eqref{equ:cov-fs-post} holds. Also, this theorem trivially holds if $R_k=0$ for some $k$ or $P^{f,\sigma}_0 = 0$ because $M_k = 0$ for all $k=1,2,\ldots$.
	\end{proof}
	
	\begin{remark}
		Note that in Theorem~\ref{thm:cov-post}, the initial bounding assumption $\abs{\cov[f(t), u_\sigma(t)](c_0)}\leq \abs{c_0}$ for all $c_0\in\R$ is needed because it is not always followed from Lemma~\ref{lemma:cov-form}. On the other hand, for any choice of $c_0\in\R$, there always exists a threshold $\eta>0$ such that for all $t > \eta $ we have $\abs{\cov[f(t), u_\sigma(t)](c_0)}\leq \abs{c_0}$ because of Lemma~\ref{thm:cov-post}.
	\end{remark}
	
	Under the result of bounded $\varr{[f(t)]}$ in Corollary~\ref{cor:var-f-bound}, the consequence of the vanishing posterior covariance in Theorem~\ref{thm:cov-post} is that the so-called Kalman gain for $u_\sigma(t)$ approaches zero asymptotically. It entails that the Kalman update for $u_\sigma(t)$ will use no information from measurements when $t\to\infty$. In the later experiment as shown in Figure~\ref{fig:exp-GFS-vanishing} we see that the corresponding estimated $u_\sigma(t)$ and covariance rapidly stabilizes to zero. 
	
	The previous Theorem~\ref{thm:cov-post} is formulated in a general sense which applies to DGP methods that use Algorithm~\ref{alg:GFS} and satisfy Assumptions~\ref{assump:solution} to~\ref{assump:var-R}. A concrete example is shown in the following Example~\ref{cor:KF-Bucy}.
	
	\begin{example}
		\label{cor:KF-Bucy}
		Consider a system of SDEs,
		\begin{equation}
			\begin{split}
				\diff f(t) &= \mu \, f(t) \diff t + u_\sigma(t)\,\diff W_f(t), \\
				\diff u_\sigma(t) &= a \, u_\sigma(t) \diff t + b\,\diff W_u(t),
				\label{equ:cor-2-sde}
			\end{split}
		\end{equation}
		starting from a Gaussian initial condition $f(t_0),u_\sigma(t_0)$, where constants $\mu<0$, $a<0$, and $b>0$. The conditions of Theorem~\ref{thm:cov-post} are now satisfied, and thus $\lim_{k\to\infty} P^{f,\sigma}_k = 0$.
	\end{example}
	
	\section{Experiments}
	\label{sec:experiment}
	\begin{figure}[t!]
		\begin{subfigure}[t]{.49\linewidth}
			\centering
			\resizebox{.98\linewidth}{!}{%
\begin{tikzpicture}[x=0.6pt,y=0.6pt,yscale=-1,xscale=1]

\draw  [line width=1.5]  (190,30) .. controls (190,18.95) and (198.95,10) .. (210,10) .. controls (221.05,10) and (230,18.95) .. (230,30) .. controls (230,41.05) and (221.05,50) .. (210,50) .. controls (198.95,50) and (190,41.05) .. (190,30) -- cycle ;

\draw  [line width=1.5]  (30,30) .. controls (30,18.95) and (38.95,10) .. (50,10) .. controls (61.05,10) and (70,18.95) .. (70,30) .. controls (70,41.05) and (61.05,50) .. (50,50) .. controls (38.95,50) and (30,41.05) .. (30,30) -- cycle ;

\draw  [fill={rgb, 255:red, 218; green, 217; blue, 217 }  ,fill opacity=1 ][line width=1.5]  (110,30) .. controls (110,18.95) and (118.95,10) .. (130,10) .. controls (141.05,10) and (150,18.95) .. (150,30) .. controls (150,41.05) and (141.05,50) .. (130,50) .. controls (118.95,50) and (110,41.05) .. (110,30) -- cycle ;
\draw [line width=1.5]    (70,30) -- (106,30) ;
\draw [shift={(110,30)}, rotate = 180] [fill={rgb, 255:red, 0; green, 0; blue, 0 }  ][line width=0.08]  [draw opacity=0] (13.4,-6.43) -- (0,0) -- (13.4,6.44) -- (8.9,0) -- cycle    ;
\draw [line width=1.5]    (190,30) -- (154,30) ;
\draw [shift={(150,30)}, rotate = 360] [fill={rgb, 255:red, 0; green, 0; blue, 0 }  ][line width=0.08]  [draw opacity=0] (13.4,-6.43) -- (0,0) -- (13.4,6.44) -- (8.9,0) -- cycle    ;

\draw (210,30) node  [font=\large]  {$f$};
\draw (130,30) node  [font=\large]  {$y$};
\draw (50,30) node  [font=\large]  {$r$};

\end{tikzpicture}
			}
			\caption{GP}
			\label{fig:reg-models-GP}
		\end{subfigure}
		\hfill
		\begin{subfigure}[t]{.49\linewidth}
			\centering
			\resizebox{.98\linewidth}{!}{%
\begin{tikzpicture}[x=0.6pt,y=0.6pt,yscale=-1,xscale=1]

\draw  [line width=1.5]  (190,30) .. controls (190,18.95) and (198.95,10) .. (210,10) .. controls (221.05,10) and (230,18.95) .. (230,30) .. controls (230,41.05) and (221.05,50) .. (210,50) .. controls (198.95,50) and (190,41.05) .. (190,30) -- cycle ;

\draw  [line width=1.5]  (30,30) .. controls (30,18.95) and (38.95,10) .. (50,10) .. controls (61.05,10) and (70,18.95) .. (70,30) .. controls (70,41.05) and (61.05,50) .. (50,50) .. controls (38.95,50) and (30,41.05) .. (30,30) -- cycle ;

\draw  [fill={rgb, 255:red, 218; green, 217; blue, 217 }  ,fill opacity=1 ][line width=1.5]  (110,30) .. controls (110,18.95) and (118.95,10) .. (130,10) .. controls (141.05,10) and (150,18.95) .. (150,30) .. controls (150,41.05) and (141.05,50) .. (130,50) .. controls (118.95,50) and (110,41.05) .. (110,30) -- cycle ;
\draw [line width=1.5]    (70,30) -- (106,30) ;
\draw [shift={(110,30)}, rotate = 180] [fill={rgb, 255:red, 0; green, 0; blue, 0 }  ][line width=0.08]  [draw opacity=0] (13.4,-6.43) -- (0,0) -- (13.4,6.44) -- (8.9,0) -- cycle    ;
\draw  [dash pattern={on 5.63pt off 4.5pt}][line width=1.5]  (110,100) .. controls (110,88.95) and (118.95,80) .. (130,80) .. controls (141.05,80) and (150,88.95) .. (150,100) .. controls (150,111.05) and (141.05,120) .. (130,120) .. controls (118.95,120) and (110,111.05) .. (110,100) -- cycle ;
\draw  [dash pattern={on 5.63pt off 4.5pt}][line width=1.5]  (190,100) .. controls (190,88.95) and (198.95,80) .. (210,80) .. controls (221.05,80) and (230,88.95) .. (230,100) .. controls (230,111.05) and (221.05,120) .. (210,120) .. controls (198.95,120) and (190,111.05) .. (190,100) -- cycle ;
\draw [line width=1.5]    (210,80) -- (210,54) ;
\draw [shift={(210,50)}, rotate = 450] [fill={rgb, 255:red, 0; green, 0; blue, 0 }  ][line width=0.08]  [draw opacity=0] (13.4,-6.43) -- (0,0) -- (13.4,6.44) -- (8.9,0) -- cycle    ;
\draw [line width=1.5]    (130,80) -- (196.32,51.58) ;
\draw [shift={(200,50)}, rotate = 516.8] [fill={rgb, 255:red, 0; green, 0; blue, 0 }  ][line width=0.08]  [draw opacity=0] (13.4,-6.43) -- (0,0) -- (13.4,6.44) -- (8.9,0) -- cycle    ;
\draw [line width=1.5]    (190,30) -- (154,30) ;
\draw [shift={(150,30)}, rotate = 360] [fill={rgb, 255:red, 0; green, 0; blue, 0 }  ][line width=0.08]  [draw opacity=0] (13.4,-6.43) -- (0,0) -- (13.4,6.44) -- (8.9,0) -- cycle    ;

\draw (210,30) node  [font=\large]  {$f$};
\draw (130,30) node  [font=\large]  {$y$};
\draw (50,30) node  [font=\large]  {$r$};
\draw (130,100) node  [font=\large]  {$\ell^2_{1,1} $};
\draw (210,100) node  [font=\large]  {$\sigma^2_{1,2} $};

\end{tikzpicture}
			}
			\caption{NS-GP}
			\label{fig:reg-models-NSGP}
		\end{subfigure}
		\vfill
		\begin{subfigure}[t]{.49\linewidth}
			\centering
			\resizebox{.98\linewidth}{!}{%
\begin{tikzpicture}[x=0.6pt,y=0.6pt,yscale=-1,xscale=1]

\draw  [line width=1.5]  (190,30) .. controls (190,18.95) and (198.95,10) .. (210,10) .. controls (221.05,10) and (230,18.95) .. (230,30) .. controls (230,41.05) and (221.05,50) .. (210,50) .. controls (198.95,50) and (190,41.05) .. (190,30) -- cycle ;

\draw  [line width=1.5]  (30,30) .. controls (30,18.95) and (38.95,10) .. (50,10) .. controls (61.05,10) and (70,18.95) .. (70,30) .. controls (70,41.05) and (61.05,50) .. (50,50) .. controls (38.95,50) and (30,41.05) .. (30,30) -- cycle ;

\draw  [fill={rgb, 255:red, 218; green, 217; blue, 217 }  ,fill opacity=1 ][line width=1.5]  (110,30) .. controls (110,18.95) and (118.95,10) .. (130,10) .. controls (141.05,10) and (150,18.95) .. (150,30) .. controls (150,41.05) and (141.05,50) .. (130,50) .. controls (118.95,50) and (110,41.05) .. (110,30) -- cycle ;
\draw [line width=1.5]    (70,30) -- (106,30) ;
\draw [shift={(110,30)}, rotate = 180] [fill={rgb, 255:red, 0; green, 0; blue, 0 }  ][line width=0.08]  [draw opacity=0] (13.4,-6.43) -- (0,0) -- (13.4,6.44) -- (8.9,0) -- cycle    ;
\draw [line width=1.5]    (190,30) -- (154,30) ;
\draw [shift={(150,30)}, rotate = 360] [fill={rgb, 255:red, 0; green, 0; blue, 0 }  ][line width=0.08]  [draw opacity=0] (13.4,-6.43) -- (0,0) -- (13.4,6.44) -- (8.9,0) -- cycle    ;
\draw  [line width=1.5]  (110,100) .. controls (110,88.95) and (118.95,80) .. (130,80) .. controls (141.05,80) and (150,88.95) .. (150,100) .. controls (150,111.05) and (141.05,120) .. (130,120) .. controls (118.95,120) and (110,111.05) .. (110,100) -- cycle ;
\draw  [line width=1.5]  (190,100) .. controls (190,88.95) and (198.95,80) .. (210,80) .. controls (221.05,80) and (230,88.95) .. (230,100) .. controls (230,111.05) and (221.05,120) .. (210,120) .. controls (198.95,120) and (190,111.05) .. (190,100) -- cycle ;
\draw [line width=1.5]    (130,80) -- (196.32,51.58) ;
\draw [shift={(200,50)}, rotate = 516.8] [fill={rgb, 255:red, 0; green, 0; blue, 0 }  ][line width=0.08]  [draw opacity=0] (13.4,-6.43) -- (0,0) -- (13.4,6.44) -- (8.9,0) -- cycle    ;
\draw [line width=1.5]    (210,80) -- (210,54) ;
\draw [shift={(210,50)}, rotate = 450] [fill={rgb, 255:red, 0; green, 0; blue, 0 }  ][line width=0.08]  [draw opacity=0] (13.4,-6.43) -- (0,0) -- (13.4,6.44) -- (8.9,0) -- cycle    ;

\draw (210,30) node  [font=\large]  {$f$};
\draw (130,30) node  [font=\large]  {$y$};
\draw (50,30) node  [font=\large]  {$r$};
\draw (130,100) node  [font=\large]  {$\ell^2_{1,1} $};
\draw (210,100) node  [font=\large]  {$\sigma^2_{1,2} $};

\end{tikzpicture}
			}
			\caption{DGP-2}
			\label{fig:reg-models-DGP2}
		\end{subfigure}
		\hfill
		\begin{subfigure}[t]{.49\linewidth}
			\centering
			\resizebox{.98\linewidth}{!}{%
\begin{tikzpicture}[x=0.6pt,y=0.6pt,yscale=-1,xscale=1]

\draw  [line width=1.5]  (190,30) .. controls (190,18.95) and (198.95,10) .. (210,10) .. controls (221.05,10) and (230,18.95) .. (230,30) .. controls (230,41.05) and (221.05,50) .. (210,50) .. controls (198.95,50) and (190,41.05) .. (190,30) -- cycle ;

\draw  [line width=1.5]  (30,30) .. controls (30,18.95) and (38.95,10) .. (50,10) .. controls (61.05,10) and (70,18.95) .. (70,30) .. controls (70,41.05) and (61.05,50) .. (50,50) .. controls (38.95,50) and (30,41.05) .. (30,30) -- cycle ;

\draw  [fill={rgb, 255:red, 218; green, 217; blue, 217 }  ,fill opacity=1 ][line width=1.5]  (110,30) .. controls (110,18.95) and (118.95,10) .. (130,10) .. controls (141.05,10) and (150,18.95) .. (150,30) .. controls (150,41.05) and (141.05,50) .. (130,50) .. controls (118.95,50) and (110,41.05) .. (110,30) -- cycle ;
\draw [line width=1.5]    (70,30) -- (106,30) ;
\draw [shift={(110,30)}, rotate = 180] [fill={rgb, 255:red, 0; green, 0; blue, 0 }  ][line width=0.08]  [draw opacity=0] (13.4,-6.43) -- (0,0) -- (13.4,6.44) -- (8.9,0) -- cycle    ;
\draw [line width=1.5]    (190,30) -- (154,30) ;
\draw [shift={(150,30)}, rotate = 360] [fill={rgb, 255:red, 0; green, 0; blue, 0 }  ][line width=0.08]  [draw opacity=0] (13.4,-6.43) -- (0,0) -- (13.4,6.44) -- (8.9,0) -- cycle    ;
\draw  [line width=1.5]  (110,100) .. controls (110,88.95) and (118.95,80) .. (130,80) .. controls (141.05,80) and (150,88.95) .. (150,100) .. controls (150,111.05) and (141.05,120) .. (130,120) .. controls (118.95,120) and (110,111.05) .. (110,100) -- cycle ;
\draw  [line width=1.5]  (190,100) .. controls (190,88.95) and (198.95,80) .. (210,80) .. controls (221.05,80) and (230,88.95) .. (230,100) .. controls (230,111.05) and (221.05,120) .. (210,120) .. controls (198.95,120) and (190,111.05) .. (190,100) -- cycle ;
\draw [line width=1.5]    (130,80) -- (196.32,51.58) ;
\draw [shift={(200,50)}, rotate = 516.8] [fill={rgb, 255:red, 0; green, 0; blue, 0 }  ][line width=0.08]  [draw opacity=0] (13.4,-6.43) -- (0,0) -- (13.4,6.44) -- (8.9,0) -- cycle    ;
\draw [line width=1.5]    (210,80) -- (210,54) ;
\draw [shift={(210,50)}, rotate = 450] [fill={rgb, 255:red, 0; green, 0; blue, 0 }  ][line width=0.08]  [draw opacity=0] (13.4,-6.43) -- (0,0) -- (13.4,6.44) -- (8.9,0) -- cycle    ;
\draw  [line width=1.5]  (30,160) .. controls (30,148.95) and (38.95,140) .. (50,140) .. controls (61.05,140) and (70,148.95) .. (70,160) .. controls (70,171.05) and (61.05,180) .. (50,180) .. controls (38.95,180) and (30,171.05) .. (30,160) -- cycle ;
\draw  [line width=1.5]  (80,160) .. controls (80,148.95) and (88.95,140) .. (100,140) .. controls (111.05,140) and (120,148.95) .. (120,160) .. controls (120,171.05) and (111.05,180) .. (100,180) .. controls (88.95,180) and (80,171.05) .. (80,160) -- cycle ;
\draw  [line width=1.5]  (140,160) .. controls (140,148.95) and (148.95,140) .. (160,140) .. controls (171.05,140) and (180,148.95) .. (180,160) .. controls (180,171.05) and (171.05,180) .. (160,180) .. controls (148.95,180) and (140,171.05) .. (140,160) -- cycle ;
\draw  [line width=1.5]  (190,160) .. controls (190,148.95) and (198.95,140) .. (210,140) .. controls (221.05,140) and (230,148.95) .. (230,160) .. controls (230,171.05) and (221.05,180) .. (210,180) .. controls (198.95,180) and (190,171.05) .. (190,160) -- cycle ;
\draw [line width=1.5]    (100,140) -- (126.67,122.22) ;
\draw [shift={(130,120)}, rotate = 506.31] [fill={rgb, 255:red, 0; green, 0; blue, 0 }  ][line width=0.08]  [draw opacity=0] (13.4,-6.43) -- (0,0) -- (13.4,6.44) -- (8.9,0) -- cycle    ;
\draw [line width=1.5]    (50,140) -- (106.42,111.79) ;
\draw [shift={(110,110)}, rotate = 513.4300000000001] [fill={rgb, 255:red, 0; green, 0; blue, 0 }  ][line width=0.08]  [draw opacity=0] (13.4,-6.43) -- (0,0) -- (13.4,6.44) -- (8.9,0) -- cycle    ;
\draw [line width=1.5]    (160,140) -- (196.42,121.79) ;
\draw [shift={(200,120)}, rotate = 513.4300000000001] [fill={rgb, 255:red, 0; green, 0; blue, 0 }  ][line width=0.08]  [draw opacity=0] (13.4,-6.43) -- (0,0) -- (13.4,6.44) -- (8.9,0) -- cycle    ;
\draw [line width=1.5]    (210,140) -- (210,124) ;
\draw [shift={(210,120)}, rotate = 450] [fill={rgb, 255:red, 0; green, 0; blue, 0 }  ][line width=0.08]  [draw opacity=0] (13.4,-6.43) -- (0,0) -- (13.4,6.44) -- (8.9,0) -- cycle    ;

\draw (210,30) node  [font=\large]  {$f$};
\draw (130,30) node  [font=\large]  {$y$};
\draw (50,30) node  [font=\large]  {$r$};
\draw (130,100) node  [font=\large]  {$\ell^2_{1,1} $};
\draw (210,100) node  [font=\large]  {$\sigma^2_{1,2} $};
\draw (50,160) node  [font=\large]  {$\ell^3_{1,1} $};
\draw (100,160) node  [font=\large]  {$\sigma^3_{1,2} $};
\draw (160,160) node  [font=\large]  {$\ell^3_{2, 3} $};
\draw (210,160) node  [font=\large]  {$\sigma^3_{2, 4} $};

\end{tikzpicture}
			}
			\caption{DGP-3}
			\label{fig:reg-models-DGP3}
		\end{subfigure}
		\caption{Graphs of four regression models. We denote by $y$ as the measurement of function $f$ contaminated by noise $r$. In (b), the processes $\ell^2_{1,1}$ and $\sigma^2_{1,2}$ in dashed circles are degenerate (learnable hyperparameters). }
		\label{fig:reg-models}
	\end{figure}
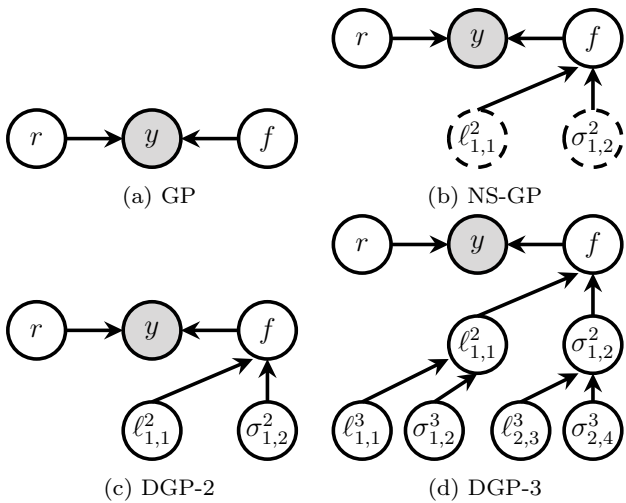
	In this section we numerically evaluate the proposed methods. The specific objectives of the experiments are as follows. First, we show the advantages of using DGPs over conventional GPs or non-stationary GPs (one-layer DGPs) in non-stationary regression. Then, we compare the batch and state-space constructions of DGPs. Finally, we examine the efficiencies of different DGP regression methods.
	
	We prepare four regression models as shown in Figure~\ref{fig:reg-models}. These models are the conventional GP~\citep{gp-carl-edward}, non-stationary GP~\citep[NS-GP,][]{paciorek2006spatial}, two-layer DGP (DGP-2), and three-layer DGP (DGP-3). The DGP-2 and DGP-3 are constructed using both the batch and state-space approaches as formulated in Sections~\ref{sec:dgp-batch-reg} and~\ref{sec:gp-sde}, respectively. In particular, we consider a \matern type of GP construction, which only has two hyperparameters (i.e., the length scale $\ell$ and magnitude $\sigma$). That is to say, we use the non-stationary \matern covariance function \citep{paciorek2006spatial} for the NS-GP and batch-DGP models, and the deep \matern process for SS-DGP model. For the wrapping function $g$, we choose $g(u)=\exp(u)$. For the discretization of SS-DGP, we use the 3rd-order TME method~\citep{zhao2020taylor}. We control the smoothness of $f$ and hyperparameter processes by using $\alpha=1$ and $0$, respectively (see Equation~\eqref{equ:matern-cov}). In addition, we draw samples from the DGP priors in Appendix~\ref{append:prior-samples}.
	\begin{figure}[t!]
		\centering
		\includegraphics[width=.8\linewidth]{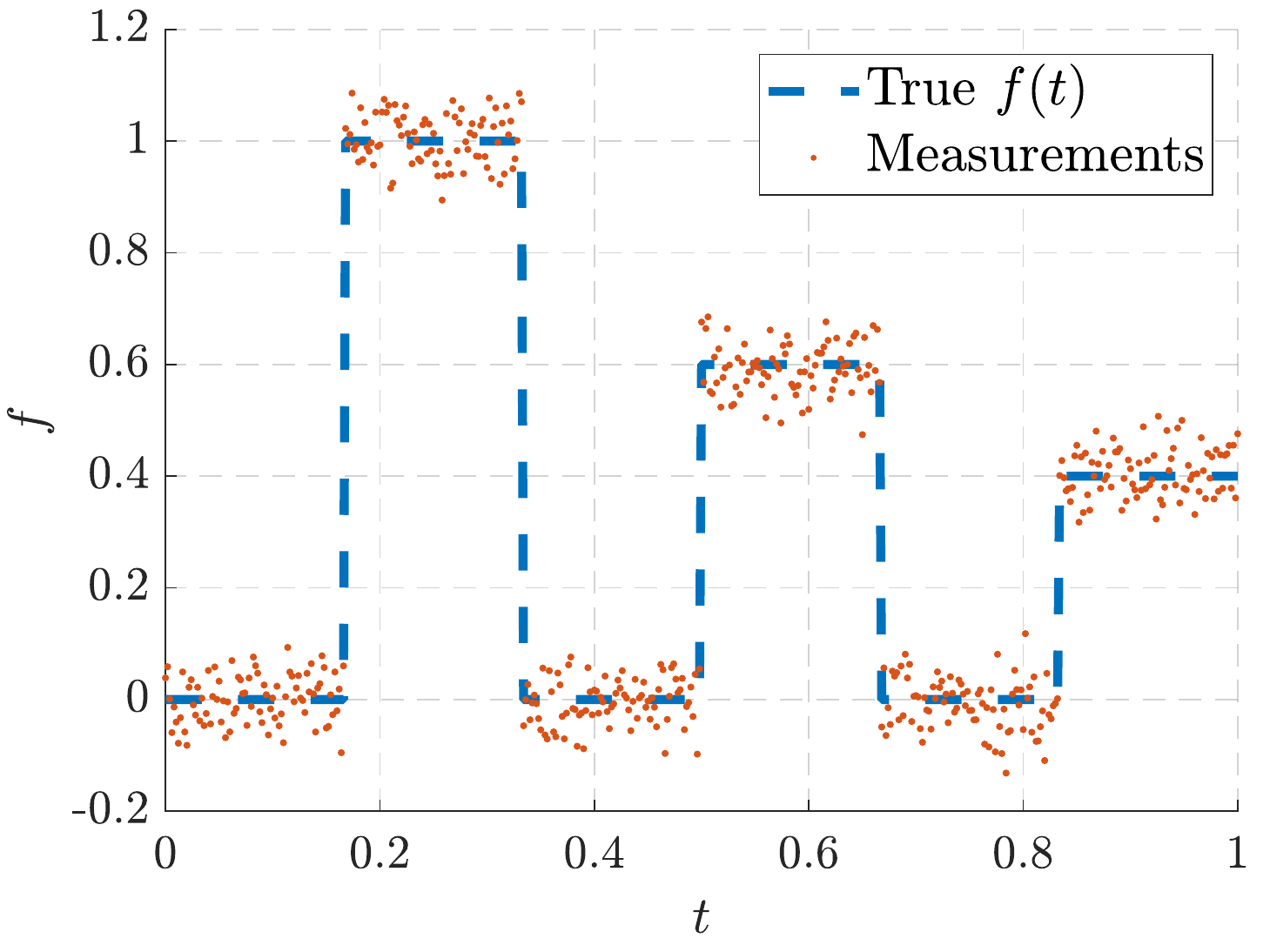}
		\caption{Demonstration of the magnitude-varying rectangle signal in Equation~\eqref{equ:exp-y} with 500 samples.}
		\label{fig:exp-y}
	\end{figure}
	
	There are unknown model hyperparameters. We use the maximum likelihood estimation (MLE) routine to optimize the hyperparameters for the GP and NS-GP models which have closed-form likelihood functions and gradients. For the DGP models, we find them by grid searches because the gradients are non-trivial to derive. We detail the found hyperparameters in Appendix~\ref{append:hyperparas}.
	
	As for the batch-DGP models, we use the proposed batch maximum a posteriori (B-MAP) method in Section~\ref{sec:batch-MAP}. Similarly for the SS-DGP, we apply the state-space MAP (SS-MAP), Gaussian filters and smoothers~\citep{sarkka2013}, and a bootstrap particle filter~\citep[PF,][]{Christophe2010, Doucet2000} and a backward-simulation particle smoother~\citep[PF-BS,][]{Simon2004}. 
	
	We use the limited-memory Broyden--Fletcher--Goldfarb--Shanno~\citep[l-BFGS,][]{numerical-opt} optimizer for MLE and MAP optimizations. For the Gaussian filters and smoothers, we exploit the commonly used linearization (EKFS) and spherical cubature method (CKFS)~\citep{sarkka2013}. As for the PF and PF-BS, we use 200,000 particles and 1,600 backward simulations. 
	
	The following experiments except the real application are computed with the Triton computing cluster at Aalto University, Finland\footnote{The companion code can be found at address https://github.com/zgbkdlm/ssdgp.}. We uniformly allocate 4 CPU cores and 4 gigabyte of memory for each of the individual experiment. In addition, the PF-BS method is implemented with CPU-based parallelization. All programs are implemented in MATLAB\textsuperscript{\textregistered} 2019b. 
	
	\subsection{Regression on Rectangle Signal}
	\label{sec:toy-data}
	In this section, we conduct regression on a magnitude-varying rectangle wave, as shown in Figure~\ref{fig:exp-y}. The regression model is formulated by
	\begin{equation}
		\begin{split}
			f(t) &= \begin{cases}
				0, & t \in [0, \frac{1}{6})\cup [\frac{2}{6}, \frac{3}{6})\cup [\frac{4}{6}, \frac{5}{6}),\\
				1, & t\in [\frac{1}{6}, \frac{2}{6}),\\
				0.6, &t\in [\frac{3}{6}, \frac{4}{6}),\\
				0.4, &t\in [\frac{5}{6}, 1],
			\end{cases}\\
			y(t) &= f(t) + r(t),
		\end{split}
		\label{equ:exp-y}
	\end{equation}
	where $f$ is the true function, $y$ is the measurement, and $r(t) \sim \mathcal{N}(0, 0.002)$. We evenly generate samples $y(t_1),\ldots, y(t_T)$, where $T=100$. The challenge of this type of signal is that the rectangle wave is continuous and flat almost everywhere while it is only right-continuous at a finite number of isolated points. Moreover, the jumps have different heights.
	
	\begin{figure}[t!]
		\centering
		\includegraphics[width=.49\linewidth]{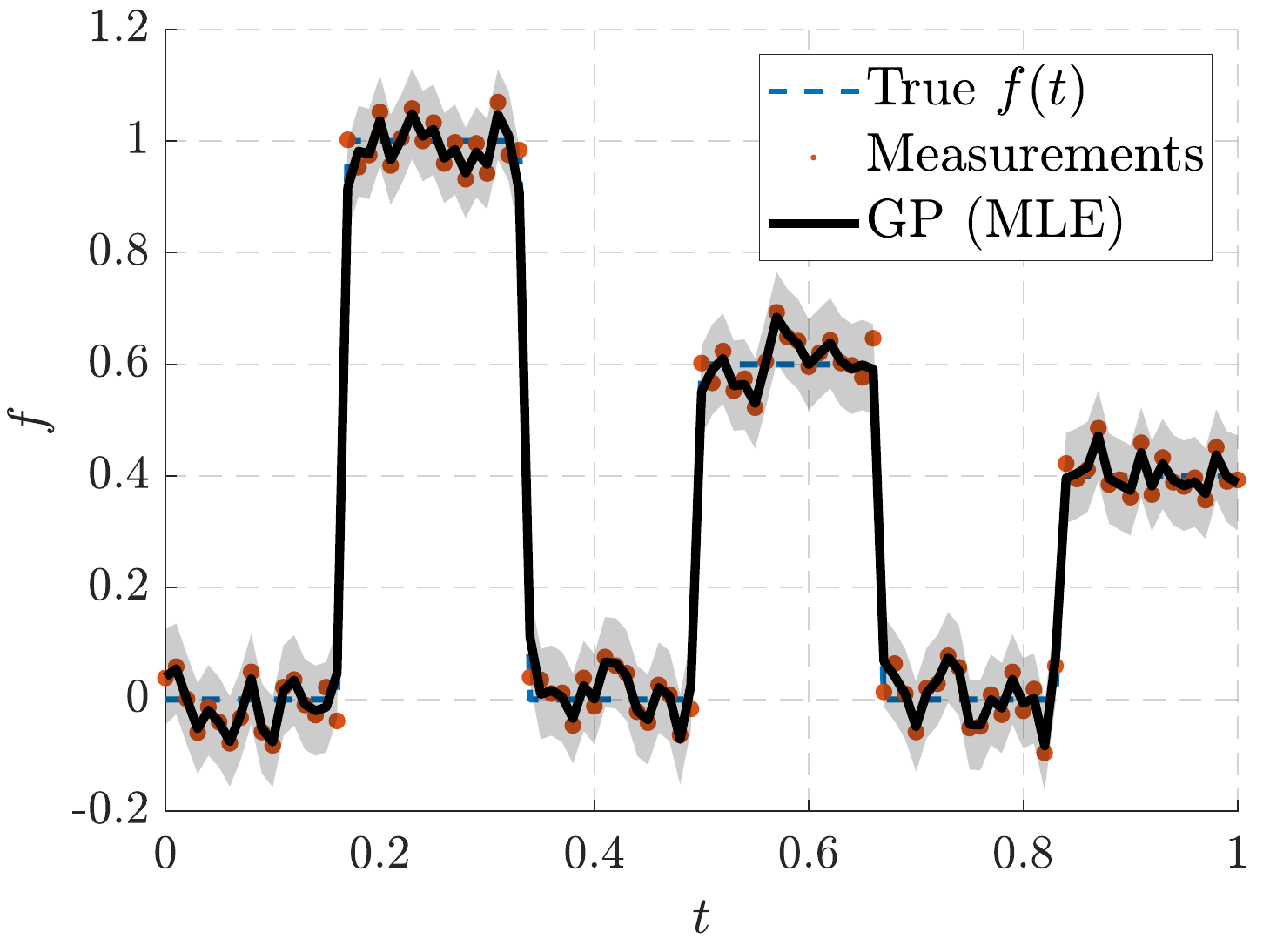}
		\includegraphics[width=.49\linewidth]{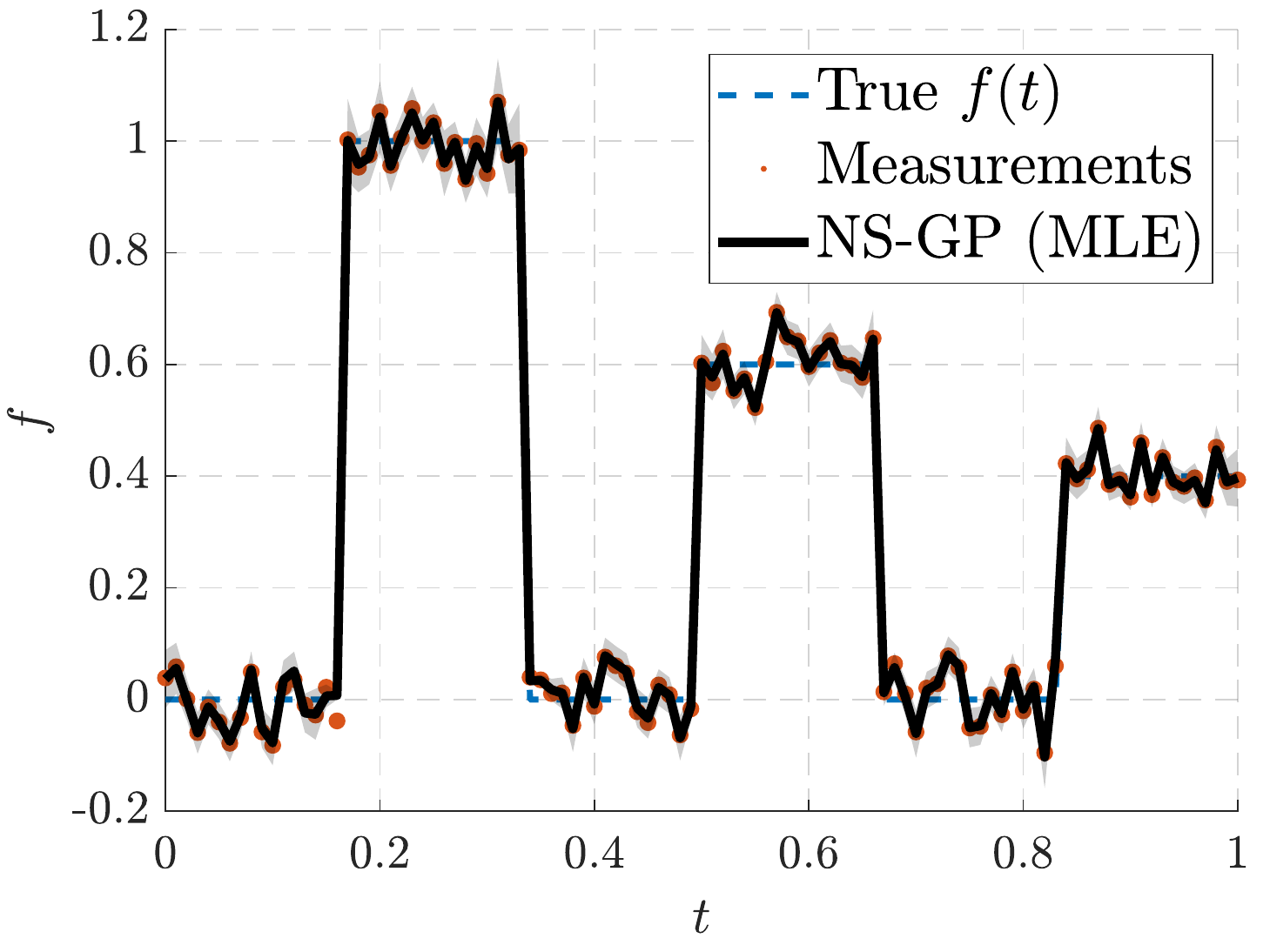}\\
		\includegraphics[width=.49\linewidth]{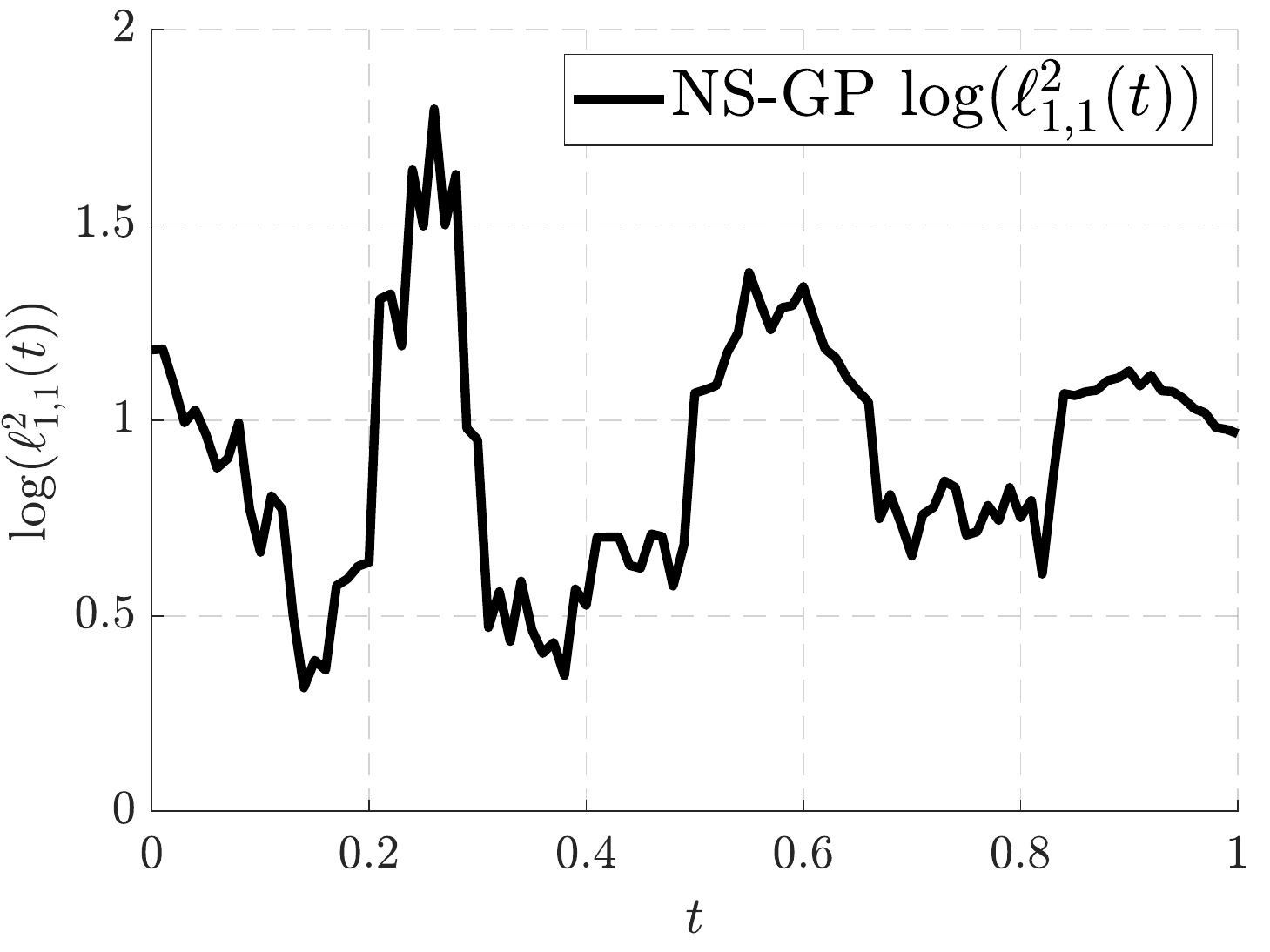}
		\includegraphics[width=.49\linewidth]{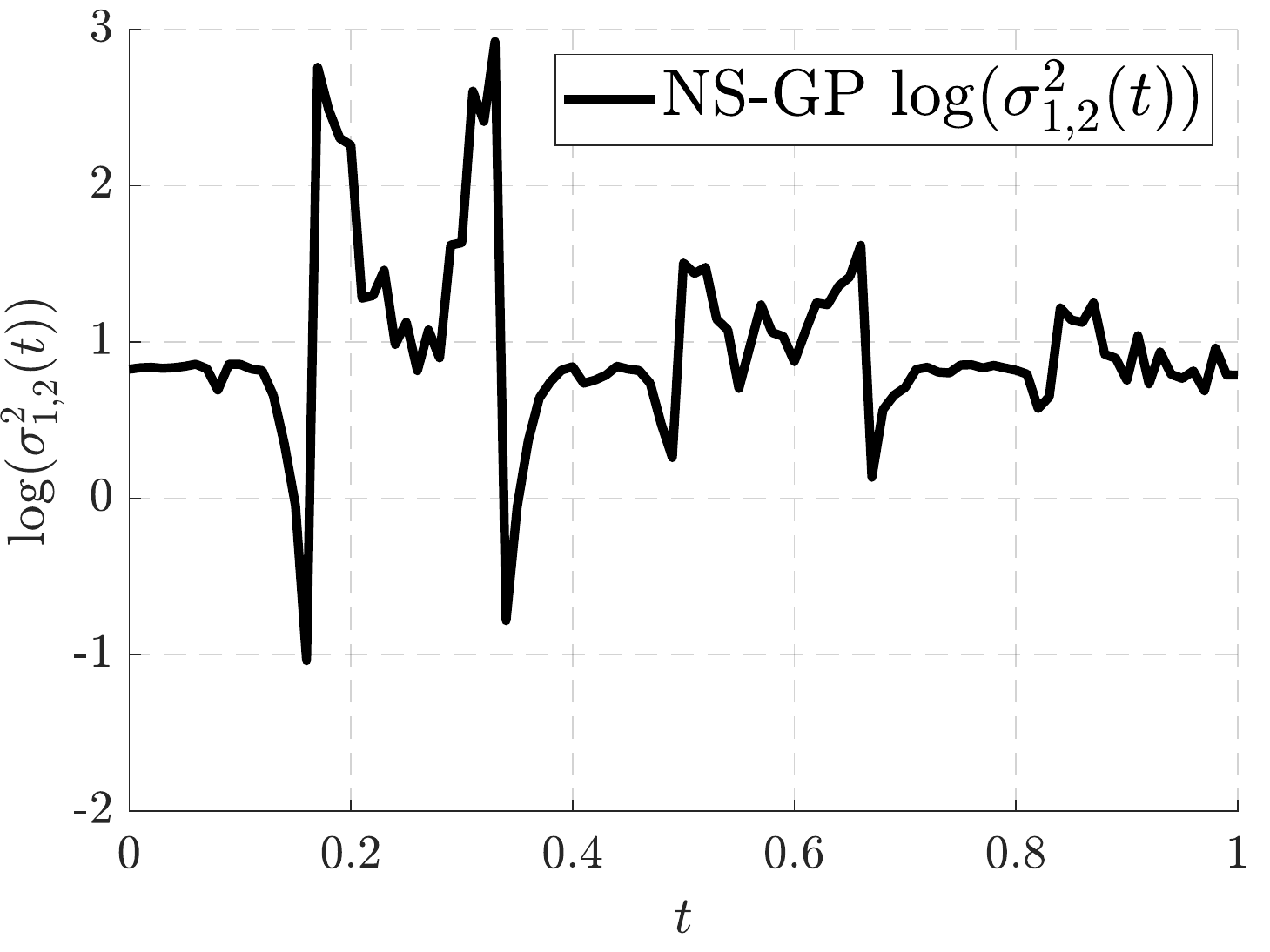}
		\caption{GP and NS-GP regressions on model~\eqref{equ:exp-y}. The shaded area stands for 95\% confidence interval.}
		\label{fig:exp-GP-NSGP}
	\end{figure}
	
	We formulate the commonly used root mean square error (RMSE)
	\begin{equation}
		\left( \frac{1}{T}\,\sum^T_{k=1} \left(f(t_k) - \tilde{f}(t_k)\right)^2\right)^{1/2},
		\label{equ:rmse}
	\end{equation}
	as well as the negative log predictive density (NLPD)
	\begin{equation}
		-\log  \int p(\cu{y}^\star \mid \cu{f}^\star) \,\tilde{p}(\cu{f}^\star \mid \cu{y}_{1:T}) \diff \cu{f}^\star,
	\end{equation}
	to numerically demonstrate the methods' effectiveness, where $\tilde{f}$ is the regression estimate, $\cu{y}^\star$ are the test data, and $\tilde{p}(\cu{f}^\star \mid \cu{y}_{1:T})$ is the estimated posterior density. Note that the NLPD metric is not applied to the MAP results. We run 100 independent Monte Carlo trials to average the RMSE and NLPD as well as the computational time. For visualization, we uniformly choose the results under the same random seed. 
	
	\begin{table}[t!]
		\centering
		\begin{tabular}{@{}llll@{}}
			\toprule
			Methods& \begin{tabular}[c]{@{}l@{}}RMSE\\ ($ 0^{-2}$)\end{tabular} & NLPD & Time (s) \\ \midrule
			GP (MLE) & $4.36\pm 0.3$  & $-136.6\pm 8$ & $2.0\pm 0.5$ \\
			NS-GP (MLE) & $4.28\pm 0.3$ & $-135.9\pm 12$ & $3.3\pm0.2$ \\
			B-MAP (DGP-2) & $3.89\pm 0.3$ & N/A & $454.9\pm 67$ \\
			B-MAP (DGP-3) & $3.80\pm 0.3$& N/A & $897.7\pm37$ \\
			SS-MAP (DGP-2) & $2.04\pm 0.4$ & N/A & $205.4\pm26$ \\
			SS-MAP (DGP-3) & $\mathbf{1.69}\pm0.3$ & N/A & $479.5\pm 70$ \\
			CKFS (DGP-2) & $4.50\pm 0.3$ & $-136.2\pm 7$ & $0.2\pm0.03$ \\
			CKFS (DGP-3) & N/A & N/A & N/A \\
			EKFS (DGP-2) & $5.32\pm0.2$ & $-135.4\pm 8$ & $\mathbf{0.1}\pm0.02$ \\
			EKFS (DGP-3) & $7.77\pm0.1$ & $-119.6\pm9$ & $0.2\pm0.01$ \\
			PF (DGP-2) & $4.25\pm2.3$ & $-135.5\pm17$ & $929.6\pm200$ \\
			PF (DGP-3) & $3.73\pm0.9$ & $-145.1\pm12$ & $1460.8\pm170$ \\
			PF-BS (DGP-2) & $4.08\pm2.7$ & $-140.6\pm25$ & 4-7 hrs \\ 
			PF-BS (DGP-3) & $3.35\pm0.9$ & $-\mathbf{149.2}\pm 11$ & 17-20 hrs \\\bottomrule
		\end{tabular}
		\caption{Averaged RMSEs ($\times10^{-2}$), NLPD, and computational time (in seconds) over different regression models and methods. }
		\label{tbl:rect-rmse-nlpd-time}
	\end{table}
	
	\begin{figure*}[t!]
		\centering
		\includegraphics[width=.32\linewidth]{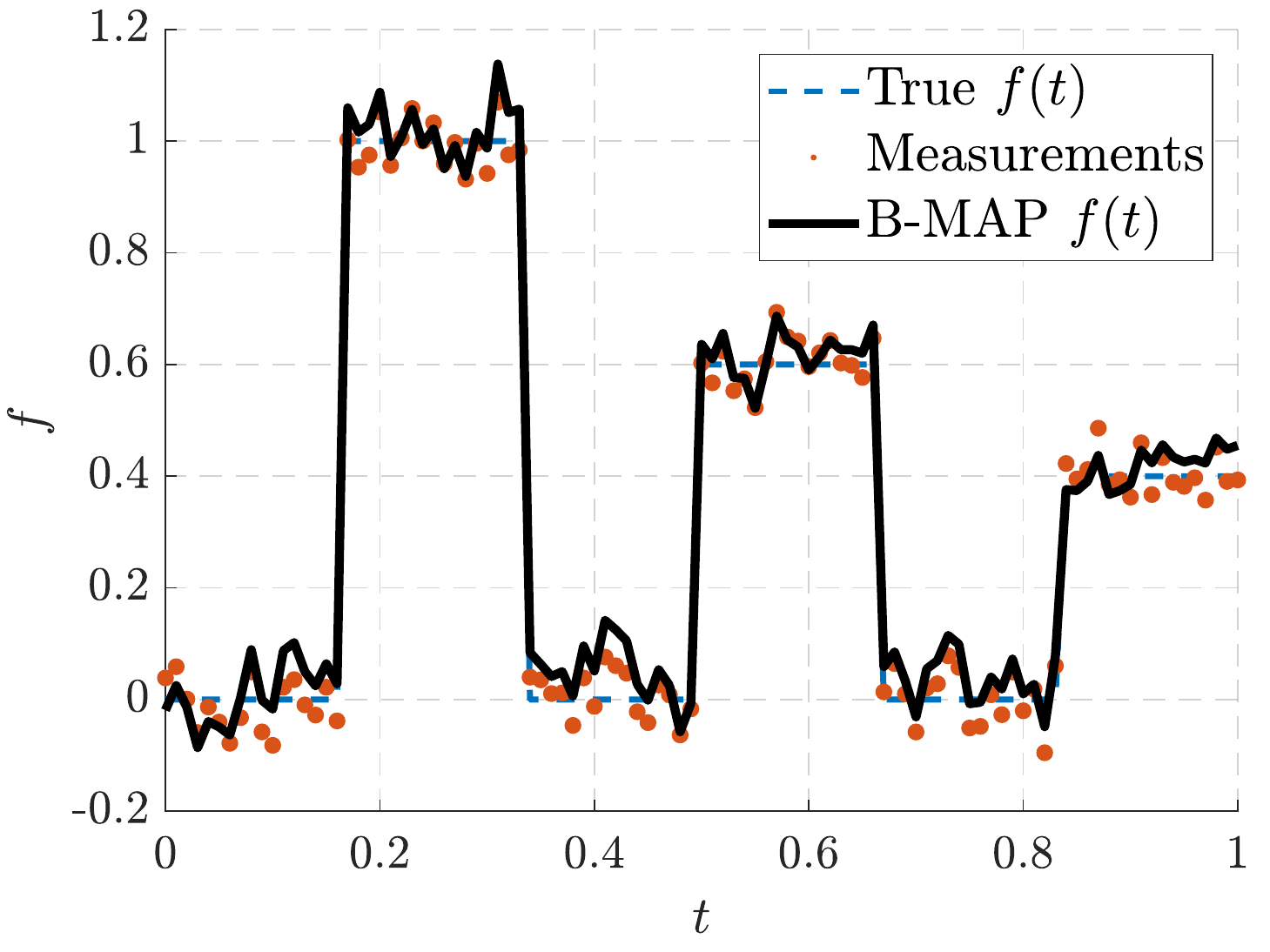}
		\includegraphics[width=.32\linewidth]{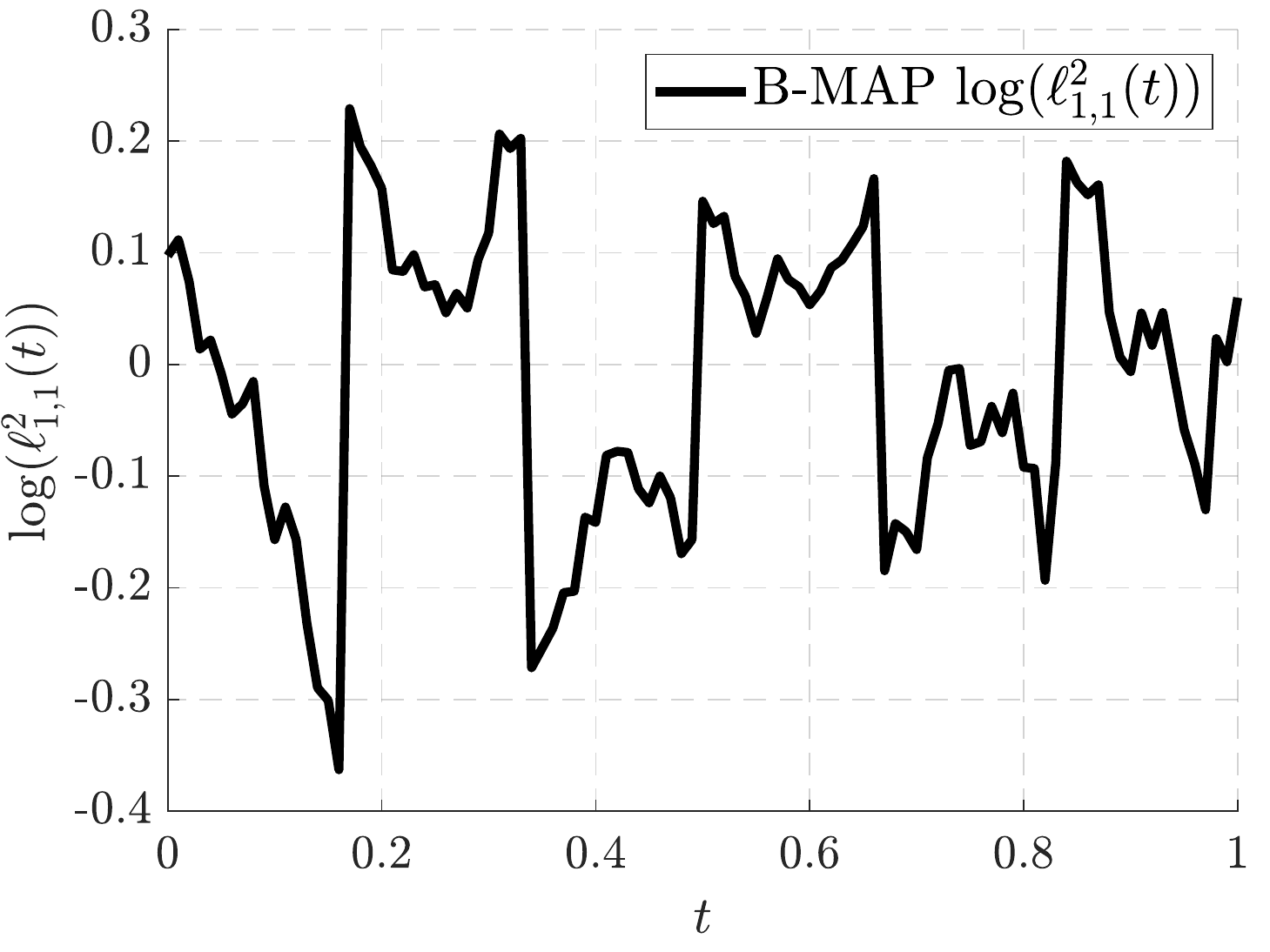}
		\includegraphics[width=.32\linewidth]{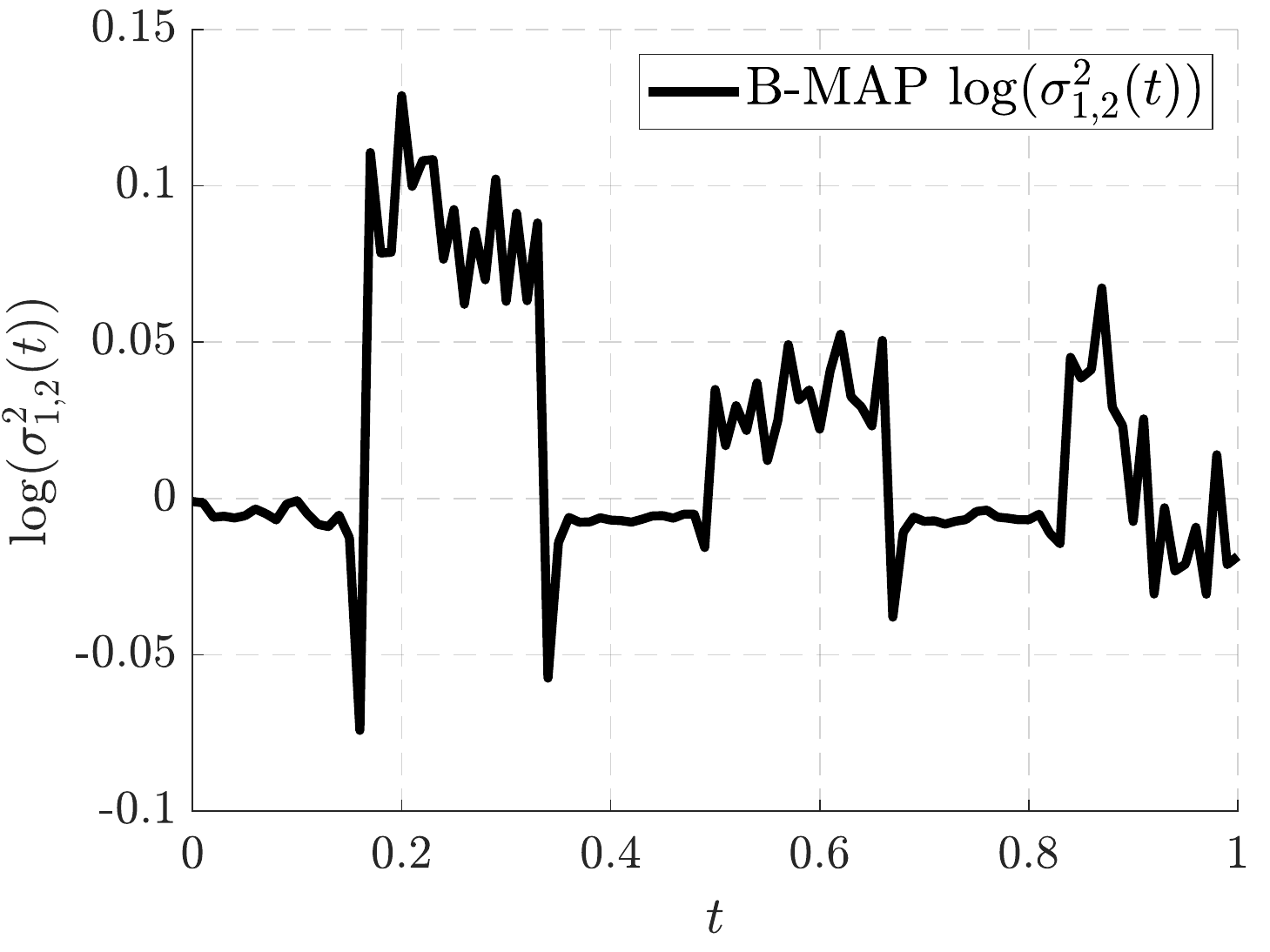}\\
		\includegraphics[width=.32\linewidth]{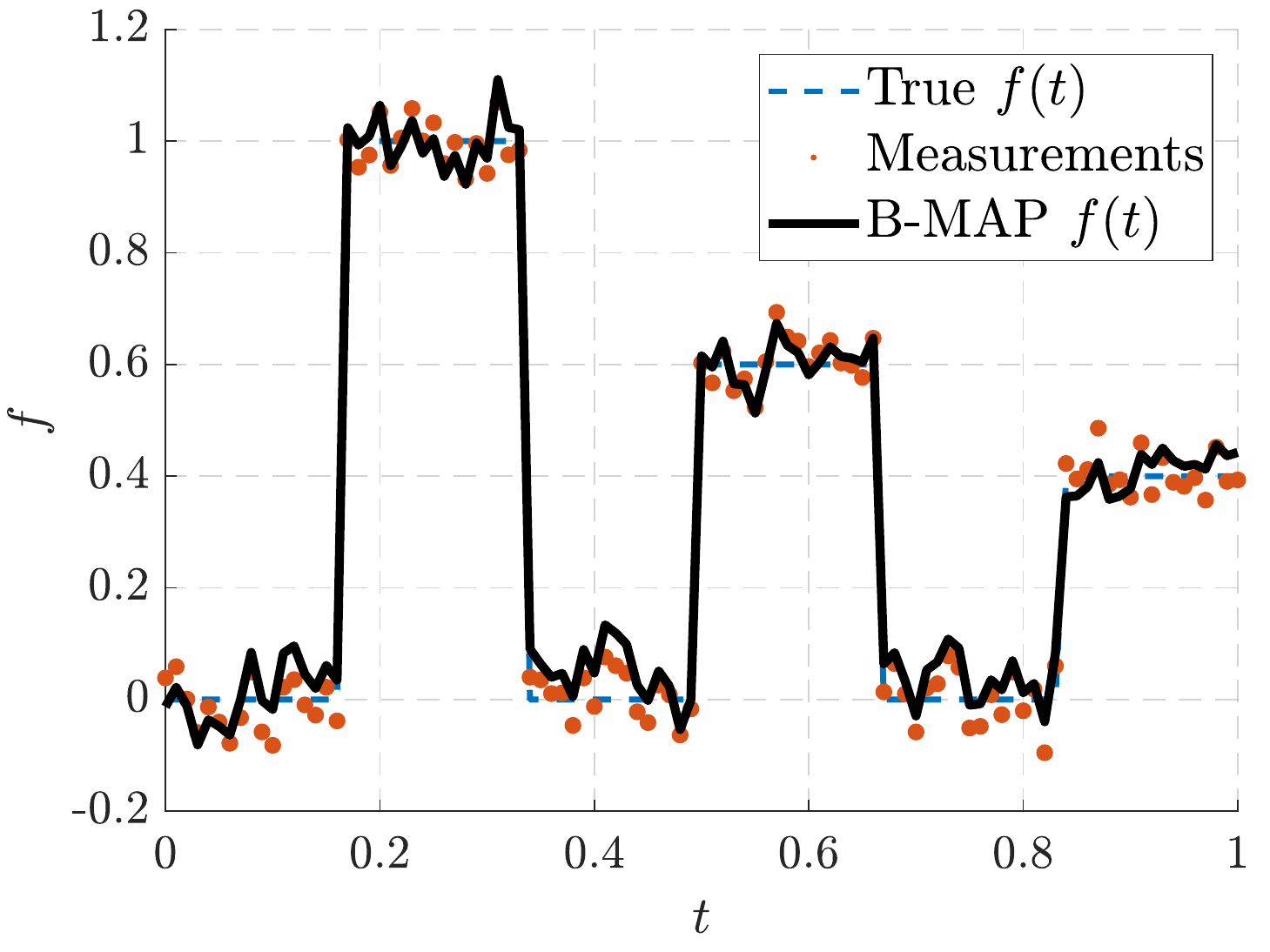}
		\includegraphics[width=.32\linewidth]{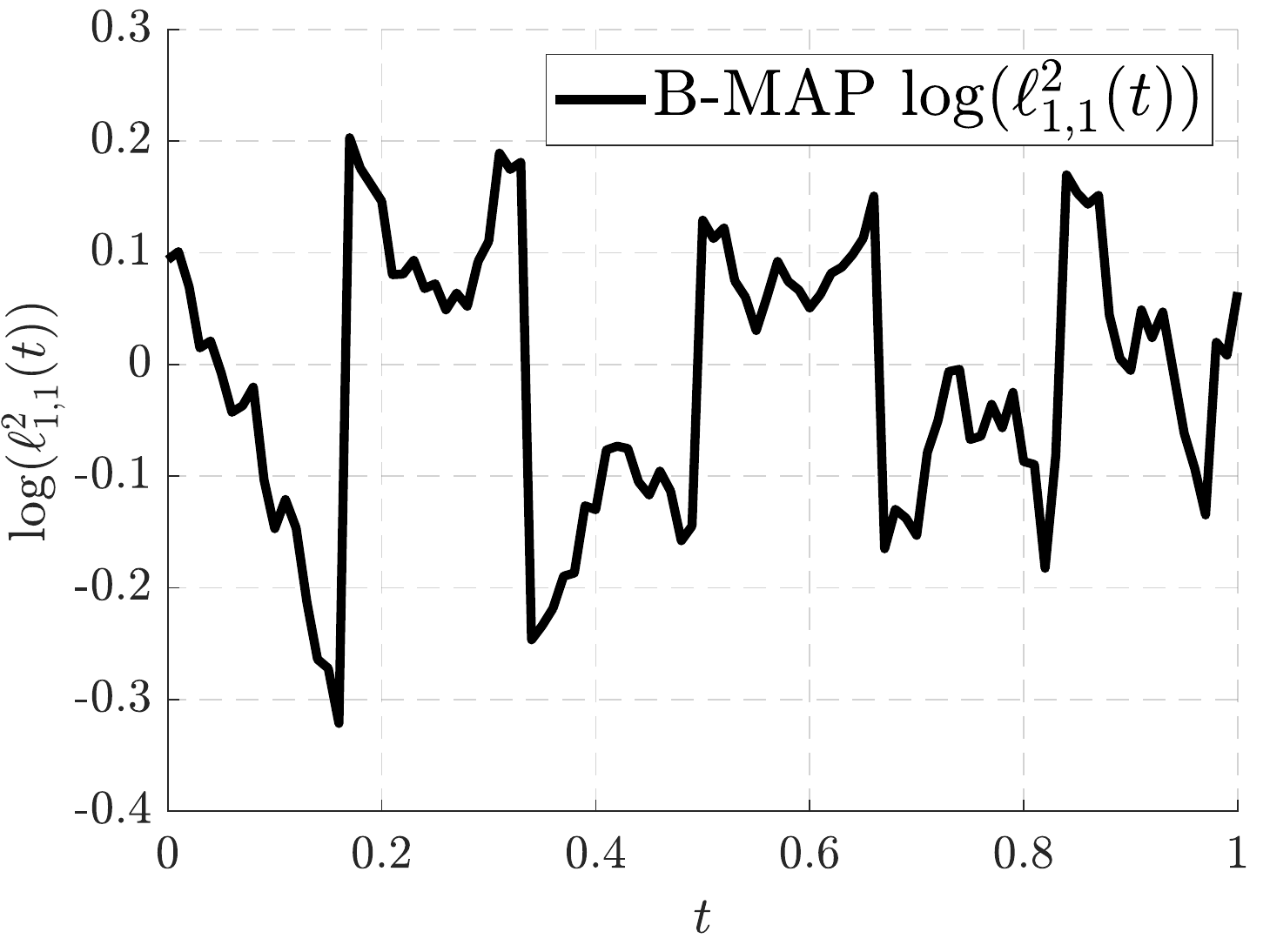}
		\includegraphics[width=.32\linewidth]{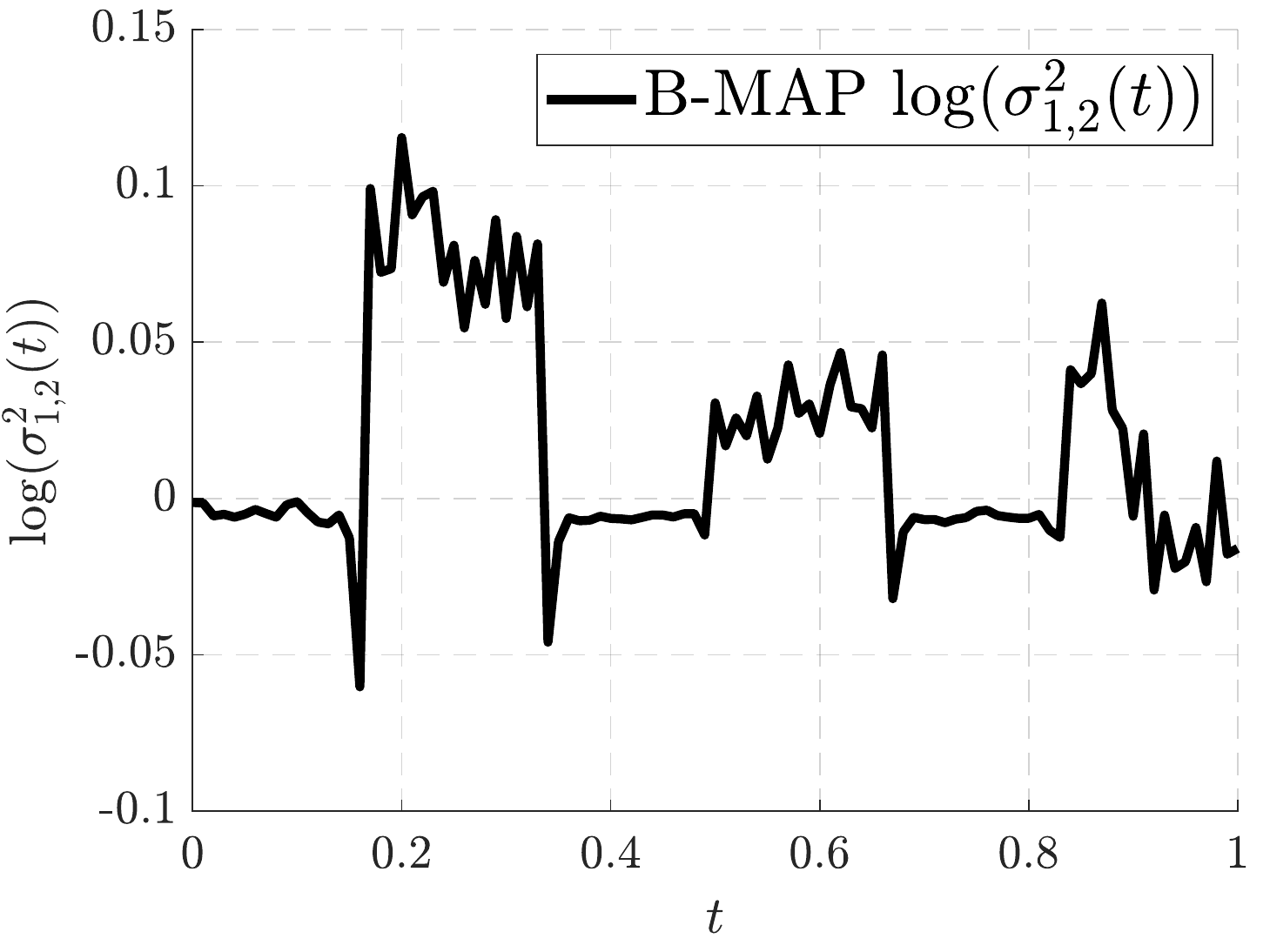}\\
		\includegraphics[width=.24\linewidth]{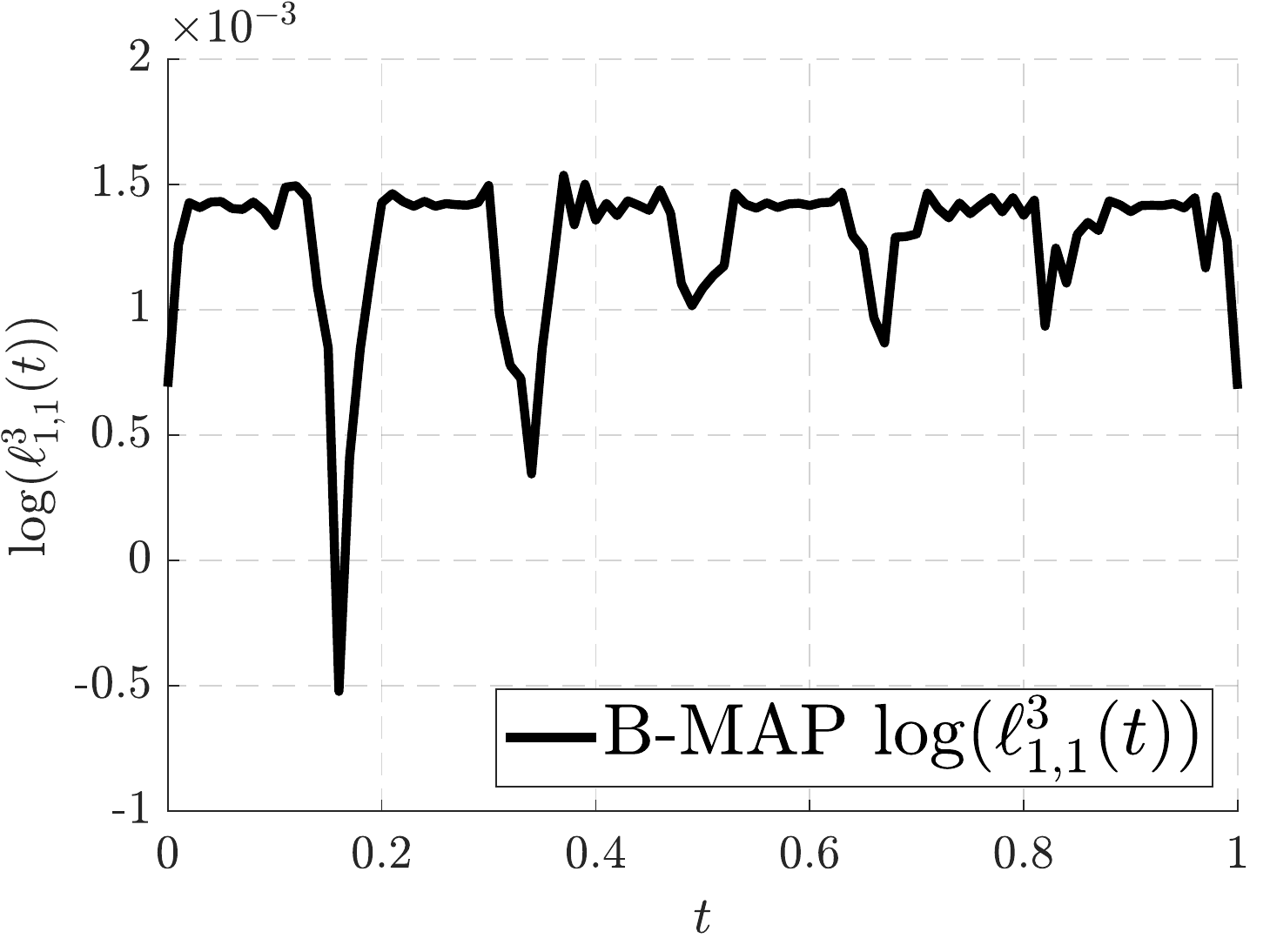}
		\includegraphics[width=.24\linewidth]{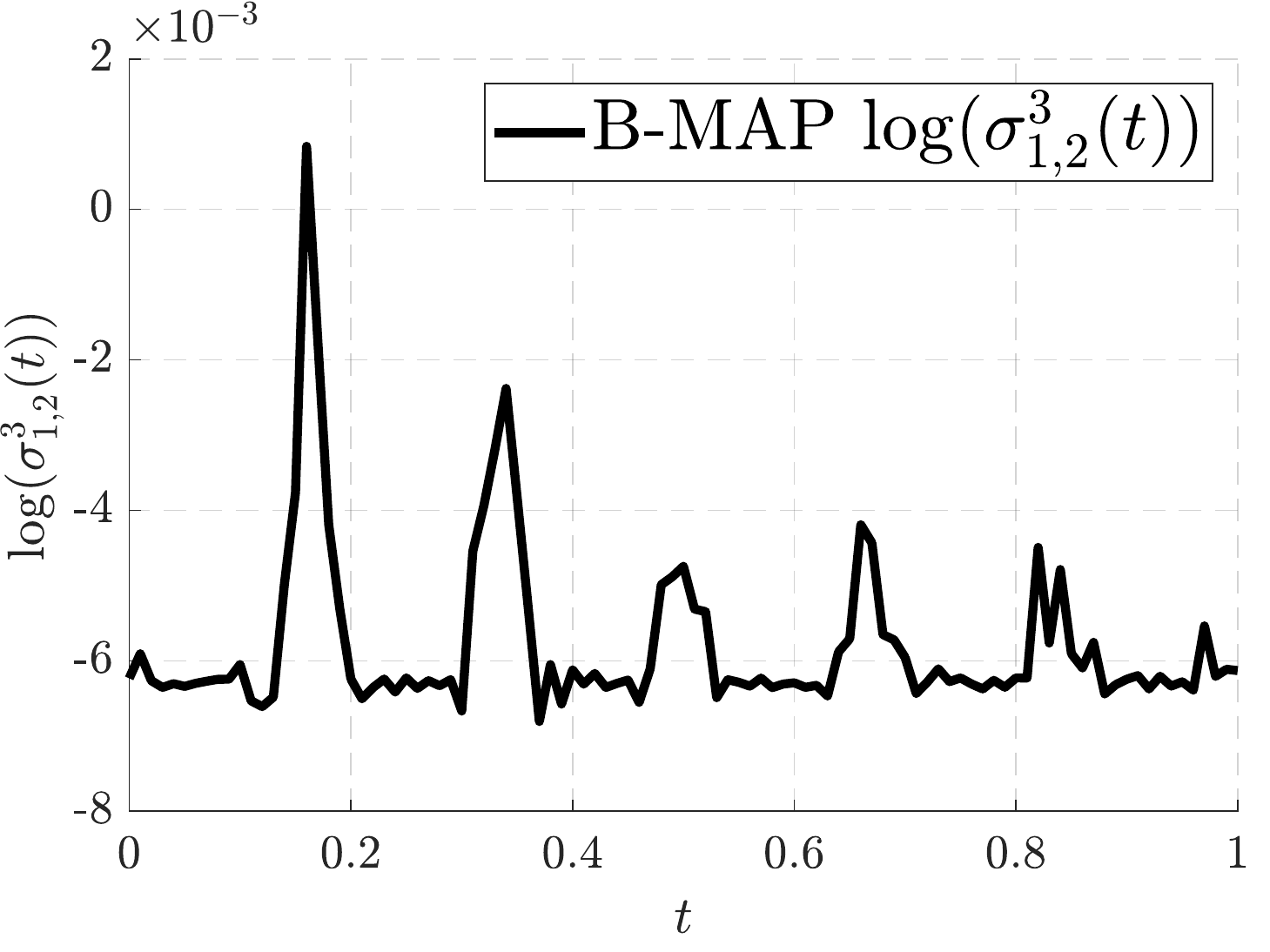}
		\includegraphics[width=.24\linewidth]{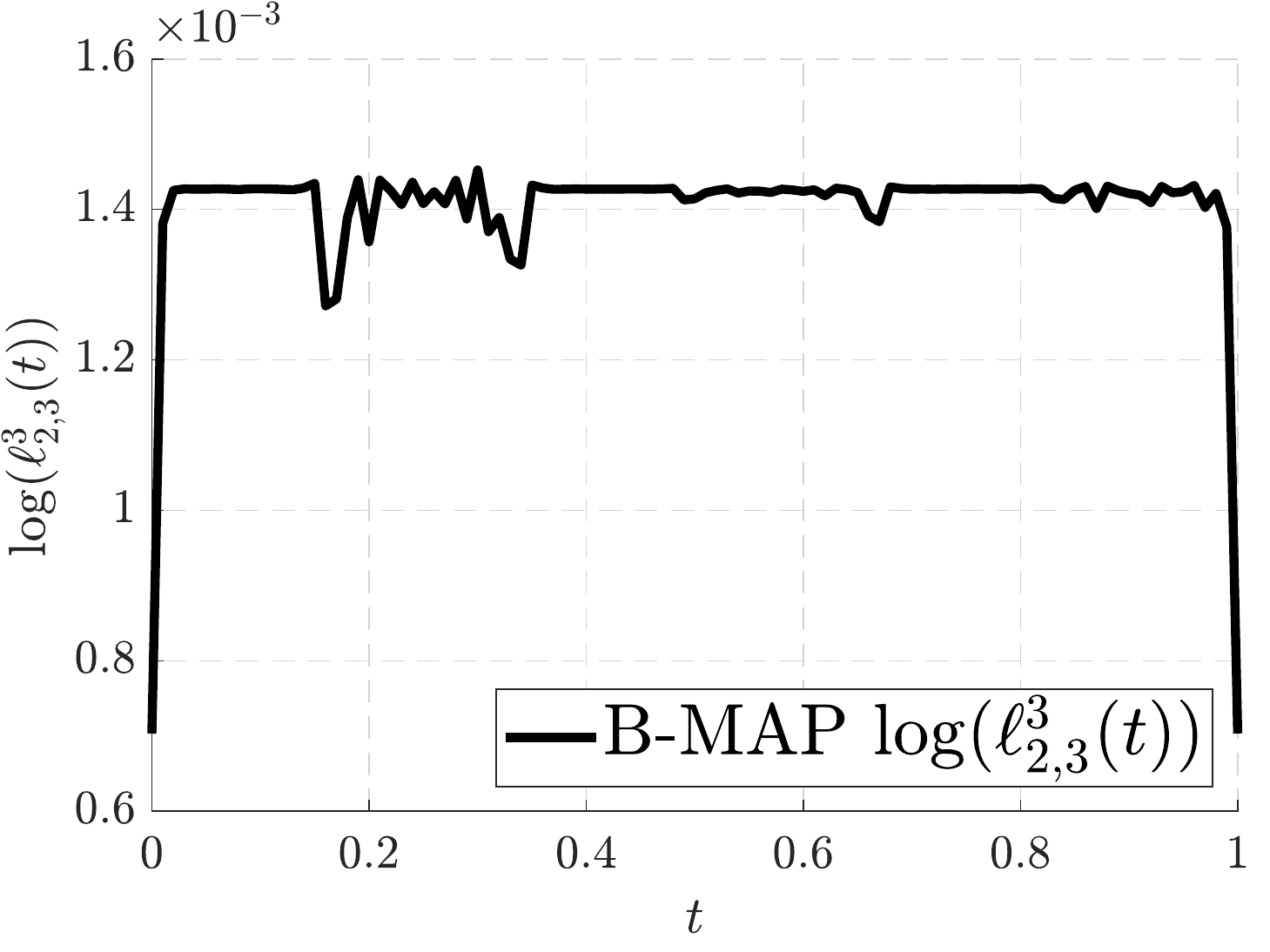}
		\includegraphics[width=.24\linewidth]{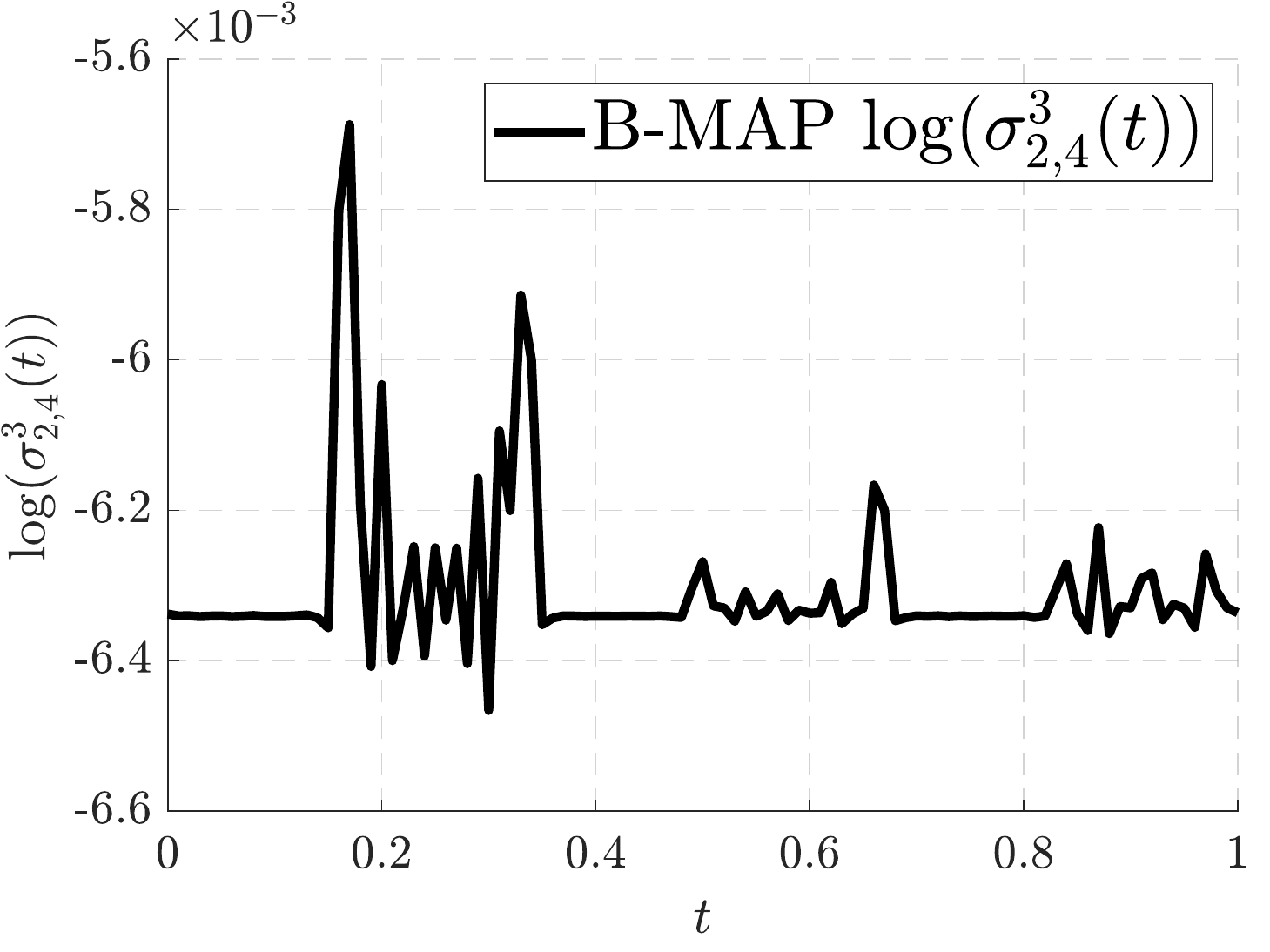}
		\caption{B-MAP regression results on model~\eqref{equ:exp-y} using DGP-2 (first row) and DGP-3 (second and third rows).}
		\label{fig:exp-BMAP}
	\end{figure*}
	
	\begin{figure*}[t!]
		\centering
		\includegraphics[width=.24\linewidth]{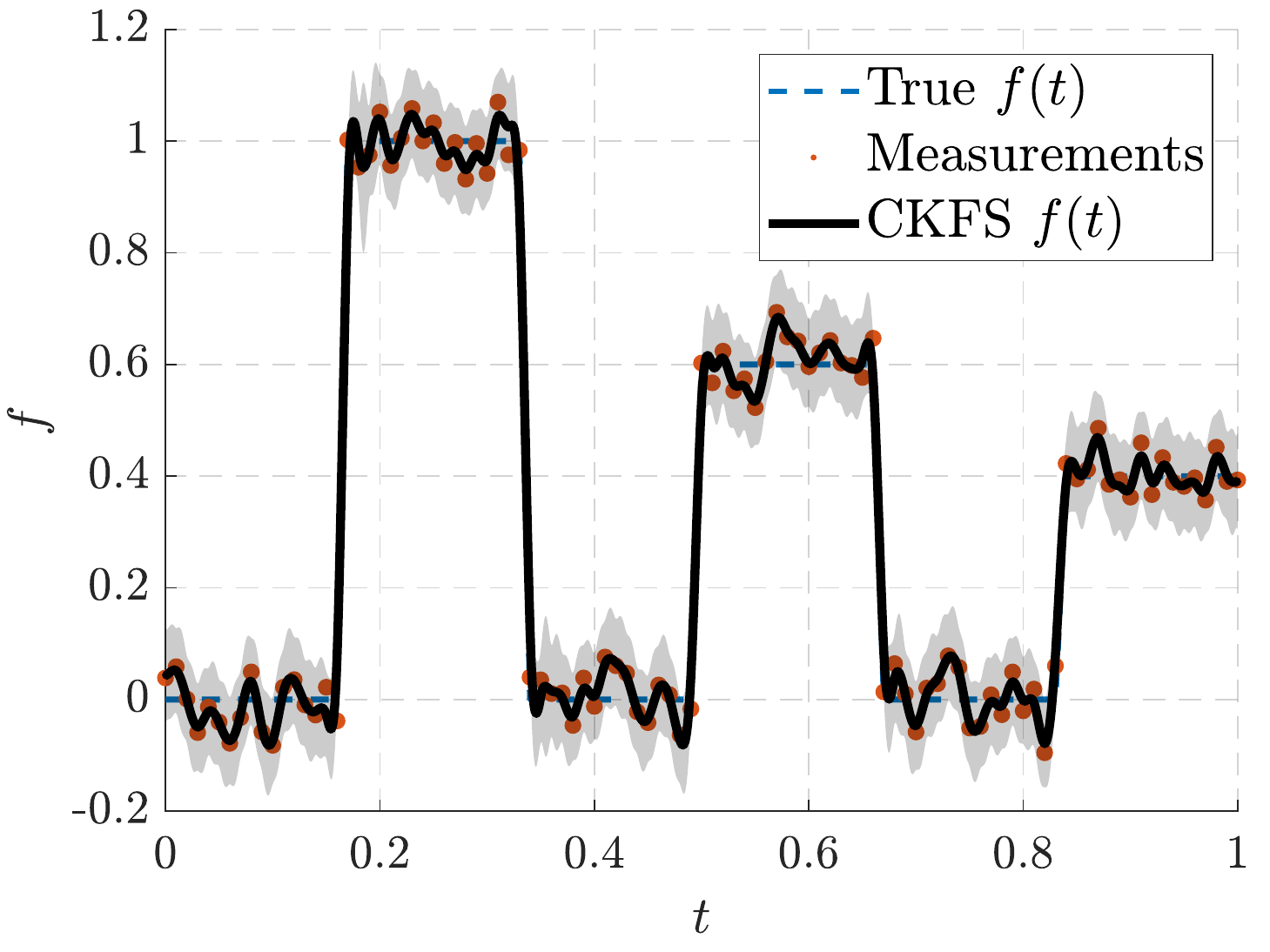}
		\includegraphics[width=.24\linewidth]{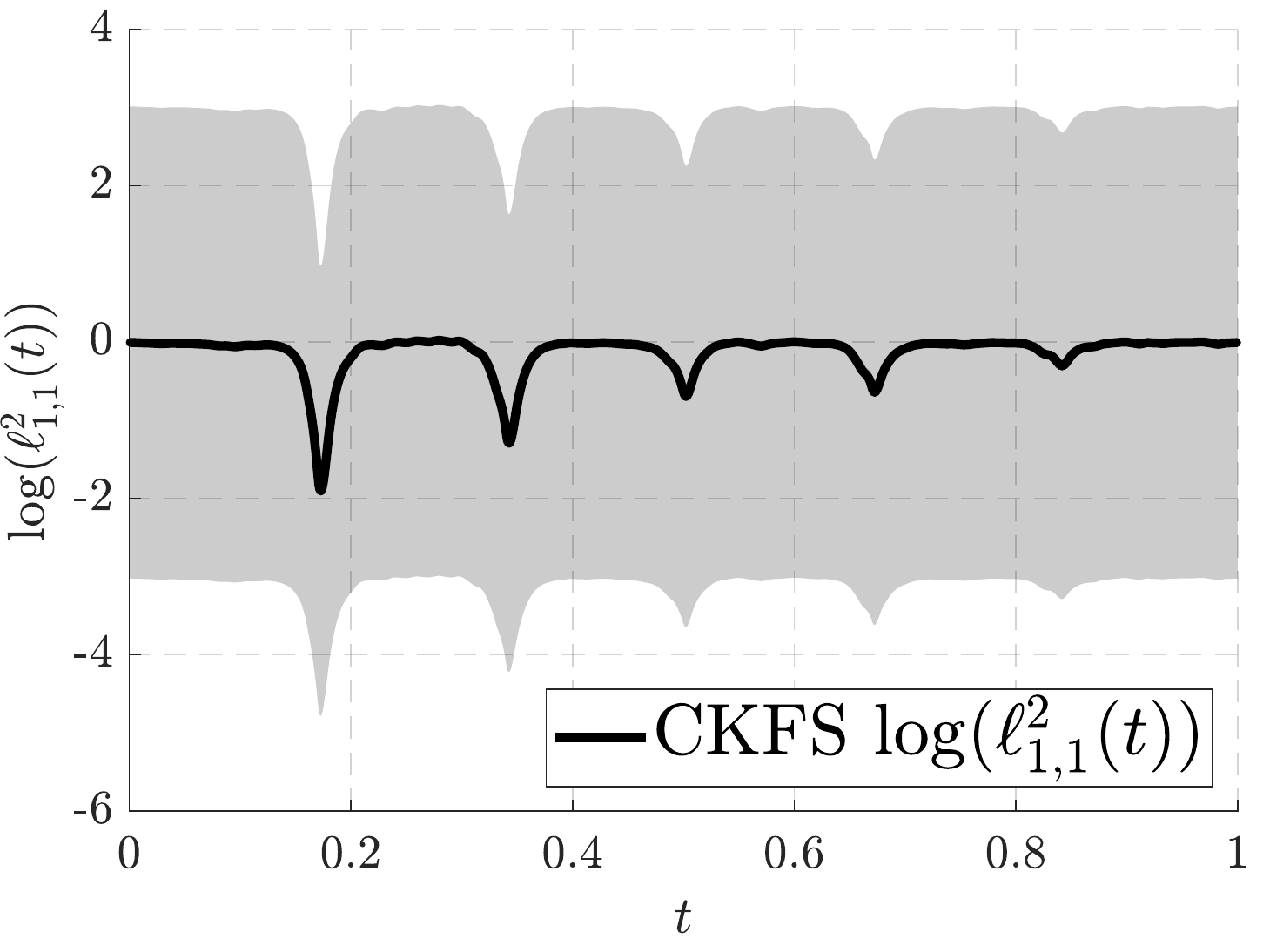}
		\includegraphics[width=.24\linewidth]{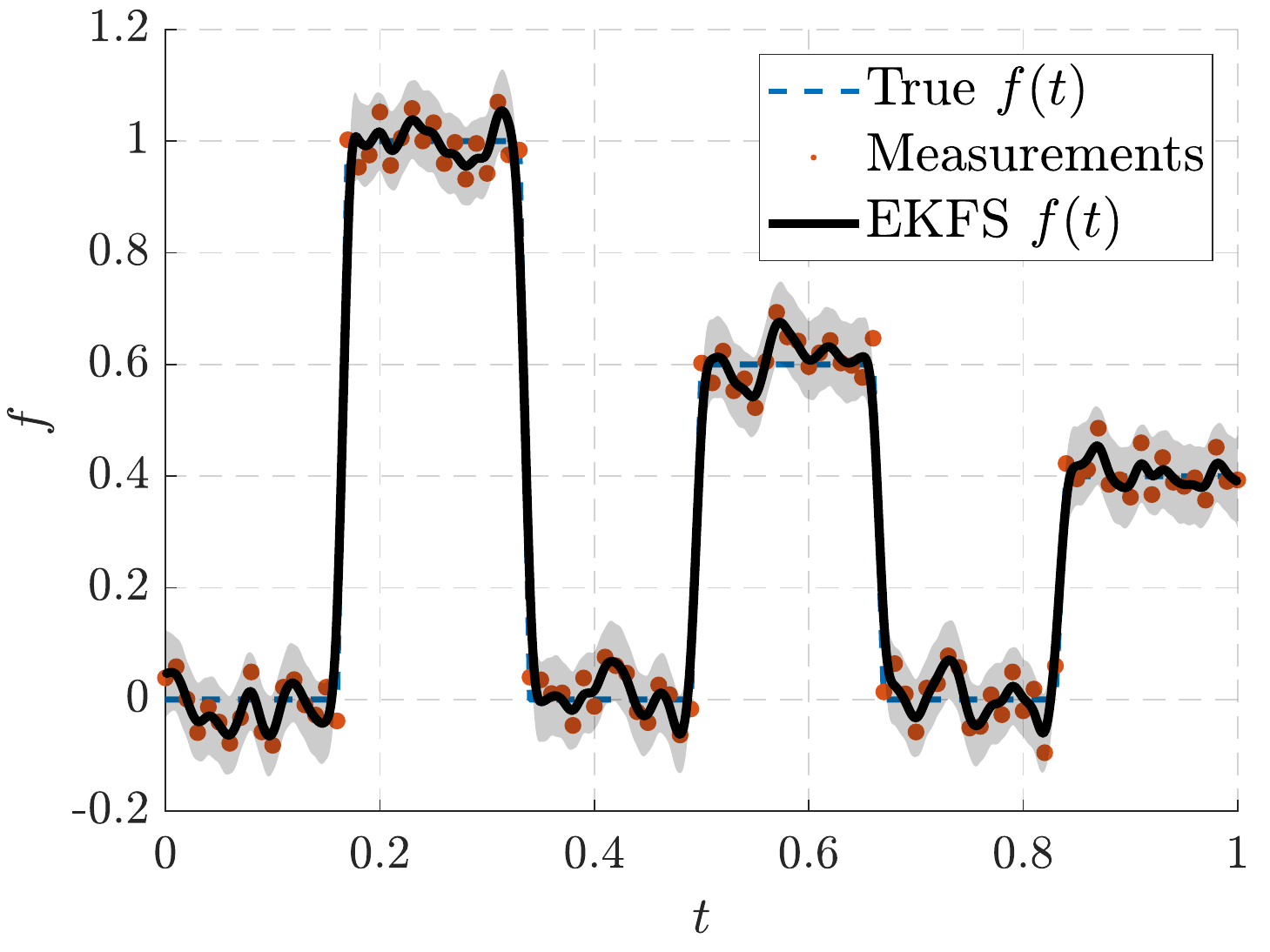}
		\includegraphics[width=.24\linewidth]{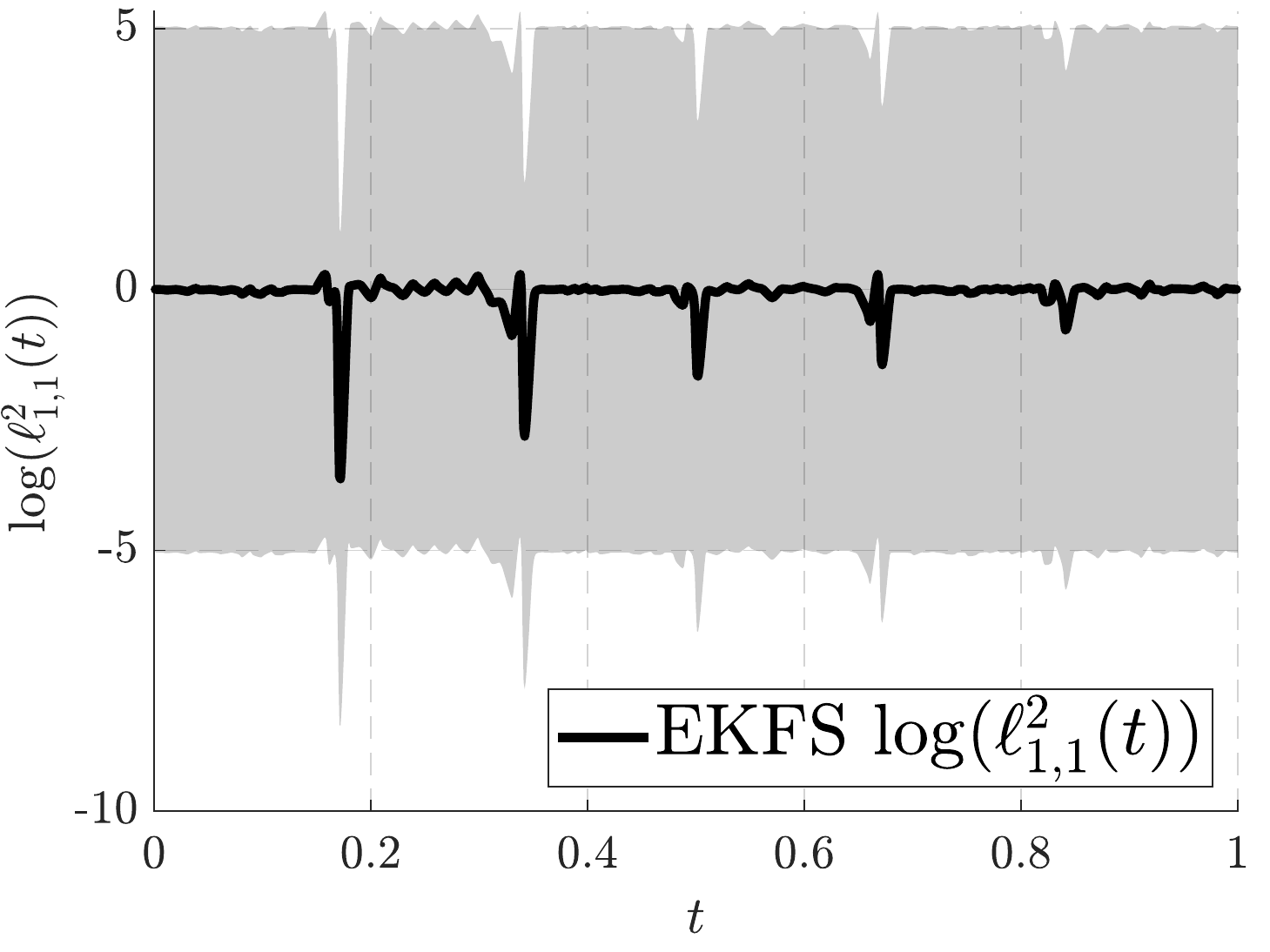}\\
		\includegraphics[width=.32\linewidth]{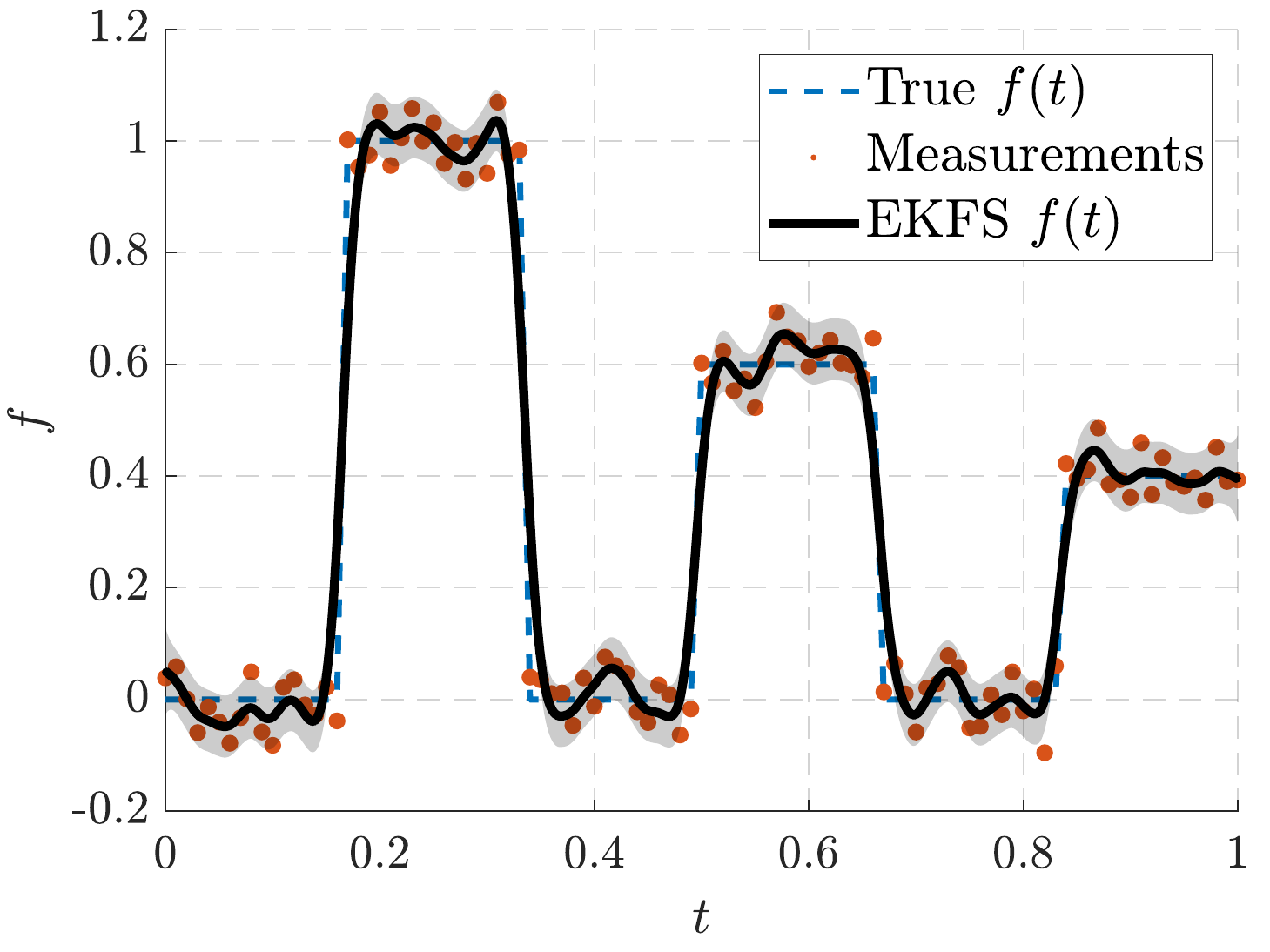}
		\includegraphics[width=.32\linewidth]{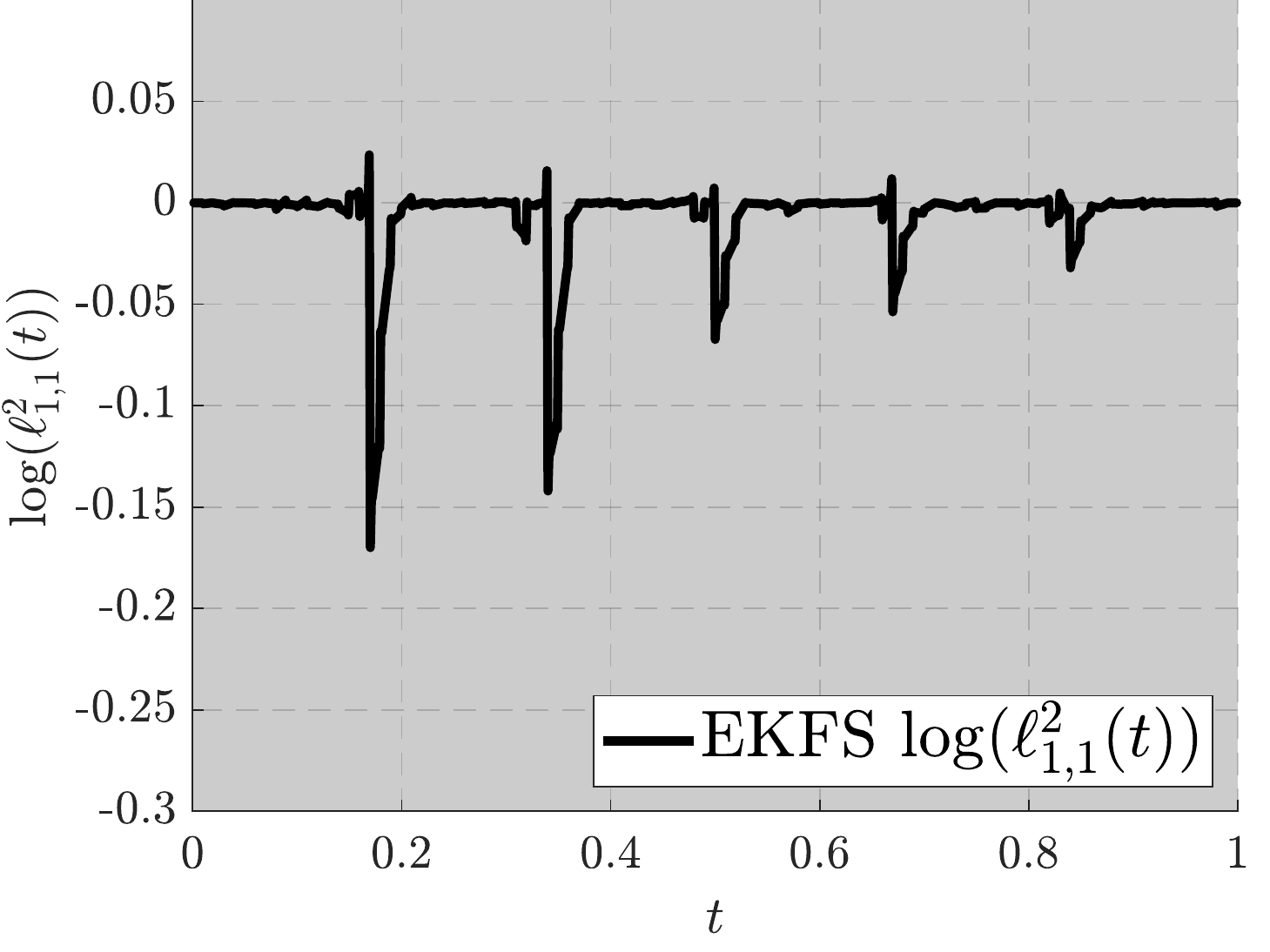}
		\includegraphics[width=.32\linewidth]{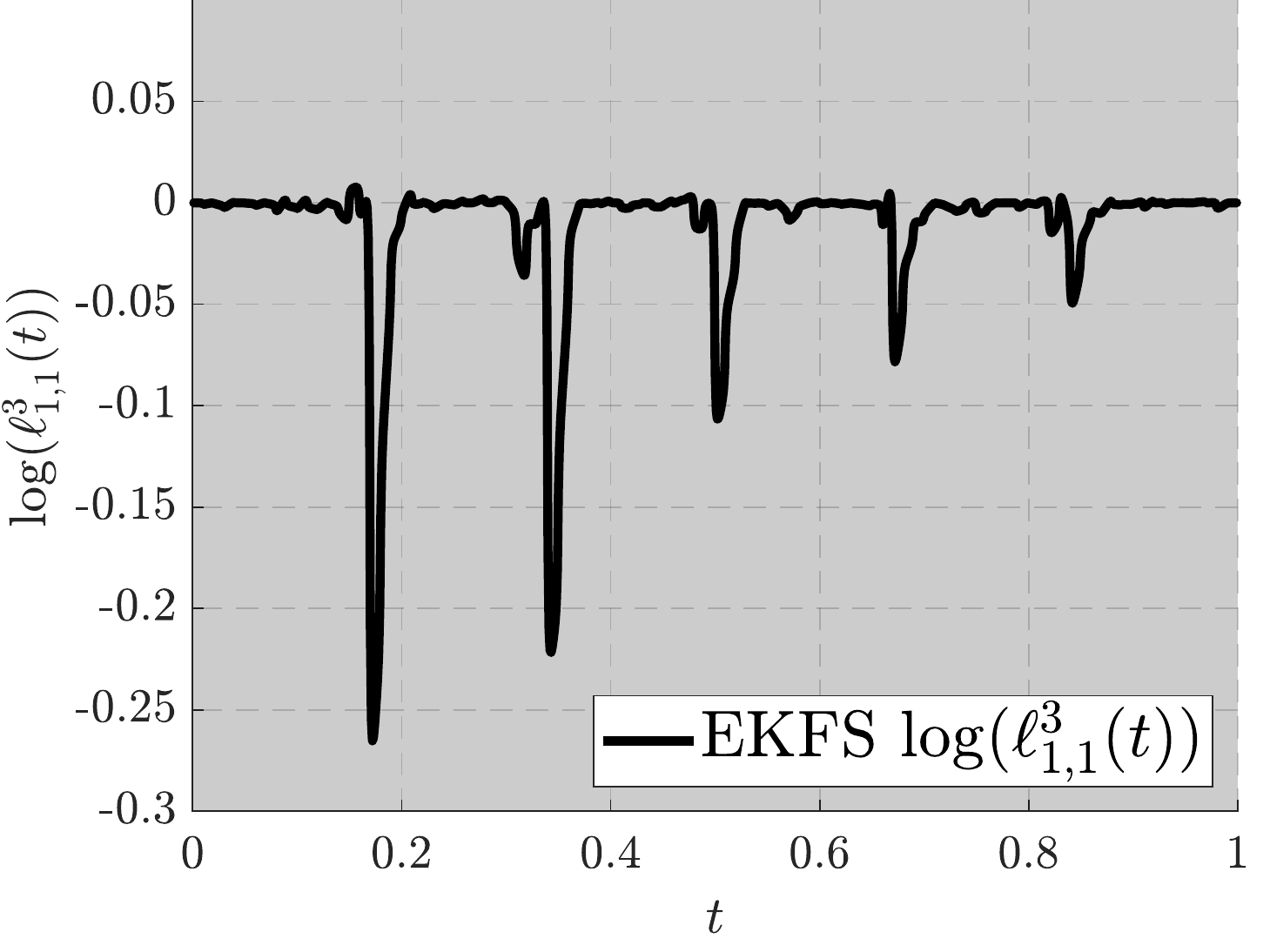}
		\caption{CKFS and EKFS regression results on model~\eqref{equ:exp-y} using DGP-2 (first row) and EKFS on DGP-3 (first and second rows). The shaded area stands for 95\% confidence interval. }
		\label{fig:exp-GFS}
	\end{figure*}
	
	\begin{figure*}[t!]
		\centering
		\includegraphics[width=.32\linewidth]{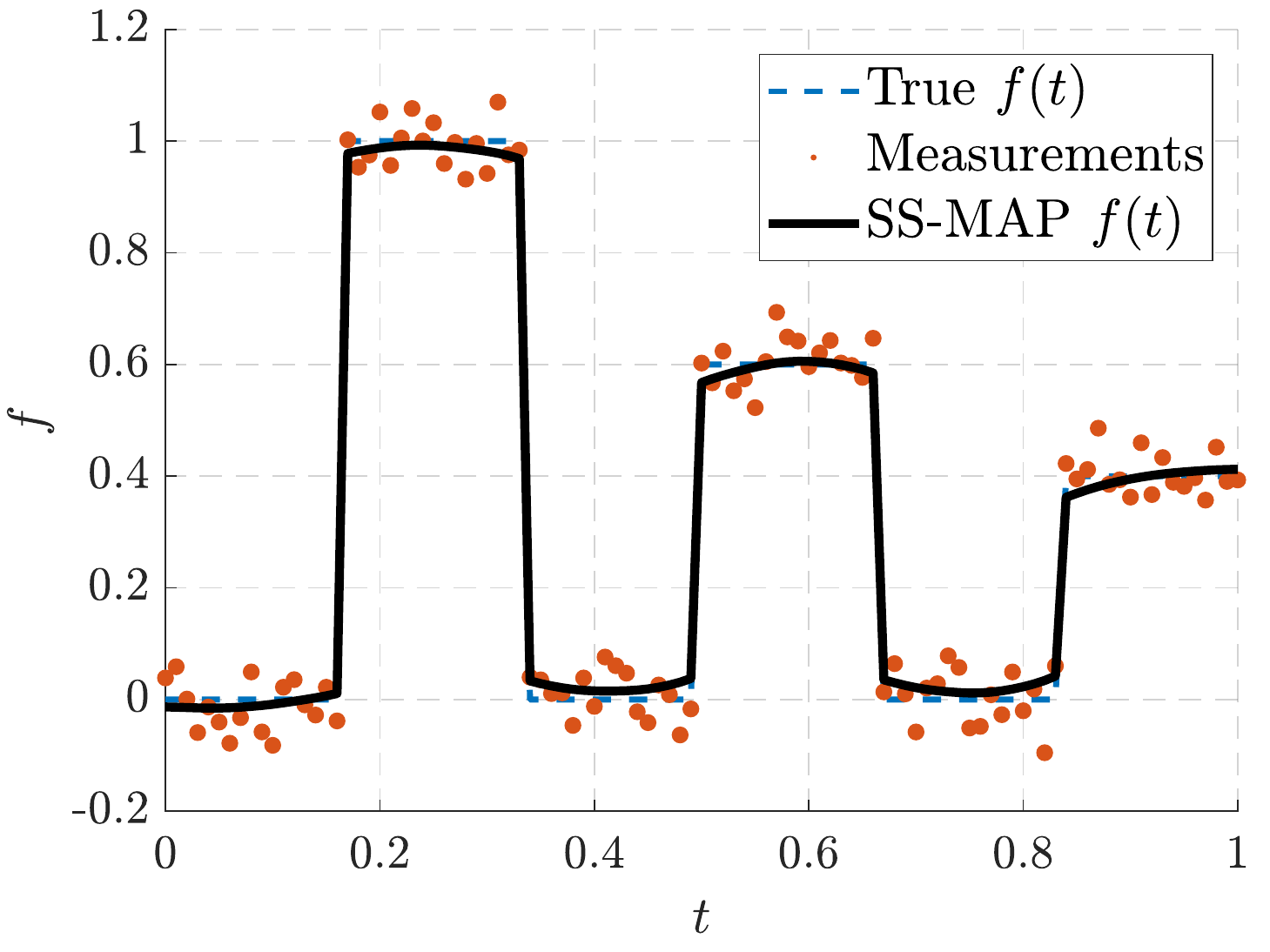}
		\includegraphics[width=.32\linewidth]{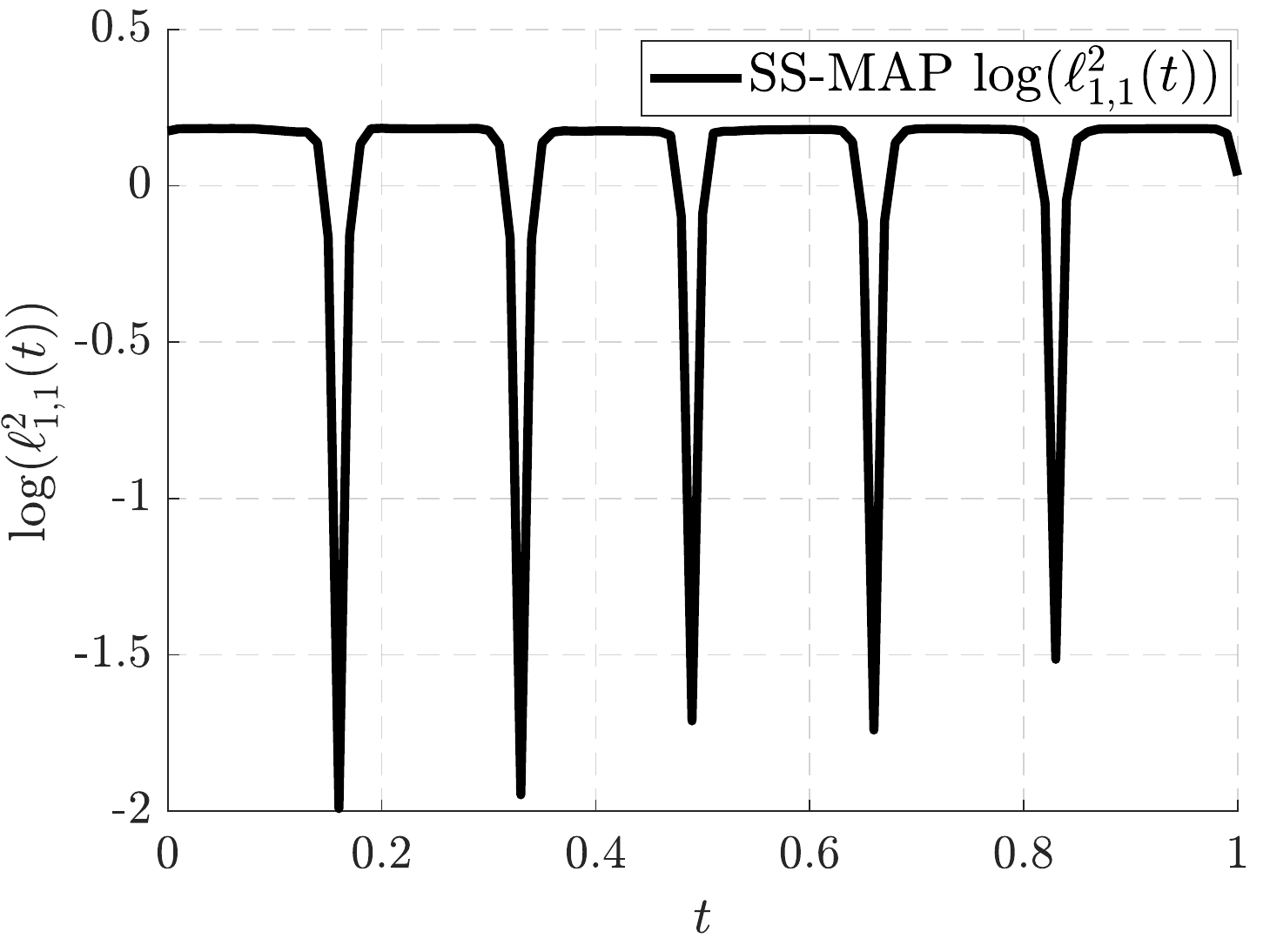}
		\includegraphics[width=.32\linewidth]{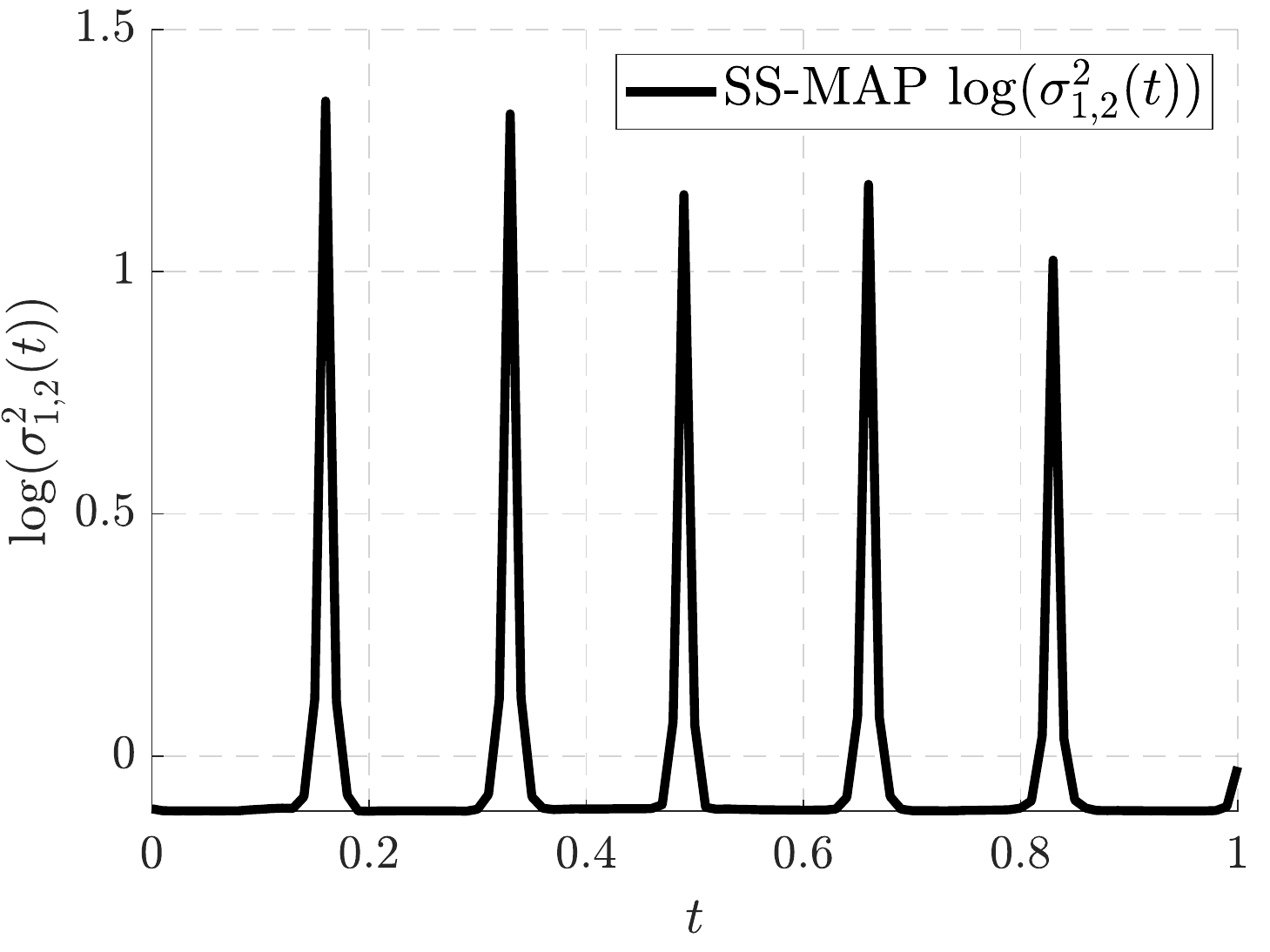}\\
		\includegraphics[width=.32\linewidth]{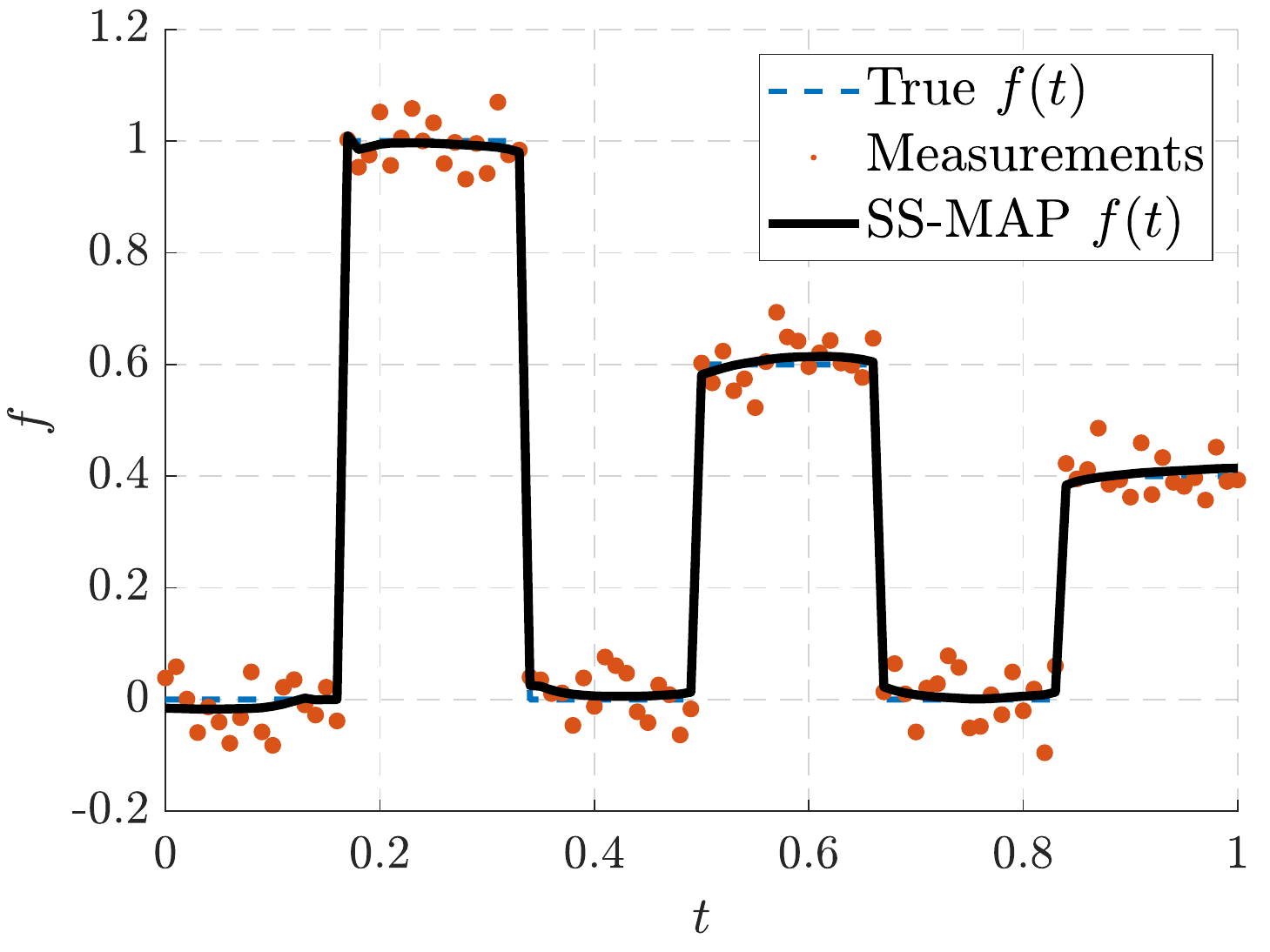}
		\includegraphics[width=.32\linewidth]{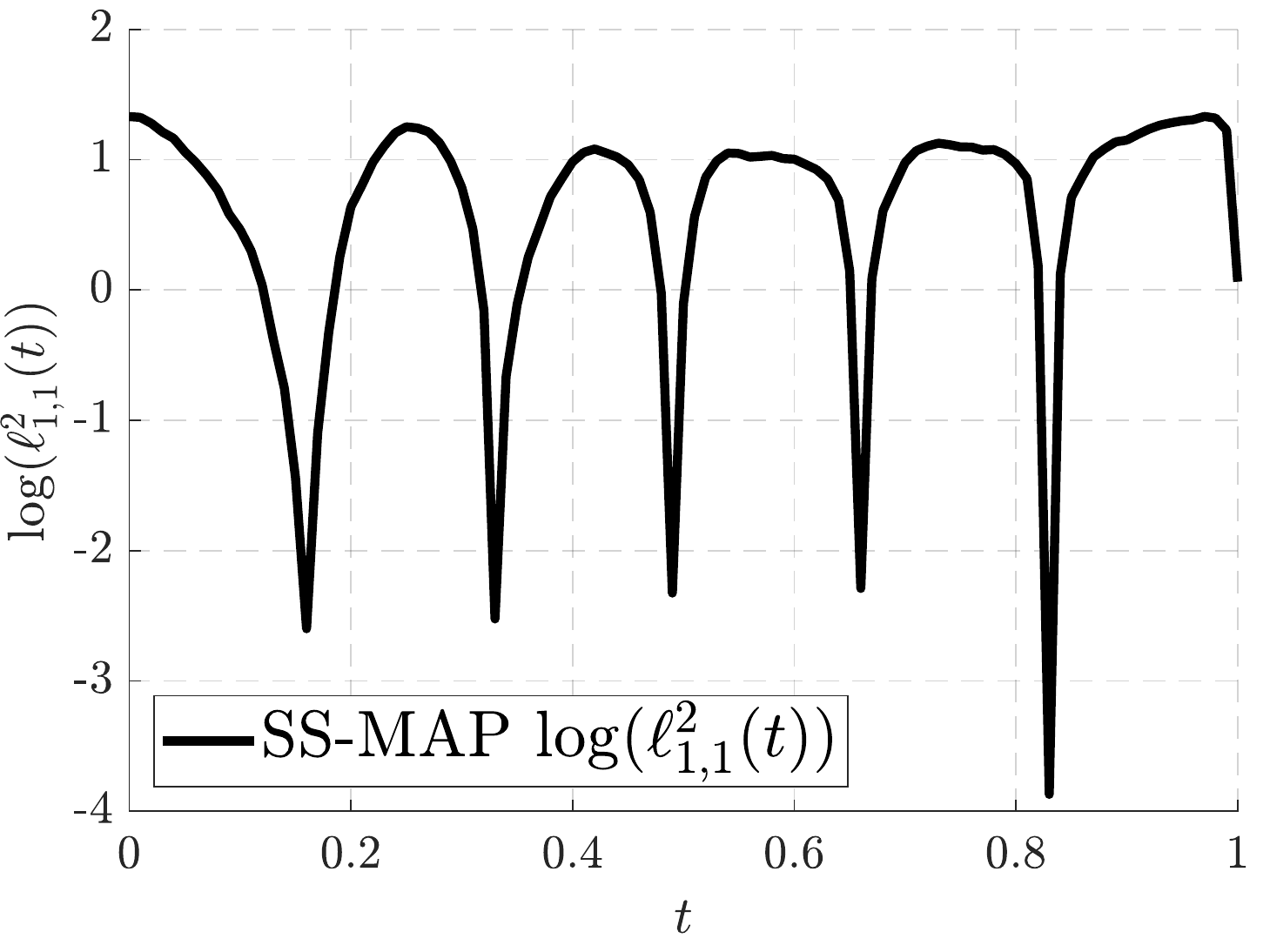}
		\includegraphics[width=.32\linewidth]{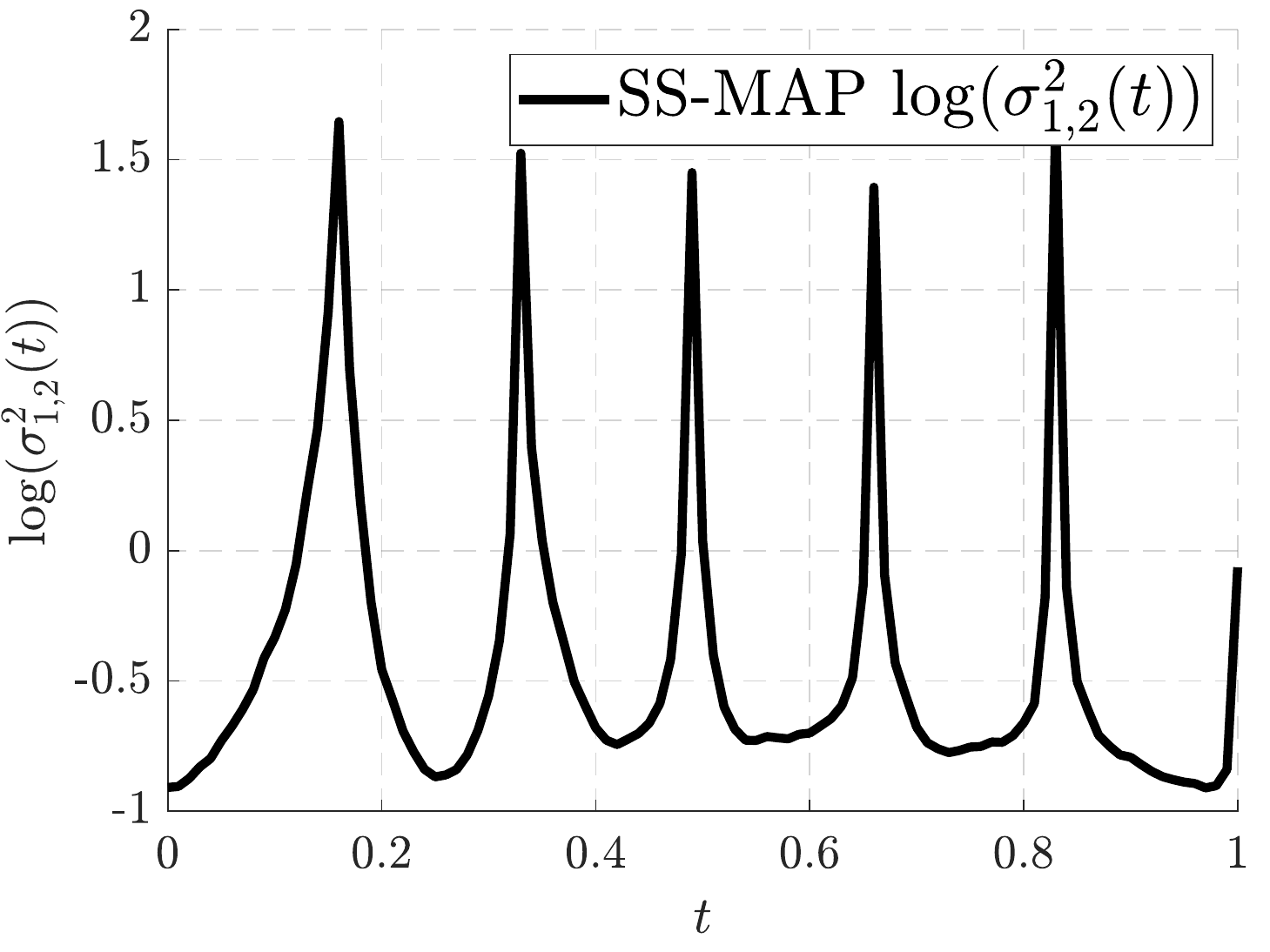}\\
		\includegraphics[width=.24\linewidth]{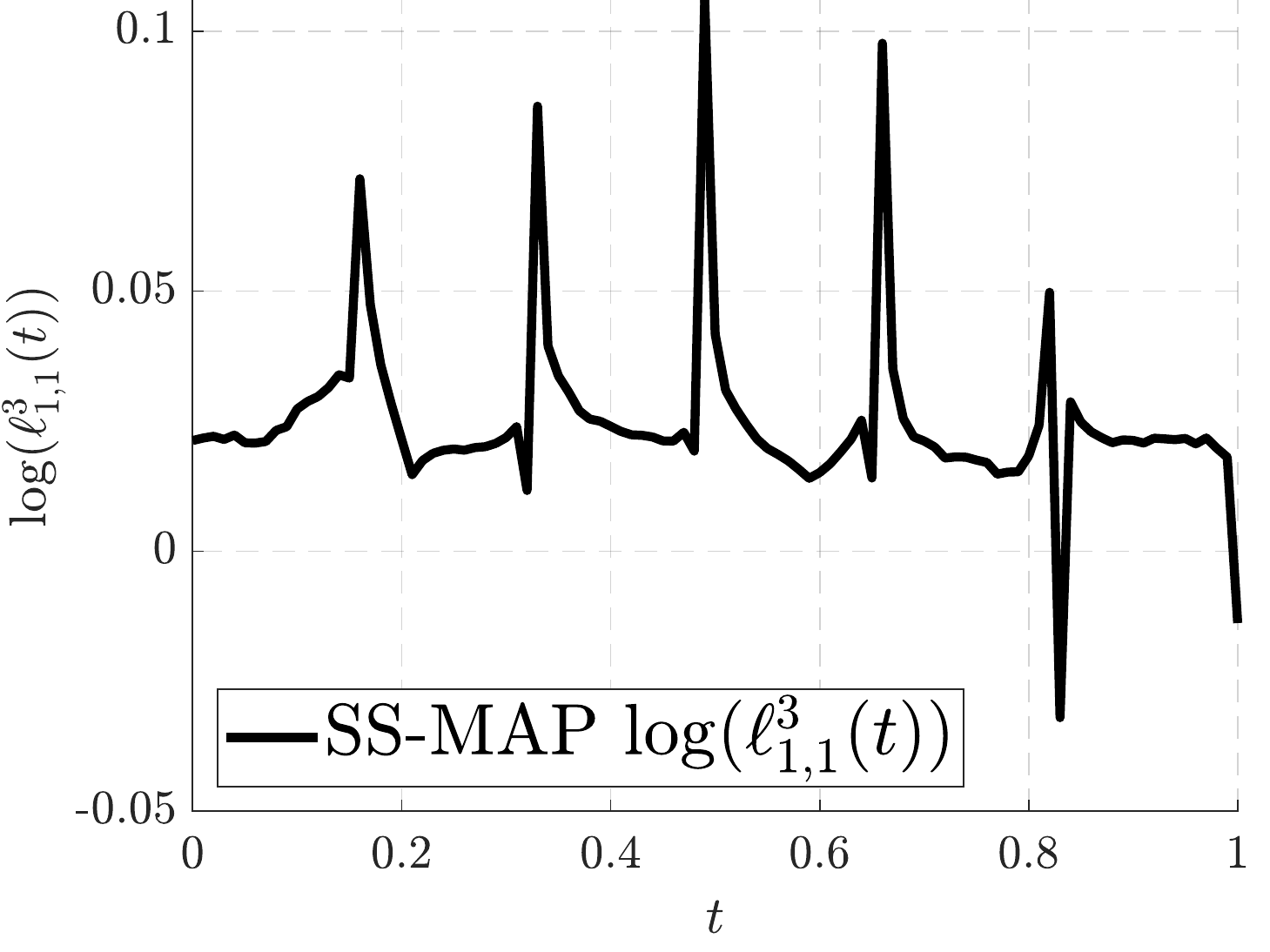}
		\includegraphics[width=.24\linewidth]{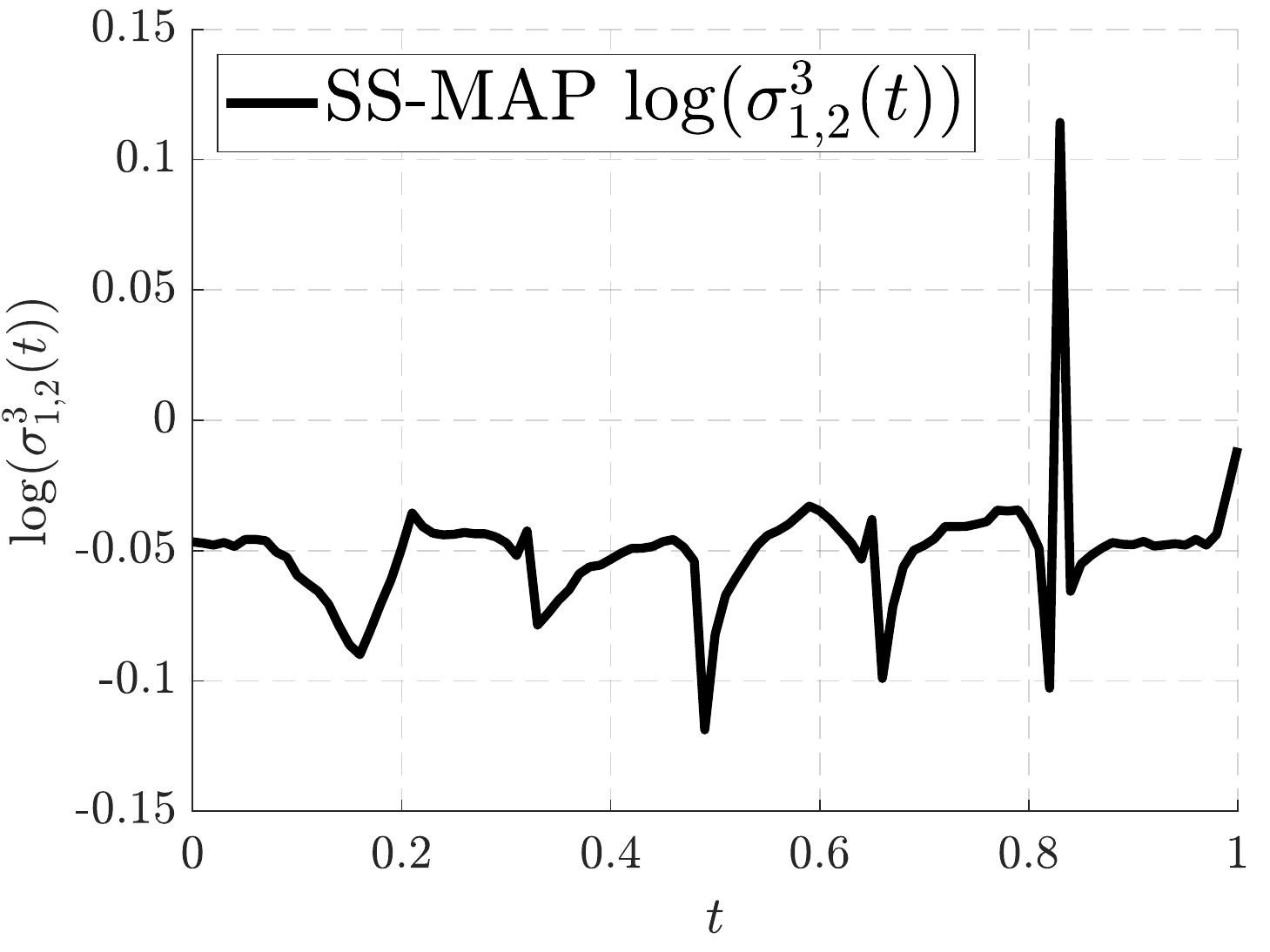}
		\includegraphics[width=.24\linewidth]{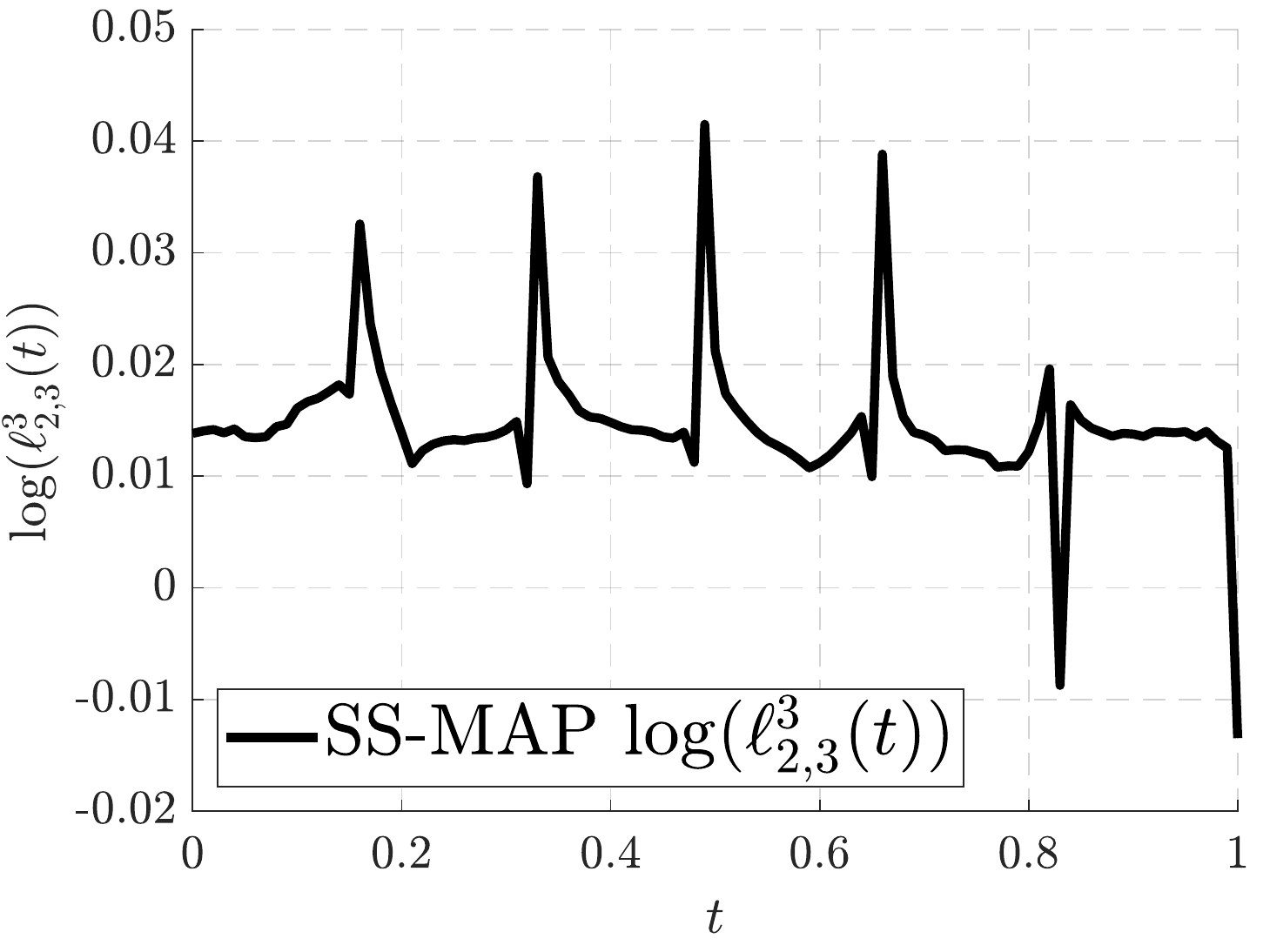}
		\includegraphics[width=.24\linewidth]{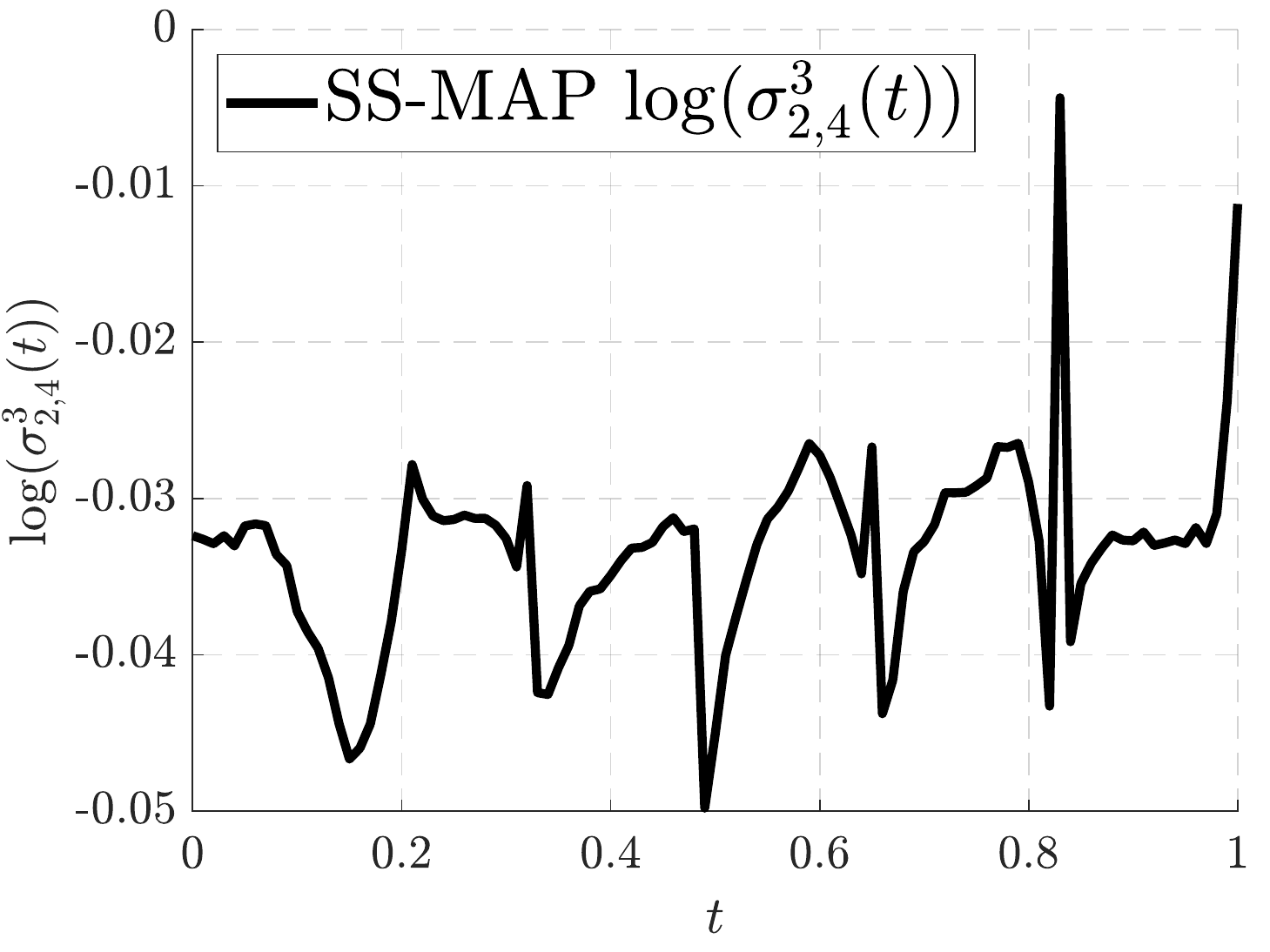}
		\caption{SS-MAP regression results on model~\eqref{equ:exp-y} using DGP-2 (first row) and DGP-3 (second and third rows).}
		\label{fig:exp-SSMAP}
	\end{figure*}
	
	\begin{figure}[h!]
		\centering
		\includegraphics[width=.49\linewidth]{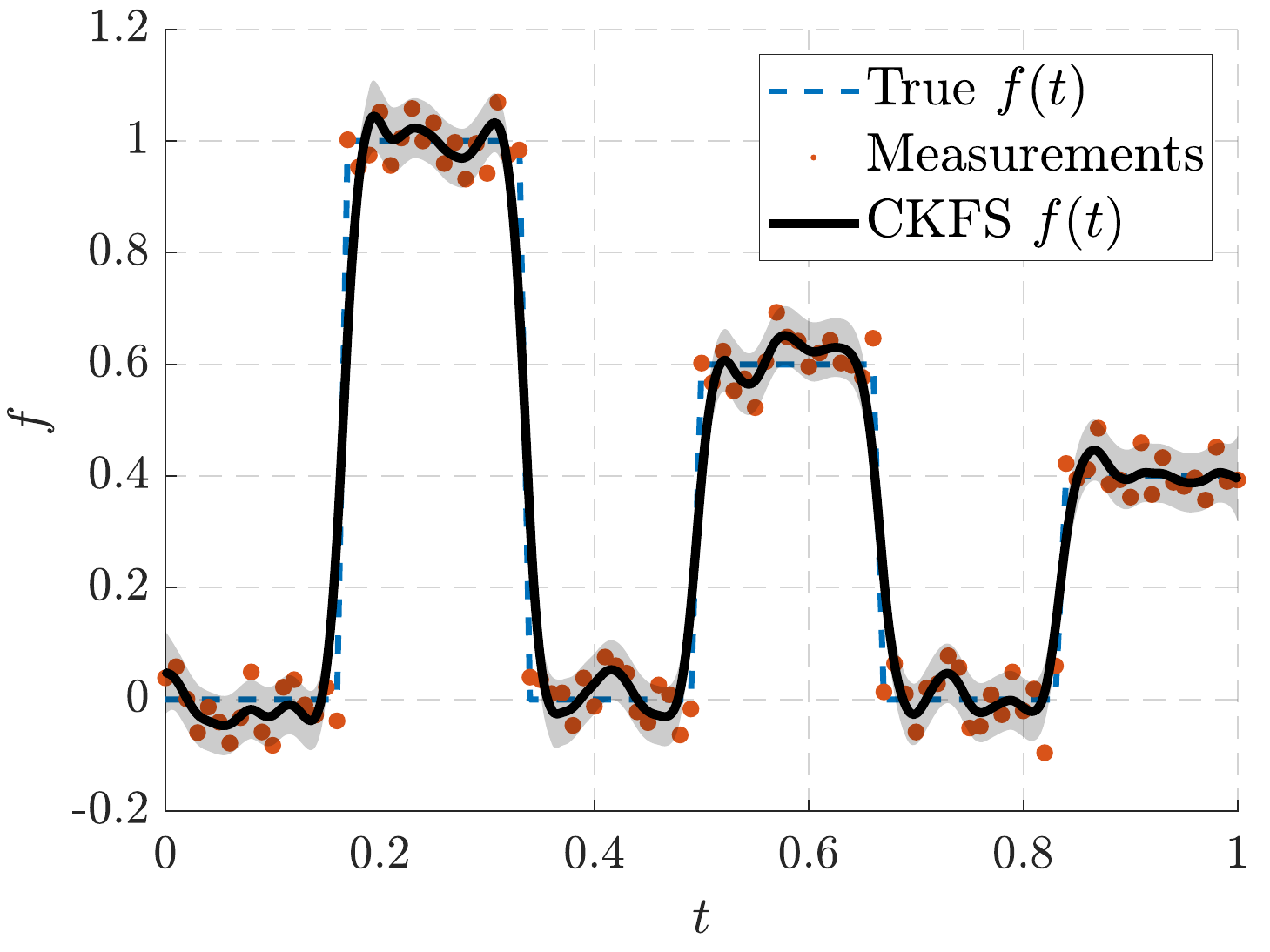}
		\includegraphics[width=.49\linewidth]{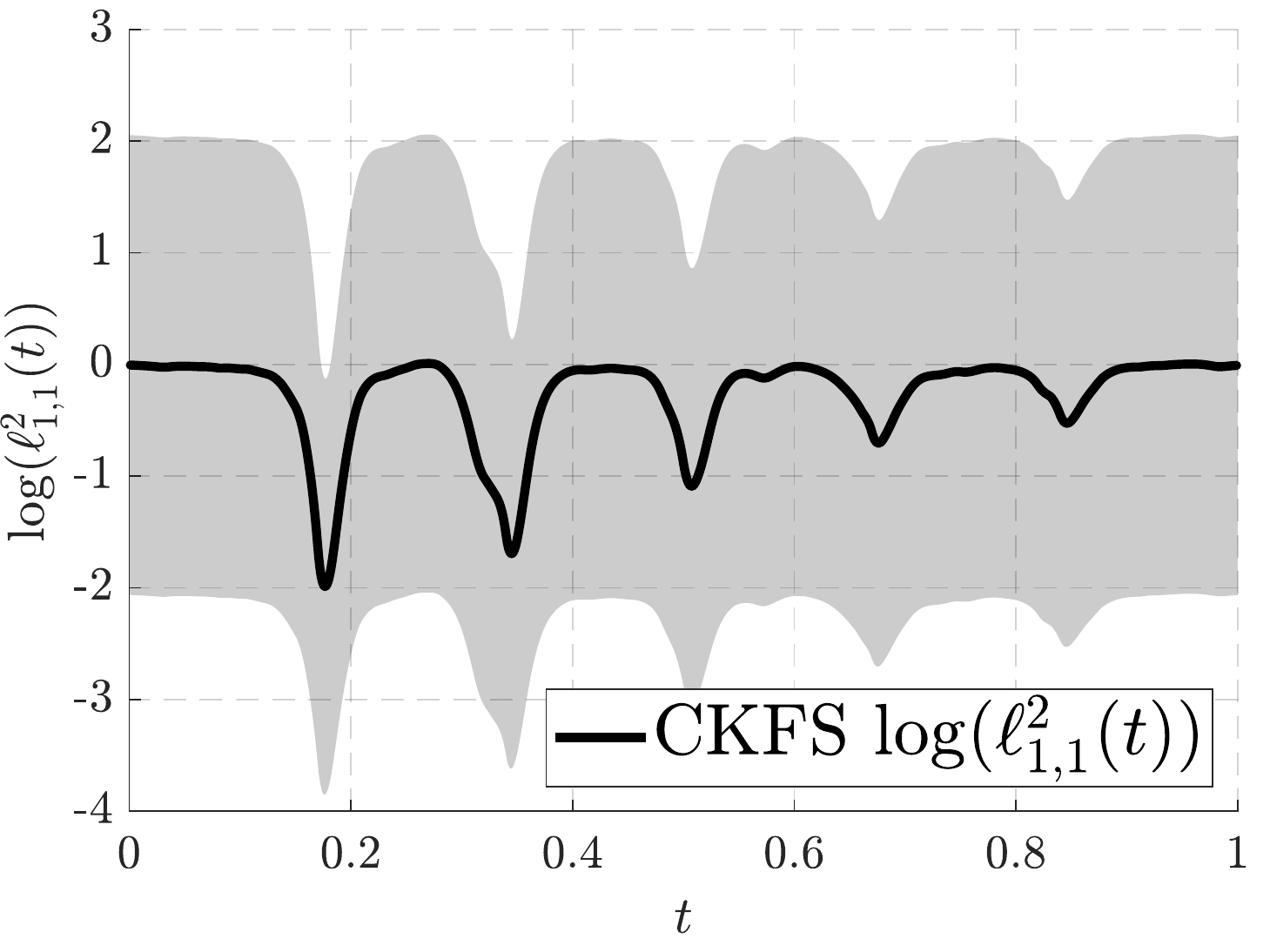}\\
		\includegraphics[width=.49\linewidth]{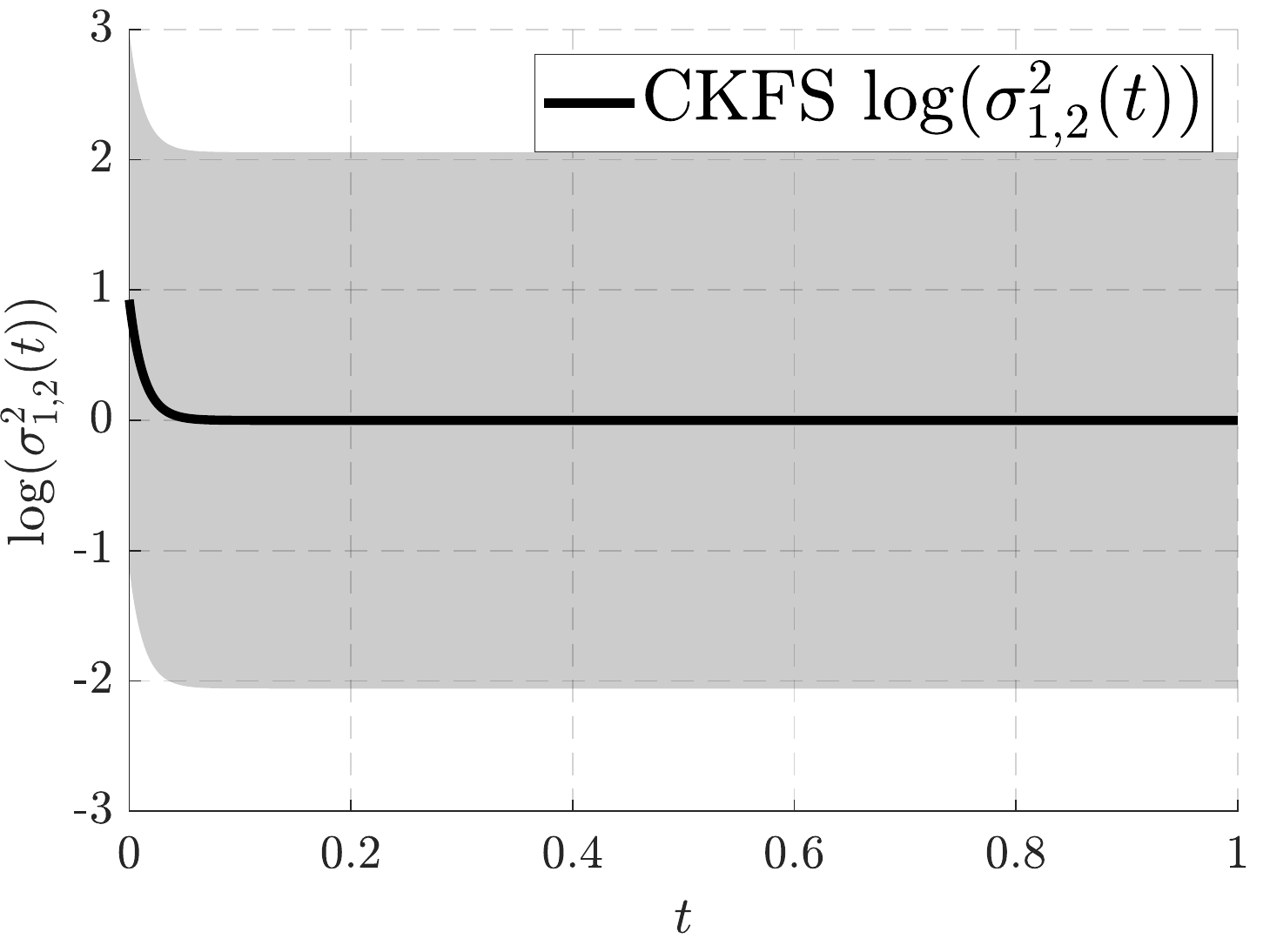}
		\includegraphics[width=.49\linewidth]{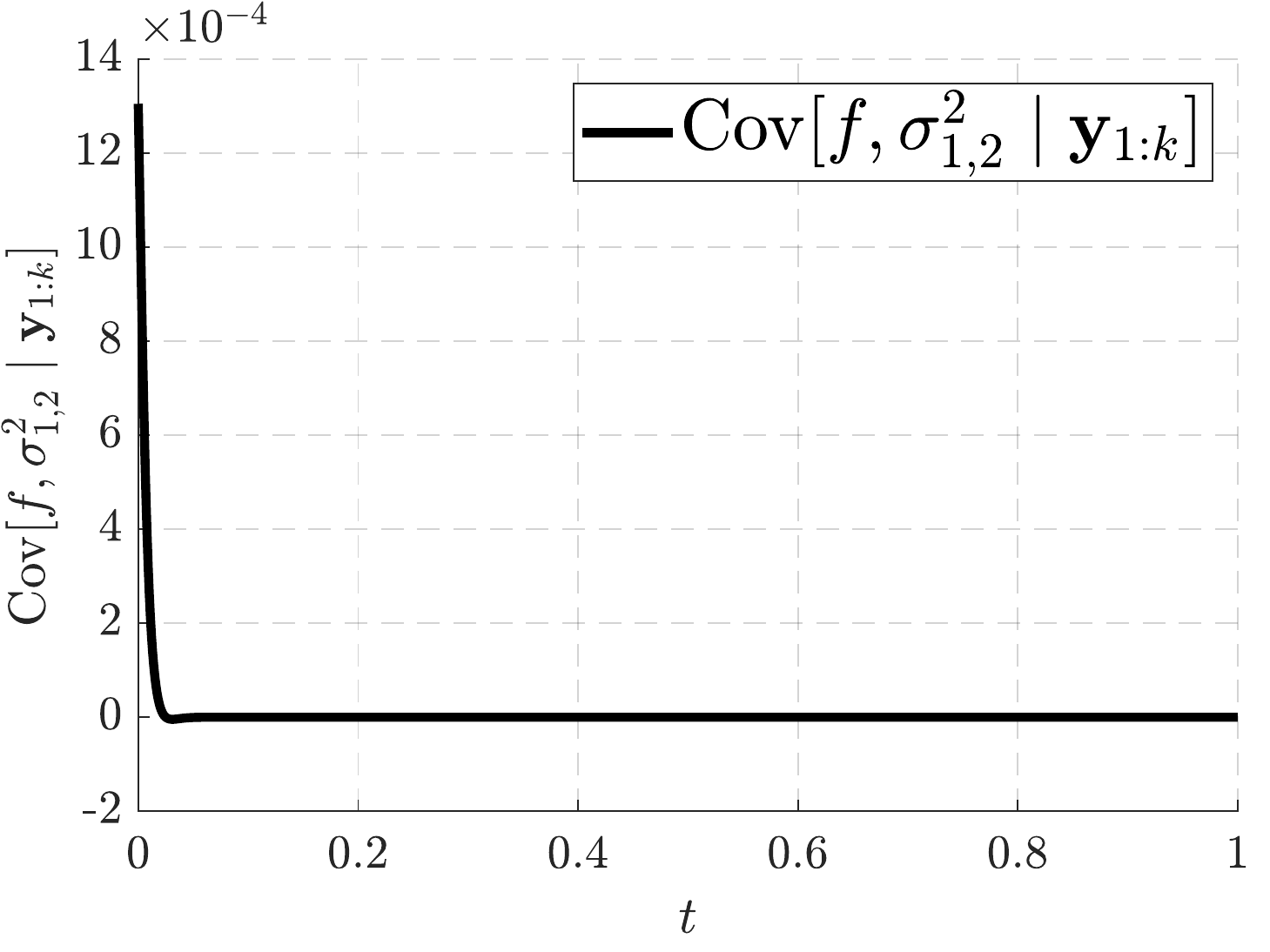}
		\caption{CKFS regression on model~\eqref{equ:exp-y} using DGP-2. }
		\label{fig:exp-GFS-vanishing}
	\end{figure}
	
	\begin{figure*}[t!]
		\centering
		\includegraphics[width=.32\linewidth]{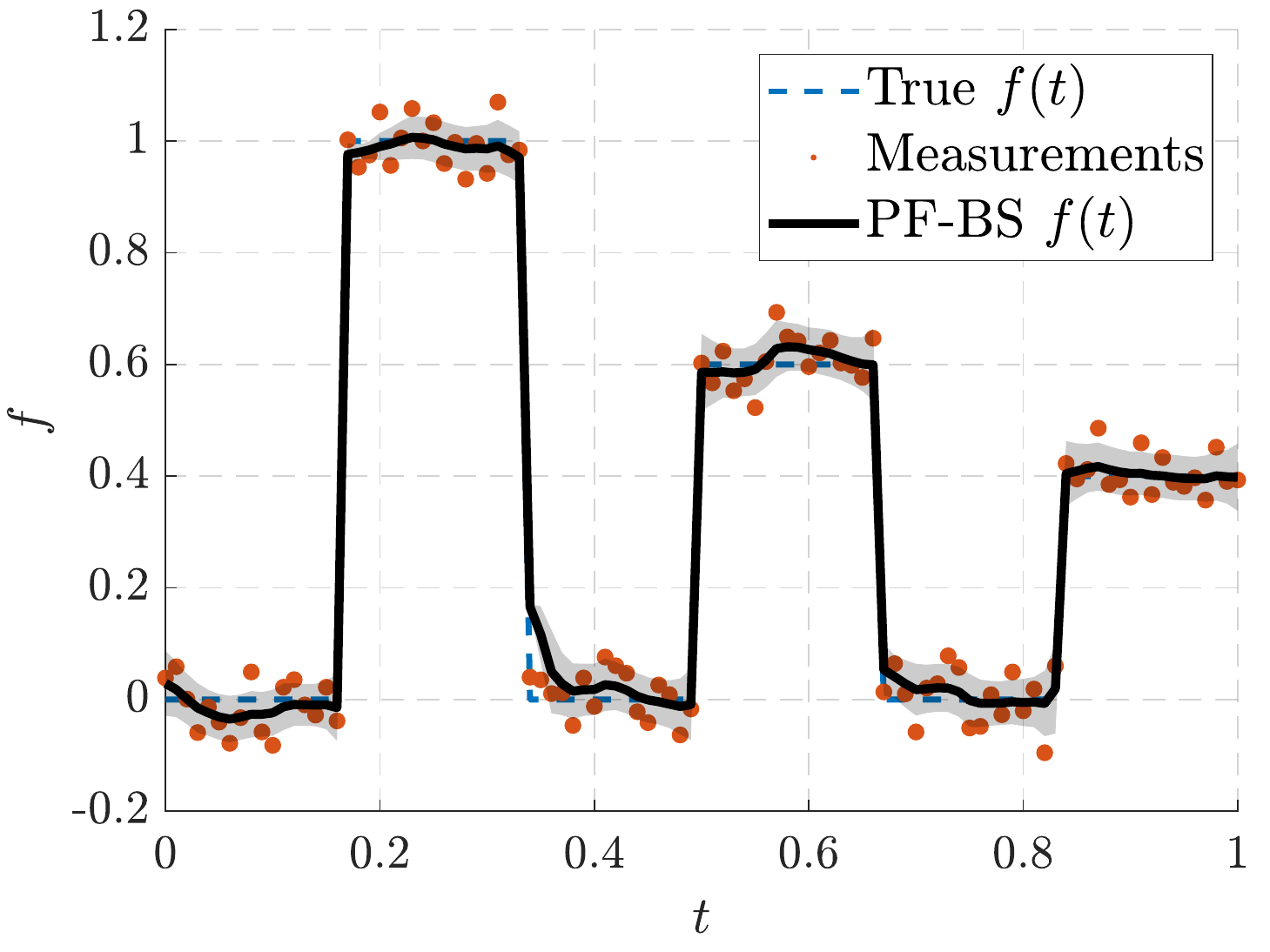}
		\includegraphics[width=.32\linewidth]{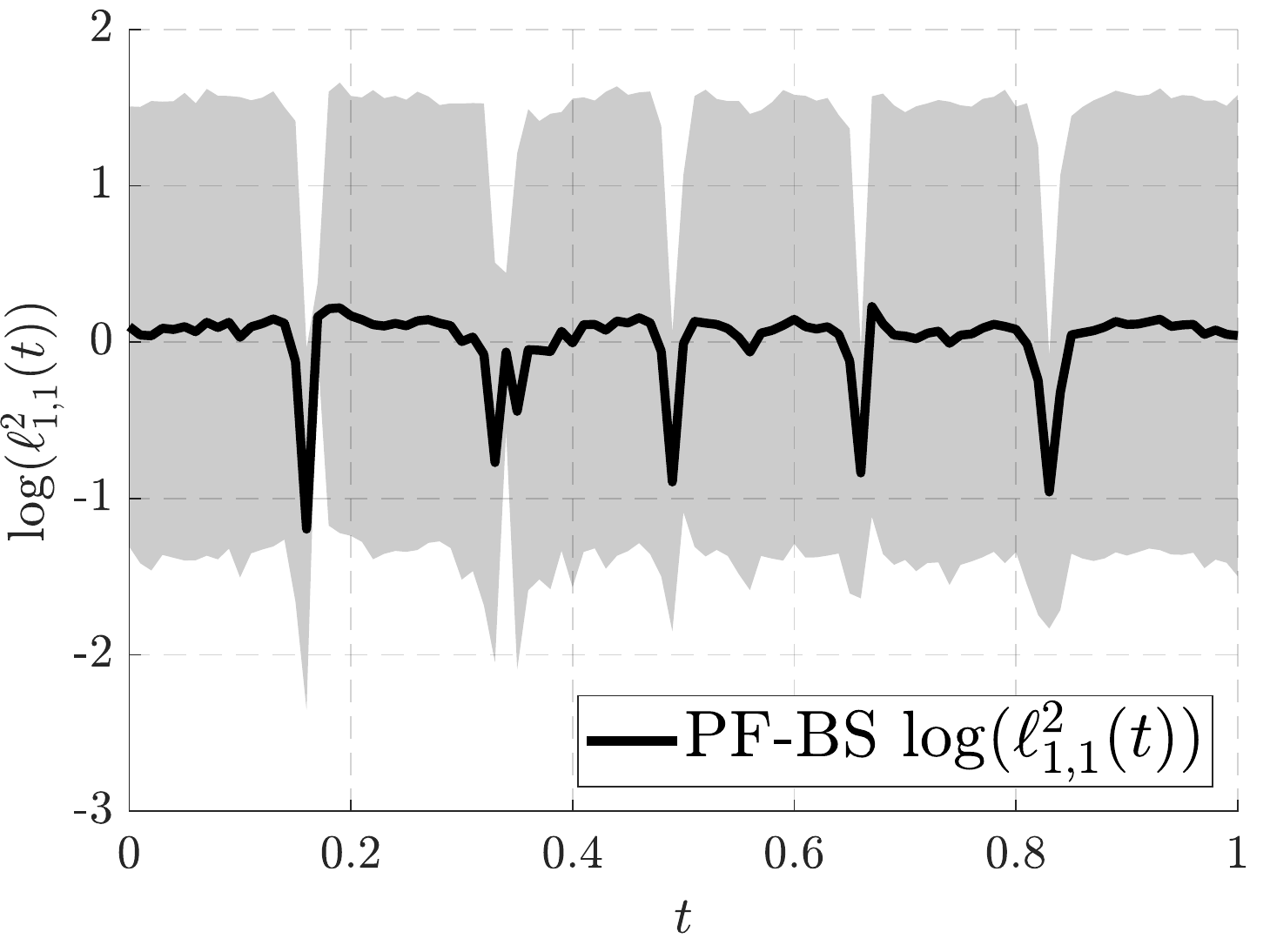}
		\includegraphics[width=.32\linewidth]{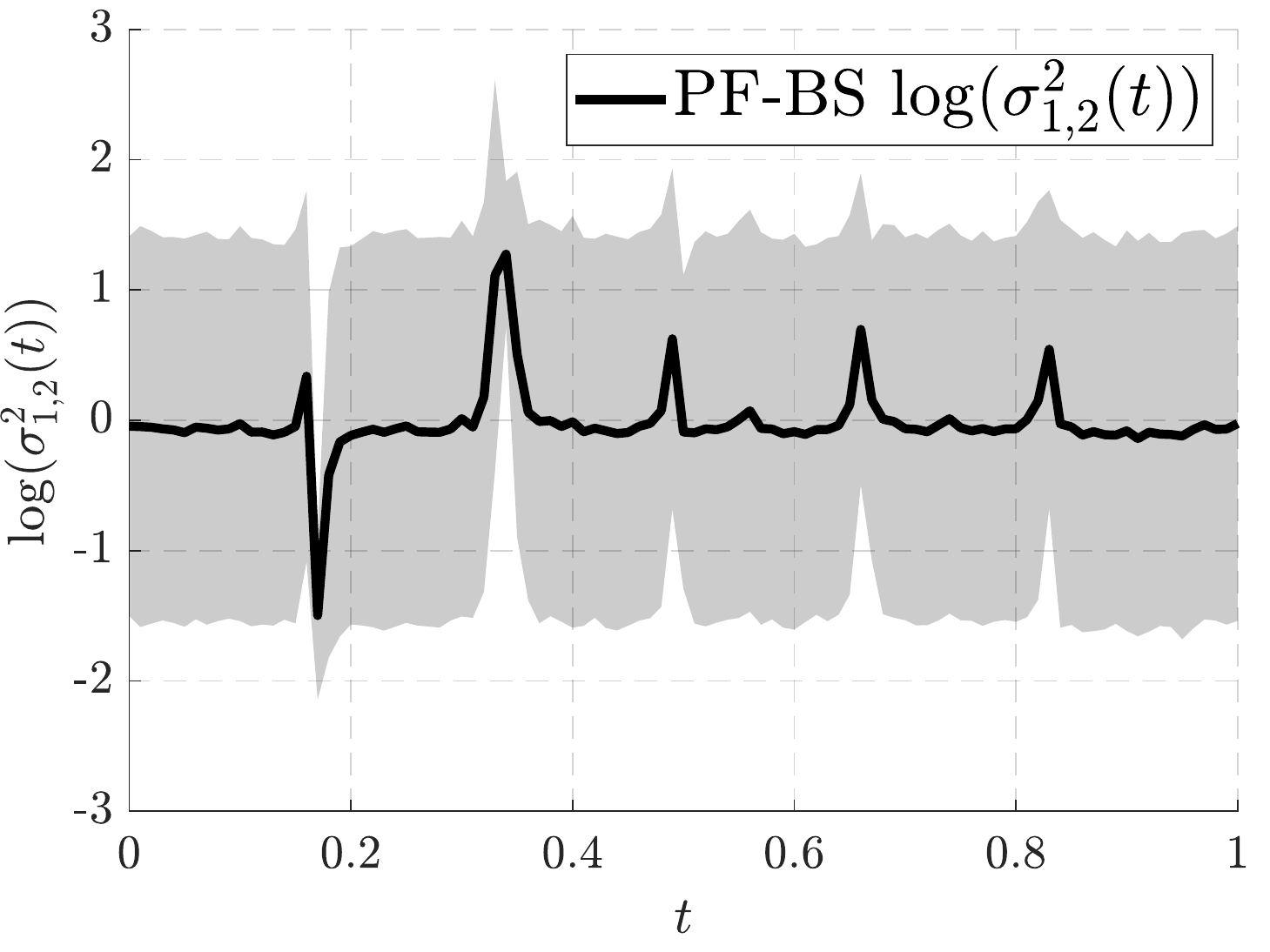}\\
		\includegraphics[width=.32\linewidth]{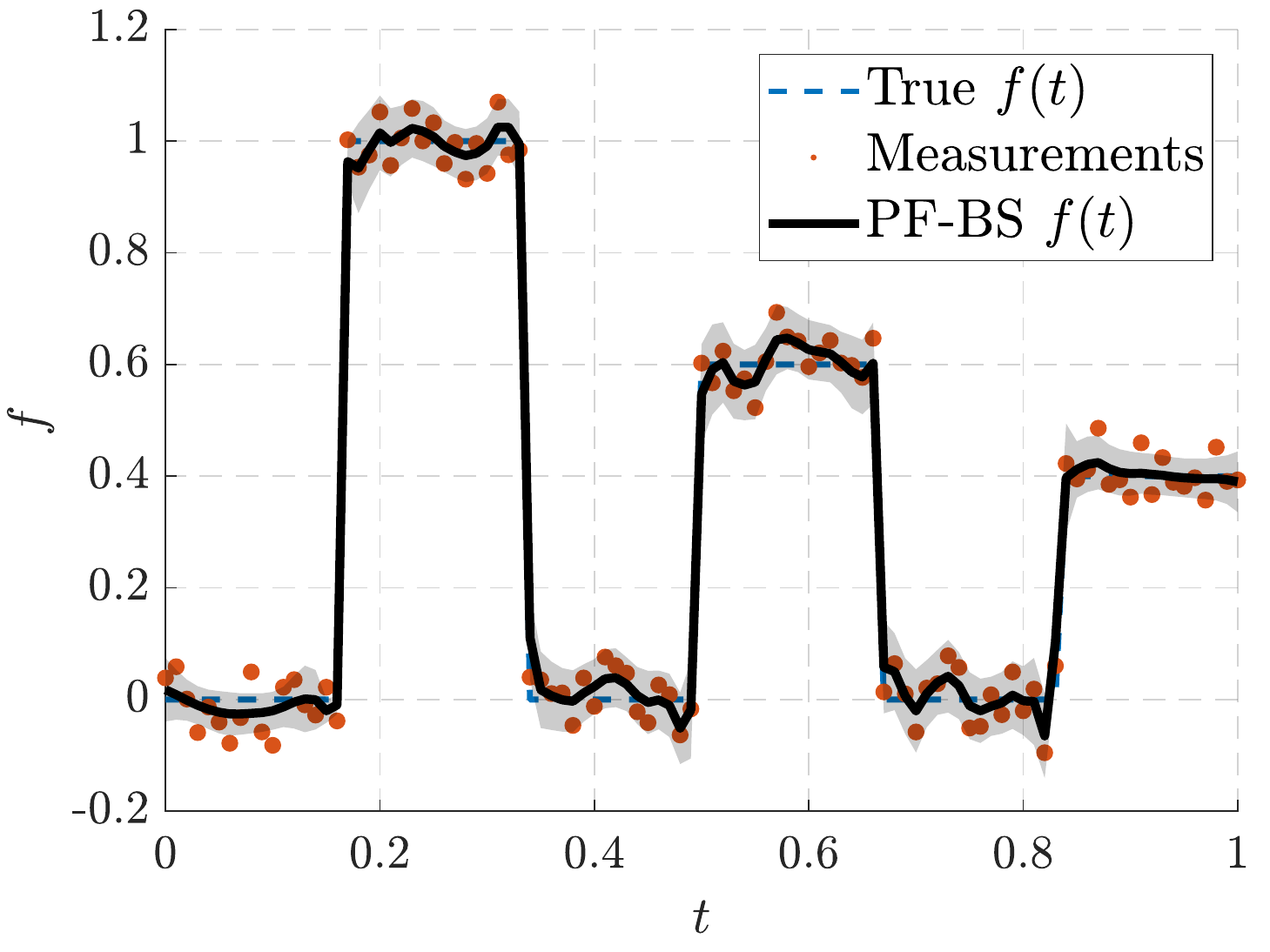}
		\includegraphics[width=.32\linewidth]{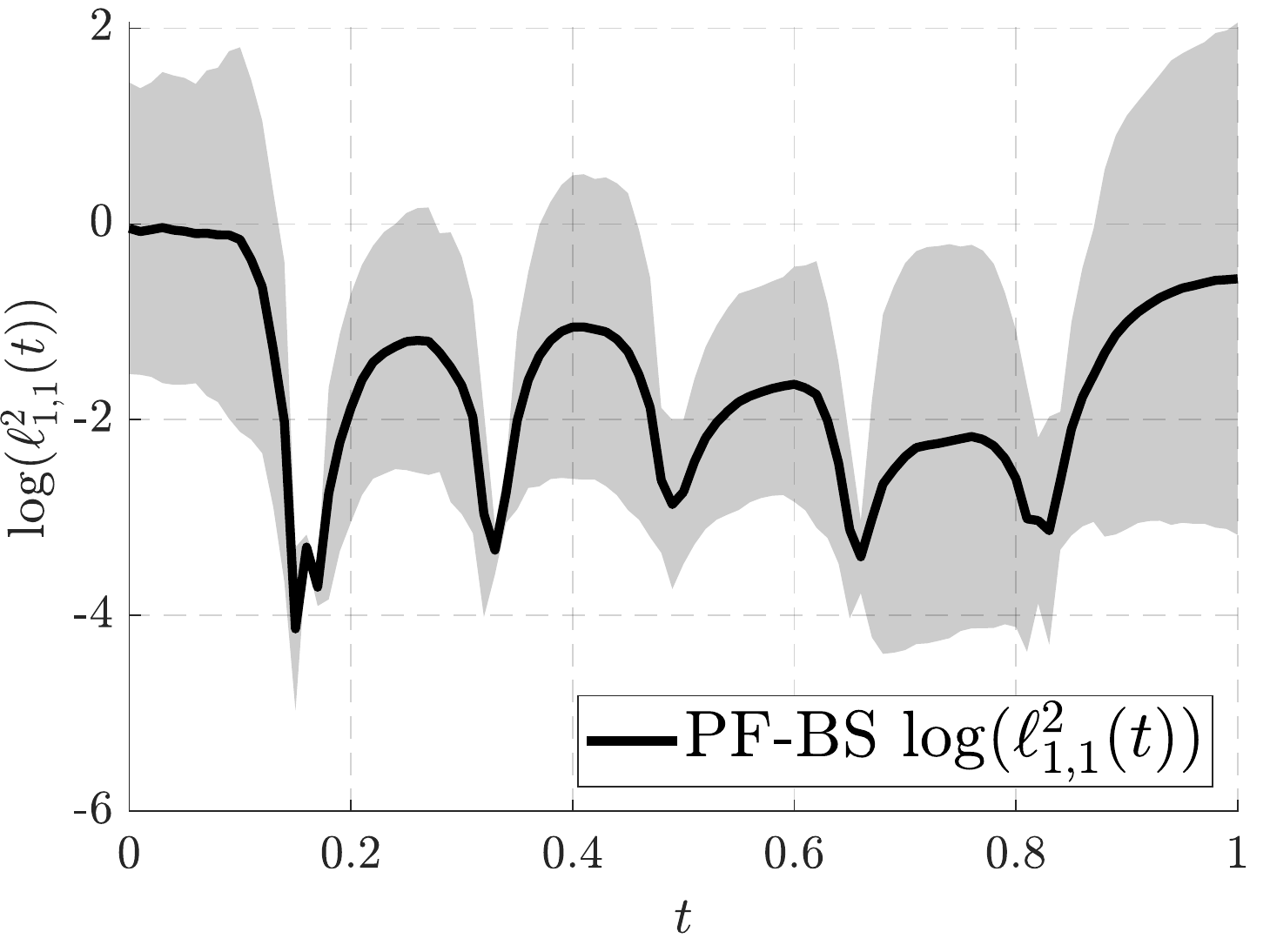}
		\includegraphics[width=.32\linewidth]{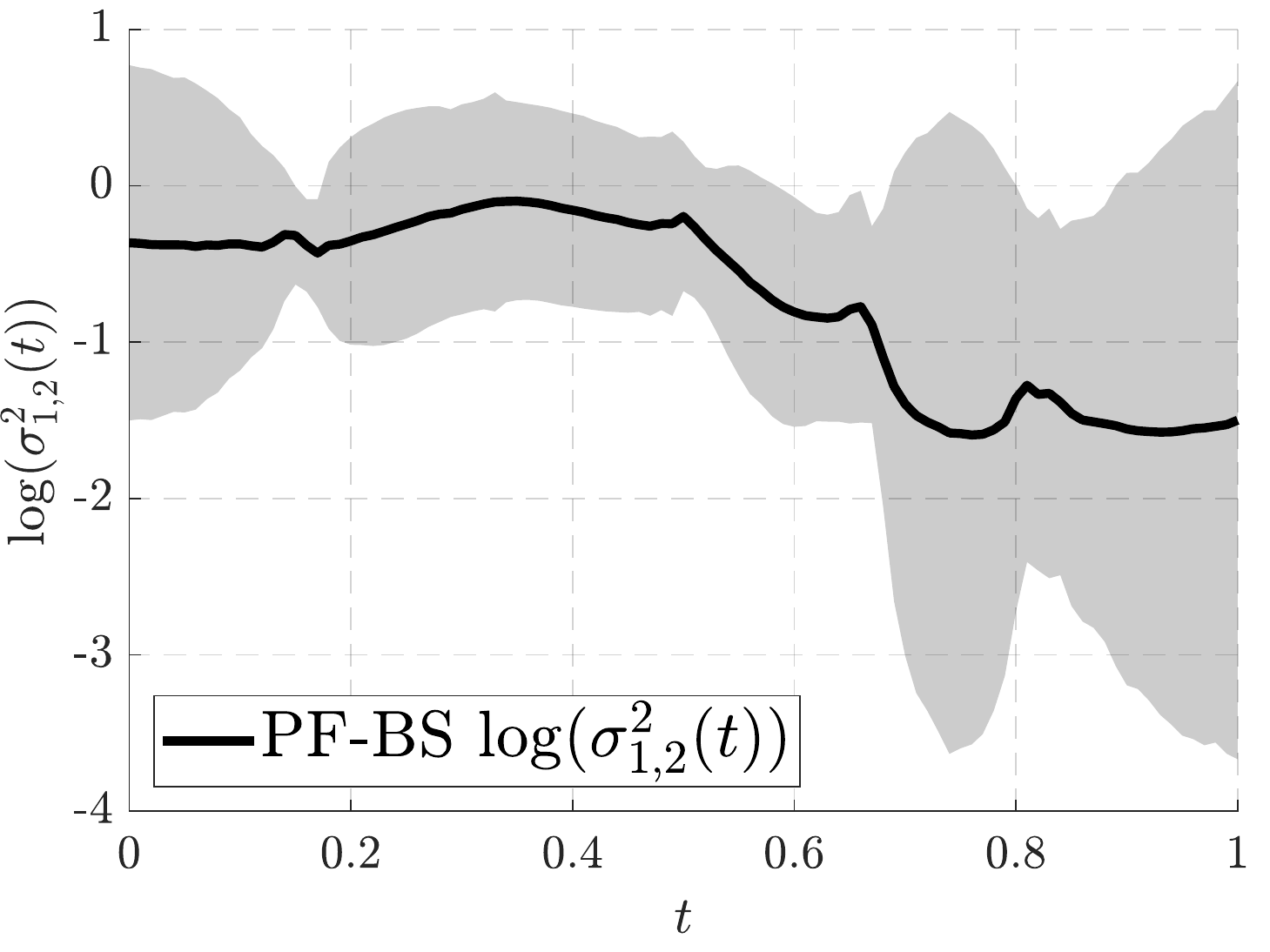}\\
		\includegraphics[width=.24\linewidth]{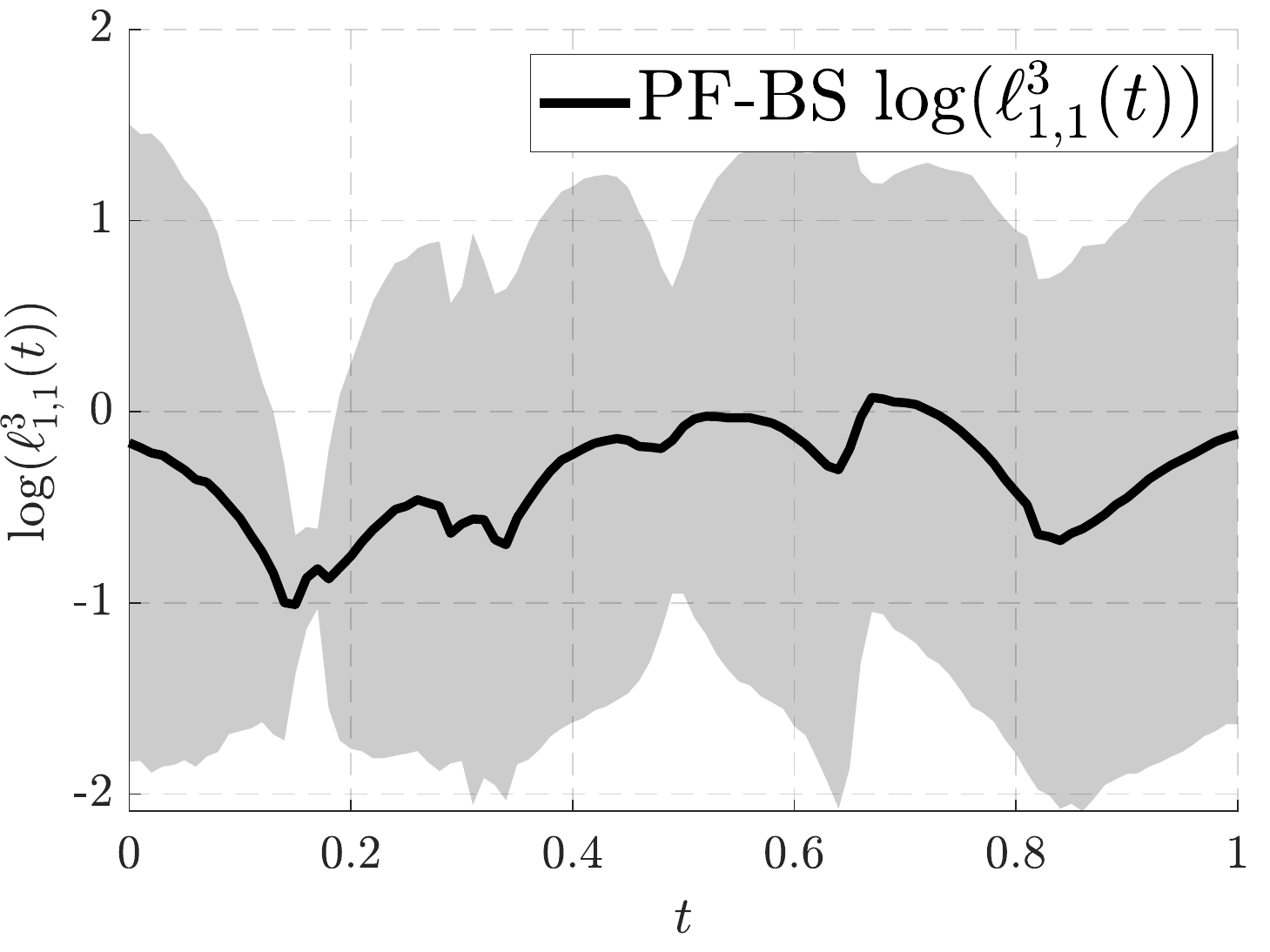}
		\includegraphics[width=.24\linewidth]{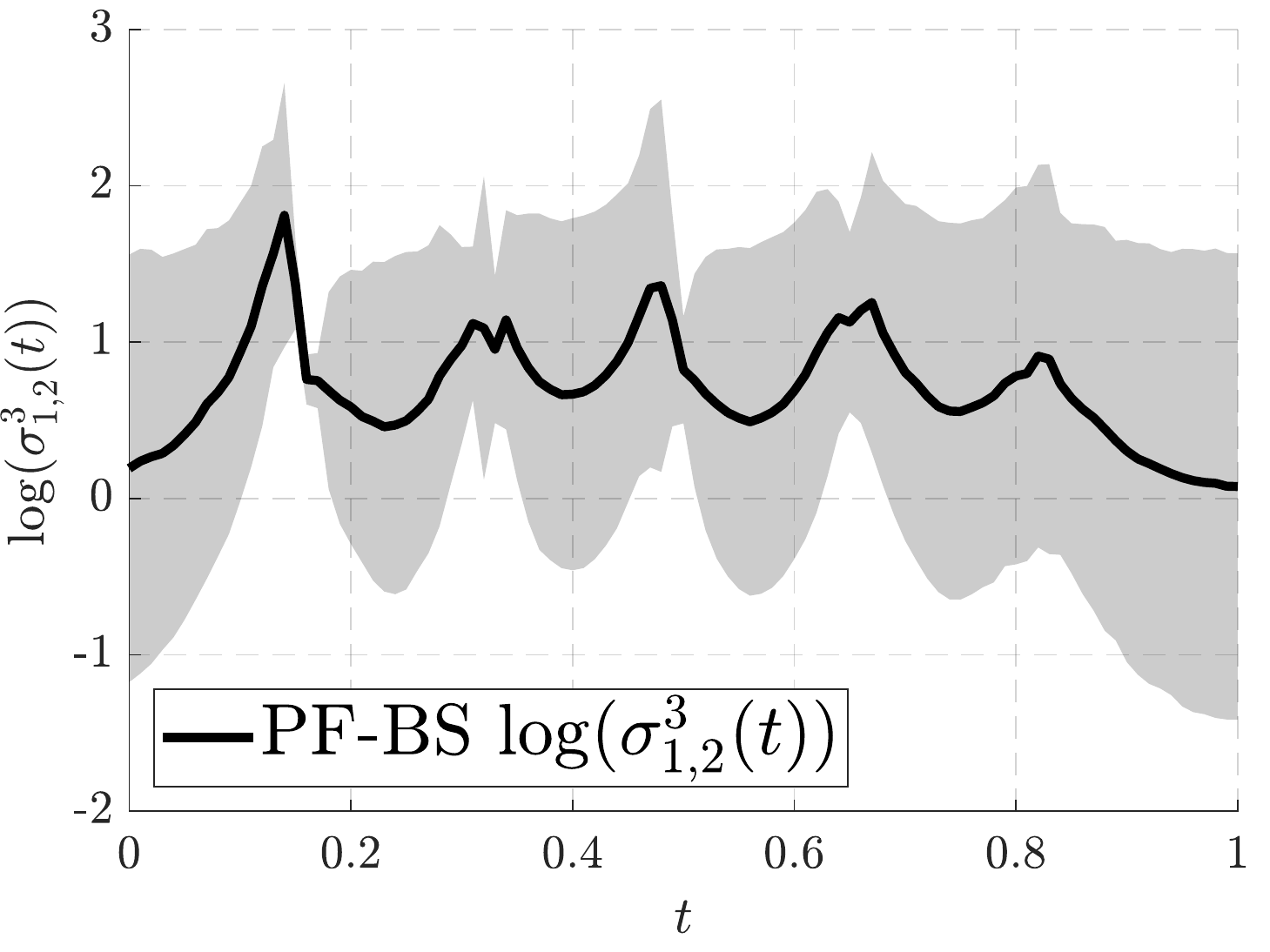}
		\includegraphics[width=.24\linewidth]{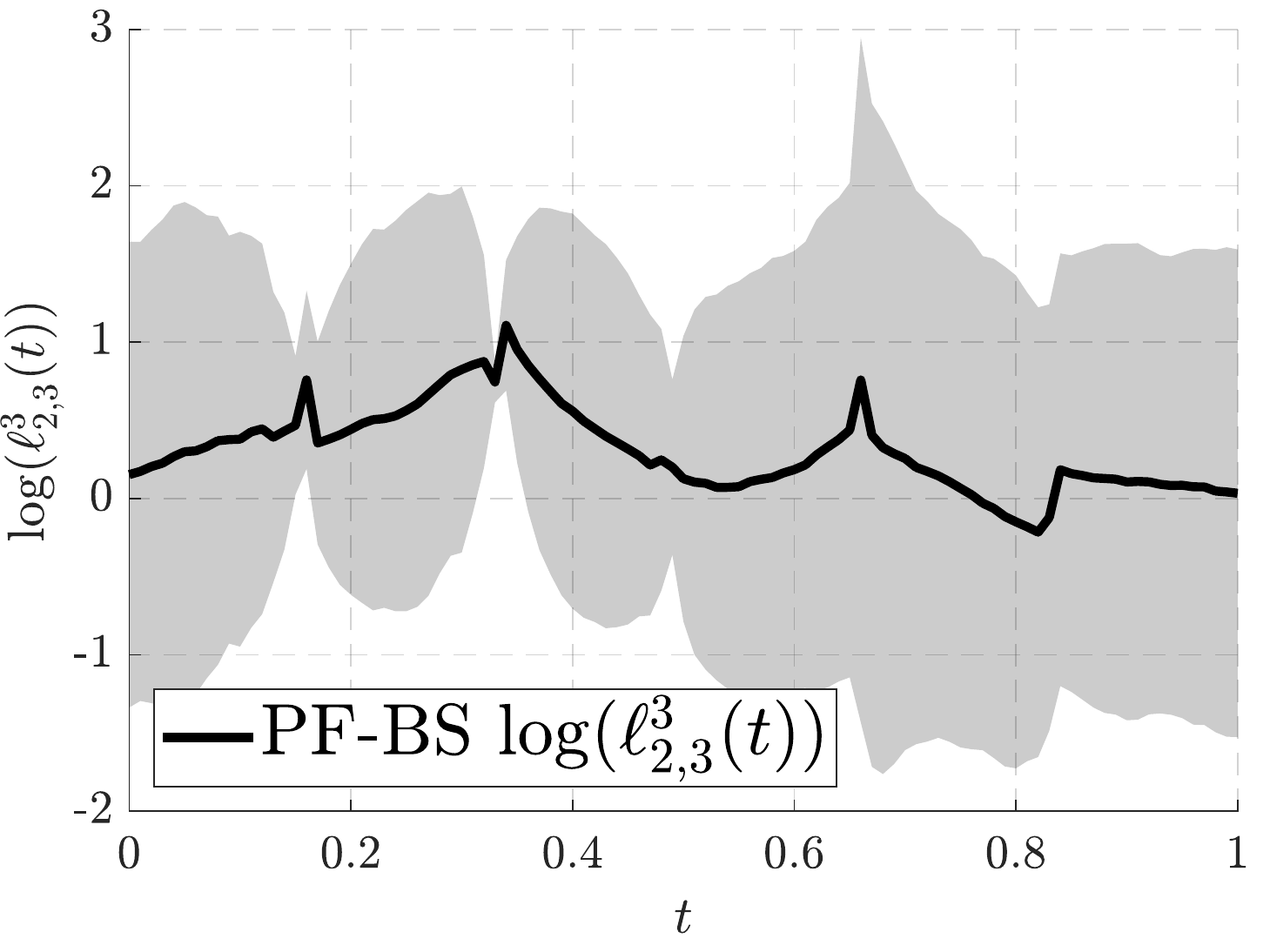}
		\includegraphics[width=.24\linewidth]{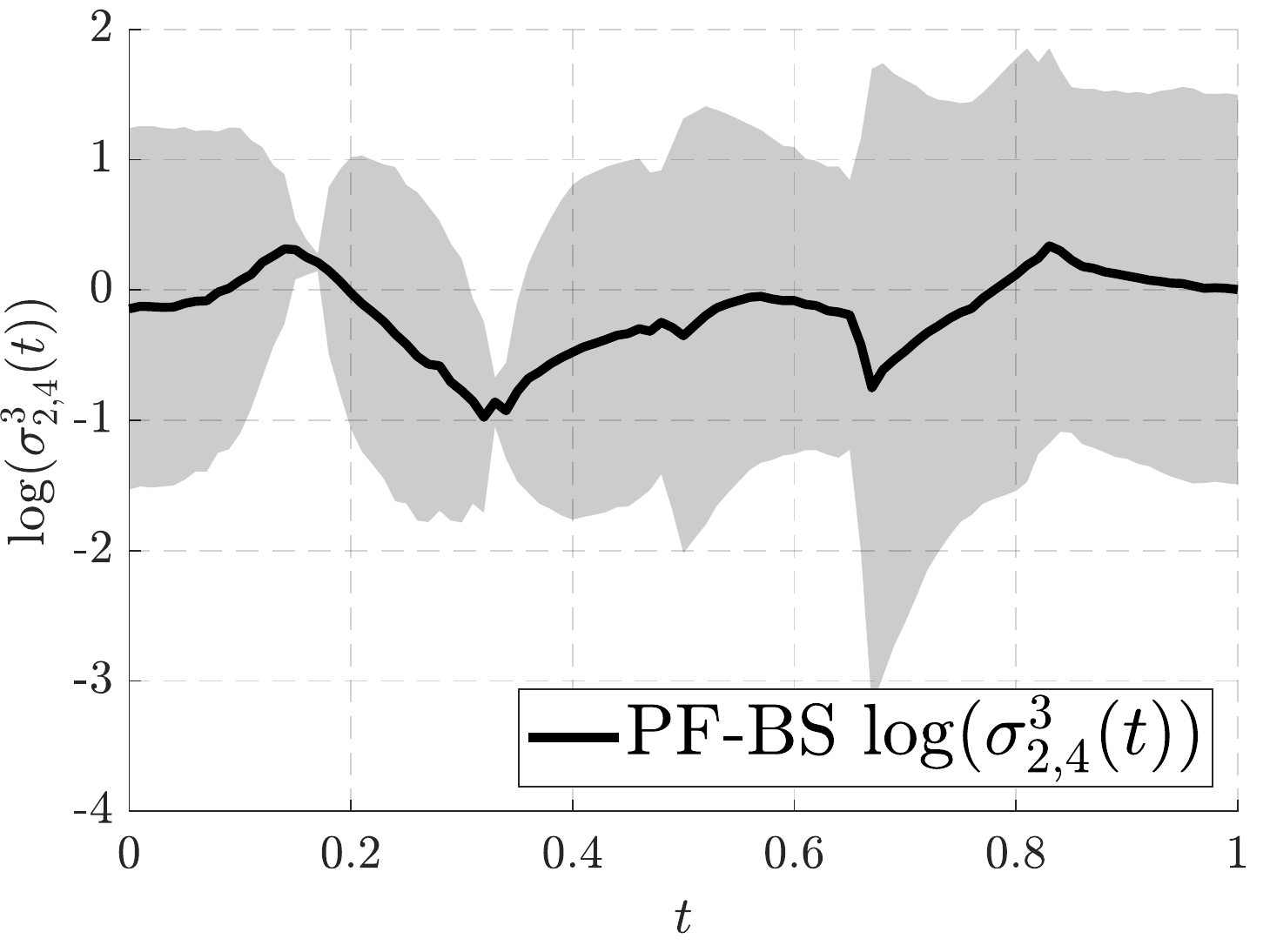}
		\caption{PF-BS regression results on model DGP-2 (first row) and DGP-3 (second and third rows). The shaded area stands for 95\% confidence interval. }
		\label{fig:exp-PFBS}
	\end{figure*}
	
	Figure~\ref{fig:exp-GP-NSGP} shows the results of GP and NS-GP regression. Both of GP and NS-GP experience overfitting problem on this rectangle signal, while the estimated posterior variance of NS-GP is significantly smaller than that of GP. The outcome of GP is expected, as the covariance function is stationary. Because there are no constraints (e.g., being time-continuous) on the parameters of NS-GP, the learnt $\ell^2_{1,1}$ and $\sigma^2_{1,2}$ overfit to the likelihood function individually at each time instant~\citep[cf. ][]{paciorek2006spatial}. From Table~\ref{tbl:rect-rmse-nlpd-time} we can see that the RMSE and NLPD of GP and NS-GP are very close.
	
	The results of B-MAP on batch-DGPs are shown in Figure~\ref{fig:exp-BMAP}. We can see a slight improvement in overfitting compared to GP and NS-GP. However, the learnt function $f(t)$ of B-MAP is not smooth enough and is jittering. For B-MAP on DGP-2, the estimated $\ell^2_{1,1}$ and $\sigma^2_{1,2}$ change abruptly on the jump points, and do not stay at flat levels, especially $\ell^2_{1,1}$. On the contrary, the estimated $\ell^3_{1,1}$ and $\sigma^3_{1,2}$ on the last layer of DGP-3 stay mostly flat while changing sharply on the jump points. From Figure~\ref{fig:exp-BMAP} and the RMSEs of Table~\ref{tbl:rect-rmse-nlpd-time} we can see that the results of B-MAP on DGP-2 and DGP-3 are almost identical with subtle differences.
	
	Compared to the batch-DGP, the SS-DGP method gives a better fit to the true function. This result is demonstrated in Figure~\ref{fig:exp-SSMAP}, where SS-MAP is used. There is no noticeable overfitting problem in the SS-MAP estimates. The learnt function $f$ is smooth and fits to the actual function to a reasonable extent. For SS-MAP on DGP-2, the estimated $\ell^2_{1,1}$ and $\sigma^2_{1,2}$ mostly stay at a constant level and change rapidly on the leap points. From the second and third rows of Figure~\ref{fig:exp-SSMAP} and Table~\ref{tbl:rect-rmse-nlpd-time}, we see that the SS-MAP achieves better result on DGP-3 compared to on DGP-2. We also find that the learnt parameters $\ell^2_{1,1}$ and $\sigma^2_{1,2}$ of DGP-3 appear to be smoother than of DGP-2. 
	
	Apart from the SS-MAP solution to the SS-DGP, we also demonstrate the Bayesian filtering and smoothing solutions in Figures~\ref{fig:exp-GFS-vanishing}, \ref{fig:exp-GFS}, and~\ref{fig:exp-PFBS}. Figure~\ref{fig:exp-GFS-vanishing} shows the results of CKFS on DGP-2. We find that the regression result on DGP-2 is acceptable though the estimate is overly smooth on the jump points. The learnt parameters $\ell^2_{1,1}$ also change significantly on the jump points and stay flat elsewhere. Moreover, we find that the estimated $\log(\sigma^2_{1,2})$ and $\cov[f,\sigma^2_{1,2}\mid\cu{y}_{1:k}]$ converge to zero in very fast speeds, especially the covariance estimate. This phenomenon resembles the vanishing covariance in Theorem~\ref{thm:cov-post}. In this case, the estimated $\log(\sigma^2_{1,2})$ converges to the prior mean of SS-DGP which is zero, due to the vanishing covariance. Therefore for this experiment and all the following experiments, we treat all the magnitude parameters of \matern (e.g., $\sigma^2_{1,2}$) as trainable hyperparameters learnt from grid searches. The results are illustrated in Figure~\ref{fig:exp-GFS}. However, we identify that there is a numerical difficulty when applying CKFS on DGP-3. With many hyperparameter settings, the CKFS fails due to numerical problems (e.g., singular matrix). The EKFS still works on DGP-3, thus we plot the results in the second row of Figure~\ref{fig:exp-GFS}. The estimated $f$ of EKFS appears to be over-smooth, especially on the jump points. Also, the estimated variances of $\ell^2_{1,1}$ and $\sigma^2_{1,2}$ are significantly large. 
	
	Figure~\ref{fig:exp-PFBS} illustrates the result of PF-BS. We find that the regression results are reasonably close to the ground truth. Also, the estimated $f$ is smooth. The estimated parameters $\ell^2_{1,1}$ and $\sigma^2_{1,2}$ for PF-BS on DGP-2 have a similar pattern as the results of SS-MAP, CKFS, and EKFS, which only change abruptly on the jump points. However, the $\ell^2_{1,1}$ of DGP-3 does not stay flat generally, and $\sigma^2_{1,2}$ does not change significantly on the jump points. In Table~\ref{tbl:rect-rmse-nlpd-time}, the RMSEs of PF-BS on DGP-3 are better than on DGP-2. Also, PF-BS is slightly better than PF. 
	
	\begin{figure}[h!]
		\centering
		\includegraphics[width=.9\linewidth]{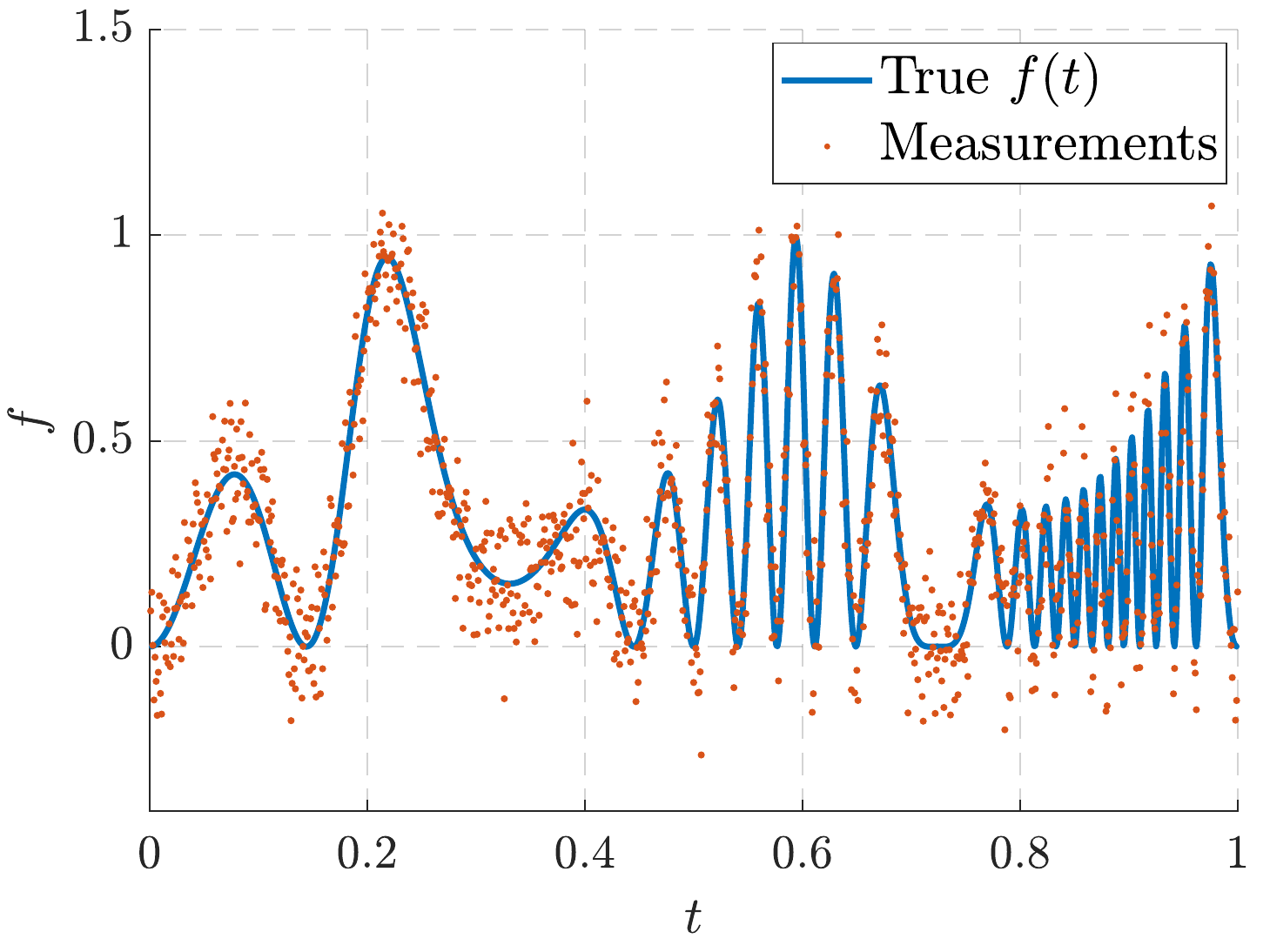}
		\caption{Demonstration of the composite sinusoidal signal~\eqref{equ:exp-sine}.}
		\label{fig:exp-sine-y}
	\end{figure}
	
	\begin{figure*}[t!]
		\centering
		\includegraphics[width=.32\linewidth]{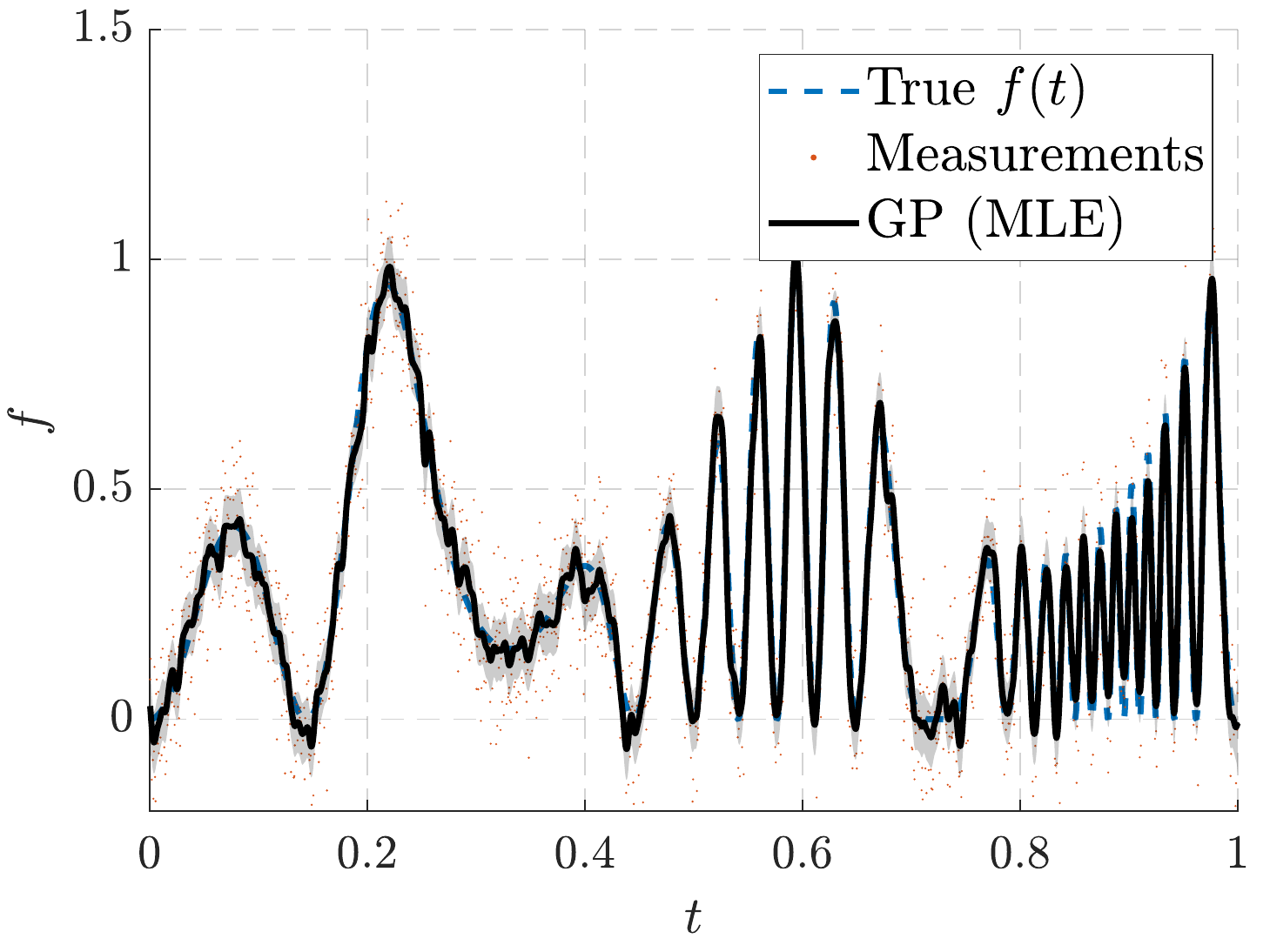}
		\includegraphics[width=.32\linewidth]{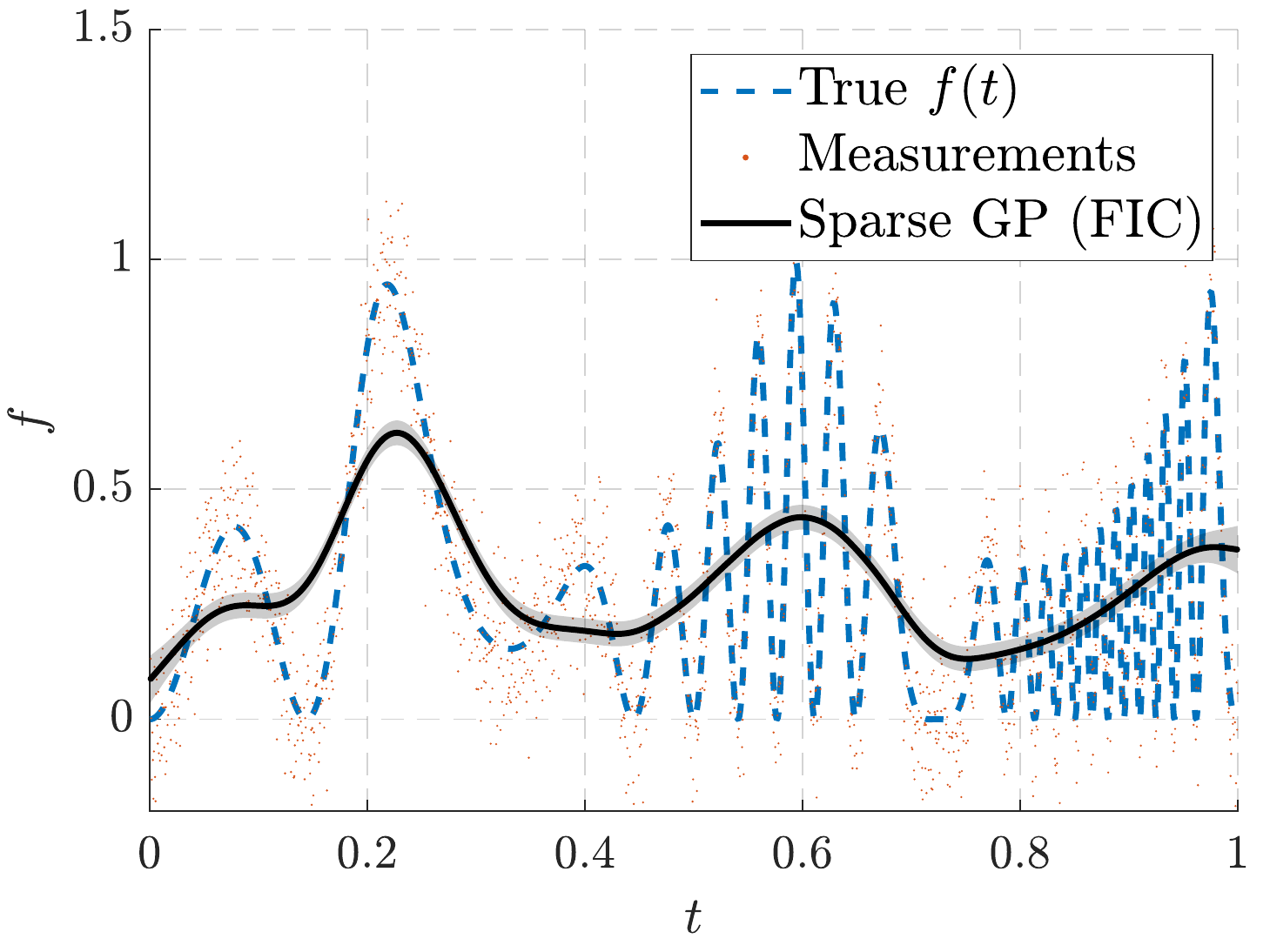}
		\includegraphics[width=.32\linewidth]{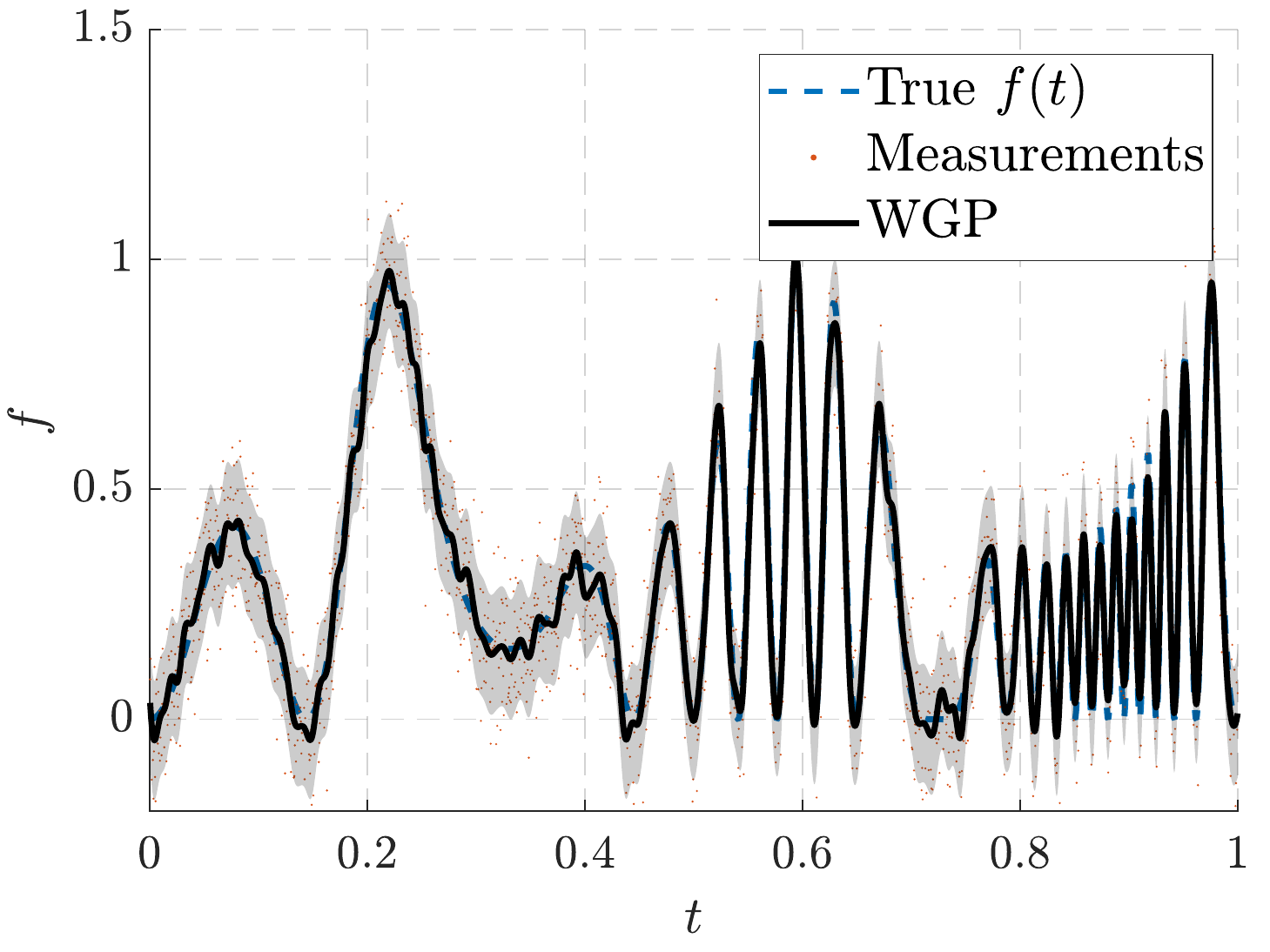}\\
		\includegraphics[width=.32\linewidth]{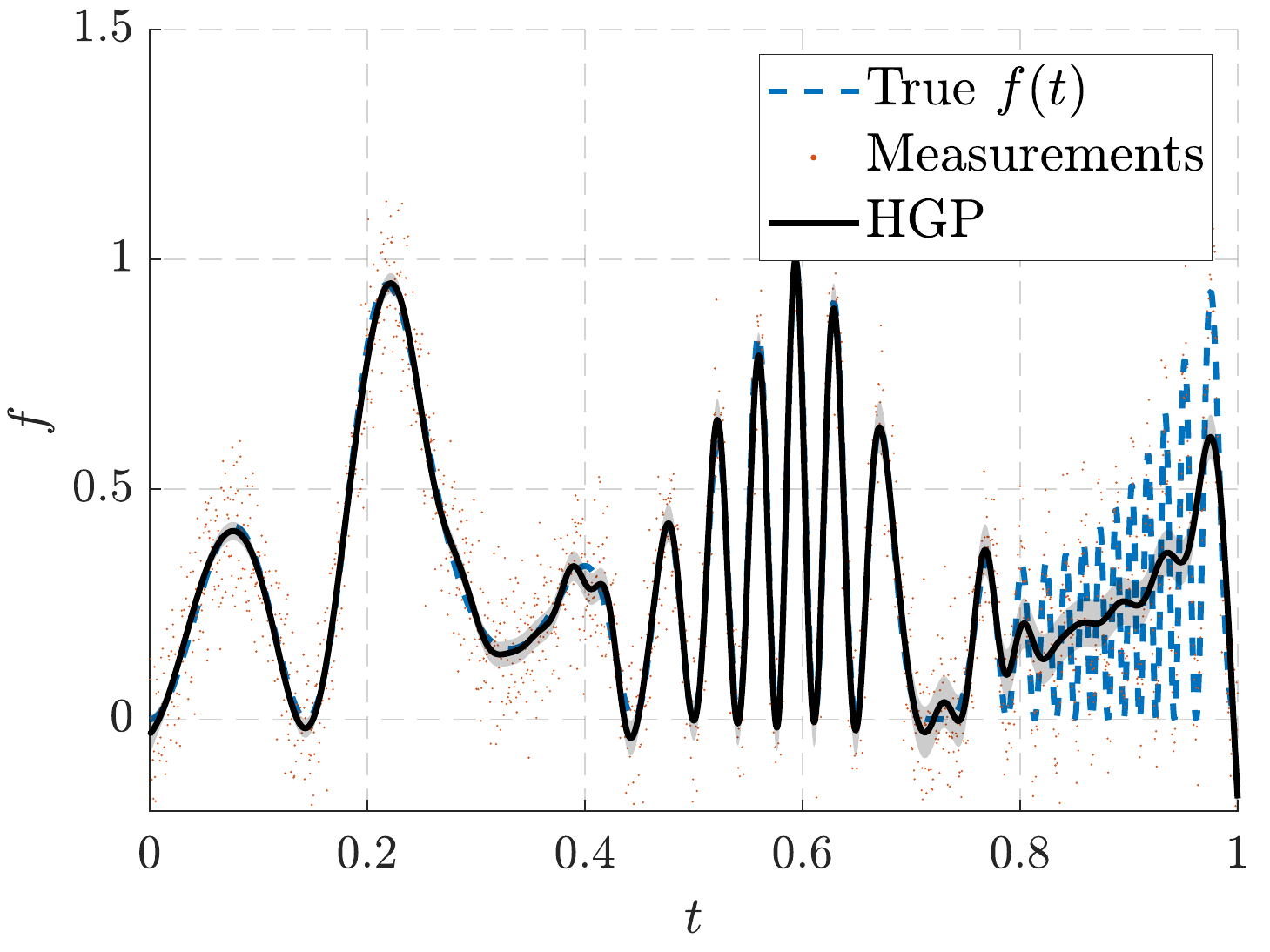}
		\includegraphics[width=.32\linewidth]{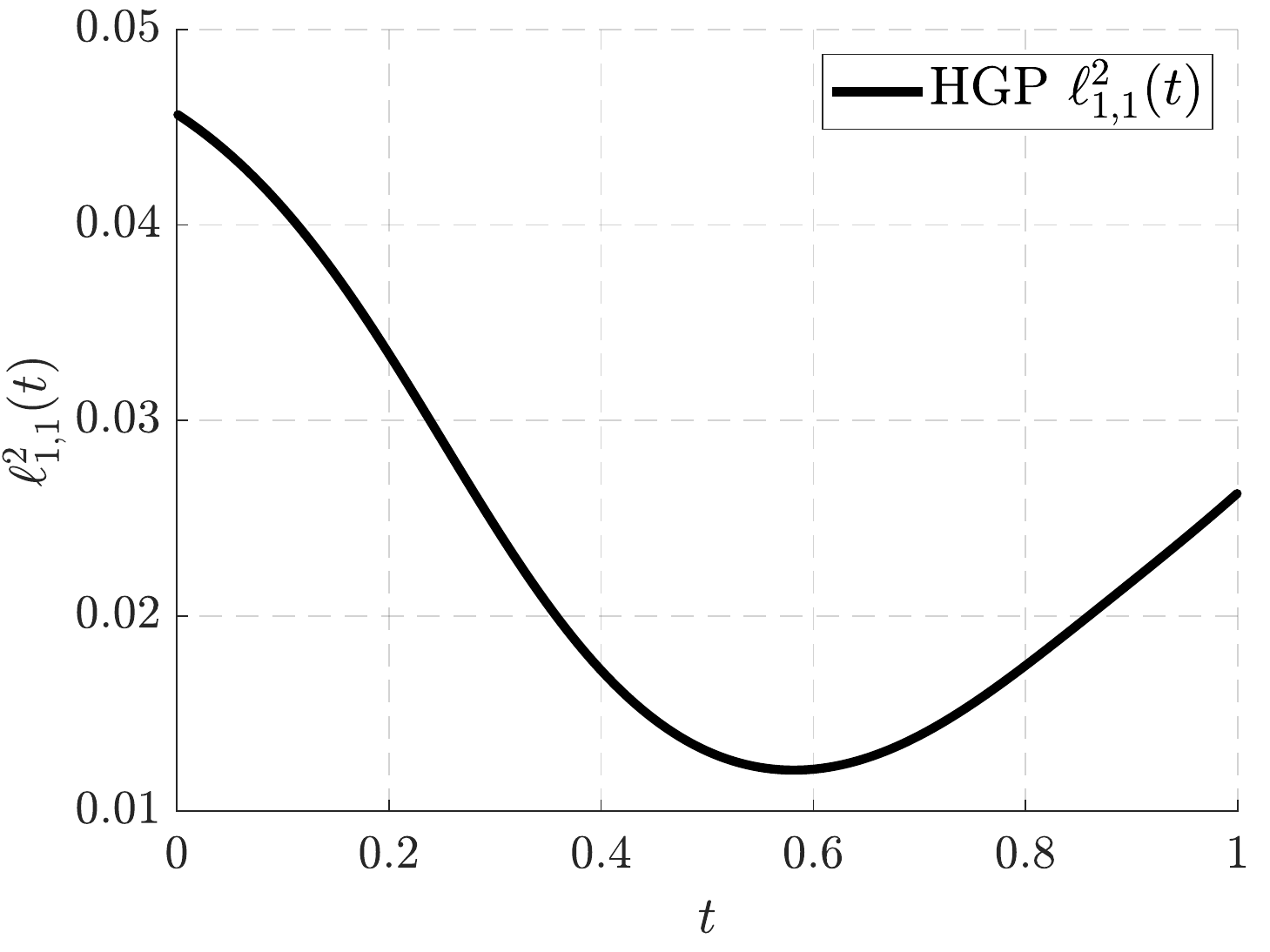}
		\includegraphics[width=.32\linewidth]{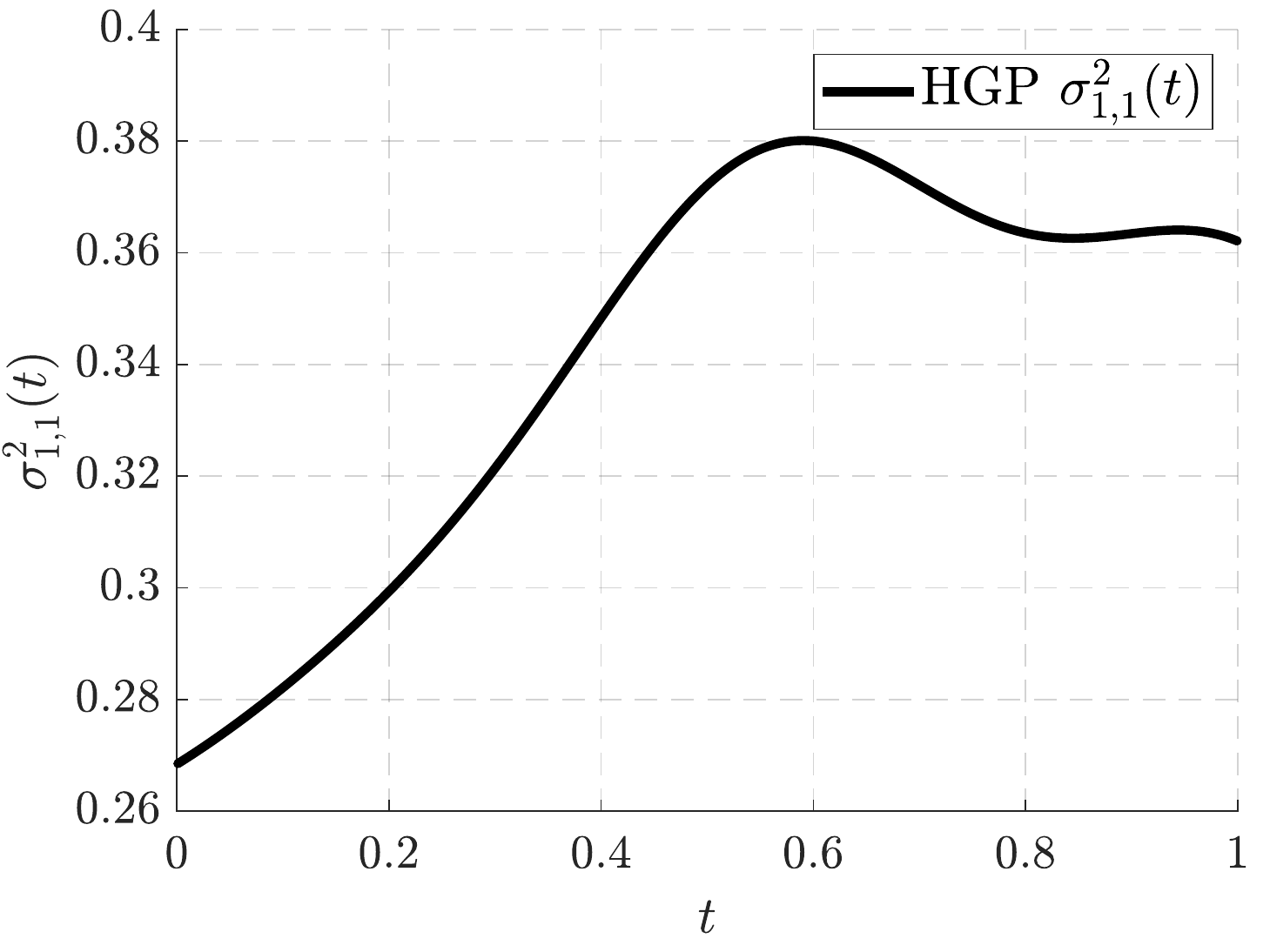}
		\caption{Regression results of GP, Sparse GP, WGP, and HGP on model~\eqref{equ:exp-sine}. The shaded area stands for 95\% confidence interval. }
		\label{fig:sin-exp-GP}
	\end{figure*} 
	
	We now summarize the numerical results in terms of the RMSEs, NLPD, and computational time from Table~\ref{tbl:rect-rmse-nlpd-time}. Table~\ref{tbl:rect-rmse-nlpd-time} demonstrates that the DGP methods using MAP, PF, and PF-BS outperform GP and NS-GP on this non-stationary signal regression. Moreover, the RMSEs and NLPDs are improved by using DGP-3 over DGP-2, except for Gaussian filters and smoothers. Among all regression methods, the SS-MAP is the best in terms of RMSE, followed by B-MAP and PF-BS. In terms of NLPD, PF-BS admits the lowest value. However, the NLPD and RMSE results of PF and PF-BS have very large deviations which are improved by using DGP-3 over DGP-2. We found that the Gaussian filters and smoothers (CKFS and EKFS) are the fastest, followed by GP and NS-GP. We also notice that for all methods, DGP-3 is more time-consuming than DGP-2. Even though we implemented PF-BS in CPU-based parallelization the time consumption is still significantly higher than of the others because of the large number of particles and backward simulations. 
	
	\subsection{Regression on Composite Sinusoidal Signal}
	\label{sec:exp-sine}
	In this section, we conduct another experiment on a non-stationary composite sinusoidal signal formulated by
	\begin{equation}
		\begin{split}
			f(t) &= \frac{\sin^2\left( 7\,\pi\,\cos\left( 2\,\pi\,t^2\right)\,t \right)}{\cos\left(5\,\pi\,t \right)+2 }, \quad t\in [0, 1], \\
			y(t) &= f(t) + r(t), 
			\label{equ:exp-sine}
		\end{split}
	\end{equation}
	where $f$ is the true function, and $r(t)\sim \mathcal{N}(0, 0.01)$. This type of signal has been used by, for example,~\citet{Tim2020DGP, Vannucci1999} and~\citet{Karla2020}. A demonstration is plotted in Figure~\ref{fig:exp-sine-y}. In contrast to the discontinuous rectangle wave in Equation~\eqref{equ:exp-y}, this composite sinusoidal is smooth. Thus it is appropriate to postulate a smooth \matern prior. This non-stationary signal is challenging in the sense that the frequencies and magnitudes are changing rapidly over time. 
	
	\begin{table}[h!]
		\centering
		\begin{tabular}{@{}llll@{}}
			\toprule
			Methods& \begin{tabular}[c]{@{}l@{}}RMSE\\ ($ 0^{-2}$)\end{tabular} & \begin{tabular}[c]{@{}l@{}}NLPD\\ ($ 0^{3}$)\end{tabular} & Time (s) \\ \midrule
			GP (MLE)& $3.08\pm 0.1$ & $-1.69\pm0.03$ & $1.4\pm0.5$ \\
			Sparse GP (FIC)& $17.52\pm 1.9$ & $1.24\pm0.71$ & $0.3\pm0.1$ \\
			WGP & $3.21\pm2.6$ & N/A & $2.6\pm1.0$ \\
			HGP & $9.35\pm1.7$ & $-0.93\pm0.36$ & $765.5\pm109$ \\
			CKFS (DGP-2) & $\mathbf{2.52}\pm 0.2$ & $\mathbf{-1.71}\pm0.03$ &$0.3\pm0.1$ \\
			CKFS (DGP-3) & $2.54\pm 0.2$ & $-1.70\pm 0.03$ &$0.9\pm 0.2$ \\
			EKFS (DGP-2) & $2.61\pm0.2$ & $-1.70\pm 0.03$ & $\mathbf{0.1}\pm 0.03$ \\
			EKFS (DGP-3) & $2.73\pm 0.2$ & $-1.70\pm0.03$ &$0.2\pm0.02$ \\ \bottomrule
		\end{tabular}
		\caption{Averaged RMSEs, NLPD, and computational time (in seconds) on model~\eqref{equ:exp-sine} over different regression models and solvers. }
		\label{tbl:rect-rmse-nlpd-time-sin}
	\end{table}
	
	The settings of this experiment are the same with the rectangle wave regression in Section~\ref{sec:toy-data}, except that we generate the signal with $2,000$ samples. With this number of measurements, the NS-GP and MAP-based solvers fail because they do not converge in a reasonable amount of time. Also, we select three other GP models from the literature for comparison, that are, the fully independent conditional \citep[FIC,][]{Joaquin2005FIC} sparse GP with 500 pseudo-inputs, the warped GP~\citep[WGP,][]{WGP_Snelson2003}, and a non-stationary GP (HGP) by~\citet{heinonen2016}. 
	
	\begin{figure*}[t!]
		\centering
		\includegraphics[width=.4\linewidth]{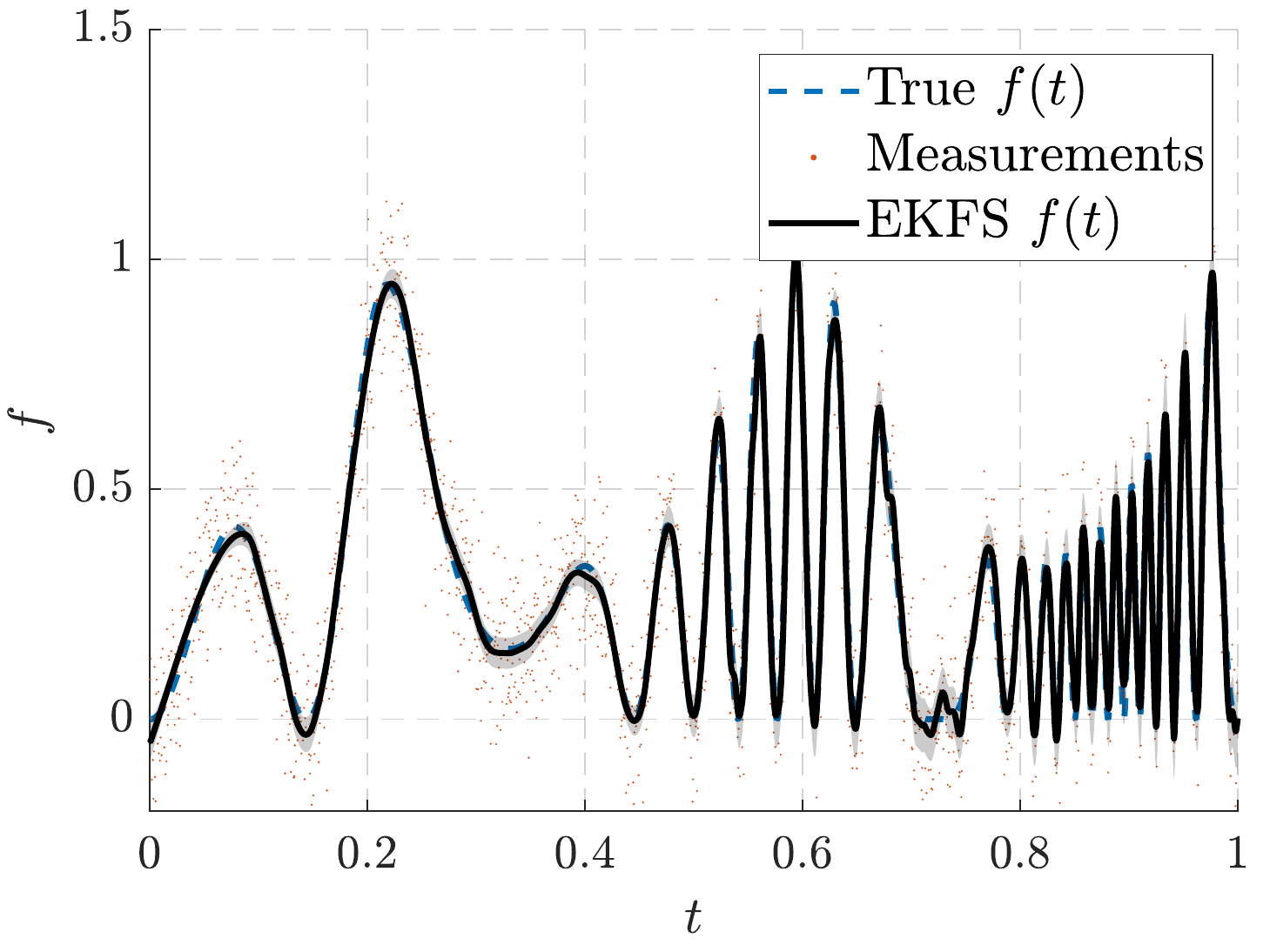}
		\includegraphics[width=.4\linewidth]{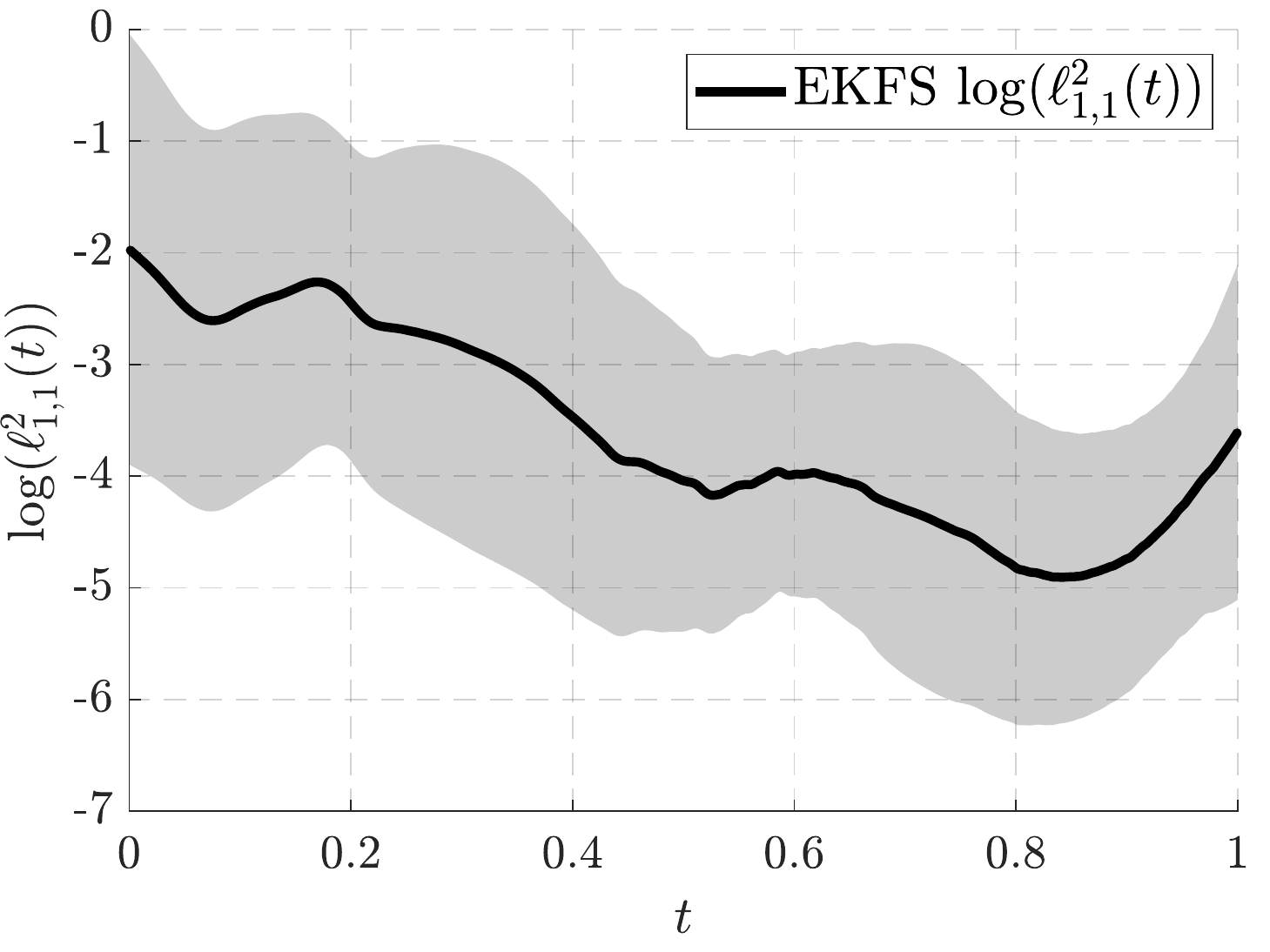}\\
		\includegraphics[width=.32\linewidth]{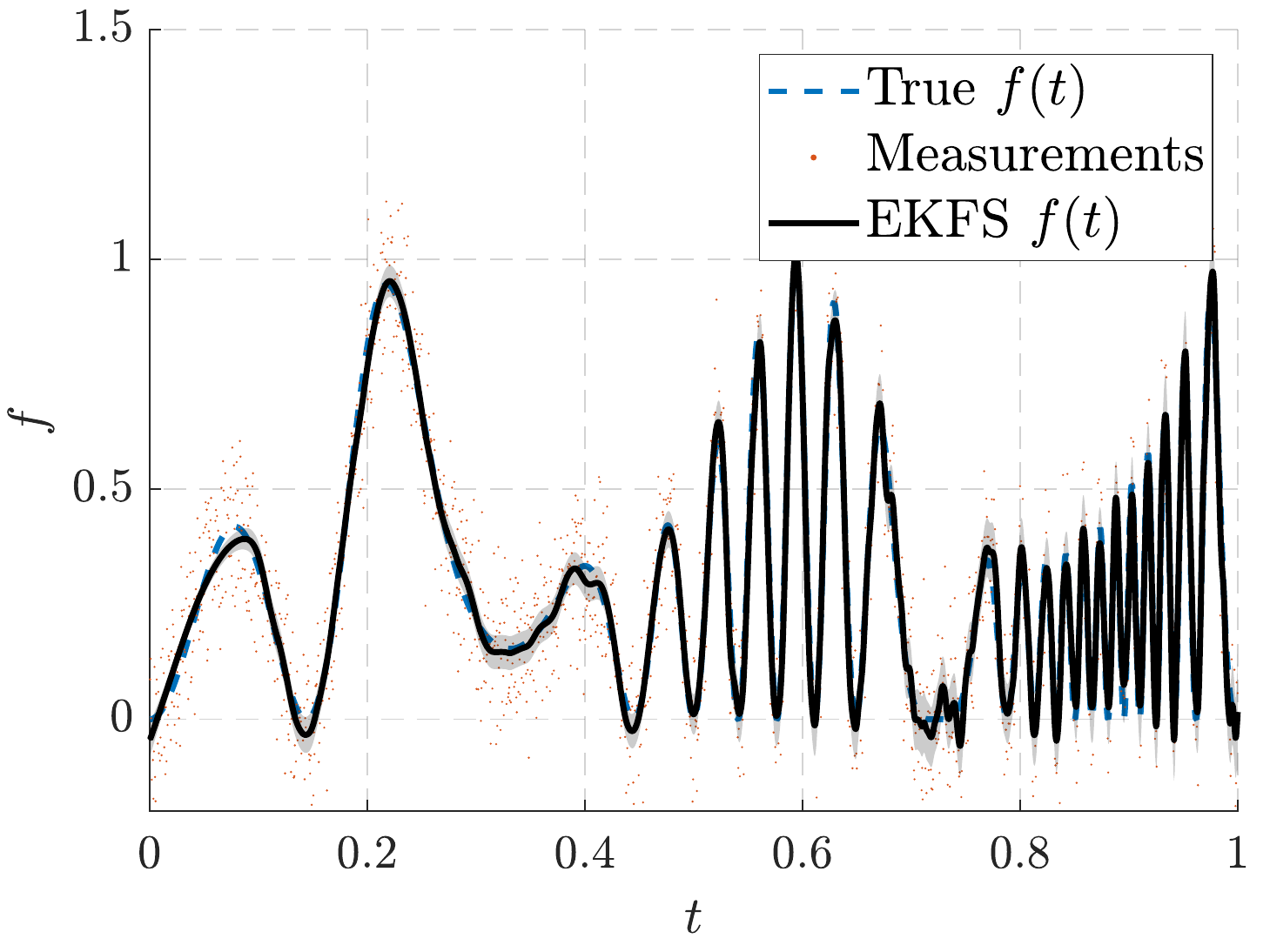}
		\includegraphics[width=.32\linewidth]{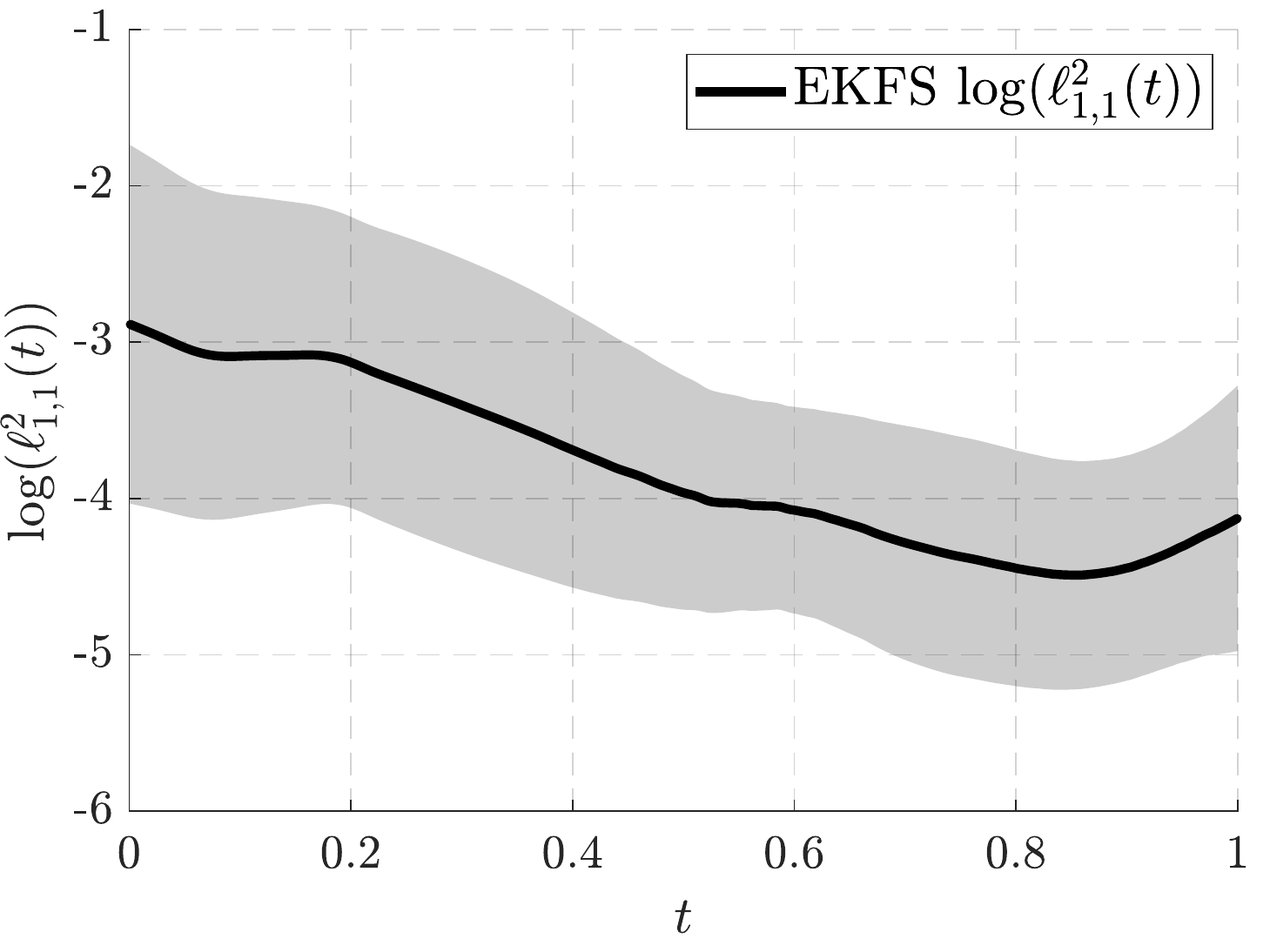}
		\includegraphics[width=.32\linewidth]{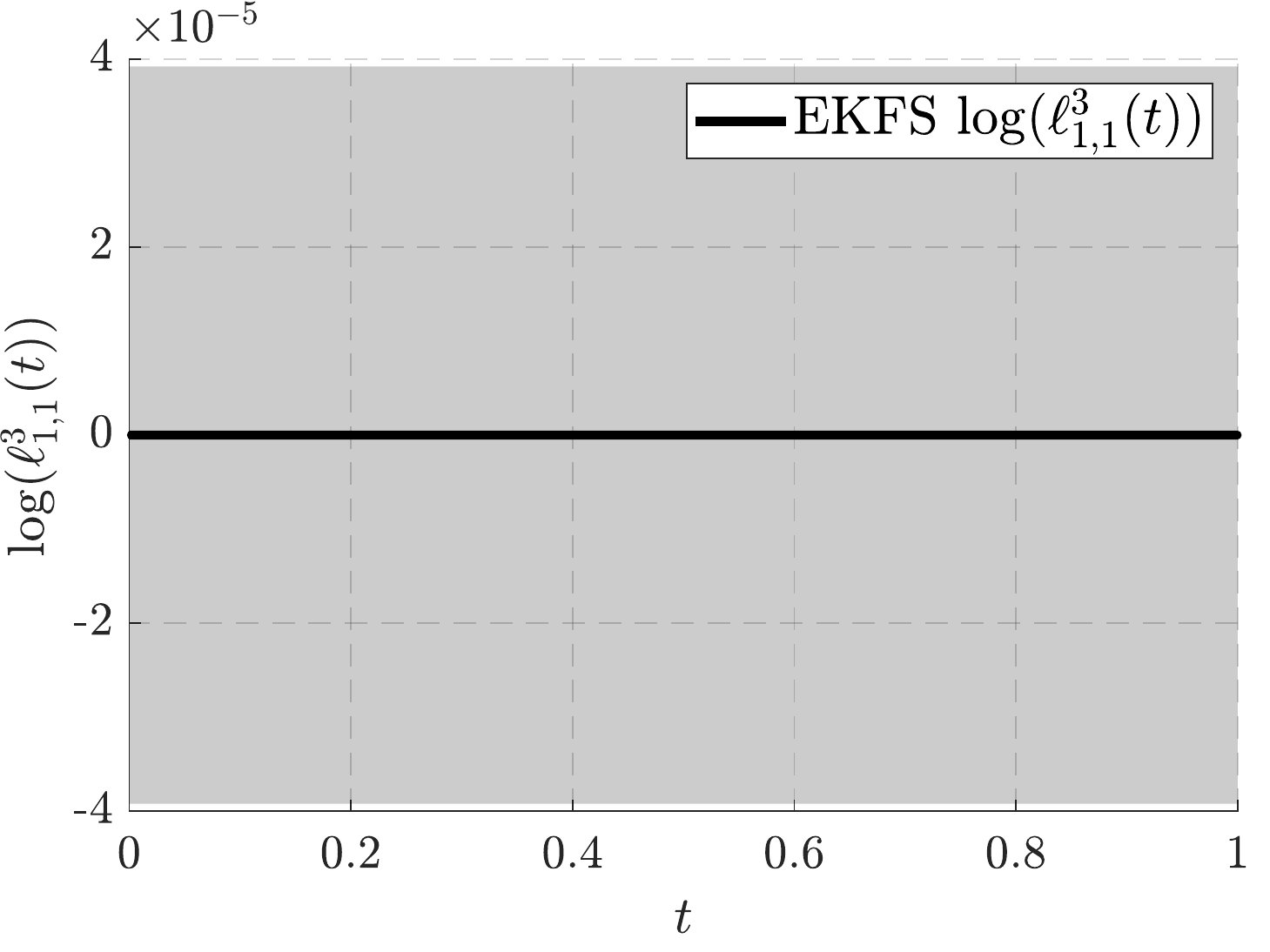}
		\caption{EKFS regression results on model~\eqref{equ:exp-sine} using DGP-2 (first row) and DGP-3 (second row). The shaded area stands for 95\% confidence interval. }
		\label{fig:sin-exp-EKFS}
	\end{figure*}
	
	\begin{figure*}[t!]
		\centering
		\includegraphics[width=.4\linewidth]{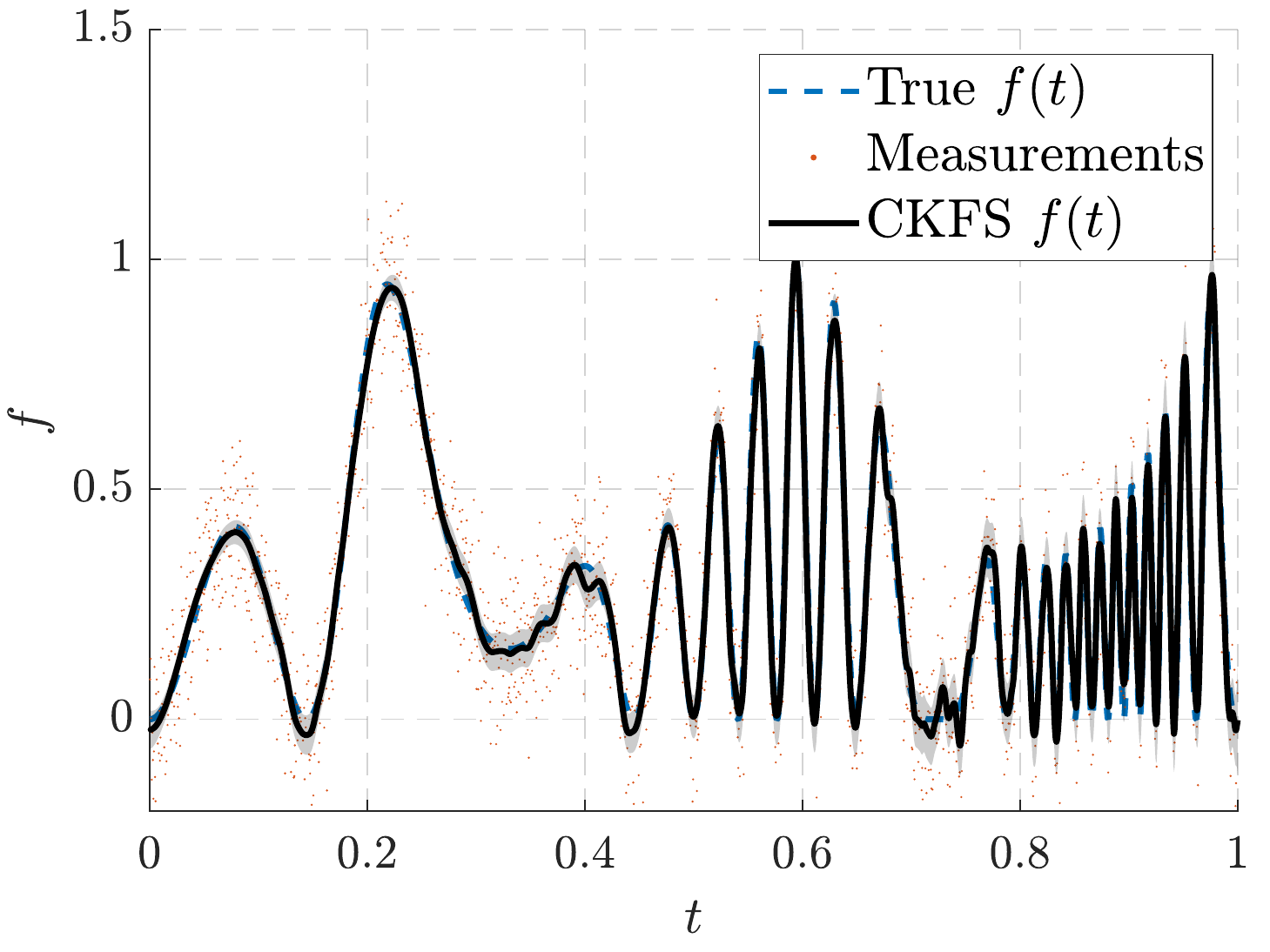}
		\includegraphics[width=.4\linewidth]{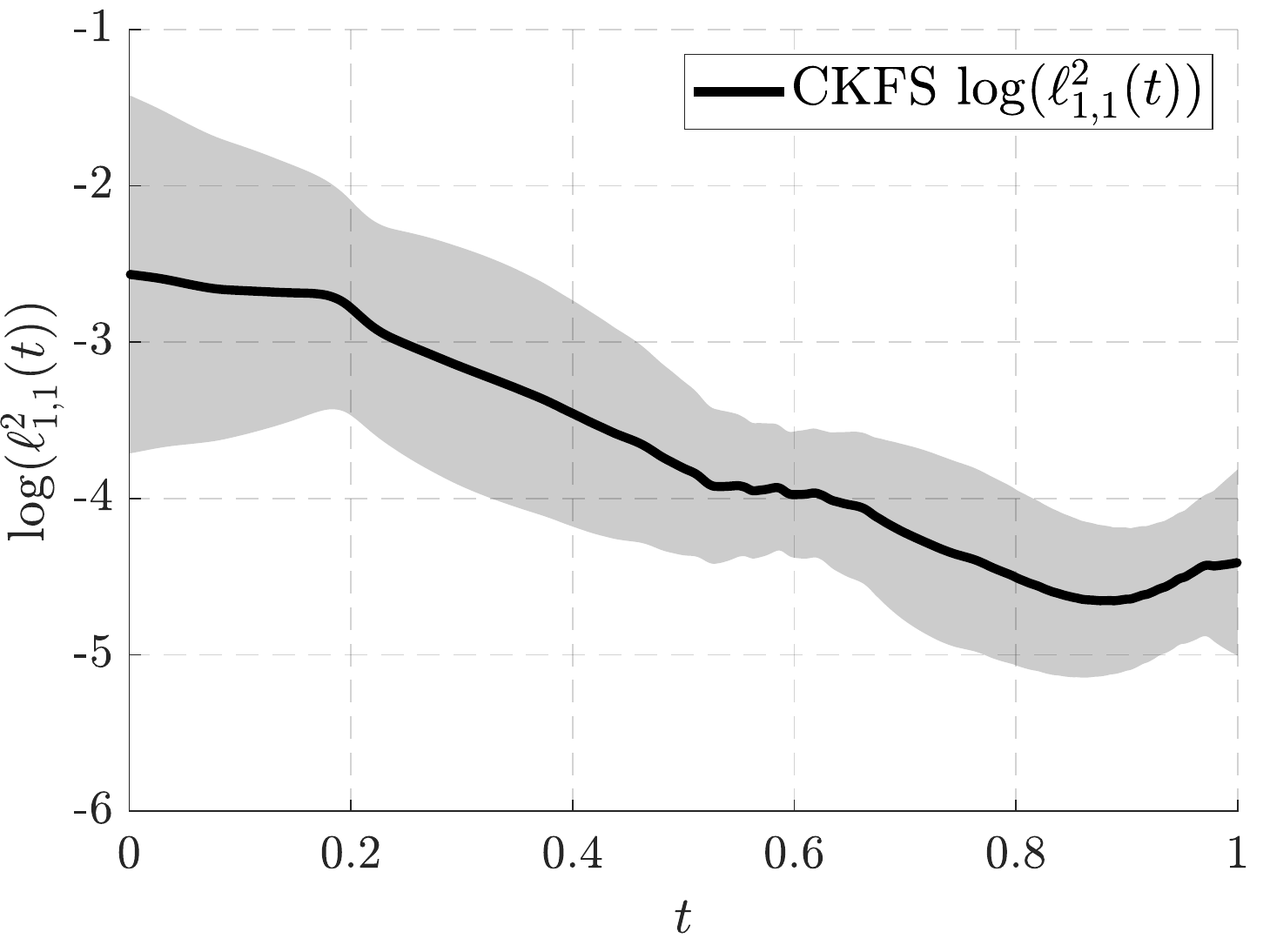}\\
		\includegraphics[width=.32\linewidth]{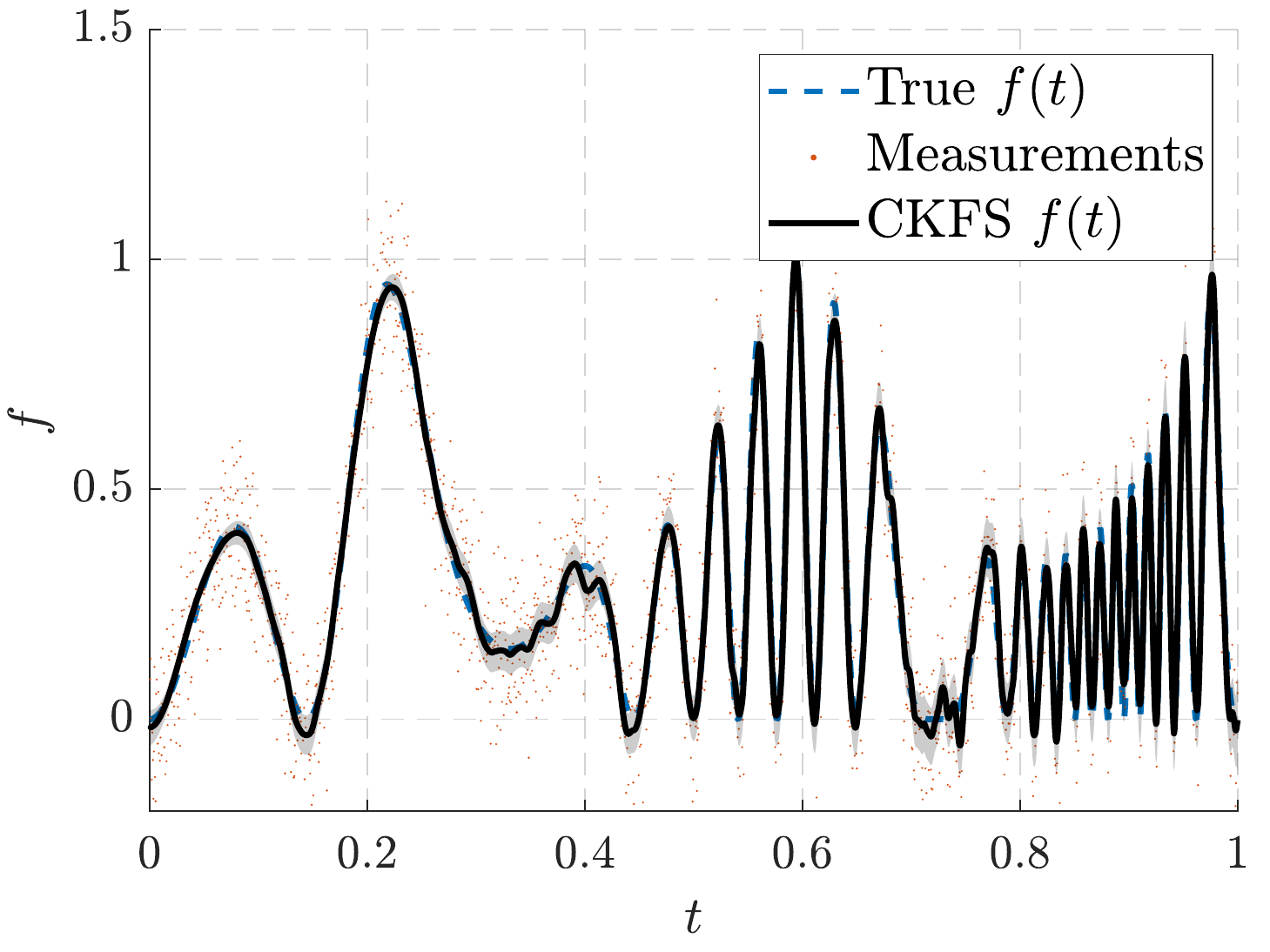}
		\includegraphics[width=.32\linewidth]{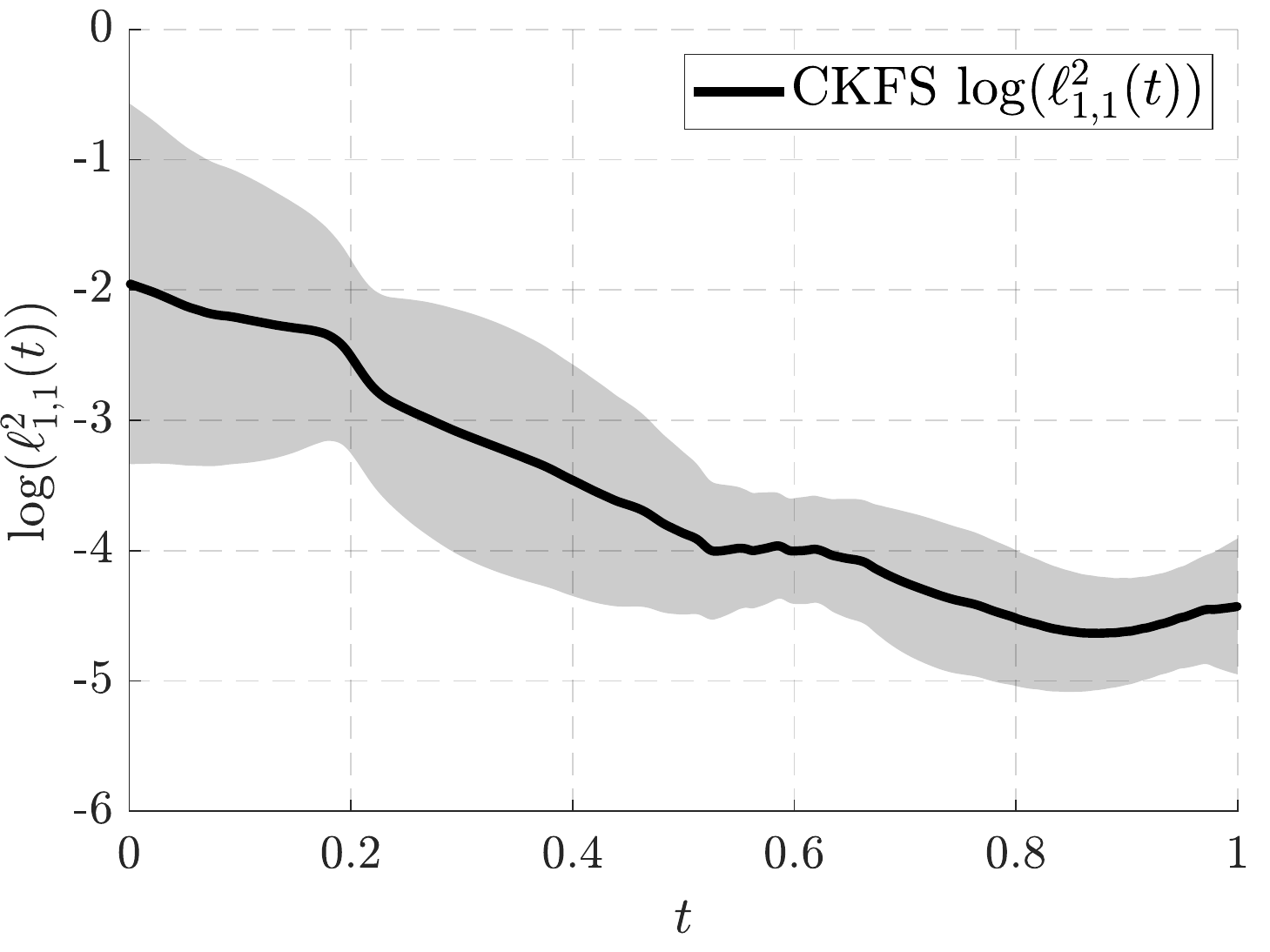}
		\includegraphics[width=.32\linewidth]{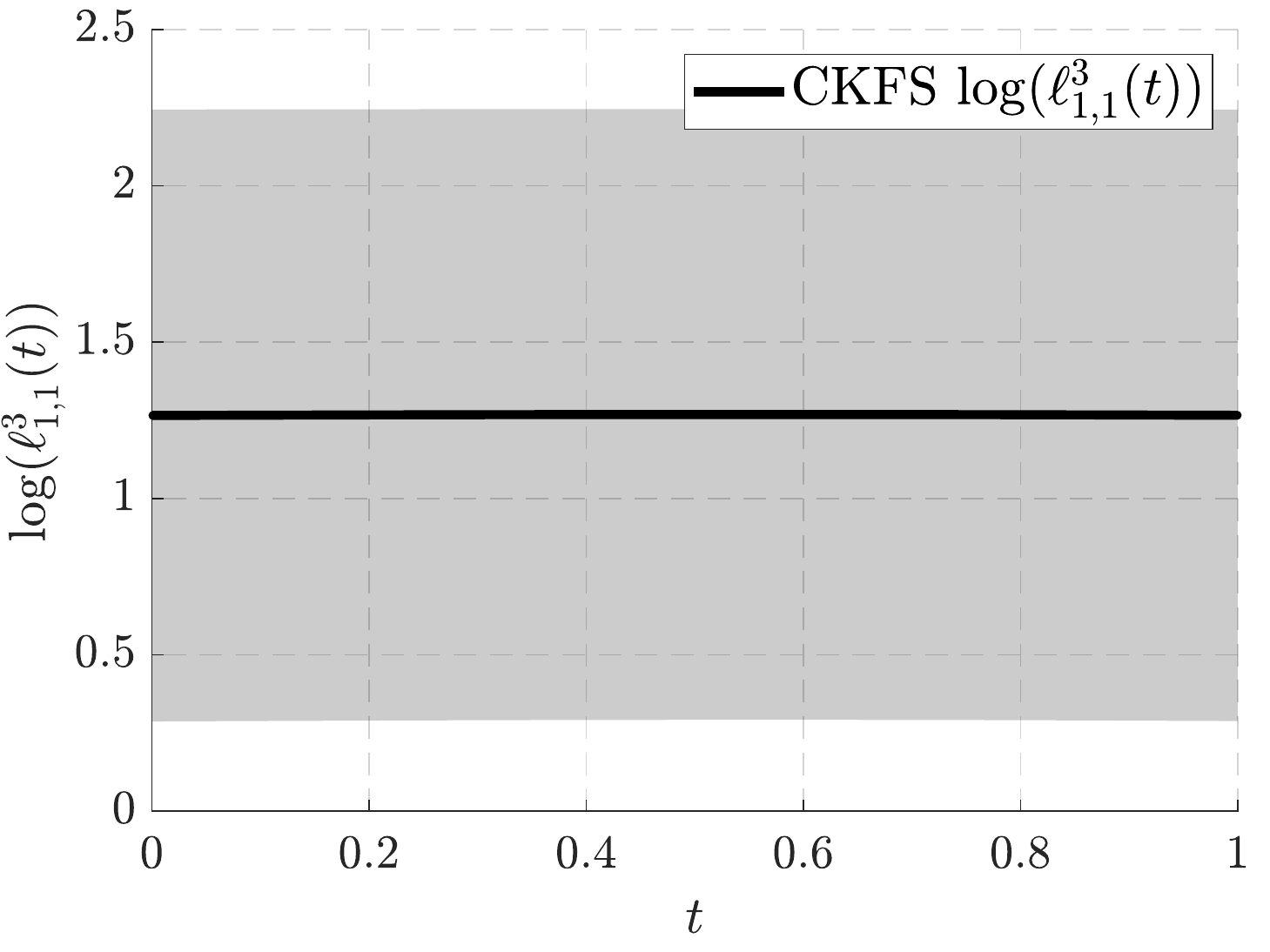}
		\caption{CKFS regression results on model~\eqref{equ:exp-sine} using DGP-2 (first row) and DGP-3 (second row). The shaded area stands for 95\% confidence interval. }
		\label{fig:sin-exp-GFS}
	\end{figure*}
	
	The results for GP, Sparse GP, WGP, and HGP are shown in Figure~\ref{fig:sin-exp-GP}. We find that the estimate of GP is overfitted to the measurements, and it is not smooth. On the contrary, the estimate of sparse GP is underfitted. The result of WGP is similar to GP, but the estimated variance of WGP is large. The HGP works well except for the part after $t>0.8$~s. The learnt $\ell^2_{1,1}$ and $\sigma^2_{1,2}$ from HGP are smooth. 
	
	\begin{figure*}[t!]
		\centering
		\includegraphics[width=.32\linewidth]{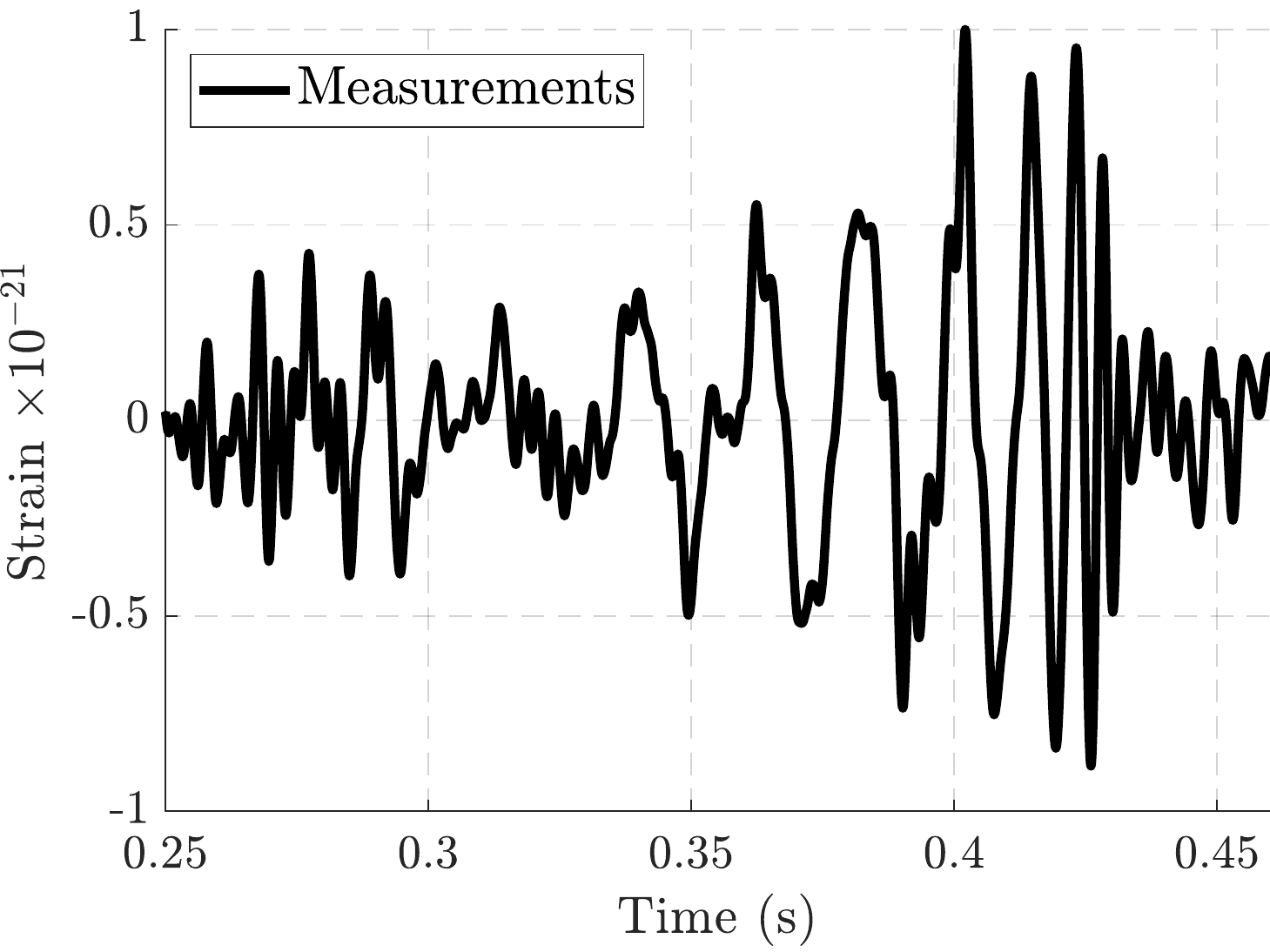}
		\includegraphics[width=.32\linewidth]{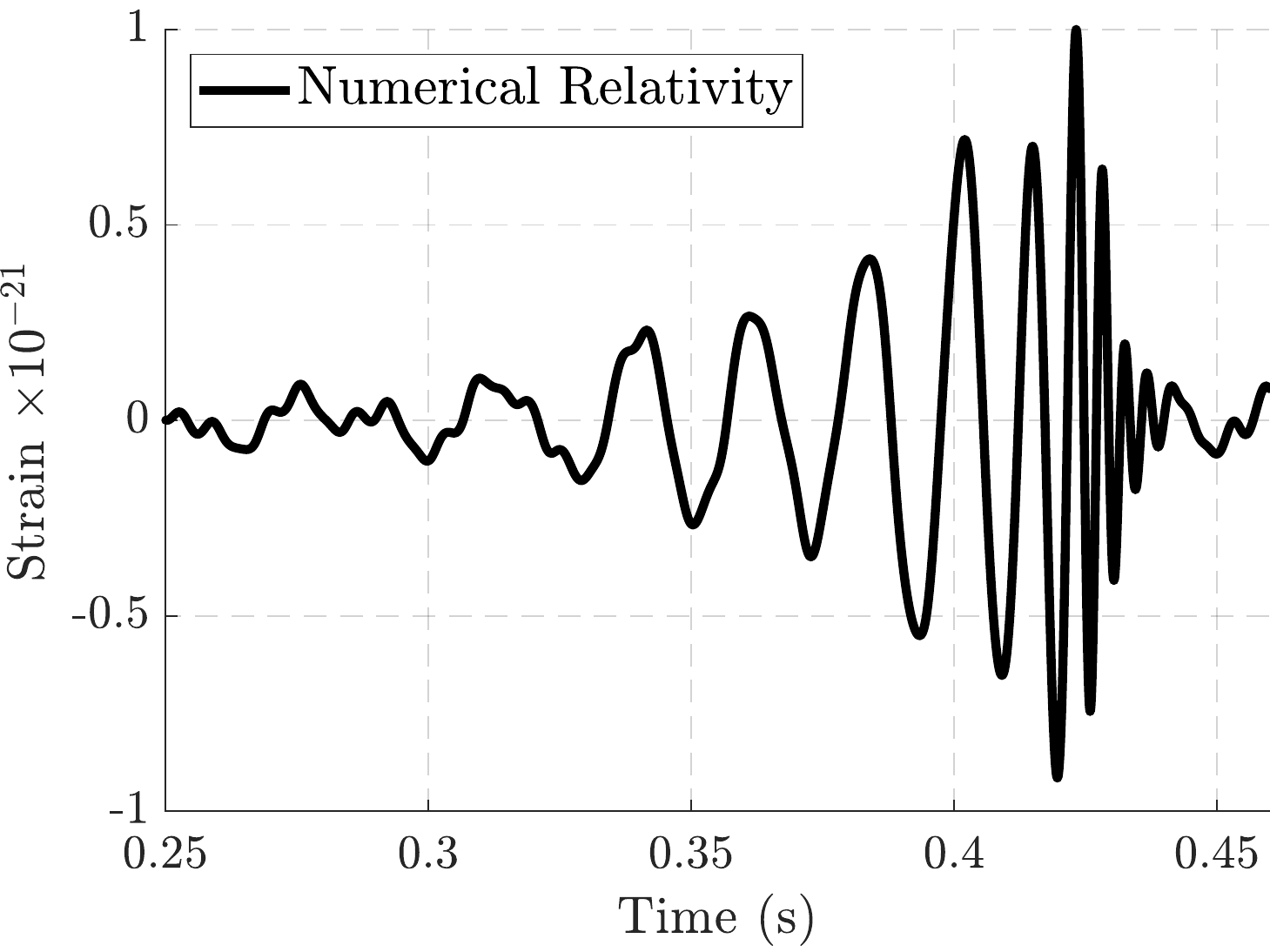}
		\includegraphics[width=.32\linewidth]{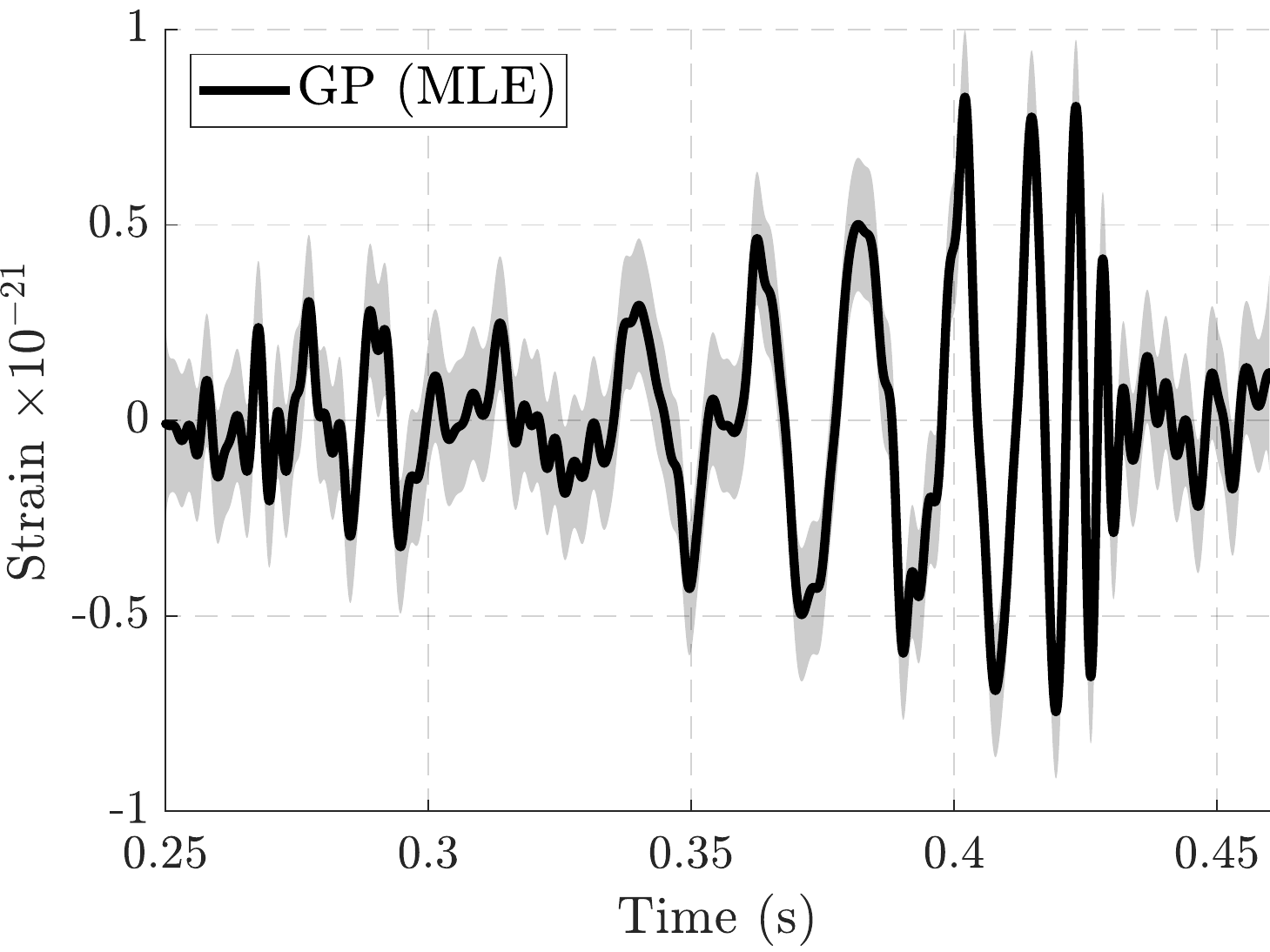}\\
		\includegraphics[width=.4\linewidth]{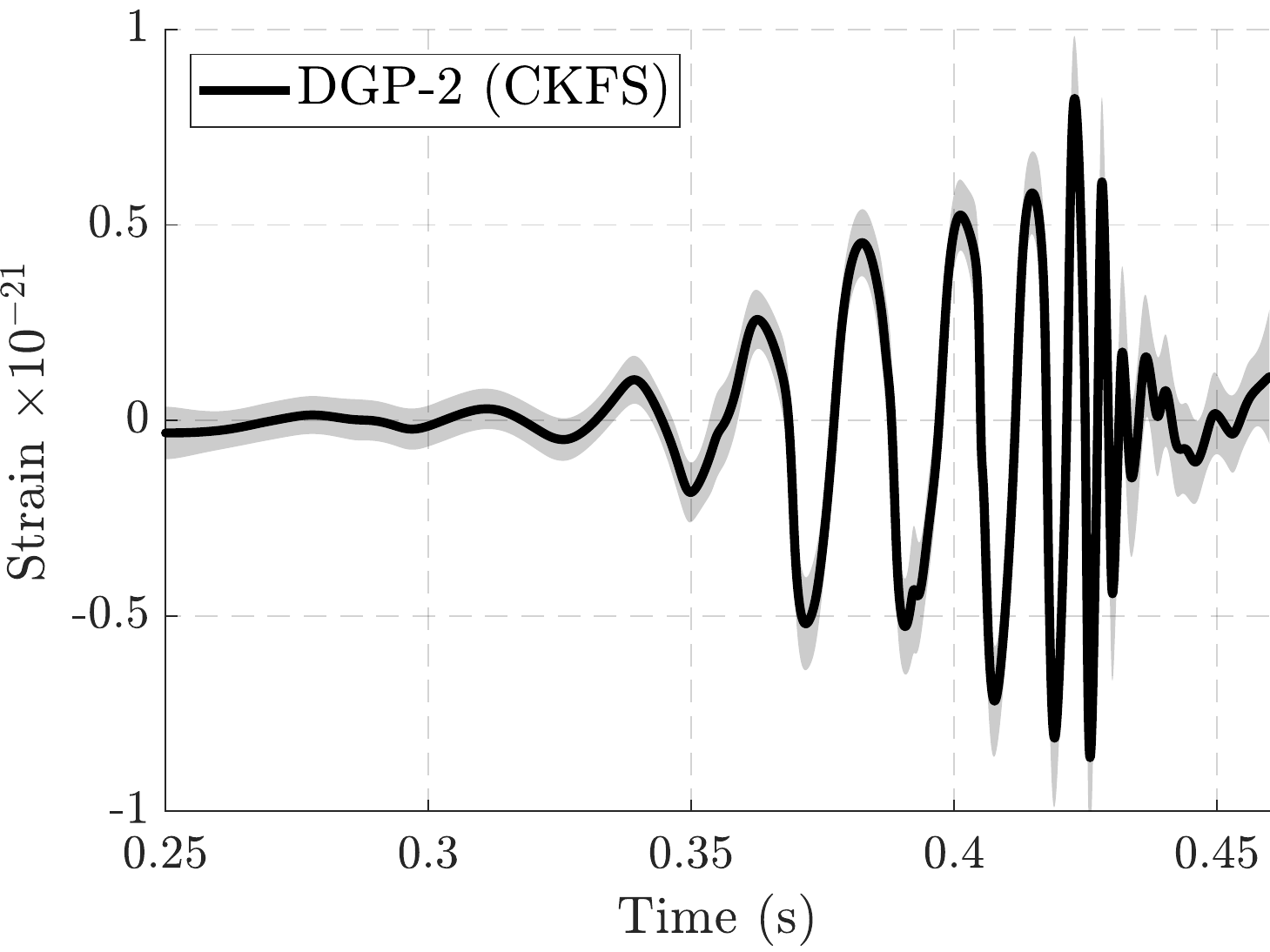}
		\includegraphics[width=.4\linewidth]{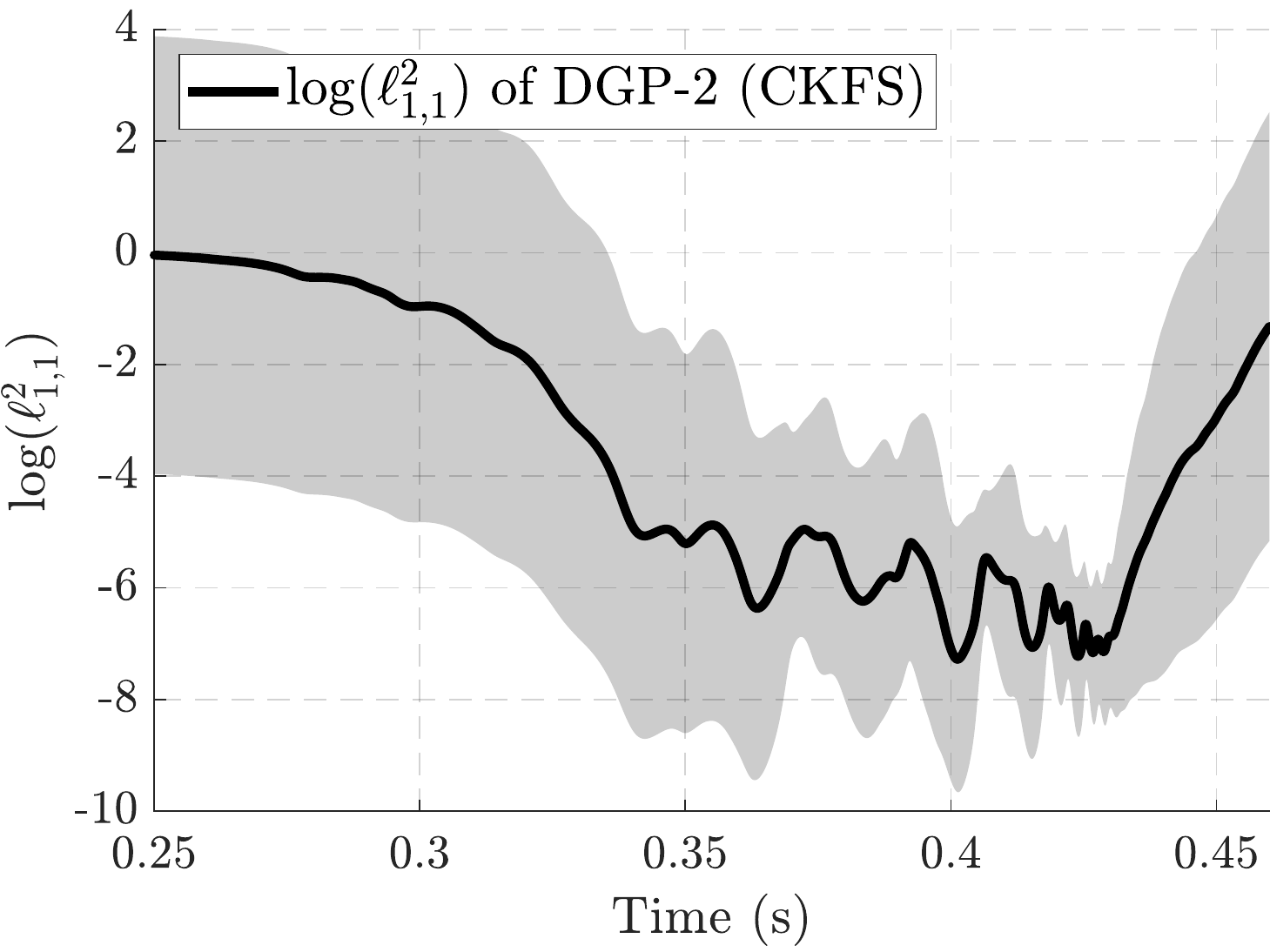}\\
		\includegraphics[width=.4\linewidth]{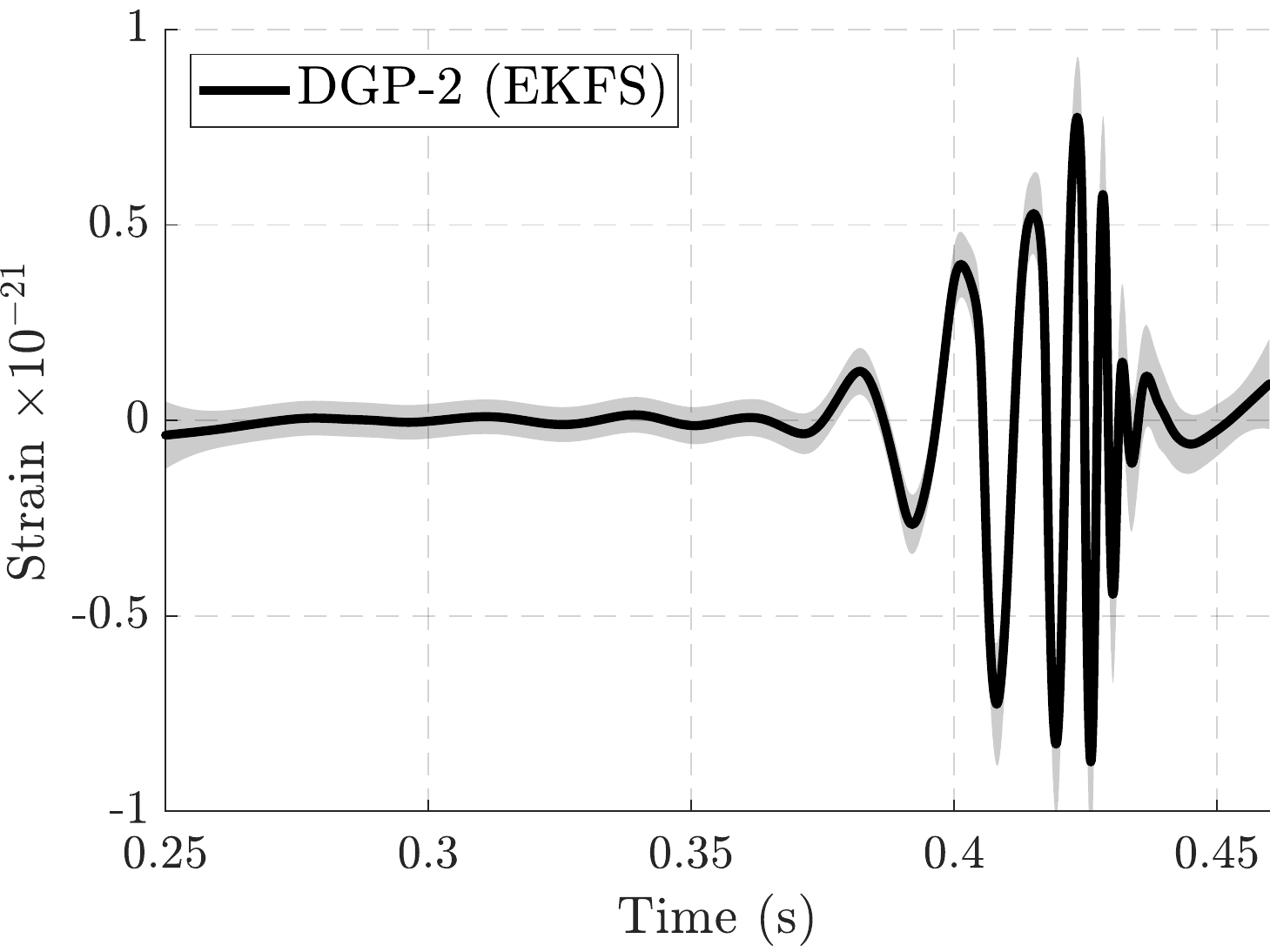}
		\includegraphics[width=.4\linewidth]{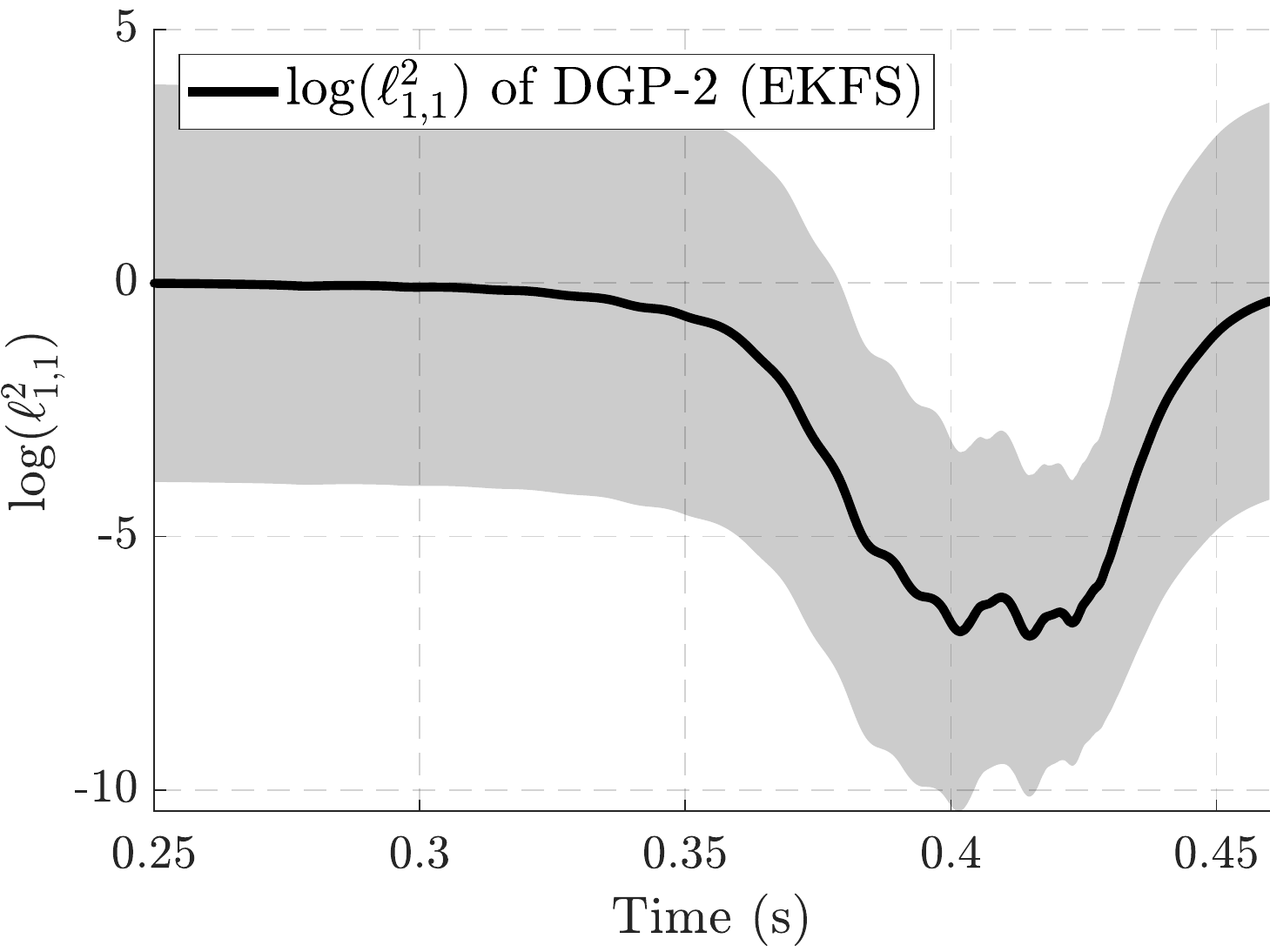}\\
		\caption{LIGO gravitational wave detection (event GW150914, Hanford, Washington) using (Mat\'{e}rn, $\alpha=1$) GP and DGP-2. The shaded area stands for 95\% confidence interval. }
		\label{fig:ligo}
	\end{figure*}
	
	Figures~\ref{fig:sin-exp-EKFS} and~\ref{fig:sin-exp-GFS} plot the results of EKFS and CKFS, respectively. From visual inspection, the Gaussian filters and smoothers based DGPs outperform GP, sparse GP, WGP, and HGP. We also find that the estimates from EKFS and CKFS are quite similar, whereas EKFS gives smoother estimate of $f$ compared to CKFS. The learnt $\ell^2_{1,1}$ and $\sigma^2_{1,2}$ also adapt to the frequency changes of the signal. It is worth noticing that the estimated $\ell^3_{1,1}$ in the third layer of DGP-3 is almost flat for both CKFS and EKFS. 
	
	The RMSE, NLPD, and computational time are listed in Table~\ref{tbl:rect-rmse-nlpd-time-sin}. This table verifies that the DGPs using Gaussian filters and smoothers (i.e., CKFS and EKFS) outperform other methods in terms of RMSE, NLPD, and computational time. Also, CKFS gives slightly better RMSE and NLPD than EKFS. For this signal, using DGP-3 yields no better RMSE and NLPD compared to DGP-2.
	
	\subsection{Real Data Application on LIGO Gravitational Wave Detection}
	\label{sec:ligo}
	The theoretical existence of gravitational waves was predicted by Albert Einstein in 1916 from a linearized field equation of general relativity~\citep{AMSGW2017, EinsteinGW1937}. In 2015, the laser interferometer gravitational-wave observatory (LIGO) team made the first observation of gravitational waves from a collision of two black holes, known as the event GW150914 \citep{LIGO2016}. The detection was originally done by using a matched-filter approach. It is of our interests to test if the GP and DGP approaches can detect the gravitational waves from the LIGO measurements. We now formulate the detection as a regression task. 
	
	We use the observation data provided by LIGO scientific collaboration and the Virgo collaboration\footnote{The data is available at \url{https://doi.org/10.7935/K5MW2F23} or \url{https://doi.org/10.7935/82H3-HH23}}. As shown in the first picture of Figure~\ref{fig:ligo}, the data contains 3,441 measurements sampled in frequency of 16,384 Hz. We use time interval $1 0^{-5}$~s to interpolate the data, which results in $10,499$ time steps. The reference gravitational wave calculated numerically from the general relativity theory is shown in Figure~\ref{fig:ligo}, and we use it as the ground truth for comparison. 
	
	We use the previously formulated regression models GP and DGP-2, as shown in Figure~\ref{fig:reg-models}. Unfortunately, the NS-GP and MAP-based solvers are not applicable due to a large number of observations and interpolation steps. Hence, we choose the Gaussian filters and smoothers (i.e., CKFS and EKFS) for DGP regression. 
	
	The detection results are shown in the second and third rows of Figure~\ref{fig:ligo}. We find that the DGP-2 model gives a better fit to the gravitational wave compared to GP. The DGP-2 estimate is almost identical to the numerical relativity result. GP however, fails because the estimate overfits to the measurements. Also, the outcomes of DGP-2 are explainable by reviewing the learnt parameter $\ell^2_{1,1}$. We see that the length scale $\ell^2_{1,1}$ adapts to the frequency changes of the gravitational wave, which is an expected feature by using the DGP model. The results of CKFS and EKFS are similar, while EKFS gives smoother results. 
	
	Moreover, the Gaussian filters and smoothers on DGP-2 have significantly smaller time consumption compared to GP. In one single run of the program, CKFS and EKFS take $1.5$~s and $0.4$~s, respectively, while GP takes $202.2$~s (including hyperparameter optimization). 
	
	\subsection{Summary of Experimental Results}
	
	In this section, we summarize the results of the state-space methods presented in the sections above. In the rectangular signal regression experiment, the state-space MAP and particle smoothing methods are better than Gaussian smoothers (e.g., EKFS and CKFS) in terms of RMSE and NLPD. Based on the results of the composite sinusoidal signal regression experiment, Gaussian smoothers are particularly efficient in computation. 
	However, Gaussian smoothers may not be suitable solvers for SS-DGP models that have both lengthscale and magnitude parameters included in the DGP hierarchy. This is proved in Section~\ref{sec:restriction-GFS}, and it is also numerically shown in Figure~\ref{fig:exp-GFS-vanishing}.
	
	\section{Conclusion}
	\label{sec:conclusion}
	In this paper, we have proposed a state-space approach to deep Gaussian process (DGP) regression. The DGP is formulated as a cascaded collection of conditional Gaussian processes (GPs). By using the state-space representation, we cast the DGP into a non-linear hierarchical system of linear stochastic differential equations (SDEs). Meanwhile, we propose the maximum a posteriori and Bayesian filtering and smoothing solutions to the DGP regression task. The experiment shows significant benefits when applying the DGP methods to simulated non-stationary regression problems as well as to a real data application in gravitational wave detection. 
	
	The proposed state-space DGPs (SS-DGPs) have the following major strengths. The DGP priors are capable of modeling larger classes of functions compared to the conventional and non-stationary GPs. In the construction of state-space DGP, one does not need to choose/design valid covariance functions manually like in~\citet{paciorek2006spatial} or \citet{Salimbeni2017ns}. In DGP regression in state-space form we do not need to evaluate the (full) covariance function either. Moreover, state-space methods are particularly efficient for temporal data as they have linear computational complexity with respect to time. 
	
	In addition, we have identified a wide class of SS-DGPs that are not suitable for Gaussian smothers to solve. More specifically, these SS-DGP models are the ones that have both their lengthscale and magnitude parameters modeled as GP nodes under the assumptions in Section~\ref{sec:restriction-GFS}. When applying Gaussian smoothers on these SS-DGPs, their Kalman gains converge to zero as time goes to infinity, which makes Gaussian smoothers use no information from data to update their posterior distributions. This is one limitation of SS-DGPs. Although one can use the MAP and particle smoothing methods in place of Gaussian smoothers, these methods can be computationally demanding. 
	
	For future investigation, enabling automatic differentiations is of interests. In this paper we have only applied grid search on a large number of trainable hyperparameters which results in a very crude optimization. By using libraries like TensorFlow or JAX we can also obtain Hessians which we can use to quantify the uncertainty in MAP. 
	
	Another useful future extension is to exploit data-scalable inference methods, such as sparse variational methods. For example, \citet{Paul2020VariationalSS} solve state-space GP regression problems (possibly with non-Gaussian likelihoods) by using a conjugate variational inference method while still retaining a linear computational complexity in time. Their work is extended by~\citet{Wilkinson2021} who introduce sparse inducing points to the said variational state-space GP inference, resulting in a computational complexity that is linear in the number of inducing points. Although these works are mainly concerned with standard state-space GPs (i.e., linear state-space models), it would be possible to apply these methods on SS-DGPs as well, for example, by linearizing the state-space models of SS-DGPs.
	
	Generalizing the temporal SS-DGPs to spatio-temporal SS-DGPs (see, the end of Section~\ref{sec:dgp-system-sdes}) would be worth studying as well, by extending the methodologies introduced in~\citet{simoMagazine2013, Emzir2020}. 
	
	\begin{appendices}
		\section{Derivatives of Loss Function~\eqref{equ:MAP-loss-old}}
		\label{append:MAP-old-deriv}
		We define the derivatives in a set 
		\begin{equation}
			\tash{\mathcal{L}^{\mathrm{BMAP}}}{U_{1:N}} = \left\lbrace \tash{\mathcal{L}}{u^i_{j,k\mid 1:N}}\colon i=1,\ldots,L, k=1,2,\ldots, L_i \right\rbrace \nonumber
		\end{equation}
		for all nodes, where each element is a column vector. For the top GP $f\coloneqq u^1_{1,1}$, the derivative is
		\begin{equation}
			\tash{\mathcal{L}^{\mathrm{BMAP}}}{f_{1:N}} = -\cu{R}^{-1}\,(\cu{y}_{1:N}-f_{1:N}) + \cu{C}^{-1}\,f_{1:N}.\nonumber
		\end{equation}
		The derivatives of other GP nodes are given by
		\begin{equation}
			\begin{split}
				\tash{\mathcal{L}^{\mathrm{BMAP}}}{u^i_{j,k\mid 1:N}} &= \tash{-\log p(u^i_{j,k\mid 1:N}\mid U^{i+1}_{k,\cdot\mid 1:N})}{u^i_{j,k\mid 1:N}}\\
				&\quad+ \tash{-\log p(u^{i-1}_{j,k\mid 1:N}\mid U^{i}_{k,\cdot\mid 1:N})}{u^i_{j,k\mid 1:N}}\\
				&=\left(\cu{C}^i_k \right)^{-1}u^i_{j,k\mid 1:N} +\frac{1}{2}\, \cu{g}^i_k.\nonumber
			\end{split}
		\end{equation}
		Above, the $m$-th element of $\cu{g}^i_k\in\R^{N }$ is
		\begin{equation}
			\begin{split}
				\left[\cu{g}^i_k \right]_m &= \tash{\mathcal{L}^{\mathrm{BMAP}}}{u^i_{j,k\mid m}} \\
				&= \tash{-\log\abs{2\pi\,\cu{C}^{i-1}_k}}{u^i_{j,k\mid m}} \\
				&\quad+ \tash{-(u^{i-1}_{j,k\mid 1:N})^\trans\,(\cu{C}^{i-1}_k)^{-1}\,u^{i-1}_{j,k\mid 1:N}}{u^i_{j,k\mid m}}\\
				&= \tr\left[\left(\left( \cu{C}^{i-1}_k\right)^{-1} - \bm{\tau}\,\bm{\tau}^\trans \right) \,\tash{\cu{C}^{i-1}_k}{u^i_{j,k\mid m}} \right],\nonumber
			\end{split}
		\end{equation}
		where $u^i_{j,k\mid m}$ is the $m$-th element of $u^i_{j,k\mid 1:N}$ and $\bm{\tau} = \left( \cu{C}^{i-1}_k\right)^{-1}\,u^{i-1}_{j,k\mid 1:N}$.
		
		\section{Derivatives of Loss Function~\eqref{equ:ss-MAP-loss-func}}
		\label{append:ss-MAP}
		We collect the derivates of the state in a set
		\begin{equation}
			\tash{\mathcal{L}^{\mathrm{SMAP}}}{\cu{U}_{1:N}} = \left\lbrace \tash{\mathcal{L}^{\mathrm{SMAP}}}{\cu{U}_k}, k=0,\ldots, N \right\rbrace,\nonumber
		\end{equation}
		for all time step, where each element is a column vector. For the initial condition, its derivative is
		\begin{equation}
			\tash{\mathcal{L}^{\mathrm{SMAP}}}{\cu{U}_0} = \cu{P}^{-1}_0\, \cu{U}_0 + \frac{1}{2}\,\cu{z}_0.\nonumber
		\end{equation}
		For $k=1,2,\ldots, N-1$, the derivative is
		\begin{equation}
			\begin{split}
				\tash{\mathcal{L}^{\mathrm{SMAP}}}{\cu{U}_k} &= \frac{1}{R_k}\cu{H}^\trans\,(\cu{H}\,\cu{U}_k - y_k) \\
				&\quad+ \cu{Q}^{-1}(\cu{U}_{k-1})\,(\cu{U}_k - \cu{a}(\cu{U}_{k-1})) + \frac{1}{2}\,\cu{z}_k.\nonumber
			\end{split}
		\end{equation}
		Above, $\cu{z}_k\in\R^{\varrho }$ is a vector for $k=0,1,\ldots,N-1$. Now let us temporarily use $u^m_k$ as the $m$-th component of state $\cu{U}_k$, then the $m$-th element of $\cu{z}_k$ is 
		\begin{equation}
			\begin{split}
				[\cu{z}_k]_m &= -\cu{U}^\trans_{k+1}\,\cu{Q}^{-1}(\cu{U}_k)\,\tash{\cu{Q}(\cu{U}_k)}{u^m_k}\,\cu{Q}^{-1}(\cu{U}_k)\,\cu{U}_{k+1} \\
				&\quad+2\,\tash{\cu{a}^\trans(\cu{U}_k)}{u^m_k}\,\cu{Q}^{-1}(\cu{U}_k)\,\left(\cu{a}(\cu{U}_k) - \cu{U}_{k+1} \right) \\
				&\quad+ \cu{a}^\trans(\cu{U}_k)\,\cu{Q}^{-1}(\cu{U}_k)\,\tash{\cu{Q}(\cu{U}_k)}{u^m_k}\,\cu{Q}^{-1}(\cu{U}_k)\\
				&\quad\quad\times \left( 2\,\cu{U}_{k+1} - \cu{a}(\cu{U}_k) \right) \\
				&\quad + \tr\left( \cu{Q}^{-1}(\cu{U}_k)\,\tash{\cu{Q}(\cu{U}_k)}{u^m_k}\right).
			\end{split}
		\end{equation}
		Finally, for the derivative on the last time step
		\begin{equation}
			\begin{split}
				\tash{\mathcal{L}^{\mathrm{SMAP}}}{\cu{U}_N} &= \frac{1}{R_N}\cu{H}^\trans\,(\cu{H}\,\cu{U}_N - y_N) \\
				&\quad+\cu{Q}^{-1}(\cu{U}_{N-1})\,\left(\cu{U}_N - \cu{a}(\cu{U}_{N-1}) \right).
			\end{split}
		\end{equation}
		
		\section{Derivation of Equation~\eqref{equ:GF-update}}
		\label{append:derivation-kf-update}
		Let us denote by 
		\begin{equation}
			\Bar{P}_k = \begin{bmatrix}
				\Bar{P}^{f,f}_k & \Bar{P}^{f,\sigma}_k \\
				\Bar{P}^{f,\sigma}_k & \Bar{P}^{\sigma,\sigma}_k
			\end{bmatrix}. \nonumber
		\end{equation}
		Then by the update step of Gaussian filters~\citep[see, e.g., Algorithm 6.3 of][]{sarkka2013}, we have
		\begin{equation}
			\begin{split}
				S_k &= H\, \Bar{P}_k \, H^\trans + R_k, \\
				K_k &= \Bar{P}_k\, H^\trans / S_k, \\
				P_k &= \Bar{P}_k - K_k\,K_k^\trans / S_k,
			\end{split}\nonumber
		\end{equation}
		where $H = \begin{bmatrix} 1 & 0 \end{bmatrix}$. Substituting $K_k$ and $S_k$ into $P_k$ gives
		\begin{equation}
			\begin{split}
				P_k &= \Bar{P}_k - \begin{bmatrix}
					\left( \Bar{P}^{f,f}_k\right)^2 & \Bar{P}^{f,f}_k\, \Bar{P}^{f,\sigma}_k \\
					\Bar{P}^{f,f}_k\, \Bar{P}^{f,\sigma}_k & \left( \Bar{P}^{\sigma,\sigma}_k\right)^2
				\end{bmatrix} / \left( \Bar{P}^{f,f}_k + R_k \right) 
			\end{split} \nonumber
		\end{equation}
		Hence, the $P^{f,\sigma}_k$ of $P_k$ is 
		\begin{equation}
			P^{f,\sigma}_k = \Bar{P}^{f,\sigma}_k - \frac{\Bar{P}^{f,f}_k\, \Bar{P}^{f,\sigma}_k}{\Bar{P}^{f,f}_k + R_k}.\nonumber
		\end{equation}
		
		\section{Samples from DGP Priors and Predictions from DGP Posterior Distributions}
		\label{append:prior-samples}
		\begin{figure*}[t!]
			\centering
			\includegraphics[width=.32\linewidth]{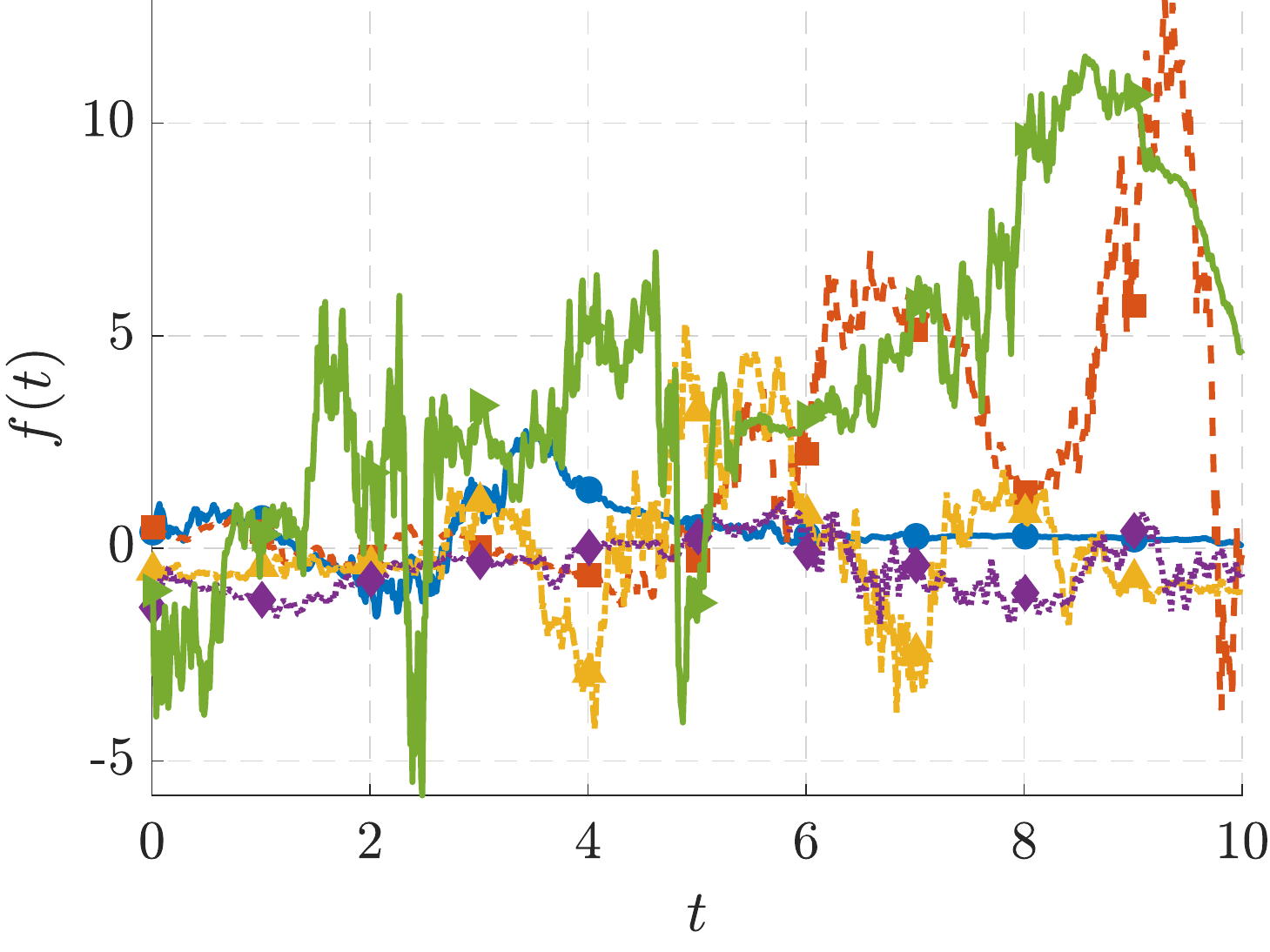}
			\includegraphics[width=.32\linewidth]{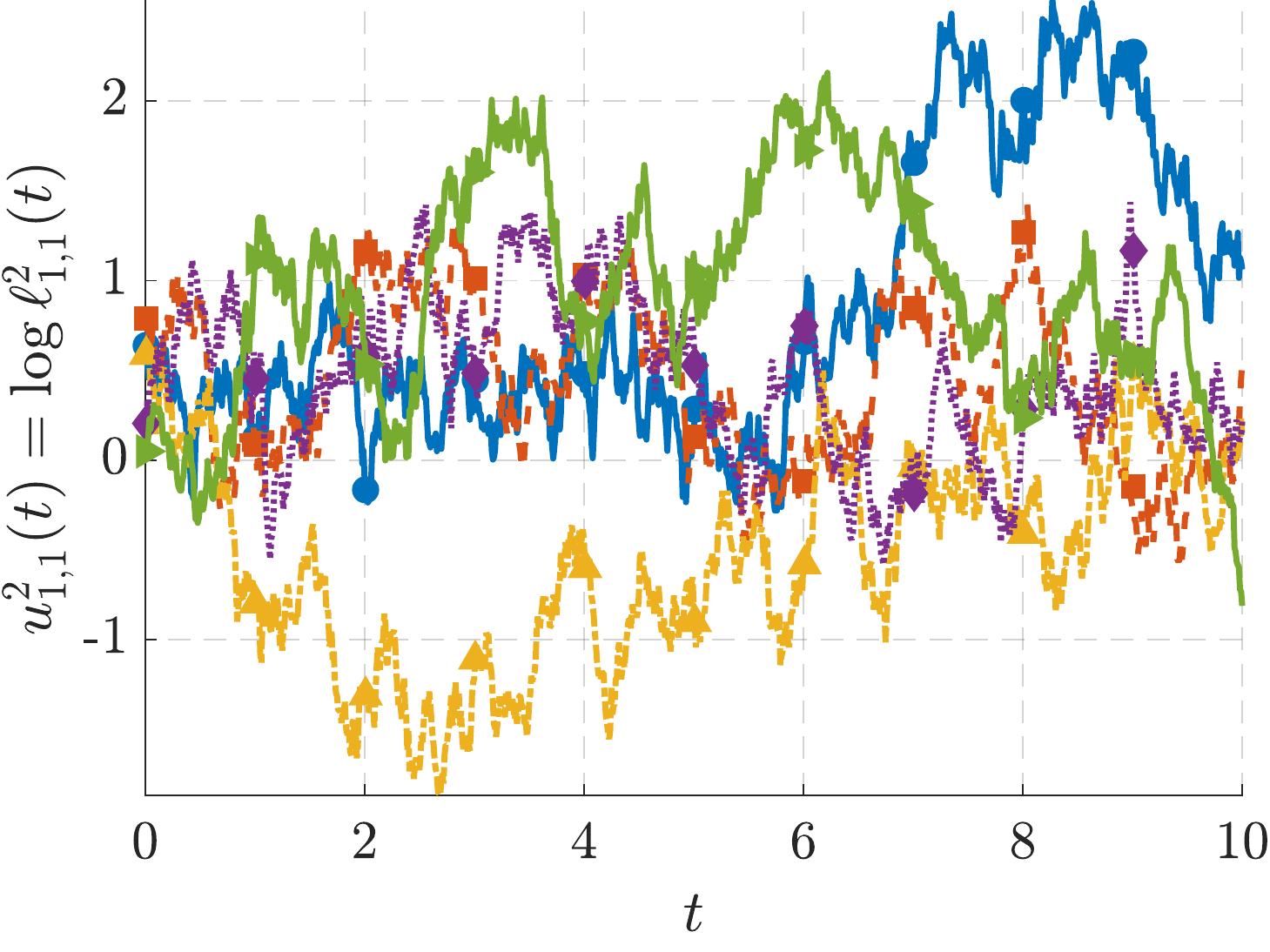}
			\includegraphics[width=.32\linewidth]{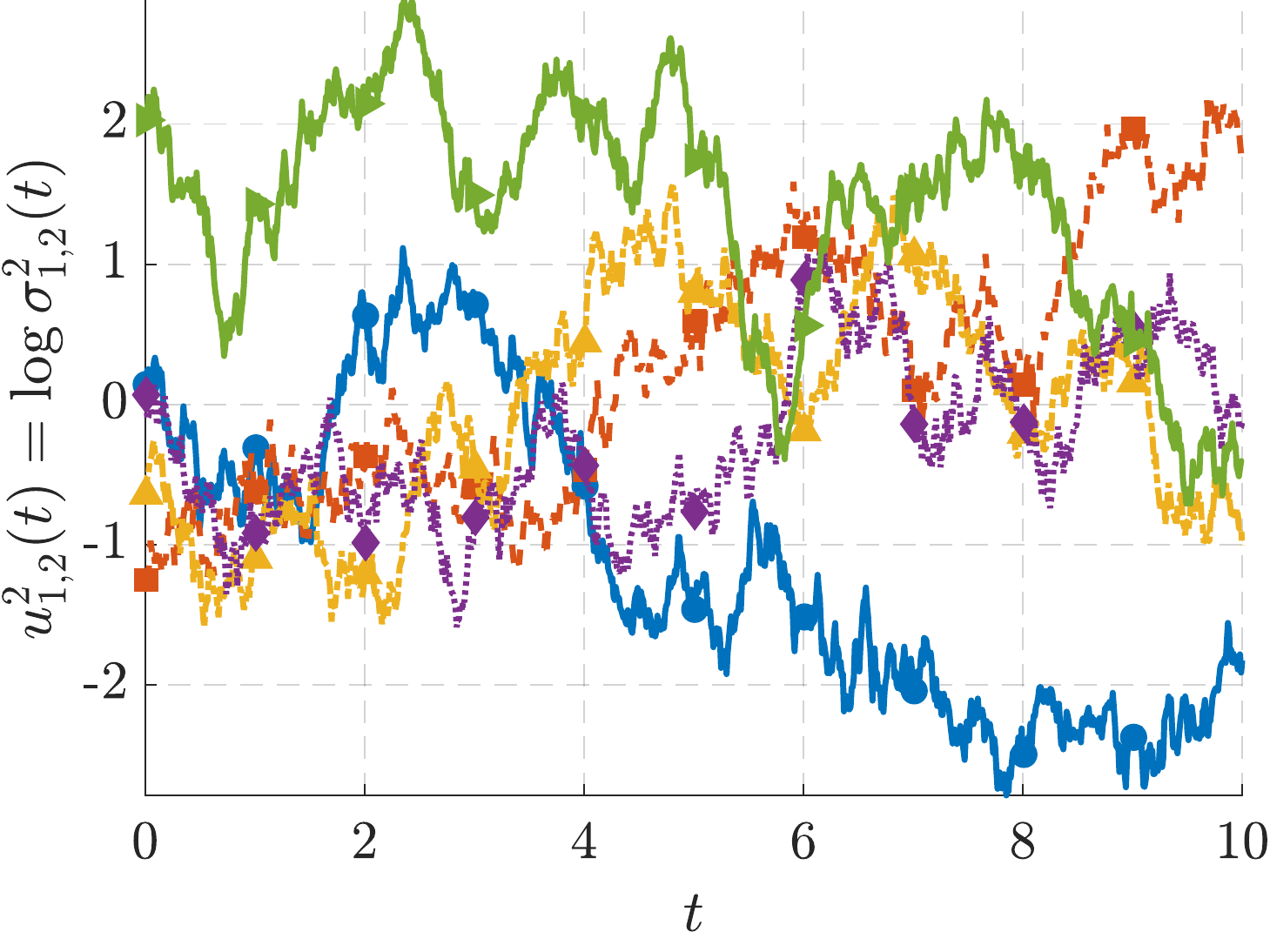}\\
			\includegraphics[width=.32\linewidth]{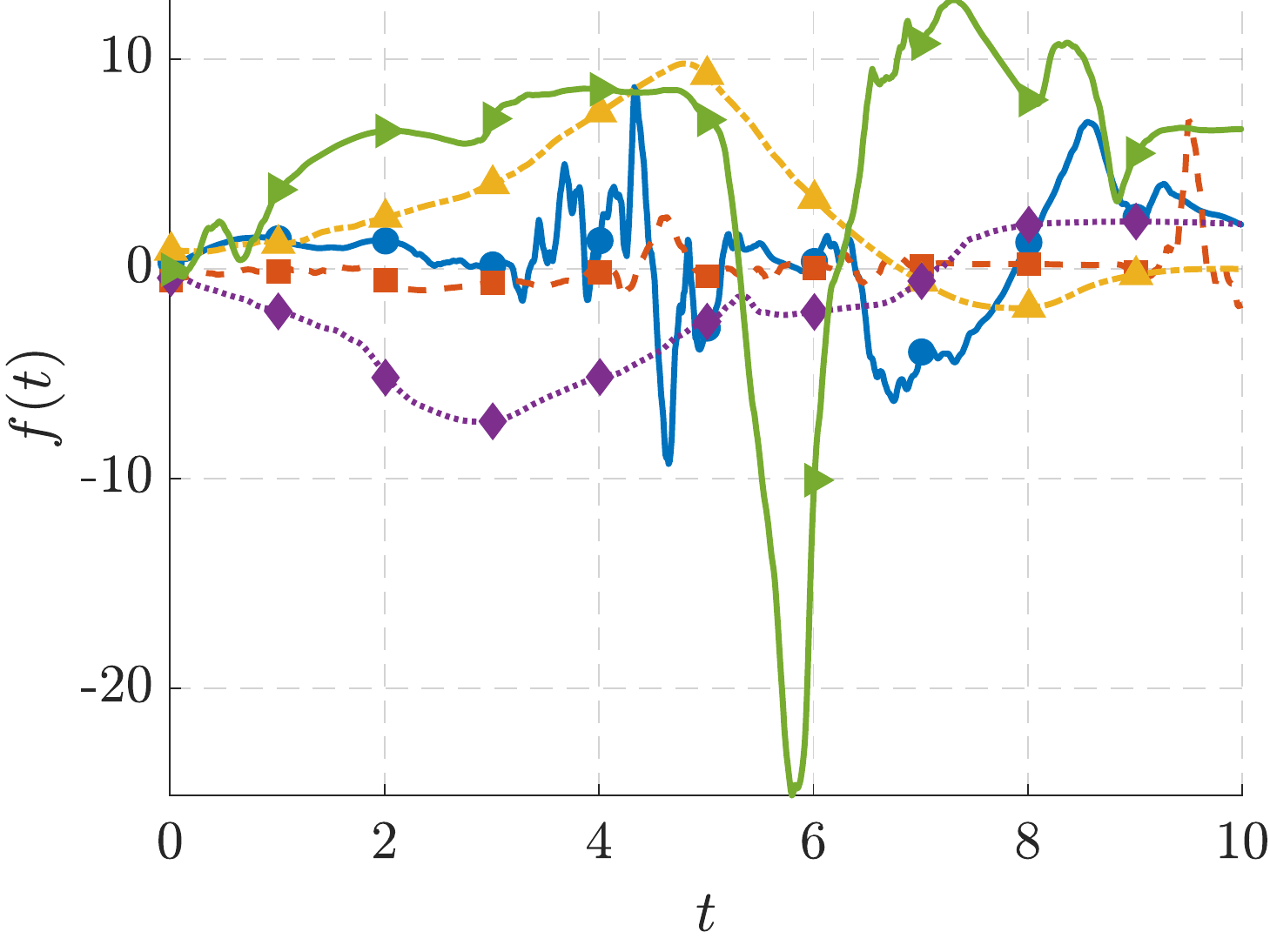}
			\includegraphics[width=.32\linewidth]{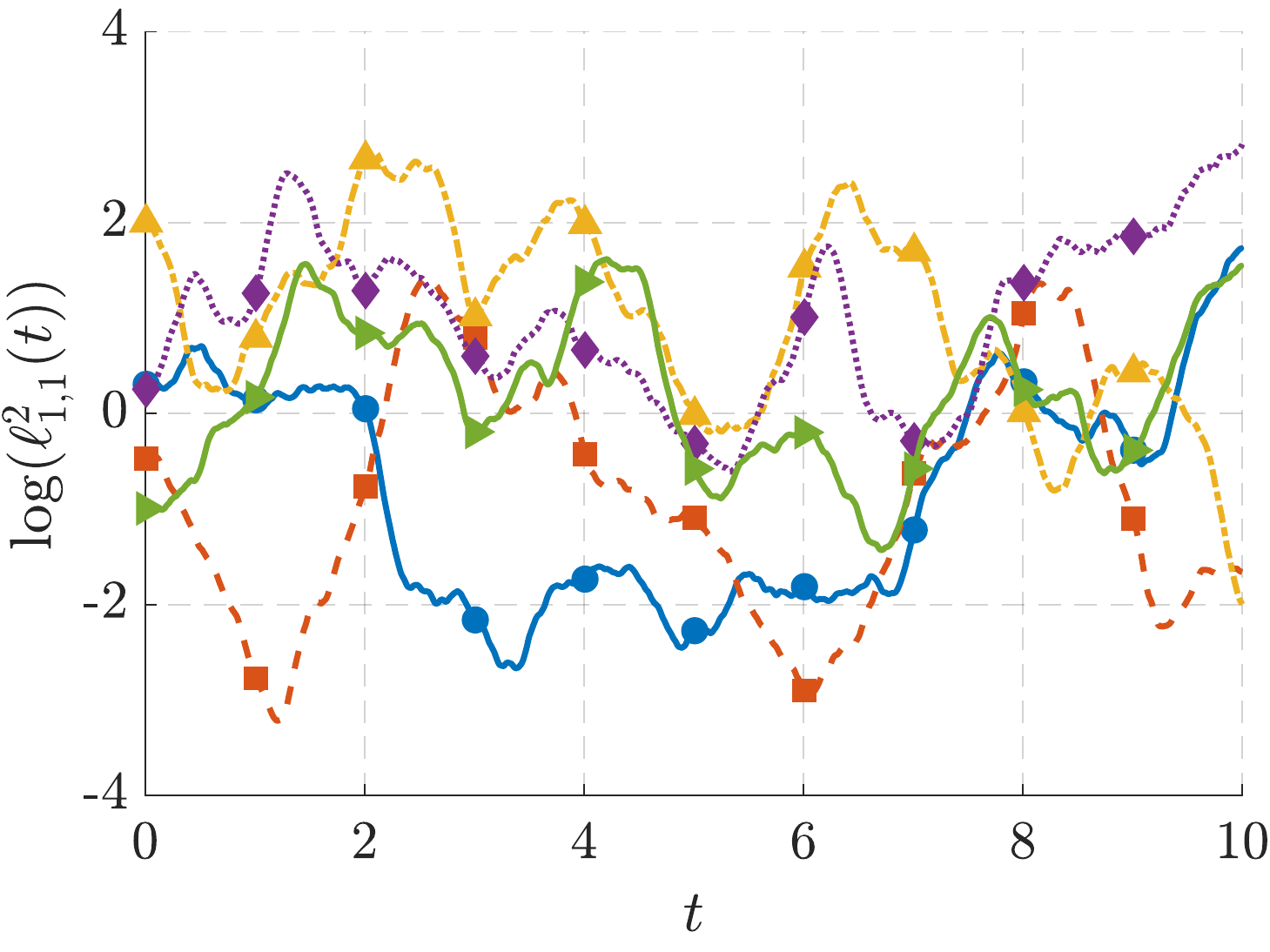}
			\includegraphics[width=.32\linewidth]{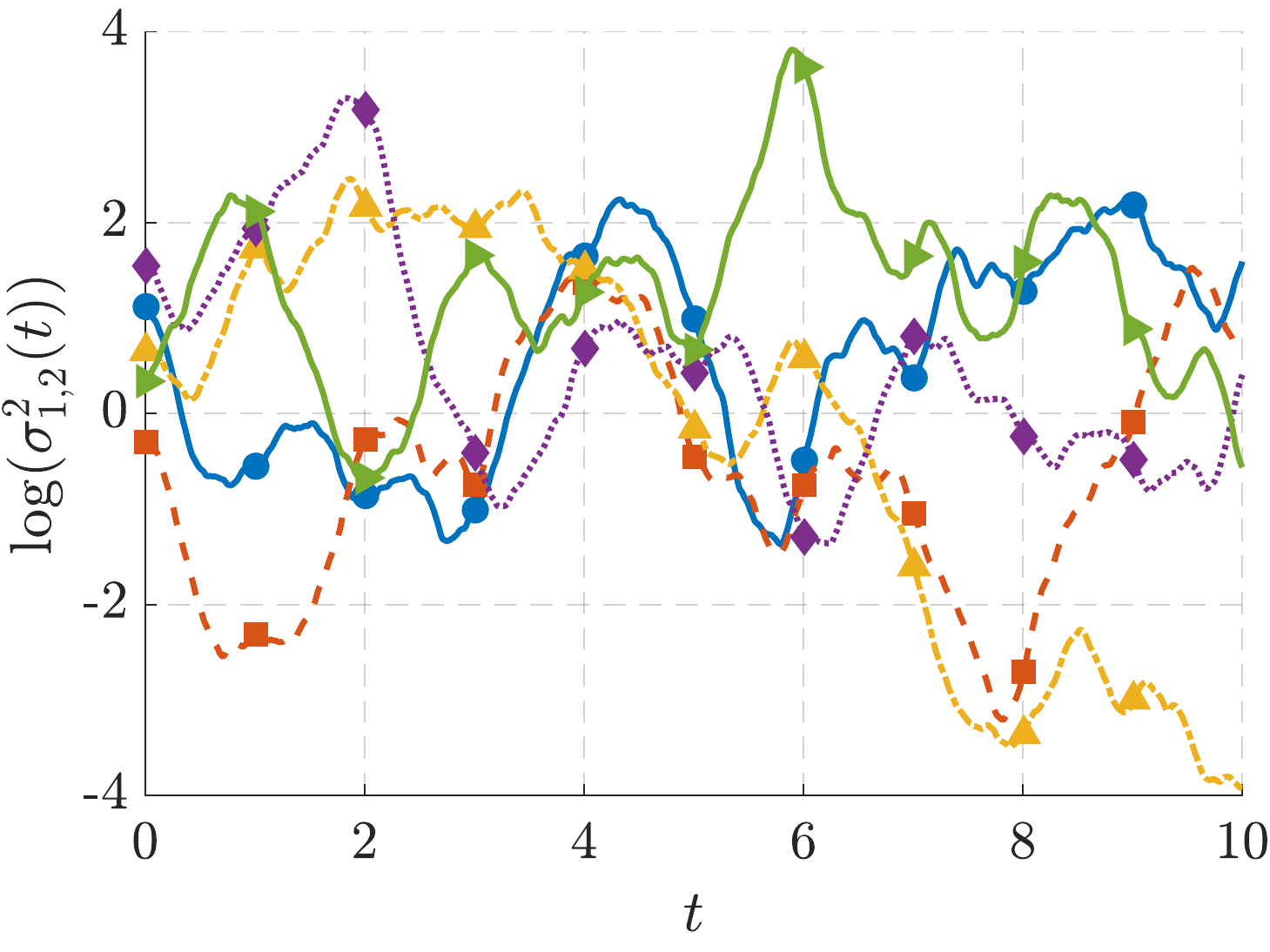}
			\caption{Samples of DGP-2 models defined in Example~\ref{example:matern-ss-gp} (first row) and Figure~\ref{fig:reg-models-DGP2} (second row). }
			\label{fig:prior-samples}
		\end{figure*}
		\begin{figure*}[t!]
			\centering
			\includegraphics[width=.99\linewidth]{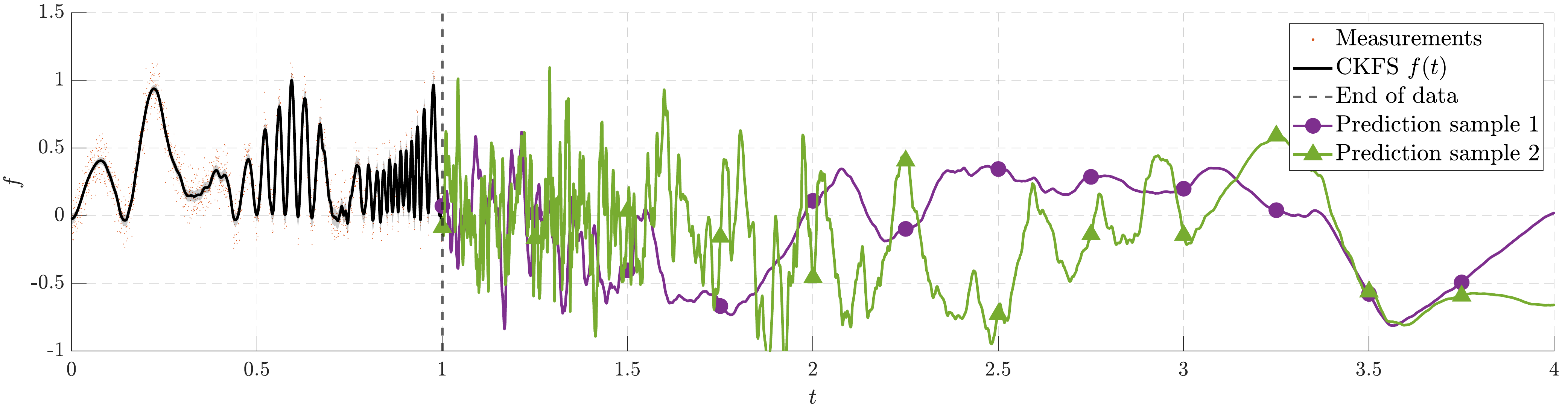}\\
			\includegraphics[width=.99\linewidth]{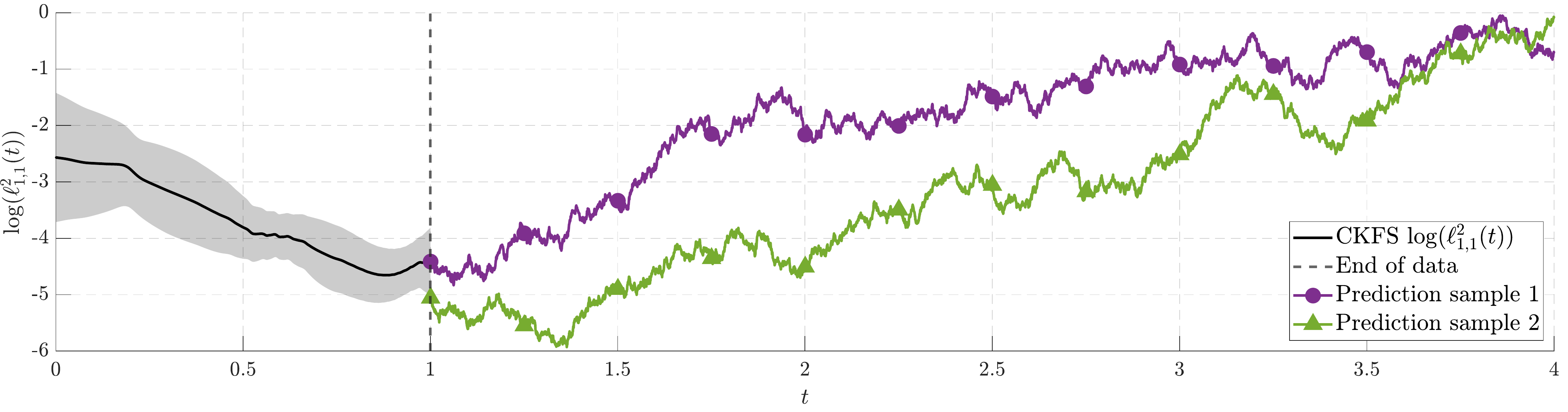}
			\caption{Prediction samples drawn from the CKFS DGP-2 model (i.e., continuation of Figure~\ref{fig:sin-exp-GFS}). Only two samples are shown for the sake of readability. }
		\end{figure*}
		To demonstrate the non-stationarity of the DGP models, we draw samples from the DGPs priors defined in Example~\ref{example:matern-ss-gp} and Figure~\ref{fig:reg-models-DGP2}. The samples are drawn by using the TME-3 discretization approach~\citep{zhao2020taylor} on $t\in[0, 10]$ with time interval $\Delta t = 0.01$~s. We show the samples in Figure~\ref{fig:prior-samples}, where we can clearly see the non-stationary features of process $f(t)$. The samples also switch the stationary and non-stationary behaviour randomly. 
		
		It is also of interests to see how does a fitted DGP model behave in the future (i.e., when extrapolated). For this purpose, we select the fitted CKFS DGP-2 on the sinusoidal experiments as the example. We draw prediction samples starting from the end of the smoothing posterior distribution, and predict until $t=4$~s. We see that at the beginning ($t=1$~s) the samples of $f$ retain similar features as the fitted $f$. As $t$ reaches the end, $f(t)$ is gradually becoming smoother because its lengthscale approach the stationary state. 
		
		\section{Hyperparameter Values Found via Grid Search}
		\label{append:hyperparas}
		For the sake of reproducibility we list the hyperparameters found by grid search in the following Table~\ref{tbl:hyperparas}. Due to a large number of unknown hyperparameters, the grid search routine assumes that GP nodes in the last layer share the same hyperparameters. Hereafter we use notations $\ell$ and $\sigma$ to represent the last layer lengthscale and magnitude.
		\begin{table}[h!]
			\centering
			\begin{tabular}{@{}lll@{}}
				\toprule
				Method & DGP-2 & DGP-3 \\ \midrule
				B-MAP & $\ell=0.087$, $\sigma=0.3$ & $\ell=0.04$, $\sigma=0.3$ \\
				SS-MAP & $\ell=0.008$, $\sigma=0.14$  & $\ell=0.001$, $\sigma=0.92$  \\
				EKFS & \begin{tabular}[c]{@{}l@{}}$\sigma^2_{1,2}=2$\\ $\ell=0.001$, $\sigma=2.1$\end{tabular} & \begin{tabular}[c]{@{}l@{}}$\sigma^2_{1,2}=12$, $\sigma^3_{1,2}=0.8$\\ $\ell=0.001$, $\sigma=9$\end{tabular} \\
				CKFS & \begin{tabular}[c]{@{}l@{}}$\sigma^2_{1,2}=2$\\ $\ell=0.001$, $\sigma=1.54$\end{tabular} & N/A \\
				PF-BS & $\ell=0.008$, $\sigma=0.54$ & $\ell=0.098$, $\sigma=0.79$  \\ \midrule
				CKFS & \begin{tabular}[c]{@{}l@{}}$\sigma^2_{1,2}=0.4$\\ $\ell=2.83$, $\sigma=1.49$\end{tabular} & \begin{tabular}[c]{@{}l@{}}$\sigma^2_{1,2}=0.4$, $\sigma^3_{1,2}=1.2$\\ $\ell=140$, $\sigma=0.7$\end{tabular} \\ 
				EKFS & \begin{tabular}[c]{@{}l@{}}$\sigma^2_{1,2}=1.6$\\ $\ell=0.23$, $\sigma=1.16$\end{tabular} & \begin{tabular}[c]{@{}l@{}}$\sigma^2_{1,2}=1.2$, $\sigma^3_{1,2}=0.9$\\ $\ell=0.22$, $\sigma=0.01$\end{tabular} \\ \bottomrule
			\end{tabular}
			\caption{Hyperparameters found via grid search for the rectangle (first block) and sinusoidal (second block) experiments. }
			\label{tbl:hyperparas}
		\end{table}
		
	\end{appendices}
	
	\bibliographystyle{spbasic}  
	\bibliography{refs}
	
\end{document}